\pgfplotsset{compat=newest}
\renewcommand{\cite}{\citep*}
\colorlet{sgreen}{black!45!green}
\newtheorem{thm}{Theorem}[section]
\newtheorem{lem}[thm]{Lemma}
\newtheorem{prop}[thm]{Proposition}
\newtheorem{cor}[thm]{Corollary}
\newtheorem{obs}[thm]{Observation}
\theoremstyle{definition}
\newtheorem{defn}[thm]{Definition}
\newtheorem*{defn*}{Definition}
\newtheorem{examp}[thm]{Example}
\newtheorem{rem}[thm]{Remark}
\newcommand{\er}{\mathrm{er}}
\newcommand{\LD}{\mathrm{LD}}
\newcommand{\E}{\mathbf{E}}
\newcommand{\Ord}{\mathrm{ORD}}
\renewcommand{\P}{\mathbf{P}}
\newcommand{\bemph}[1]{\textbf{#1}}
\newcommand{\PI}{$\mathrm{P}_{\mathrm{A}}$\xspace}
\newcommand{\PII}{$\mathrm{P}_{\mathrm{L}}$\xspace}
\newcommand{\val}{\mathrm{val}}
\newcommand{\rank}{\mathrm{rank}}
\newcommand{\lessdoteqsub}{\mathop{\le\mkern-7mu\raisebox{.12pt}{$\cdot$}}}
\newcommand{\an}[1]{#1}
\newcommand{\X}{\mathcal{X}}
\newcommand{\cH}{\mathcal{H}}
\renewcommand{\H}{\cH}
\newcommand{\absinfty}{{\mathsf{\Omega}}}
\DeclareSymbolFont{bbold}{U}{bbold}{m}{n}
\DeclareSymbolFontAlphabet{\mathbbold}{bbold}
\newcommand{\ind}{\mathbbold{1}}
\newcommand{\PXY}{P}
\newcommand{\nats}{\mathbb{N}}
\renewenvironment{proof}[1][]{\par\noindent{\bf Proof #1\ }}{\hfill\BlackBox\\[2mm]}
\begin{document}

\title{A Theory of Universal Learning}

\author{%
\name Olivier Bousquet \email obousquet@google.com\\
\addr Google, Brain Team\\
\name Steve Hanneke \email steve.hanneke@gmail.com\\
\addr Toyota Technological Institute at Chicago\\
\name Shay Moran \email smoran@technion.ac.il\\
\addr Technion\\
\name Ramon van Handel \email rvan@math.princeton.edu\\
\addr Princeton University\\
\name Amir Yehudayoff \email amir.yehudayoff@gmail.com\\
\addr Technion%
}

\editor{}

\maketitle

\thispagestyle{empty}

\begin{abstract}%
How quickly can a given {class of concepts} be learned from examples?
It is common to measure the performance of a supervised machine learning algorithm by plotting its ``learning curve'', that is, the decay of the error rate as a function of the number of training examples. However, the classical theoretical framework for understanding learnability, the PAC  
model of Vapnik-Chervonenkis and Valiant, does not explain the behavior of learning curves: the distribution-free PAC model of learning can only bound the upper envelope of the learning curves over all possible data distributions. This does not match the practice of machine learning, where the data source is typically fixed in any given scenario, while the learner may choose the number of training examples on the basis of factors such as computational resources and desired accuracy.

In this paper, we study an alternative learning model that better captures such practical aspects of machine learning, but still gives rise to a complete theory of the learnable in the spirit of the PAC model. More precisely, we consider the problem of \emph{universal} learning, which aims to understand the performance of learning algorithms on \emph{every} data distribution, but without requiring uniformity over the distribution. The main result of this paper is a remarkable trichotomy: there are only three possible rates of universal learning. More precisely, we show that the learning curves of any given concept class decay either at an exponential, linear, or arbitrarily slow rates. Moreover, each of these cases is completely characterized by appropriate combinatorial parameters, and we exhibit optimal learning algorithms that achieve the best possible rate in each case. 

For concreteness, we consider in this paper only the realizable case, though analogous results are expected to extend to more general learning scenarios.
\end{abstract}

\newpage
\setcounter{tocdepth}{2}
\tableofcontents

\thispagestyle{empty}

\newpage
\setcounter{page}{1} 

\section{Introduction}

{In supervised machine learning, a learning algorithm 
is presented with labeled examples of a concept, 
and the objective is to output a classifier which 
correctly classifies most future examples from the 
same source.  Supervised learning has been successfully 
applied in a vast number of scenarios, such as 
image classification and natural language processing.
In any given scenario, it is common to consider  
the performance of an algorithm by plotting {its ``learning curve'', that is,} the error rate (measured on held-out data) as a function of the number of training 
examples {$n$}. A learning algorithm is considered successful if the 
learning curve approaches zero as {$n\to\infty$},}
{and the difficulty of the learning task is reflected by the \emph{rate} at which this curve approaches zero. One of the main goals of learning theory is to predict what learning rates are achievable in a given learning task.}

To this end, the gold standard of learning theory is the celebrated \emph{PAC} model (Probably Approximately Correct) defined by \citet*{vapnik:74} and \citet*{valiant:84}. As will be recalled  below, the PAC model aims to explain the best \emph{worst-case} learning rate, over all {data distributions that are consistent with a given concept class}, that is achievable by a learning algorithm. The fundamental result in this theory exhibits a striking dichotomy: a given learning problem either has a linear worst-case learning rate {(i.e., $n^{-1}$)}, or is not learnable at all in this sense. These two cases are characterized by a fundamental combinatorial parameter of a learning problem: the \emph{VC} (Vapnik-Chervonenkis) dimension. {Moreover, in the learnable case, PAC theory provides optimal learning algorithms that achieve the linear worst-case rate.}

While it gives rise to a clean and compelling mathematical picture, one may argue that the PAC model fails to capture at a fundamental level the true behavior of many practical learning problems. A key criticism of the PAC model is that the distribution-independent definition of learnability is too pessimistic to explain practical machine learning: real-world data is rarely worst-case, and experiments show that 
practical 
learning rates 
{can be}
\emph{much} faster than is predicted by PAC theory \citep*{cohn:90,cohn:92}. It therefore appears that the worst-case nature of the PAC model hides key features that are observed in practical learning problems. These considerations motivate the search for alternative learning models that better capture the practice of machine learning, but still give rise to a canonical mathematical theory of learning rates.  {Moreover, given a theoretical 
framework capable of expressing these faster learning rates, 
we can then  
design new learning strategies 
to fully exploit this possibility.}

The aim of this paper is to put forward one such theory. In the learning model considered here, we will 
investigate {asymptotic rates of convergence of} 
\emph{distribution-dependent} bounds on the error of a learning algorithm, {holding universally for all 
distributions consistent with a given concept class}. 
Despite that this is a much weaker (and therefore arguably more realistic) notion, we will nonetheless prove that any learning problem can only exhibit one of three possible universal rates: exponential, linear, and arbitrarily slow. Each of these three cases will be fully characterized by means of combinatorial parameters (the nonexistence of certain infinite trees), and we will exhibit optimal learning algorithms that achieve these rates (based on the theory of infinite games).

\subsection{The basic learning problem}

Throughout this paper we will be concerned with the following
classical learning problem. A classification problem 
is defined by 
a distribution~$\PXY$ over labelled examples  $(x,y)\in \X \times \{0,1\}$. The learner does not know $\PXY$, but is able to collect a sample of $n$ i.i.d.\ examples from~$\PXY$.
She uses these examples to build a classifier $\hat{h}_{n}:\X\to\{0,1\}$.
The objective of the learner is to achieve small \emph{error}:   
    \[\er(\hat{h}_{n}) := \PXY\{ (x,y) : \hat{h}_{n}(x) \neq y \}.\]
While the data distribution $\PXY$ is unknown to the learner,  
{any informative \textit{a priori} theory of learning
must be expressed in terms of some properties 
of, or restrictions on, $\PXY$.
Following the PAC model, we introduce such a 
restriction by way of} 
an additional component, namely a concept class $\H\subseteq\{0,1\}^\X$ of classifiers. The concept class $\H$ allows the analyst to state assumptions about $\PXY$. The simplest such assumption 
    is that $\PXY$ is \emph{realizable}:
    \[\inf_{h \in \H} \er(h) = 0,\]
    that is, $\H$ contains hypotheses with arbitrarily small error.
    We will focus on the realizable setting throughout this paper, as it already requires substantial new ideas and provides a clean platform to demonstrate them. We believe that the ideas of this paper can be extended  
    to more general noisy/agnostic settings, and leave this direction to be explored in future work.

In the present context, the aim of learning theory is to provide tools for understanding the best possible \emph{rates of convergence} of $\E[\er(\hat{h}_{n})]$ to zero
as the sample size $n$ grows to $\infty$.
This rate depends on the quality of the learning algorithm, and on the \emph{complexity} of the concept class $\H$. The more complex $\H$ is, the less information the learner has about $\PXY$, and thus the slower the convergence.

\subsection{Uniform and universal rates}

{The classical formalization of the problem of 
learning in statistical learning theory is given 
by the \emph{PAC model},}
which adopts a minimax perspective. More precisely, let us denote by $\mathrm{RE}(\H)$ the family of distributions $\PXY$ for which the concept class $\H$ is realizable. Then the fundamental result of PAC learning theory states that \citep*{vapnik:74,ehrenfeucht:89,haussler:94}
\[
\inf_{\hat h_n}\sup_{\PXY\in\mathrm{RE}(\H)}\E[\er(\hat{h}_{n})]
\asymp \min\bigg(\frac{\mathrm{vc}(\H)}{n},1\bigg),
\]
where $\mathrm{vc}(\H)$ is the \emph{VC dimension} of $\H$. In other words, PAC learning theory is concerned with the best \emph{worst-case} error over all realizable distributions, that can be achieved by means of a learning algorithm $\hat h_n$.
The above result immediately implies a fundamental \emph{dichotomy} for these uniform rates: every concept class $\H$ has a uniform rate that is either \emph{linear} $\frac{c}{n}$ or \emph{bounded away from zero}, depending on the finiteness of the combinatorial parameter $\mathrm{vc}(\H)$.

The uniformity over $\PXY$ in the PAC model is very pessimistic, however, as it allows the worst-case distribution to change with the sample size. This arguably does not reflect the practice of machine learning: in a given learning scenario, the data generating mechanism $\PXY$ is fixed, while the learner is allowed to collect an arbitrary amount of data (depending on factors such as the desired accuracy and the available computational resources). Experiments show that the rate at which the error decays for any given $\PXY$ {can be} \emph{much} faster than is suggested by  PAC theory \citep*{cohn:90,cohn:92}: for example, it is possible that the learning curve decays exponentially for every $\PXY$. Such rates cannot be explained by the PAC model, which can only capture the upper envelope of the learning curves over all realizable $\PXY$, as is illustrated in Figure~\ref{fig:rates}.
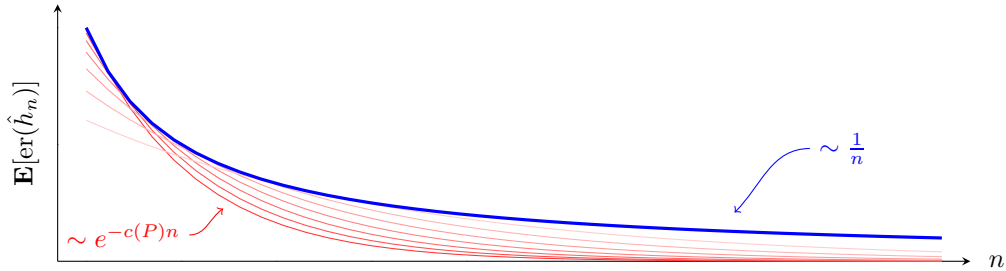
\begin{figure}
\centering
\begin{tikzpicture}
\begin{axis}[width=0.9\textwidth,height=5cm,grid=minor,xmin=0.7,xmax=10.3,ymax=1.1,
	axis x line=middle,
	axis y line=middle,xlabel={$n$},ylabel={$\mathbf{E}[\er(\hat h_n)]$},
	every axis x label/.style={
    at={(ticklabel* cs:1.01)},
    anchor=west,},
    y label style={at={(axis description cs:-.01,.5)},rotate=90,anchor=south},
    yticklabels={,,}, y tick label style={major tick length=0pt},
    xticklabels={,,}, x tick label style={major tick length=0pt}]
	\addplot+[domain=1:10,samples=40,mark=none,color=red!80!white,solid] {0.9*exp(1-0.9*x)};
	\addplot+[domain=1:10,samples=40,mark=none,color=red!70!white,solid] {0.8*exp(1-0.8*x)};
	\addplot+[domain=1:10,samples=40,mark=none,color=red!60!white,solid] {0.7*exp(1-0.7*x)};
	\addplot+[domain=1:10,samples=40,mark=none,color=red!50!white,solid] {0.6*exp(1-0.6*x)};
	\addplot+[domain=1:10,samples=40,mark=none,color=red!40!white,solid] {0.5*exp(1-0.5*x)};
	\addplot+[domain=1:10,samples=40,mark=none,color=red!30!white,solid] {0.4*exp(1-0.4*x)};
	\addplot+[domain=1:10,samples=40,mark=none,color=red!20!white,solid] {0.3*exp(1-0.3*x)};
	\addplot+[domain=1:10,samples=40,mark=none,very thick,color=blue,solid] {1/x};
\end{axis}
\draw[color=blue,<-] (9,0.65) to[in=180,out=45] (10,1.5) node[right] {$\sim \frac{1}{n}$};
\draw[color=red,<-] (2.2,0.7) to[in=0,out=215] (1.75,0.35) node[left] {$\sim e^{-c(\PXY)n}$};
\end{tikzpicture}
\caption{Illustration of the difference between universal and uniform rates. Each red curve shows exponential decay of the error for a different data distribution $\PXY$; but the PAC rate only captures the pointwise supremum of these curves (blue curve) which decays linearly {at best}.}
\label{fig:rates}
\end{figure}

Furthermore, one may argue that it is really the learning curve for given $\PXY$, rather than the PAC error bound, that is observed in practice.
Indeed, the customary approach to estimate the performance 
of an algorithm is to measure its empirical learning rate, that is,
to train it on several training sets of increasing sizes (obtained from the same data source) and to measure the test error of each of the obtained classifiers.
In contrast, to observe the PAC rate, one would have to repeat the above measurements for many different data distributions, and then discard all this data except for the worst-case error over all considered distributions. From this perspective, it is inevitable that the PAC model may fail to reveal the ``true'' empirical behavior of learning algorithms. More refined theoretical results have been obtained on a case-by-case basis in various practical situations: for example, under margin assumptions, some works established exponentially fast learning rates for popular algorithms such as stochastic gradient decent and kernel methods \citep*{Koltchinskii05exponential,audibert:07,Bach18exponential,Nitanda19stochastic}. Such results rely on additional modelling assumptions, however, and do not provide  a fundamental theory of the learnable in the spirit of PAC learning.

Our aim in this paper is to propose a mathematical theory that is able to capture some of the above features of practical learning systems, yet provides a complete characterization of achievable learning rates for general learning tasks. Instead of considering uniform learning rates as in the PAC model, we consider instead the problem of universal learning.
{The term \emph{universal} means that a given property (such as consistency or rate) holds for \emph{every} realizable distribution $\PXY$, but not uniformly over all distributions. For example,} a class $\H$ is universally learnable at rate $R$ if the following holds:
\[
    \exists\,\hat{h}_n\quad
    \mbox{s.t.}\quad
    \forall \,\PXY\in\mathrm{RE}(\H),\quad 
    \exists\,C,c>0\quad\mbox{s.t.}\quad
    \mathbf{E}[\er(\hat h_n)]\le CR(cn)\mbox{ for all }n.
\]
The crucial difference between this formulation and the PAC model is that here the constants $C,c$ \emph{are allowed to depend on $\PXY$}: thus universal learning is able to capture  \emph{distribution-dependent} learning curves for a given learning task. For example, the illustration in Figure \ref{fig:rates} suggests that it is perfectly possible for a concept class $\H$ to be \emph{universally} learnable at an exponential rate, even though its \emph{uniform} learning rate is only linear. In fact, we will see that there is little connection between universal and uniform learning rates (as is illustrated in Figure \ref{fig:venn} of section \ref{sec:examples}): {a given problem may even be universally learnable at an exponential rate while it is not learnable at all in the PAC sense.}
These two models of learning reveal fundamentally different features of a given learning problem.

The fundamental question that we pose in this paper is:
    \begin{center}
        {\bf Question.}   
    {\it Given a class $\H$, what is the fastest rate at which $\H$ can be universally learned?}
    \end{center}
We  provide a complete answer to this question, characterize the achievable rates by means of combinatorial parameters, and exhibit learning algorithms that achieve these rates. The universal learning model therefore gives rise to a theory of learning that fully complements the classical PAC theory.

\subsection{Basic examples}
\label{sec:warmup}

Before we proceed to the statement of our main results, we aim to develop some initial intuition for what universal learning rates are achievable. To this end, we briefly discuss three basic examples.

\begin{examp}
\label{ex:expfinite}
Any finite class $\cH$ is universally learnable at an exponential rate \citep{schuurmans:97}. 
Indeed, let $\varepsilon$ be the minimal error $\er(h)$ among all classifiers
$h\in\H$ with positive error $\er(h)>0$. By the union bound, the probability that there exists a classifier {with positive error} that correctly classifies all $n$ training data points is bounded by $|\H|(1-\varepsilon)^n$. Thus a learning rule that outputs any $\hat{h}_{n} \in \cH$ that correctly classifies the training data satisfies
$\mathbf{E}[\er(\hat h_n)]\le Ce^{-cn}$, where $C,c>0$ depend on $\H,\PXY$.
It is easily seen that this is the best possible: as long as $\cH$ contains at least three functions, a learning curve cannot decay faster than exponentially (see Lemma~\ref{lem:exp-rate} below).
\end{examp}

\begin{examp}
\label{ex:threshintro}
The class $\cH=\{h_t:t\in\mathbb{R}\}$ of \emph{threshold} classifiers on the real line $h_{t}(x) = \mathbf{1}_{x \geq t}$ is universally learnable at a linear rate. That a linear rate can be achieved already follows in this case from PAC theory, as $\H$ is a VC class. However, in this example, a linear rate is the best possible even in the \emph{universal} setting: for any learning algorithm, there is a realizable distribution $\PXY$ whose learning curve decays 
{no faster than a linear rate}~\citep*{schuurmans:97}.
\end{examp}

\begin{examp}
\label{ex:arbslow}
The class $\cH$ of \emph{all} measurable functions on a space $\X$ is universally learnable under mild conditions~\cite{stone:77,hanneke:19a}: that is, there exists a learning algorithm
$\hat h_n$ that ensures $\mathbf{E}[\er(\hat h_n)]\to 0$ as $n\to\infty$ for every realizable distribution $\PXY$. However, there can be no universal 
guarantee on the learning \emph{rate} \cite{devroye:96}. 
That is, for any learning algorithm $\hat h_n$ and any function $R(n)$
that converges to zero arbitrarily slowly, there exists a realizable distribution $\PXY$ such that $\mathbf{E}[\er(\hat h_n)]\ge R(n)$ infinitely often.
\end{examp}

The three examples above
    reveal that there are at least three possible universal learning rates.
    Remarkably, we find that \emph{these are the only possibilities}. 
    That is, \emph{every} nontrivial class $\cH$ is either universally learnable at an exponential rate (but not faster), 
    or is universally learnable at a linear rate (but not faster), or is universally learnable but necessarily 
    with arbitrarily slow rates.

\subsection{Main results}
\label{sec:main}

We now summarize the key definitions and main results of the paper.
(We refer to Appendix~\ref{sec:polish} for the relevant terminology on Polish spaces and measurability.)

To specify the learning problem, we specify a \bemph{domain} $\X$ and a \bemph{concept class} $\H\subseteq\{0,1\}^\X$. We will henceforth assume that $\X$ is a Polish space (for example, a Euclidean space, or any countable set) and that $\H$ satisfies a minimal measurability assumption specified in Definition~\ref{defn:suslin} below.

A \bemph{classifier} is a universally measurable function $h : \X \to \{0,1\}$.
Given a probability distribution~$\PXY$ on $\X\times\{0,1\}$, the
\bemph{error rate} of a classifier $h$ is defined as 
\[
    \er(h) = \er_P(h) := \PXY\{ (x,y) : h(x) \neq y \}.
\]
The distribution $\PXY$ is called \bemph{realizable} if 
$\inf_{h \in \H} \er(h) = 0$.

A \bemph{learning algorithm}
is a sequence of universally measurable functions\footnote{%
    For simplicity of exposition, we have stated a definition 
    corresponding to \emph{deterministic} algorithms, 
    to avoid the notational inconvenience required to 
    formally define randomized algorithms in this context.
    Our results remain valid when allowing randomized 
    algorithms as well: 
    all algorithms we construct throughout this paper are deterministic, and
    all lower bounds we prove also hold for randomized algorithms.}
\[
    H_n:(\mathcal{X}\times\{0,1\})^n\times\mathcal{X}
    \to\{0,1\},\qquad n\in\nats.
\]
The input data to the learning algorithm is a sequence of independent $\PXY$-distributed pairs $(X_i,Y_i)$. When acting on this input data, the learning algorithm outputs the data-dependent classifiers
\[
    \hat{h}_n(x):=H_n((X_1,Y_1),\ldots,(X_n,Y_n),x).
\]
The objective in the design of a learning algorithm is that the expected error rate $\E[\er(\hat{h}_n)]$ of the output concept decays as rapidly as possible as a function of $n$.

The aim of this paper is to characterize what rates of convergence of $\E[\er(\hat{h}_n)]$ are achievable.
The following definition formalizes this notion of achievable rate in the universal learning model.

\begin{defn}
\label{defn:rate}
Let $\H$ be a concept class, and
let $R : \nats \to [0,1]$ with $R(n) \to 0$
be a rate function.
\begin{itemize}
\item $\H$ is \bemph{learnable at rate $\bm R$} if there is a learning 
algorithm $\hat{h}_n$ such that for every realizable distribution~$\PXY$, 
there exist $C,c > 0$ for which 
$\E[\er(\hat{h}_n)] \leq C R(c n)$ for all $n$.
\item $\H$ is \bemph{not learnable at rate faster than $\bm R$} if 
for every learning algorithm $\hat{h}_n$, there exists a 
realizable distribution $\PXY$ and $C,c > 0$ for which 
$\E[\er(\hat{h}_n)] \geq C R(c n)$ for infinitely many~$n$.
\item $\H$ is \bemph{learnable with optimal rate $\bm R$} if 
$\H$ is learnable at rate $R$ and 
$\H$ is not learnable faster than $R$.
\item $\H$ requires \bemph{arbitrarily slow rates} if, 
for every $R(n) \to 0$, $\H$ is not learnable faster than $R$.
\end{itemize}
\end{defn}

Let us emphasize that, unlike in the PAC model, \emph{every} concept class $\H$ is universally learnable in the sense that there exist learning algorithms such that $\E[\er(\hat{h}_n)] \to 0$ for all realizable $\PXY$; see Example \ref{ex:arbslow} above. However, a concept class may nonetheless require arbitrarily slow rates, in which case it is impossible for the learner to predict how fast this convergence will take place.

\begin{rem}
\label{rem:constants}
While this is not assumed in the above definition, our lower bound results will in fact prove a stronger claim: 
namely, that  
when a given concept class $\H$ 
is not learnable at rate faster than $R$, 
the corresponding constants $C,c > 0$ in the 
lower bound can be specified as 
universal constants, that is, they are independent of the learning algorithm $\hat{h}_n$
and concept class $\H$. This is sometimes referred to as a \emph{strong} minimax lower bound \cite{antos:98}.
\end{rem}

The following theorem is one of the main results of this work. It expresses a fundamental \emph{trichotomy}: there are exactly three possibilities for optimal learning rates.\footnote{The restriction $|\H| \geq 3$ rules out two degenerate cases: if $|\H|=1$ or if $\H=\{h,1-h\}$, then $\er(\hat{h}_n)=0$ is trivially achievable for all $n$. If $|\H|=2$ but $\H\ne\{h,1-h\}$, then $\H$ is learnable with optimal rate $e^{-n}$ by Example
\ref{ex:expfinite}.}

\begin{thm}
\label{thm:trichotomy}
For every concept class $\H$ with $|\H| \geq 3$, 
exactly one of the following holds.
\begin{itemize}[$\bullet$]
\item $\H$ is learnable with optimal rate $e^{-n}$.
\item $\H$ is learnable with optimal rate $\frac{1}{n}$.
\item $\H$ requires arbitrarily slow rates.
\end{itemize}
\end{thm}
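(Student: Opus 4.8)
The plan is to reduce the trichotomy to two combinatorial dichotomies, each governed by whether a certain infinite tree labeled by points of $\X$ exists for $\H$. I would introduce an \emph{infinite Littlestone tree} for $\H$: an infinite complete binary tree whose internal nodes carry points of $\X$ and whose two out-edges at each node carry the labels $0$ and $1$, such that along every infinite branch the induced partial function is realized by some $h\in\H$. I would also introduce an \emph{infinite VCL tree}: the analogous structure except that the node at depth $d$ carries a $(d{+}1)$-tuple of points shattered by $\H$ and branches $2^{d+1}$-ways, one child per labeling of that tuple, with every branch again realizable. For a class with $|\H|\ge 3$ these give three exhaustive cases:
\begin{itemize}
\item[(i)] $\H$ has no infinite Littlestone tree;
\item[(ii)] $\H$ has an infinite Littlestone tree but no infinite VCL tree;
\item[(iii)] $\H$ has an infinite VCL tree.
\end{itemize}
One checks that an infinite VCL tree always contains an infinite Littlestone tree (take, at each node, the ``newest'' coordinate of its shattered tuple and set the labels on the older coordinates to agree with the branch so far), so (i)--(iii) are also mutually exclusive. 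The goal is then to show that (i), (ii), (iii) force optimal rate $e^{-n}$, optimal rate $1/n$, and arbitrarily slow rates, respectively; since $Ce^{-cn}$ eventually falls below $C'/(c'n)$ and $1/n\to 0$, the three \emph{rate} conclusions are themselves mutually exclusive, so combining everything yields ``exactly one'' as claimed, with the universality of the constants $C,c$ of Remark~\ref{rem:constants} falling out of the lower-bound constructions.

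For the \textbf{upper bounds} I would proceed as follows. In case (i): the absence of an infinite Littlestone tree means that in the online game where an adversary reveals points together with their true labels and the learner predicts, the tree of adversary-consistent plays is well founded; by the Gale--Stewart theorem the learner has a strategy whose ordinal rank drops at every mistake, hence one that makes only finitely many mistakes along any realizable sequence, with a $\PXY$-dependent bound. Turning this into an \emph{exponential} decay of $\E[\er(\hat h_n)]$ needs an amplification step: split the sample, use one part to drive the game down to a ``thin'' version space and the remainder to invoke the finite-class union bound of Example~\ref{ex:expfinite}. In case (ii): when there is no infinite VCL tree the VCL tree is well founded, and I would recurse on its ordinal rank, running inside each ``bounded-VC stratum'' a linear-rate PAC predictor (e.g.\ a one-inclusion-graph or sample-compression scheme) and letting the distribution-dependent constant absorb the finitely many strata that $\PXY$ actually charges. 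This linear-rate algorithm is the technical heart of the argument and the step I expect to be the main obstacle.

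For the \textbf{lower bounds}: in case (ii), given an infinite Littlestone tree, I would randomize a branch by independent fair coins at the nodes and place a mass profile (tuned to $n$, roughly $\propto 2^{-d}$) on the depth-$d$ point of the branch, labeled by the branch itself; this distribution is realizable, yet at depth $\sim\log n$ the learner has not seen the relevant point and errs there with constant probability while that point carries mass $\gtrsim 1/n$, giving $\E[\er(\hat h_n)]\gtrsim 1/n$ infinitely often --- the mechanism of Example~\ref{ex:threshintro}. The $e^{-n}$ lower bound needed in case (i) is exactly Lemma~\ref{lem:exp-rate}. In case (iii), given any rate $R(n)\to 0$, I would use the infinite VCL tree to put, beyond an ``informative'' part of the distribution that lets the learner follow the branch, a tiny $R$-calibrated amount of mass spread uniformly over a shattered tuple at a depth so large that after $n$ samples the learner still faces effective VC dimension tending to infinity on that tuple, hence incurs error of the order of that tiny mass; diagonalizing the depth against $n$ makes $\E[\er(\hat h_n)]\ge R(n)$ infinitely often --- the mechanism of Example~\ref{ex:arbslow}.

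Finally I would \textbf{assemble}: each upper bound matched with its lower bound gives ``optimal rate'' in cases (i) and (ii) and ``arbitrarily slow'' in (iii), and the exhaustiveness/mutual exclusivity noted above upgrades this to ``exactly one''. Besides the linear-rate algorithm of case (ii), the delicate points I anticipate are making the Gale--Stewart/ordinal argument quantitative enough to yield a genuinely exponential --- not merely polynomial or $o(1)$ --- rate, and the measurability bookkeeping: the trees live over a Polish $\X$, so assertions such as ``there is a realizable branch'' or ``there is a shattered tuple'' must be read through the Suslin/universal-measurability hypotheses on $\H$ and supported by measurable-selection arguments, both when constructing the algorithms and when constructing the hard distributions.
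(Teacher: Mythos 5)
Your plan is the paper's: Theorem~\ref{thm:trichotomy} is deduced from the tree-based case analysis, which is exactly Theorem~\ref{thm:char}; the three cases are exhaustive and mutually exclusive because an infinite VCL tree contains an infinite Littlestone tree, and the three rate conclusions are pairwise incompatible. The Gale--Stewart reduction, the ordinal-rank argument, the batching/aggregation idea, the one-inclusion predictor for the linear case, the probabilistic-method lower bounds along a random branch, and the measurability caveats all match Sections~\ref{sec:online}--\ref{sec:lin} and Appendix~\ref{app:meas}.

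Two local points would need repair. First, your definition of an infinite Littlestone tree asks that each \emph{infinite} branch be realized by a single $h\in\H$; the correct (weaker) requirement in Definition~\ref{defn:litt} is only that every \emph{finite} prefix of every branch be consistent with some $h\in\H$. The finite-prefix version is what makes the adversary's winning condition in the Gale--Stewart game finitely decidable, and it is all that is needed to make the lower-bound distribution $\PXY_{\mathbf{y}}$ realizable (one only needs $\inf_h\er(h)=0$, witnessed by hypotheses consistent with longer and longer prefixes). Your later use of the notion (``the tree of adversary-consistent plays is well founded'') is the finite-prefix one, so this is a slip rather than a structural error, but with the definition as you state it the characterization would be wrong. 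Second, the exponential amplification cannot go through ``driving the game down to a thin version space and invoking the finite-class union bound'': after finitely many examples the version space is in general still infinite and contains hypotheses of arbitrarily small positive error, so the argument of Example~\ref{ex:expfinite} does not apply to it. The working mechanism (Lemma~\ref{lem:estt} and Corollary~\ref{cor:expr}) is different: estimate from half the data a batch length $\hat t_n$ at which the eventually-correct online learner has zero error with probability at least $5/8$, run it on $\Theta(n/\hat t_n)$ disjoint batches, and output the majority vote; Hoeffding then gives $\mathbf{P}\{\er(\hat h_n)>0\}\le Ce^{-cn}$. The linear case is handled analogously (pattern avoidance via the VCL game, then the one-inclusion predictor of Lemma~\ref{lem:hlw}, again batched and aggregated), which is close to what you describe and is, as you anticipate, the technical heart of the proof.
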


A second main result of this work provides a detailed description 
of which of these three cases any given concept class $\H$ satisfies, 
by specifying complexity measures to distinguish the cases.
We begin with the following definition, which is illustrated in  Figure~\ref{fig:litintro}.
Henceforth we define the prefix
$\mathbf{y}_{\le k}:=(y_1,\ldots,y_k)$ for any sequence
$\mathbf{y}=(y_1,y_2,\ldots)$.
\begin{figure}
\centering
\begin{tikzpicture}

\draw (0,0) to (1.5, 1);
\draw[line width=0.5mm,color=red] (0,0) to (1.5,-1);
\draw (1.5,1) to (3, 1.5);
\draw (1.5,1) to (3, 0.5);
\draw[line width=0.5mm,color=red] (1.5,-1) to (3,-0.5);
\draw (1.5,-1) to (3,-1.5);
\draw[densely dashed] (3,1.5) to (4.2,1.75);
\draw[densely dashed] (3,1.5) to (4.2,1.25);
\draw[densely dashed] (3,0.5) to (4.2,.75);
\draw[densely dashed] (3,0.5) to (4.2,.25);
\draw[densely dashed] (3,-1.5) to (4.2,-1.75);
\draw[densely dashed] (3,-1.5) to (4.2,-1.25);
\draw[line width=0.5mm,color=red,densely dashed] (3,-0.5) to (4.2,-.75);
\draw[densely dashed] (3,-0.5) to (4.2,-.25);

\draw (0.75,0.7) node {$\scriptstyle 0$};
\draw (0.75,-0.725) node[color=red] {$\scriptstyle\bm{1}$};
\draw (2.25,1.4) node {$\scriptstyle 0$};
\draw (2.25,0.6) node {$\scriptstyle 1$};
\draw (2.25,-0.575) node[color=red] {$\scriptstyle\bm{0}$};
\draw (2.25,-1.4) node {$\scriptstyle 1$};

\draw (3.8,1.8) node {$\scriptstyle 0$};
\draw (3.8,1.2) node {$\scriptstyle 1$};
\draw (3.8,.8) node {$\scriptstyle 0$};
\draw (3.8,.2) node {$\scriptstyle 1$};

\draw (3.8,-1.8) node {$\scriptstyle 1$};
\draw (3.8,-1.2) node {$\scriptstyle 0$};
\draw (3.8,-.825) node[color=red] {$\scriptstyle\bm{1}$};
\draw (3.8,-.2) node {$\scriptstyle 0$};

\draw (0,0) node[color=red,circle,radius=.15,fill=white] {$\bm{x_\varnothing}$};
\draw (1.5,1) node[circle,radius=.15,fill=white] {$x_0$};
\draw (1.5,-1) node[color=red,circle,radius=.15,fill=white] {$\bm{x_1}$};
\draw (3,1.5) node[circle,radius=.15,fill=white] {$x_{00}$};
\draw (3,0.5) node[circle,radius=.15,fill=white] {$x_{01}$};
\draw (3,-0.5) node[color=red,circle,radius=.15,fill=white] {$\bm{x_{10}}$};
\draw (3,-1.5) node[circle,radius=.15,fill=white] {$x_{11}$};

\draw[color=red,<-] (4.3,-.75) to[out=0,in=180] (6,0);
\draw[color=red] (6,.66) node[right] {$\exists\,h\in\mathcal{H}$};
\draw[color=red] (6,.22) node[right] {$h(x_\varnothing)=1$};
\draw[color=red] (6,-.22) node[right] {$h(x_1)=0$};
\draw[color=red] (6,-.66) node[right] {$h(x_{10})=1$};

\end{tikzpicture}
\caption{A Littlestone tree of depth $3$. Every branch is 
consistent with a concept $h\in\mathcal{H}$. This is 
illustrated here for one of the branches.\label{fig:litintro}}
\end{figure}
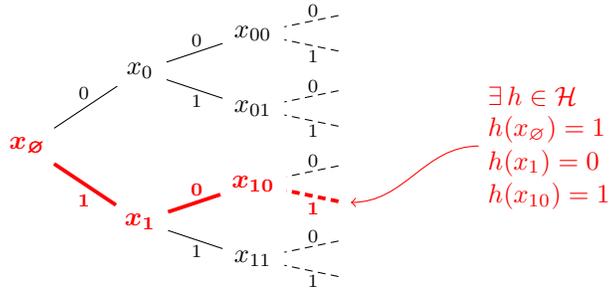

\begin{defn}
\label{defn:litt}
A \bemph{Littlestone tree} for $\mathcal{H}$ is a complete binary tree of depth $d\le\infty$ whose internal nodes are labelled by $\X$, and whose two edges connecting a node to its children are labelled $0$ and~$1$, such that every finite path emanating from the root is consistent with a concept $h\in\H$.

More precisely, a Littlestone tree is a collection 
$$\{x_{\mathbf{u}}:0\le k<d, 
\mathbf{u}\in\{0,1\}^k\}\subseteq\mathcal{X}$$ 
such that for every $\mathbf{y}\in\{0,1\}^d$ and $n<d$, 
there exists
$h\in\mathcal{H}$ so that $h(x_{\mathbf{y}_{\le k}})=y_{k+1}$ for  
$0\le k\le n$.
We say $\mathcal{H}$ has an \bemph{infinite Littlestone tree} if there is 
a Littlestone tree for $\H$ of depth $d=\infty$.
\end{defn}

The above notion is closely related to the \emph{Littlestone dimension}, a fundamentally important quantity in \emph{online} learning.
A concept class $\H$ has Littlestone dimension $d$ if it has a Littlestone tree of depth $d$ but not of depth $d+1$. When this is the case,
classical online learning theory yields a learning algorithm that makes at most $d$ mistakes in classifying any \emph{adversarial} (as opposed to random) realizable sequence of examples. Along the way to our main results, we will extend the theory of online learning to the following setting: we show in Section~\ref{sec:onl} that the nonexistence of an infinite Littlestone tree characterizes the existence of an algorithm that guarantees a \emph{finite} (but not necessarily uniformly bounded) number of mistakes for every realizable sequence of examples. Let us emphasize that having an infinite Littlestone tree is \emph{not} the same as having an unbounded Littlestone dimension: the latter can happen due to existence of finite Littlestone trees of arbitrarily large depth, which does not imply the existence of any single tree of infinite depth.

Next we introduce a new type of complexity structure, 
which we term a \emph{VC-Littlestone tree}.  It 
represents a combination of 
the structures underlying Littlestone dimension and VC dimension.
Though the definition may appear a bit complicated, 
the intuition is quite simple (see Figure~\ref{fig:vclintro}).
\begin{figure}
\centering
\begin{tikzpicture}

\draw[densely dashed] (7.1,-.45) to node[sloped,anchor=center,fill=white] 
{$\scriptstyle 000$} (7.85,.85);
\draw[densely dashed] (7.15,-.55) to node[sloped,anchor=center,fill=white] 
{$\scriptstyle 001$} (8.35,.55);
\draw[densely dashed] (7.2,-.7) to node[sloped,anchor=center,fill=white] 
{$\scriptstyle 010$} (8.75,.1);
\draw[densely dashed] (7.3,-.8) to node[sloped,anchor=center,fill=white] 
{$\scriptstyle 011$}(8.95,-0.55);
\draw[line width=0.5mm,color=red,densely dashed] (7.25,-.9) to 
node[sloped,anchor=center,fill=white] {$\scriptstyle\bm{100}$} (8.95,-1.2);
\draw[densely dashed] (7.2,-1) to node[sloped,anchor=center,fill=white] 
{$\scriptstyle 101$} (8.7,-1.8);
\draw[densely dashed] (7.1,-.95) to node[sloped,anchor=center,fill=white] 
{$\scriptstyle 110$} (8.2,-2.3);
\draw[densely dashed] (7.05,-1.1) to 
node[sloped,anchor=center,fill=white] 
{$\scriptstyle 111$} (7.6,-2.6);

\draw (.15,.2) to node[sloped,anchor=center,fill=white] {$\scriptstyle 0$} (1.4, 1) ;
\draw[line width=0.5mm,color=red] (.15,-.2) to node[sloped,anchor=center,fill=white] {$\scriptstyle\bm{1}$} (1.4,-1);

\draw (2,1.2) to node[sloped,anchor=center,fill=white] {$\scriptstyle 00$} (2.3,2.7);
\draw (2.35,1.3) to node[sloped,anchor=center,fill=white] {$\scriptstyle 01$} (2.9,1.9);
\draw (2.75,1.2) to node[sloped,anchor=center,fill=white] {$\scriptstyle 10$} (3.45,1.5);
\draw (2.75,0.9) to node[sloped,anchor=center,fill=white] {$\scriptstyle 11$} (4.2,.75);

\draw (2,-1.2) to node[sloped,anchor=center,fill=white] {$\scriptstyle 11$} (2.3,-2.7);
\draw (2.35,-1.3) to node[sloped,anchor=center,fill=white] {$\scriptstyle 10$} (2.9,-1.9);
\draw (2.75,-1.2) to node[sloped,anchor=center,fill=white] {$\scriptstyle 01$} (3.45,-1.5);
\draw[line width=0.5mm,color=red] (2.75,-0.9) to node[sloped,anchor=center,fill=white] {$\scriptstyle\bm{00}$} (4.2,-.75);

\draw (0,0) node[color=red,circle,radius=.15,fill=white] {$\bm{x_\varnothing}$};
\draw (2,1) node[fill=white] {$(x_0^0,x_0^1)$};
\draw (2,-1) node[color=red,fill=white] {$\bm{(x_1^0,x_1^1)}$};

\draw (5,-1.5) node[fill=white] {$(x_{1,01}^0,x_{1,01}^1,x_{1,01}^2)$};
\draw (4.25,-2.25) node[fill=white] {$(x_{1,10}^0,x_{1,10}^1,x_{1,10}^2)$};
\draw (3.5,-3) node[fill=white] {$(x_{1,11}^0,x_{1,11}^1,x_{1,11}^2)$};
\draw (5.75,-0.75) node[color=red,fill=white] {$\bm{(x_{1,00}^0,x_{1,00}^1,x_{1,00}^2)}$};

\draw (3.5,3) node[fill=white] {$(x_{0,00}^0,x_{0,00}^1,x_{0,00}^2)$};
\draw (4.25,2.25) node[fill=white] {$(x_{0,01}^0,x_{0,01}^1,x_{0,01}^2)$};
\draw (5,1.5) node[fill=white] {$(x_{0,10}^0,x_{0,10}^1,x_{0,10}^2)$};
\draw (5.75,0.75) node[fill=white] {$(x_{0,11}^0,x_{0,11}^1,x_{0,11}^2)$};

\draw[color=red,<-] (9.15,-1.23) to[out=-10,in=270] (9,2.3);
\draw[color=red] (5.9,4) node[right] {$\exists\,h\in\mathcal{H}$};
\draw[color=red] (5.9,3.56) node[right] {$h(x_\varnothing)=1$};
\draw[color=red] (5.9,3.12) node[right] {$h(x_1^0)=0,~h(x_1^1)=0$};
\draw[color=red] (5.9,2.68) node[right] 
{$h(x_{1,00}^0)=1,~h(x_{1,00}^1)=0,~h(x_{1,00}^2)=0$};

\end{tikzpicture}
\caption{A VCL tree of depth $3$. 
Every branch is 
consistent with a concept $h\in\mathcal{H}$. This is 
illustrated here for one of the branches. Due to lack of space, not all 
external edges are drawn.
\label{fig:vclintro}}
\end{figure}

\begin{defn}
\label{defn:vcl}
A \bemph{VCL tree} for $\mathcal{H}$ of depth $d\le\infty$ is a collection
$$
	\{x_{\mathbf{u}}\in\mathcal{X}^{k+1}:0\le k<d, 
	\mathbf{u}\in\{0,1\}^1\times\{0,1\}^2\times\cdots\times\{0,1\}^k\}
$$
such that for every $n<d$ and
$\mathbf{y}\in\{0,1\}^1\times\cdots\times\{0,1\}^{n+1}$, there exists a 
concept $h\in\mathcal{H}$ so that $h(x_{\mathbf{y}_{\le k}}^i)=y_{k+1}^i$
for all $0\le i\le k$ and $0\le k\le n$, where we denote
$$
	\mathbf{y}_{\le k}=(y_1^0,(y_2^0,y_2^1),\ldots,(y_k^0,\ldots,y_k^{k-1})),
	\qquad
	x_{\mathbf{y}_{\le k}}=(x_{\mathbf{y}_{\le k}}^0,\ldots,
	x_{\mathbf{y}_{\le k}}^k).
$$
We say that $\mathcal{H}$ has an \bemph{infinite VCL tree} if it has 
a VCL tree of depth $d=\infty$.
\end{defn}

A VCL tree resembles a Littlestone tree, except that each node in a VCL tree is labelled by a sequence of $k$ points, where $k$ is the depth of the node (in contrast, every node in a Littlestone tree is labelled by a single point).  The branching factor at each node at depth $k$ of a VCL tree is thus $2^k$, rather than $2$ as in a Littlestone tree.  In the language of Vapnik-Chervonenkis theory, this means that along each path in the tree, we encounter \emph{shattered} sets of size increasing with depth.

With these definitions in hand, we can state our second main 
result: a complete characterization of the optimal rate achievable 
for any given concept class $\H$.

\begin{thm}
\label{thm:char}
For every concept class $\H$ with $|\H| \geq 3$, 
the following hold:
\begin{itemize}[$\bullet$]
\item If $\H$ does not have an infinite Littlestone tree, 
then $\H$ is learnable with optimal rate $e^{-n}$.
\item If $\H$ has an infinite Littlestone tree but does not have an infinite 
VCL tree, then $\H$ is learnable with optimal rate $\frac{1}{n}$.
\item If $\H$ has an infinite VCL tree, then $\H$ requires arbitrarily slow rates.
\end{itemize}
\end{thm}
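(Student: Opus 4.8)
The plan is to reduce Theorem~\ref{thm:char} to four implications, each handled by exhibiting a matching upper and lower bound on the rate. Since the three hypotheses---(a)~no infinite Littlestone tree, (b)~an infinite Littlestone tree but no infinite VCL tree, (c)~an infinite VCL tree---partition all concept classes, it suffices to prove: (i)~no infinite Littlestone tree $\Rightarrow$ learnable at rate $e^{-n}$; (ii)~no infinite VCL tree $\Rightarrow$ learnable at rate $\tfrac1n$; (iii)~an infinite Littlestone tree $\Rightarrow$ not learnable faster than $\tfrac1n$; (iv)~an infinite VCL tree $\Rightarrow$ requires arbitrarily slow rates. Together with Lemma~\ref{lem:exp-rate} (no class with $|\H|\ge 3$ is learnable faster than $e^{-n}$), implication~(i) yields the first bullet, (ii) and~(iii) yield the second, and~(iv) yields the third; as a by-product this re-derives the trichotomy of Theorem~\ref{thm:trichotomy}.

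For~(i) I would go through online learning. By the characterization announced in Section~\ref{sec:onl}, the absence of an infinite Littlestone tree supplies an online algorithm $\mathcal{A}$ (a generalized SOA built on an ordinal rank of $\H$) that incurs only finitely many mistakes on every realizable sequence. I would then convert $\mathcal{A}$ to a batch learner: run it on $(X_1,Y_1),\dots,(X_n,Y_n)$ and output, say, the hypothesis held at the start of the longest mistake-free block among the $n$ rounds. The point is that each mistake strictly drops the (data-revealed) rank, so after finitely many mistakes the class behaves like one of finite Littlestone dimension, on which the online learner stabilizes to a zero-error hypothesis within a geometrically-controlled number of rounds; a pigeonhole argument together with a union bound over the data-determined candidate hypotheses---each of which, if it had error $>\varepsilon$, survives a block of length $\ell$ with probability $\le(1-\varepsilon)^{\ell}$---should then yield $\E[\er(\hat h_n)]\le C e^{-cn}$ with $C,c>0$ depending only on~$\PXY$. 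The delicate point is bookkeeping: one must extract a genuinely \emph{exponential} rate rather than a merely vanishing one, which uses quantitative properties of the ordinal rank and not just a.s.\ finiteness of the mistake count.

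The lower bounds~(iii) and~(iv) are combinatorial constructions read off the trees. For~(iii): take a uniformly random branch $\mathbf{y}=(y_1,y_2,\dots)$ of the infinite Littlestone tree, place on the branch-nodes $x_{\mathbf{y}_{\le k}}$ a fixed marginal $\mu$ whose weight at depth $k$ decays polynomially, and let the target label of $x_{\mathbf{y}_{\le k}}$ be $y_{k+1}$ (consistent, on every finite prefix, with some $h\in\H$). After $n$ samples the learner has pinned down the branch only to the deepest observed node and hence misclassifies a fresh point below it with probability $\tfrac12$; tuning $\mu$ so that $\Theta(n)$ depths get sampled while a $\Theta(1/n)$ fraction of the mass still lies deeper forces $\E[\er(\hat h_n)]\gtrsim 1/n$ for infinitely many $n$, generalizing the threshold lower bound of Example~\ref{ex:threshintro}. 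For~(iv): along branches of an infinite VCL tree one meets, below every prefix, shattered sets of unbounded size; given any $R(n)\to 0$, pick depths $k_1<k_2<\cdots$ and put a small mass $\approx R(n_i)$ on a shattered set of size $\gg n_i$ sitting at depth $k_i$, the tree structure guaranteeing these local choices are jointly realizable on finite prefixes, and the classical PAC lower bound for VC dimension then forces $\E[\er(\hat h_n)]\ge R(n)$ at each $n=n_i$---this is the mechanism of Example~\ref{ex:arbslow}, with the VCL tree supplying the room to embed unbounded VC dimension at every scale. In both constructions the measurability hypotheses on $\X$ and $\H$ (Appendix~\ref{sec:polish}) are what let the random branch and the hard targets be chosen measurably, and the constructions can be arranged so that the constants are universal in the sense of Remark~\ref{rem:constants}.

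Implication~(ii)---the linear upper bound from the absence of an infinite VCL tree---is the crux and the step I expect to be hardest. Here I would again invoke an infinite game: non-existence of the infinite VCL tree makes an appropriate tree well-founded, hence equipped with an ordinal rank, and I would construct a learner by recursion on that rank. The intended behaviour is that the learner uses part of its sample to \emph{descend} the VCL structure---interleaving online-style identification steps, each strictly decreasing the rank---until it is left with a data-dependent subclass of \emph{bounded} VC dimension, on which ordinary empirical risk minimization over the remaining sample achieves error $O(1/n)$ by the classical uniform-convergence bound. The main difficulties I foresee are: (1)~turning the abstract game value into an explicit algorithm and rank function, with enough care about measurability that the descent is well defined; and (2)~controlling how the distribution-dependent constants $C,c$ compound over a (possibly transfinite) recursion, so that the final guarantee still has the clean form $C/(cn)$ with $C,c$ finite and depending only on~$\PXY$.
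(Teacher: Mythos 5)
Your decomposition into four implications plus Lemma~\ref{lem:exp-rate} is exactly the paper's (Theorems~\ref{thm:exprate}, \ref{thm:nonexp}, \ref{thm:linrate}, \ref{thm:nonlin}), and your two lower-bound constructions are in essence the ones the paper uses. (One quantitative slip in (iii): the mass on the branch must decay \emph{geometrically}, as in $\PXY_{\mathbf y}\{(x_{\mathbf y_{\le k}},y_{k+1})\}=2^{-k-1}$, so that at depth $k_n\approx\log_2 n$ both the node mass and the tail mass below it are $\Theta(1/n)$; with polynomially decaying weights and $\Theta(n)$ sampled depths the per-node mass is only $\Theta(1/n^2)$ and the argument yields a weaker bound.)

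The genuine gap is in both upper bounds: you have no mechanism for converting ``the online strategy stabilizes after an almost surely finite number of mistakes'' into a quantitative rate. The stabilization time $T$ has no controlled tail, so any scheme that conditions on the descent being finished pays $\mathbf{P}\{T>cn\}$, which tends to zero at an arbitrary rate. Your specific rule for (i) --- select the hypothesis heading the longest mistake-free block --- fails for a further reason: the candidates are data-dependent and their positive errors can be arbitrarily small, so no single $\varepsilon$ serves in the union bound; a candidate with error $\approx 1/\ell$ survives a block of length $\ell$ with constant probability, and such candidates alone contribute $\Theta(1/n)$ to $\E[\er(\hat h_n)]$, so selection cannot beat a linear rate. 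The paper instead \emph{amplifies}: Lemma~\ref{lem:estt} uses a held-out half of the sample to estimate a batch length $\hat t_n$ at which the online learner attains zero error with probability at least $5/8$ over a random batch, Corollary~\ref{cor:expr} runs it on $\lfloor n/2\hat t_n\rfloor$ disjoint batches and outputs the majority vote, and Hoeffding's inequality makes ``at least half the batch classifiers have positive error'' exponentially unlikely --- whence $\mathbf{P}\{\er(\hat h_n)>0\}\le Ce^{-cn}$. (The ordinal rank contributes nothing quantitative to the rate; it is used only to construct a measurable strategy.) The same gap recurs in (ii), where additionally the object left after pattern elimination is not a subclass of $\H$ of bounded VC dimension but a \emph{VC pattern class} (a family of forbidden labelings of $k$-tuples), to which ERM and uniform convergence do not apply --- and would in any case cost a $\log n$ factor, which is not of the form $C/(cn)$ required by Definition~\ref{defn:rate}. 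The paper feeds the realizable patterns on the data into the one-inclusion graph predictor (Lemma~\ref{lem:hlw}) to get exactly $k/n$, and again wraps everything in the estimate-a-batch-length-and-majority-vote scheme (Lemma~\ref{lem:linestt}) to absorb the uncontrolled stabilization time.
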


In particular, since Theorem~\ref{thm:trichotomy} 
follows immediately from Theorem~\ref{thm:char}, the focus 
of this work will be to prove  
Theorem~\ref{thm:char}.
The proof of this theorem, and many 
related results, are presented in the remainder of this paper.

\subsection{Technical overview}

We next discuss some technical aspects in the derivation of the trichotomy.
We also highlight key differences with the dichotomy 
of PAC learning theory.

\subsubsection{Upper bounds}

In the uniform setting, the fact that every VC class is PAC learnable is witnessed by any algorithm that outputs an concept $h\in \H$ that is consistent with the input sample. This is known in the literature as the {\it empirical risk minimization} (ERM) principle  and follows from the celebrated {\it uniform convergence theorem} of~\cite{vapnik:71}. Moreover, any ERM algorithm achieves the optimal uniform learning rate, up to lower order factors.

In contrast, in the universal setting one has to carefully design the algorithms that achieve the optimal rates. In particular, here the optimal rates are \emph{not} always achieved by general ERM methods: for example, there are classes where exponential rates are achievable, but where there exist ERM learners with arbitrarily slow rates (see Example~\ref{ex:erm-fail} below). 
The learning algorithms we propose below are novel in the literature: they are based on the theory of infinite (Gale-Stewart) games, whose connection with learning theory appears to be new in this paper. 

As was anticipated in the previous section, a basic building block of our learning algorithms is the solution of analogous problems in \emph{adversarial online learning}. For example, as a first step towards a statistical learning algorithm that achieves exponential rates, we extend the mistake bound model of \cite{littlestone:88} to scenarios where it is possible to guarantee a finite number of mistakes for each realizable sequence, but without an \emph{a priori} bound on the number of mistakes. 
We show this is possible precisely when $\H$ has no infinite Littlestone tree, in which case the resulting online learning algorithm is defined by the winning strategy of an associated Gale-Stewart game.

Unfortunately, while online learning algorithms may be applied directly to random training data, this does \emph{not} in itself suffice to ensure good learning rates. The problem is that, although the online learning algorithm is guaranteed to make no mistakes after a finite number of rounds, in the statistical context this number of rounds is a random variable for which we have no control on the variance or tail behavior. We must therefore introduce additional steps to convert such online learning algorithms into statistical learning algorithms. In the case of exponential rates, this will be done by applying the online learning algorithm to several different batches of training examples, which must then be carefully aggregated to yield a classifier that achieves an exponential rate.

The case of linear rates presents additional complications. In this setting, the corresponding online learning algorithm does not eventually stop making mistakes: it is only guaranteed to eventually rule out a finite pattern of labels (which is feasible precisely when $\H$ has no infinite VCL tree). Once we have learned to rule out one pattern of labels for every data sequence of length $k$, the situation becomes essentially analogous to that of a VC class of dimension $k-1$. In particular, we can then apply the one-inclusion graph predictor of \citet*{haussler:94} to classify subsequent data points with a linear rate. When applied to random data, however, both the time it takes for the online algorithm to learn to rule out a pattern, and the length $k$ of that pattern, are random. We must therefore again apply this technique to several different batches of training examples and combine the resulting classifiers with aggregation methods to obtain a statistical learning algorithm that achieves a linear rate.

\subsubsection{Lower bounds}

The proofs of our lower bounds are also significantly more involved than those in PAC learning theory. In contrast to the uniform setting, we are required to 
produce a \emph{single} data distribution $\PXY$ for which the given learning algorithm has the claimed lower bound for infinitely many $n$. To this end, we will apply the probabilistic method by randomizing over \emph{both} the 
choice of target labellings for the space, \emph{and} the marginal distribution on $\X$, coupling these two components of $\PXY$.

\subsubsection{Constructability and measurability}

There is a serious technical issue that arises in our theory that gives
rise to surprisingly interesting mathematical questions. In order to apply the winning strategies of Gale-Stewart games to random data, we must ensure such strategies are measurable: if this is not the case, our theory may fail spectacularly (see Appendix \ref{sec:nonmeas}). However, nothing appears to be known in the literature about the measurability of Gale-Stewart strategies in nontrivial settings.

That measurability issues arise in learning theory is not surprising, of course; this is also the case in classical PAC learning \cite{blumer:89,Pes11}. Our basic measurability assumption (Definition~\ref{defn:suslin}) is also the standard assumption made in this setting~\cite{Dud14}. It turns out, however, that measurability issues in classical learning theory are essentially benign: the only issue that arises there is the measurability of the supremum of the empirical process over~$\mathcal{H}$. This can be trivially verified in most practical situations without the need for an abstract theory: for example, measurability of the empirical process is trivial when $\mathcal{H}$ is countable, or when $\H$ can be pointwise approximated by a countable class. For these reasons, measurability issues in classical learning theory are often considered ``a minor nuisance''. The situation in this paper is completely different: it is entirely unclear \emph{a priori} whether Gale-Stewart strategies are measurable even in apparently trivial cases, such as when $\H$ is countable.

We will prove the existence of measurable strategies for a general class of Gale-Stewart games that includes all the ones encountered in this paper. The solution of this problem exploits an interplay between the mathematical and algorithmic aspects of the problem. To construct a measurable strategy, we will explicitly define a strategy by means of a kind of greedy algorithm that aims to minimize in each step a value function that takes values in the \emph{ordinal numbers}. This construction gives rise to unexpected new notions for learning theory: for example, we will show that the complexity of online learning is characterized by an \emph{ordinal} notion of Littlestone dimension, which agrees with the classical notion when it is finite. To conclude the proof of measurability, we combine these insights with a deep result of descriptive set theory (the Kunen-Martin theorem) which shows that the Littlestone dimension of a \emph{measurable} class $\H$ is always a \emph{countable} ordinal.

\subsection{Related work}
\label{sec:related}

To conclude the introduction, we briefly review prior work on the subject of universal learning rates.

\subsubsection{Universal consistency}
\label{sec:universalcons}

An extreme notion of learnability in the universal setting is \emph{universal consistency}: a learning algorithm is universally consistent if $\E[\er(\hat{h}_n)] \to \inf_{h} \er(h)$ for \emph{every} distribution $\PXY$.
The first proof that universally consistent learning is possible was provided by \citet*{stone:77}, using \emph{local average} estimators, such as based on 
k-nearest neighbor predictors, kernel rules, and histogram rules;
see \citep*{devroye:96} for a thorough discussion of such results.
One can also establish universal consistency of learning rules via the technique of \emph{structural risk minimization} from 
\citet*{vapnik:74}. 
The most general results on universal consistency were recently established by 
\citet*{hanneke:17} and \citet*{hanneke:19a}, 
who proved the existence of universally consistent learning algorithms in any \emph{separable metric space}. In fact, \citet*{hanneke:19a} 
establish this for even more general 
spaces, called \emph{essentially separable}, and prove 
that the latter property is actually \emph{necessary} for 
universal consistency to be possible.
An immediate implication of their result is that in such spaces~$\X$,  
and choosing $\H$ to be the set of all measurable functions, 
there exists a learning algorithm with $\E[ \er(\hat{h}_n) ] \to 0$ 
for all realizable distributions $\PXY$ (cf.\ Example \ref{ex:arbslow}).
In particular, since we assume in this paper that $\X$ 
is Polish (i.e., separably metrizable), 
this result holds in our setting.

While these results establish that it is always possible 
to have $\E[ \er(\hat{h}_n)] \to 0$ for all realizable~$\PXY$, there is a so-called \emph{no free lunch} theorem showing that it is not generally possible to bound the \emph{rate} of convergence: that is, the set $\H$ of all 
measurable functions requires arbitrarily slow rates \citep*{devroye:96}.
The proof of this result also extends to more general concept classes:
the only property of $\H$ that was used in the proof is that it finitely shatters some countably infinite subset of $\X$,
that is, there exists $\X' = \{x_1,x_2,\ldots\} \subseteq \X$ such that,
for every $n \in \nats$ and $y_1,\ldots,y_n \in \{0,1\}$, there is
$h \in \H$ with $h(x_i)=y_i$ for every $i \leq n$.
It is natural to wonder whether the existence of 
such a countable finitely shattered set $\X'$ is also \emph{necessary} 
for $\H$ to require arbitrarily slow rates.
Our main result settles this question in the negative.
Indeed, Theorem~\ref{thm:char} 
states that the existence of an infinite VCL tree 
is both necessary and sufficient for a concept class $\H$ to 
require arbitrarily slow rates; but it is possible for a class $\H$ to have an infinite VCL tree while it does not finitely shatter any countable set $\X'$
(see Example~\ref{ex:slow-no-shatter} below).

\subsubsection{Exponential versus linear rates}

The distinction between \emph{exponential} and \emph{linear} rates has been studied by \citet*{schuurmans:97} in some special cases. Specifically, \citet*{schuurmans:97} studied classes $\H$ that are \emph{concept chains}, meaning that every $h,h' \in \H$ have either $h \leq h'$ everywhere or $h' \leq h$ everywhere. For instance, threshold classifiers on the real line (Example \ref{ex:threshintro}) are a simple example of a concept chain.

Since any concept chain $\H$ must have VC dimension at most $1$, the optimal rates can never be slower than linear \citep*{haussler:94}.  
However, \citet*{schuurmans:97} found that some concept chains are universally learnable at an exponential rate, and gave a precise characterization 
of when this is the case.
Specifically, he established that a concept chain $\H$ 
is learnable at an exponential rate if and only if 
$\H$ is \emph{nowhere dense}, meaning that there is 
no infinite subset $\H' \subseteq \H$ such that, 
for every distinct $h_1,h_2 \in \H'$ with $h_1 \leq h_2$ everywhere, 
$\exists h_3 \in \H' \setminus \{h_1,h_2\}$ 
with $h_1 \leq h_3 \leq h_2$ everywhere. 
He also showed that concept chains $\H$ 
failing this property (i.e., that are \emph{somewhere dense}) 
are not learnable at rate faster than $n^{-(1+\varepsilon)}$
(for any $\varepsilon > 0$); under further special conditions, he 
sharpened this lower bound to a strictly linear rate $n^{-1}$.

It is not difficult to see that for concept chain classes, 
the property of being somewhere dense precisely corresponds 
to the property of having an infinite Littlestone tree, 
where the above set $\H'$ corresponds to the set of classifiers 
involved in the definition of the infinite Littlestone tree.
Theorem~\ref{thm:char} therefore recovers the result of 
\citet*{schuurmans:97} as a very special case, 
and sharpens his $n^{-(1+\epsilon)}$ 
general lower bound to a strict linear rate $n^{-1}$.

\citet*{schuurmans:97} also posed the question of whether his analysis can be extended beyond concept chains: that is, whether there is a 
\emph{general} characterization of which classes $\H$ are learnable at an exponential rate, versus which classes are not learnable at faster than a linear rate. This question is completely settled by the main results of this paper.

\subsubsection{Classes with matching universal and uniform rates}

\citet*{antos:98} showed that there exist concept classes for which no improvement on the PAC learning rate is possible in the universal setting.
More precisely, they showed that, for any $d \in \nats$, 
there exists a concept class $\H$ of VC dimension $d$ 
such that, for any learning algorithm $\hat{h}_n$, 
there exists a realizable distribution $\PXY$ for which 
$\E[\er(\hat{h}_n)] \geq \frac{c d}{n}$ 
for infinitely many $n$, where the numerical constant $c$ 
can be made arbitrarily close to $\frac{1}{2}$.  This shows that 
universal learning rates for some classes tightly match 
their minimax rates up to a numerical constant factor.

\subsubsection{Active learning}

Universal learning rates have also been considered in the context of \emph{active learning}, under the names \emph{true sample complexity} 
or \emph{unverifiable sample complexity} 
\citep*{hanneke:thesis,hanneke:12a,hanneke:10a,hanneke:13}.
Active learning is a variant of supervised learning, where the learning algorithm observes only the sequence $X_1,X_2,\ldots$ of \emph{unlabeled} examples, and may select which examples $X_i$ to \emph{query} (which reveals their labels $Y_i$); this happens sequentially, so that 
the learner observes the response to a query before selecting 
its next query point. In this setting, one is interested in characterizing 
the rate of convergence of $\E[ \er(\hat{h}_n) ]$ 
where $n$ is the number of \emph{queries} 
(i.e., the number of labels observed) as opposed to the sample size.

\citet*{hanneke:12a} showed that for any VC class $\H$, there is an active learning algorithm $\hat{h}_n$ such that, for every realizable 
distribution $\PXY$, $\E[ \er(\hat{h}_n)] = o\!\left(\frac{1}{n}\right)$.
Note that such a result is certainly \emph{not} achievable by passive learning 
algorithms (i.e., the type of learning algorithms discussed in the present work), given the results of \citet*{schuurmans:97} and \citet*{antos:98}.
The latter also follows from the results of this paper by Example~\ref{ex:infinite-tree} below.

\subsubsection{Nonuniform learning}
\label{sec:nonuniformlearn}

Denote by $\mathrm{RE}(h)$ the family of distributions $\PXY$ such that $\er(h)=0$ for a given classifier $h\in H$. 
\citet*{benedek:94} 
considered a \emph{partial} relaxation of the PAC model, called \emph{nonuniform learning}, in which the learning rate may depend on $h\in\H$ but is still uniform over $\PXY\in\mathrm{RE}(h)$.
This setting intermediate between the PAC setting (where the rate may depend only on $n$) and the universal learning setting (where the rate may
depend fully on $\PXY$). A concept class $\H$ is said to be learnable in the 
nonuniform learning setting if there exists a learning algorithm 
$\hat h_n$ such that $\sup_{\PXY\in\mathrm{RE}(h)}\mathbf{E}[\er(\hat h_n)]\to 0$ as $n\to\infty$ for every $h \in \H$.  

\citet*{benedek:94} 
proved that a concept class $\H$ is 
learnable in the nonuniform learning model 
if and only if $\H$ is a countable union of VC classes.
In Example~\ref{ex:not-countable-vc} below, 
we show that there exist classes $\H$ that are 
universally learnable, even at an exponential rate, 
but which are \emph{not} learnable in the nonuniform learning setting. 
It is also easy to observe that there exist 
classes $\H$ that are countable unions of 
VC classes (hence nonuniformly learnable) 
which have an infinite VCL tree (and thus 
require arbitrarily slow universal learning rates).
The universal and nonuniform learning models are therefore incomparable.

\section{Examples}
\label{sec:examples}

In Section \ref{sec:warmup}, we introduced three basic examples that illustrate the three possible universal learning rates. In this section we provide further examples. The main aim of this section is to illustrate important distinctions with the uniform setting and other basic concepts in learning theory, which are illustrated schematically in Figure~\ref{fig:venn}.
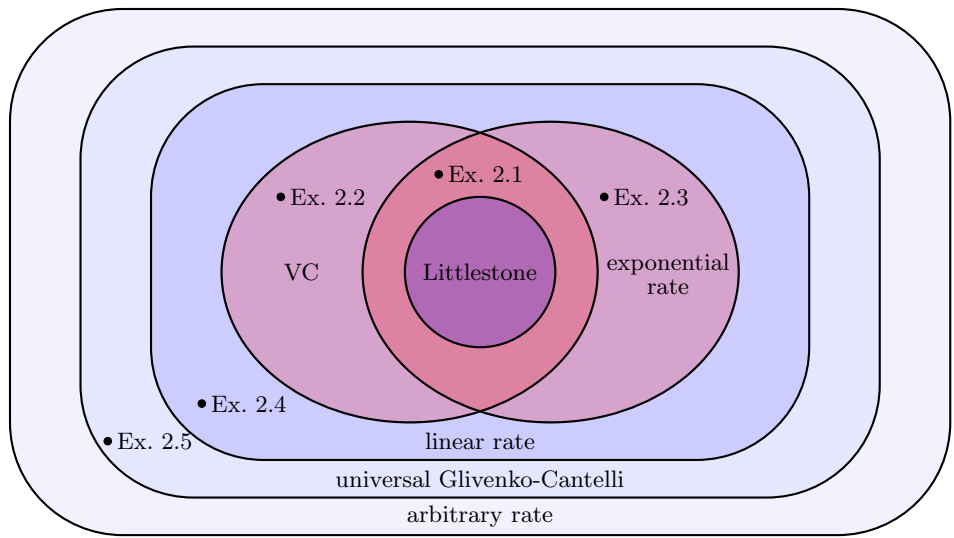
\begin{figure}
\begin{center}
\begin{tikzpicture}

\begin{scope}[xscale=1.25]

\draw[thick,fill=blue!5!white,rounded corners=1.5cm]
(0,-.5) rectangle (10,6.5);

\draw[thick,fill=blue!10!white,rounded corners=1.5cm]
(.75,0) rectangle (9.25,6);

\draw[thick,fill=blue!20!white,rounded corners=1.5cm]
(1.5,.5) rectangle (8.5,5.5);

\draw[thick,fill=red,opacity=.2] (4.25,3) circle (2);
\draw[thick,fill=red,opacity=.2] (5.75,3) circle (2);

\draw[thick] (4.25,3) circle (2);
\draw[thick] (5.75,3) circle (2);

\draw (5,-.25) node {\small arbitrary rate};
\draw (5,.25) node {\small universal Glivenko-Cantelli};
\draw (5,.75) node {\small linear rate};
\draw (3.1,3) node {\small VC};

\draw (7,3.1) node {\small exponential};
\draw (7,2.8) node {\small rate};

\end{scope}

\draw[thick,fill=blue,opacity=.2] (6.25,3) circle (1);
\draw[thick] (6.25,3) circle (1);
\draw (6.25,3) node {\small Littlestone};

\filldraw[color=black] (5.7,4.3) circle (.05) node[right] {\small Ex.\ 2.1};
\filldraw[color=black] (7.9,4) circle (.05) node[right] {\small Ex.\ 2.3};
\filldraw[color=black] (3.6,4) circle (.05) node[right] {\small Ex.\ 2.2};
\filldraw[color=black] (2.55,1.25) circle (.05) node[right] {\small Ex.\ 2.4};
\filldraw[color=black] (1.3,0.75) circle (.05) node[right] {\small Ex.\ 2.5};

\end{tikzpicture}

\end{center}
\caption{A Venn diagram depicting the trichotomy and its relation with uniform and universal learnability. While the focus here is on statistical learning, 
note that this diagram also captures the distinction between uniform and universal online learning, see Section~\ref{sec:onl}.}
\label{fig:venn}
\end{figure}

\subsection{Universal learning versus PAC learning}

We begin by giving four examples that illustrate that the classical PAC learning model (which is characterized by finite VC dimension) is not comparable to the universal learning model.

\begin{examp}[VC with exponential rate]
\label{ex:naturalthresholds}
Consider the class $\H\subseteq\{0,1\}^{\mathbb{N}}$ of all threshold functions
$h_t(x)=\mathbf{1}_{x\ge t}$ where $t\in\mathbb{N}$. 
This is a VC class (its VC dimension is $1$), which is learnable at an exponential rate (it does not have an infinite Littlestone tree). Note, however, that this class has unbounded Littlestone dimension (it shatters Littlestone trees of arbitrary finite depths), so that it does not admit an online learning algorithm that makes a uniformly bounded number of mistakes.
\end{examp}

\begin{examp}[VC with linear rate]
\label{ex:infinite-tree}
Consider the class $\H\subseteq \{0,1\}^{\mathbb{R}}$ of all threshold functions
$h_t(x)=\mathbf{1}_{x\geq t}$, where $t\in \mathbb{R}$. 
This is a VC class (its VC dimension is $1$)
that is not learnable at an exponential rate
(it has an infinite Littlestone tree).
Thus the optimal rate is linear.
\end{examp}

\begin{examp}[Exponential rate but not VC]
\label{ex:ldinf-notree}
Let $\mathcal{X}=\bigcup_k\mathcal{X}_k$ be the disjoint union of finite sets $|\mathcal{X}_k|=k$. 
For each $k$, let $\H_k=\{\mathbf{1}_S:S\subseteq\mathcal{X}_k\}$, and consider the concept class $\H=\bigcup_k\H_k$.
This class has an unbounded VC dimension, yet is universally learnable at an exponential rate. 
To establish the latter, it suffices to prove that~$\H$ does not have an infinite Littlestone tree.
Indeed, once we fix any root label $x\in\mathcal{X}_k$ of a Littlestone tree, only $h\in\H_k$ can satisfy $h(x)=1$, and so the hypotheses consistent with the subtree corresponding to $h(x)=1$ form a finite class. This subtree can therefore have only finitely many leaves, contradicting the existence of an infinite Littlestone tree.
\end{examp}

\begin{examp}[Linear rate but not VC]
\label{ex:linear-non-vc}
Consider the disjoint union of the classes of Examples \ref{ex:infinite-tree} and \ref{ex:ldinf-notree}: that is, $\mathcal{X}$ is the disjoint union of $\mathbb{R}$ and finite sets $\mathcal{X}_k$ with $|\mathcal{X}_k|=k$, and $\H$ is the union of the class of all threshold functions on $\mathbb{R}$ and the classes $\H_k=\{\mathbf{1}_S:S\subseteq\mathcal{X}_k\}$. This class has an unbounded VC dimension, yet is universally learnable at a linear rate. To establish the latter, it suffices to note that $\H$ has an infinite Littlestone tree as in Example \ref{ex:infinite-tree}, but $\H$ cannot have an infinite VCL tree. 
Indeed, once we fix any root label $x\in\mathcal{X}$, the class
$\{h\in\H:h(x)=1\}$ has finite VC dimension, and thus the corresponding subtree of the VCL tree must be finite.
\end{examp}

\subsection{Universal learning algorithms versus ERM}

The aim of the next two examples is to shed some light on the type of algorithms that can give rise to optimal universal learning rates. Recall that in the PAC model, a concept class is learnable if and only if it can be learned by any ERM (empirical risk minimization) algorithm. The following examples will show that the ERM principle cannot explain the achievable universal learning rates; the  algorithms developed in this paper are thus necessarily of a different nature.

An ERM algorithm is any learning rule that outputs a concept in $\H$ that minimizes the empirical error. There may in fact be many such hypotheses, and thus there are many inequivalent ERM algorithms. Learnability by means of a general ERM algorithm is equivalent to the \emph{Glivenko-Cantelli} property: that is, that the empirical errors of all $h\in \H$ 
converge \emph{simultaneously} to the corresponding population errors as $n\to\infty$. The Glivenko-Cantelli property has a \emph{uniform} variant, in which the convergence rate is uniform over all data distributions $\PXY$; this property is equivalent to PAC learnability and is characterized by VC dimension \citep{vapnik:71}. It also has a \emph{universal} variant, where the convergence holds for every $\PXY$ but with distribution-dependent rate; the latter is equivalent to the universal consistency of a general ERM algorithm. A combinatorial characterization of the universal Glivenko-Cantelli property is given by \citet{van-handel:13}.

The following example shows that even if a concept class is universally learnable by a general ERM algorithm, this need not yield any control on the learning rate. This is in contrast to the PAC setting, where learnability by means of ERM always implies a linear learning rate.

\begin{examp}[Arbitrarily slow rates but learnable by any ERM]
\label{ex:ugc-notlinear}
Let $\X=\mathbb{N}$ and let $\H$ be the class of all classifiers on $\X$.
This class has an infinite VCL tree and thus requires arbitrarily slow rates;
but $\H$ is a universal Glivenko-Cantelli class and thus
any ERM algorithm is universally consistent.
\end{examp}

In contrast, the next example shows that there are are scenarios where extremely fast universal learning is achievable, but where a general ERM algorithm can give rise to arbitrarily slow rates.

{
\begin{examp}[Exponential rate achivable but general ERM arbitrarily slow]
\label{ex:erm-fail}
Let $\X=\bigcup_{i\in\nats}\X_i$ be the disjoint union of finite sets with $|\X_i|=2^i$. For each $i\in\nats$, let
\[
    \H_i = \{ \mathbf{1}_I : I\subseteq \X_i,~|I|\geq 2^{i-1}\},
\]
and consider the concept class $\H=\bigcup_{i\in\nats}\H_i$.
It follows exactly as in Example \ref{ex:ldinf-notree} that $\H$ has no infinite Littlestone tree, so that it is universally learnable at an exponential rate.

We claim there exists, for any rate function $R(n)\to 0$, an ERM algorithm that achieves rate slower than $R$. In the following, we fix any such $R$, as well as strictly increasing sequences $\{n_t\}$ and $\{i_t\}$ satisfying the following: letting $p_{t} = \frac{2^{i_t-2}}{n_t}$, 
it holds that $p_t$ is decreasing, 
$\sum_{t=1}^{\infty} p_t \leq 1$, 
and $p_t \geq 4 R(n_t)$. The reader may verify that such sequences can be constructed by induction on $t$.

Now consider any ERM with the following property: if the input data $(X_1,Y_1),\ldots,(X_n,Y_n)$ is such that
$Y_i=0$ for all $i$, then the algorithm outputs $\hat{h}_n \in \H_{i_{T_n}}$ with
\[
    T_n = \min\{t:\mbox{ there exists }h\in\H_{i_t}\mbox{ such that }h(X_1)=\cdots=h(X_n)=0\}.
\]
We claim that such ERM perform poorly on the data distribution $\PXY$ defined by
\[
    \PXY\{(x,0)\} = 2^{-i_t}p_t\quad\mbox{for all }x\in\X_{i_t},~
    t\in\nats,
\]
where we set $\PXY\{(x',0)\}=1-\sum_{t=1}^\infty p_t$ for some arbitrary choice of $x'\not\in\bigcup_{t\in\nats}\X_{i_t}$. Note that $\PXY$ is realizable, as $\inf_i\er(h_i)\le\inf_i\PXY\{(x,y):x\in\X_i\}=0$ for any $h_i\in\H_i$.

It remains to show that $\mathbf{E}[\er(\hat h_n)]\ge R(n)$ for infinitely many $n$. To this end, note that by Markov's inequality, there is a probability at least $1/2$ that the number of $(X_1,Y_1),\ldots,(X_{n_t},Y_{n_t})$ such that $X_j\in\X_{i_t}$ is at most $2^{i_t-1}$. On this event, we must have $T_n\le t$, so that
\[
    \er(\hat h_{n_t}) \ge
    \tfrac{1}{2}\PXY\{(x,0):x\in\X_{i_{T_n}}\}
    \ge \tfrac{p_t}{2} \ge 2R(n_t).
\]
Thus we have shown that $\mathbf{E}[\er(\hat h_{n_t})]\ge R(n_t)$ for all $t\in\nats$.
\end{examp}
}

\subsection{Universal learning versus other learning models}

{The nonuniform learning model of
\citet*{benedek:94} is intermediate between universal and PAC learning, see section \ref{sec:nonuniformlearn}.
Our next example shows that a concept class may be not even learnable in the nonuniform sense, while exhibiting the fastest rate of uniform learning.}

\begin{examp}[Exponential rate but not nonuniformly learnable]
\label{ex:not-countable-vc}
The following class can be learned at an exponential rate,
yet it cannot be presented as a countable union of VC classes
(and hence it is not learnable in the nonuniform setting by \citealp*{benedek:94}):
{
\[
    \X = \{S\subset\mathbb{R}:|S|<\infty\}, \qquad\quad
    \H = \{h_y:y\in\mathbb{R}\},
\]
where $h_y(S) = \mathbf{1}_{y\in S}$.
We first claim that $\H$ has no infinite Littlestone tree: indeed, once we fix a root label $S\in\mathcal{X}$ of a Littlestone tree, the class $\{h\in\H:h(S)=1\}$ is finite, so the corresponding subtree must be finite.
Thus $\H$ is universally learnable at an exponential rate.

On the other hand, suppose that $\H$ were a countable union of VC classes. Then one element of this countable union must contain infinitely many hypotheses (as $\mathbb{R}$ is uncountable). This is a contradiction, as any infinite subset $\{h_y:y\in I\}\subseteq\H$ with $I\subseteq\mathbb{R}$, $|I|=\infty$ has unbounded VC dimension (as its dual class is the class of all finite subsets of $I$).}
\end{examp}

{Our next example is concerned with the characterization of arbitrarily slow rates. As we discussed in section \ref{sec:universalcons}, a \emph{no free lunch} theorem of \citet*{devroye:96} shows that
a \emph{sufficient} condition for a class $\H$ 
to require arbitrarily slow rates is that 
there exists an infinite set $\X' \subseteq \X$
finitely shattered by $\H$: 
that is, there exists $\X' = \{x_1,x_2,\ldots\} \subseteq \X$ such that, 
for every $n \in \nats$ 
and $y_1,\ldots,y_n \in \{0,1\}$, 
there is $h \in \H$ with $h(x_i)=y_i$ for 
every $i \leq n$.
Since our Theorem~\ref{thm:char} 
indicates that existence of an infinite VCL tree 
is both sufficient \emph{and necessary}, it is 
natural to ask how these two conditions relate 
to each other.  It is easy to see that the 
existence of a finitely shattered 
infinite set $\X'$ implies 
the existence of an infinite VCL tree.
However, the following example shows that the opposite is \emph{not} 
true: that is, there exist classes $\H$ 
with an infinite VCL tree that do  
not finitely shatter an infinite set $\X'$.
Thus, these conditions are not equivalent, 
and our Theorem~\ref{thm:char} provides a 
strictly weaker condition sufficient for 
$\H$ to require arbitrarily slow rates.}

\begin{examp}[No finitely shattered infinite set, but requires arbitrarily slow rates]
\label{ex:slow-no-shatter}
{Consider a countable space $\X$ that is
itself structured into nodes of a VCL tree: that is, 
\[
    \X = \{ x_{\mathbf{u}}^{i} : k \in \nats \cup \{0\}, i \in \{0,\ldots,k\}, \mathbf{u} \in \{0,1\}^1 \times \{0,1\}^2 \times \cdots \times \{0,1\}^k \},
\]
where each $x_{\mathbf{u}}^{i}$ is a distinct point.
Then for each $\mathbf{y} = (y_1^0,(y_2^0,y_2^1),\ldots,(y_k^0,\ldots,y_k^{k-1}),\ldots) \in \{0,1\}^1 \times \{0,1\}^2 \times \cdots$,
define $h_{\mathbf{y}}$ such that every 
$k \in \nats \cup \{0\}$ and $i \in \{0,\ldots,k\}$ 
has $h_{\mathbf{y}}(x_{\mathbf{y}_{\leq k}}^i) = y_{k+1}^i$, 
and every $x \in \X \backslash \{ x_{\mathbf{y}_{\leq k}}^i : k \in \nats \cup \{0\}, i \in \{0,\ldots,k\} \}$ has $h_{\mathbf{y}}(x)=0$.
Then define 
\[
    \H = \{h_{\mathbf{y}} : \mathbf{y} \in \{0,1\}^1 \times \{0,1\}^2 \times \cdots \}.
\]
By construction, this class $\H$ has an infinite VCL tree.
However, any set $S \subset \X$ of size at least $2$ 
which is shattered by $\H$ must be contained within 
a single node of the tree.  In particular, since any 
countable set $\X' = \{x'_1,x'_2,\ldots\} \subseteq \X$ necessarily 
contains points $x'_i,x'_j$ existing in different 
nodes of the tree, the set $\{x'_1,\ldots,x'_{\max\{i,j\}}\}$ 
is not shattered by $\H$, so that 
$\X'$ is not finitely shattered by $\H$.}
\end{examp}

\subsection{Geometric examples}
\label{sec:halfspaces}

{The previous examples were designed to illustrate the key features of the results of this paper in comparison with other learning models; however, these examples may be viewed as somewhat artificial. To conclude this section, we give two examples of ``natural'' geometric concept classes that are universally learnable with exponential rate. This suggests that our theory has direct implications for learning scenarios of the kind that may arise in applications.}

{
\begin{examp}[Nonlinear manifolds]
Various practical learning problems are naturally expressed by concepts that indicate whether the data lie on a manifold. The following construction provides one simple way to model classes of nonlinear manifolds. Let the domain $\mathcal{X}$ be any Polish space, and fix a measurable function $g:\mathcal{X}\to\mathbb{R}^d$ with $d<\infty$. For a given $k<\infty$, consider the concept class 
\[
    \H = \{ \mathbf{1}_{Ag=0} : A\in\mathbb{R}^{k\times d}\}.
\]
The coordinate functions $g_1,\ldots,g_d$ describe the nonlinear features of the class. For example, if $\mathcal{X}=\mathbb{C}^n$ and $g_j$ are polynomials, this model can describe any class of affine algebraic varieties.

We claim that $\H$ is universally learnable at exponential rate. It suffices to show that, in fact, $\H$ has finite Littlestone dimension. To see why, fix any Littlestone tree, and consider its branch $x_\varnothing,x_1,x_{11},\ldots$; for simplicity, we will denote these points in this example as $x^0,x^1,x^2,\ldots$. Define
\[
    V_j = \{A\in\mathbb{R}^{k\times d} : Ag(x^i)=0\mbox{ for }i=0,\ldots,j\}.
\]
Each $V_j$ is a finite-dimensional linear space.
Now note that if $V_j=V_{j-1}$, then all $h\in\H$ such that
$h(x^i)=1$, $i=1,\ldots,{j-1}$ satisfy $h(x^j)=1$; but this is impossible, as the definition of a Littlestone tree requires the existence of $h\in\H$ such that $h(x^i)=1$, $i=1,\ldots,{j-1}$ and $h(x^j)=0$. Thus the dimension of $V_j$ must decrease strictly in $j$, so the branch $x_\varnothing,x_1,x_{11},\ldots$ must be finite.
\end{examp}
}

\begin{examp}[Positive halfspaces on $\nats^d$]
{%
{It is a classical fact that the class of halfspaces on $\mathbb{R}^d$ has finite VC dimension, and it is easy to see this class has an infinite Littlestone tree. Thus the PAC rate cannot be improved in this setting. The aim of this example is to show that the situation is quite different if one considers positive halfspaces on a lattice $\nats^d$: such a class is universally learnable with exponential rate. This may be viewed as an extension of Example~\ref{ex:naturalthresholds}, which illustrates that some geometric classes on discrete spaces can be universally learned at a much faster rate than geometric classes on continuous spaces (a phenomenon not captured by the PAC model).}

More precisely, let $\X = \nats^d$ for some $d \in \nats$, 
and let $\H$ be the class of positive halfspaces:
\[
    \H = \{ \ind_{\mathbf{w} \cdot \mathbf{x} - b \geq 0} : (\mathbf{w},b) \in (0,\infty)^{d+1} \}.
\]
We will argue that $\H$ is universally learnable at an 
exponential rate by constructing 
an explicit learning algorithm guaranteeing 
a finite number of mistakes for 
every realizable data sequence. 
As will be argued in Section~\ref{sec:online} below, 
the existence of such an algorithm 
immediately implies $\H$ does not have an 
infinite Littlestone tree.
Moreover, we show in Section~\ref{sec:exp} that 
such an algorithm can be converted into a 
learning algorithm achieving exponential 
rates for all realizable distributions $\PXY$.

Let 
$S_n \in (\X \times \{0,1\})^n$ be any data set 
consistent with some $h \in \H$.
If every $(x_i,y_i) \in S_n$ has $y_i=0$, 
let $\hat{h}_n(x)=0$ for all $x \in \X$.
Otherwise, let $\hat{h}_n(x) = \mathbf{1}_{x \in {\rm L}(\{x_i : (x_i,1)\in S_n\})}$, 
where 
\[
{\rm L}(\{z_1,\ldots,z_t\}) 
= \bigg\{ z'+\sum_{i \leq t} \alpha_i z_i : \alpha_i \in [0,1], \sum_{i \leq t} \alpha_i = 1, z' \in [0,\infty)^d  \bigg\}
\]
for any $t \in \nats$ and $z_1,\ldots,z_t \in \X$. 
${\rm L}(\{z_1,\ldots,z_t\})$ 
is the smallest region containing the convex hull 
of $z_1,\ldots,z_t$ 
for which the indicator of the region 
is non-decreasing in every dimension.

Now consider any sequence 
$\{(x_i,y_i)\}_{i \in \nats}$ 
in $\X \times \{0,1\}$ 
such that for each $n \in \nats$, 
letting $S_n = \{(x_i,y_i)\}_{i=1}^n$, 
there exists $h^*_n \in \H$ 
with $h^*_n(x_i)=y_i$ for 
all $i \leq n$.
Since  
$\{x : h^*_{n+1}(x)=1\}$ is convex, and 
$h^*_{n+1}(x)$ is non-decreasing 
in every dimension, 
we have 
$\hat h_n \le h^*_{n+1}$.
This implies that 
any $n \in \nats$ with 
$\hat{h}_n(x_{n+1}) \neq y_{n+1}$ 
must have $y_{n+1}=1$ 
and $\hat{h}_n(x_{n+1})=0$.
{Therefore, by the definition 
of ${\rm L}(\cdot)$, 
the following must hold for 
any $n$ with 
$\hat{h}_n(x_{n+1}) \neq y_{n+1}$:
for every
$i \leq n$ such that
$y_i = 1$, there exists a coordinate $1\leq j \leq d$ 
such that $(x_{n+1})_j < (x_i)_j$.}

Now suppose, for the sake of obtaining a contradiction, that there is an increasing 
infinite sequence $\{n_t\}_{t\in\nats}$ such 
that $\hat{h}_{n_t}(x_{n_t+1}) \neq y_{n_t+1}$, 
and consider a coloring of the infinite complete graph with
vertices $\{x_{n_t+1}\}_{t\in\nats}$ where every edge $\{x_{n_t+1},x_{n_{t'}+1}\}$ with $t<t'$ is
colored with a value 
$\min\{ j : (x_{n_{t'}+1})_j < (x_{n_t+1})_j \}$.
Then the infinite Ramsey theorem 
implies there exists an infinite  
monochromatic clique: that is, 
a value $j \leq d$ and 
an infinite subsequence $\{n_{t_{i}}\}$
with $(x_{n_{t_{i}}+1})_j$ 
strictly decreasing in $i$.
This is a contradiction, since clearly 
any strictly decreasing sequence 
$(x_{n_{t_{i}}+1})_j$ maintaining 
$x_{n_{t_{i}}+1} \in \X$ 
can be of length at most  $(x_{n_{t_1}+1})_j$, 
which is finite.
Therefore, the learning algorithm 
$\hat{h}_n$ makes at most a finite 
number of mistakes on any such sequence $\{(x_i,y_i)\}_{i \in \nats}$.
Let us note, however, that there can be no \emph{uniform} bound on the number of mistakes (independent of the specific sequence $\{(x_i,y_i)\}_{i \in \nats}$), since the Littlestone dimension of $\H$ is infinite.
}
\end{examp}

\section{The adversarial setting}\label{sec:online}

Before we proceed to the main topic of this paper, we introduce 
a simpler adversarial analogue of our learning problem. The 
strategies that arise in this adversarial setting form a key 
ingredient of the {statistical learning algorithms that will appear in our main results}.
At the same time, it motivates us to introduce a number of important concepts that play a central role in the 
sequel.

\subsection{The online learning problem}
\label{sec:onl}

Let $\mathcal{X}$ be a set, and let the concept class $\mathcal{H}$ be 
a collection of indicator functions $h:\mathcal{X}\to\{0,1\}$. We consider 
an online learning problem defined as a game between the {\em learner} and an {\em adversary}. 
The game is played in rounds. In each 
round $t \geq 1$:
\begin{enumerate}[$\bullet$]
\itemsep\abovedisplayskip
\item The adversary chooses a point $x_t\in\mathcal{X}$.
\item The learner predicts a label $\hat y_t\in\{0,1\}$.
\item The adversary reveals the true label $y_t=h(x_t)$ for some
function $h\in\mathcal{H}$ that is consistent with the previous
label assignments $h(x_1)=y_1,\ldots,h(x_{t-1})=y_{t-1}$.
\end{enumerate}
The learner makes a mistake in round $t$ if $\hat y_t\ne y_t$.
\an{The goal of the learner is to make as few mistakes as possible and the goal of the adversary is to cause as many mistakes as possible.}
{The adversary need not choose a target concept $h\in\H$ in advance, but must}
{ensure that the sequence $\{(x_t,y_t)\}_{t=1}^\infty$ 
is \emph{realizable} by $\H$ in the sense that for all $T\in\mathbb{N}$
there exists $h\in \H$ such that $h(x_t)=y_t$ for all $t\leq T$.
That is, each prefix $\{(x_t,y_t)\}_{t=1}^T$ must be consistent with some $h\in\H$.}

We say that the concept class $\mathcal{H}$ is 
\bemph{online learnable} if there is a strategy 
\[\hat y_t=\hat y_t(x_1,y_1,\ldots,x_{t-1},y_{t-1},x_t),\]
that makes only finitely many 
mistakes, regardless of what {realizable} sequence $\{(x_t,y_t)\}_{t= 1}^\infty$
is presented by the adversary.

{The above notion of learnability may be viewed as a \emph{universal} analogue of the \emph{uniform} mistake bound model of \citet{littlestone:88}, which asks when there exists a strategy that is guaranteed to make at most $d<\infty$ mistakes for any input. Littlestone showed that this is the case if and only if $\H$ has no Littlestone tree of depth $d+1$. Here we ask only that the strategy makes a finite number of mistakes on any input, without placing a uniform bound on the number of mistakes. The main result of this section shows that this property is fully characterized by the existence of \emph{infinite} Littlestone trees. Let us recall that Littlestone trees were defined in Definition \ref{defn:litt}.}

\begin{thm}
\label{thm:onl}
For any concept class $\mathcal{H}$, we have the following dichotomy.
\begin{enumerate}[$\bullet$]
\itemsep\abovedisplayskip
\item If $\mathcal{H}$ does not have an infinite Littlestone tree, then 
there is a strategy for the learner that makes only finitely many 
mistakes against any adversary.
\item If $\mathcal{H}$ has an infinite Littlestone tree, then there is a 
strategy for the adversary that forces any learner to make a
mistake in every round.
\end{enumerate}
In particular, $\mathcal{H}$ is online learnable if and only if it has
no infinite Littlestone tree.
\end{thm}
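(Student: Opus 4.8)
The plan is to prove the two halves of the dichotomy separately; the ``in particular'' clause then follows at once, since a finite‑mistake strategy exists precisely in the first case (the second case produces an adversary defeating every learner). The second half I would prove directly by a ``follow the wrong branch'' adversary. Given an infinite Littlestone tree $\{x_{\mathbf u}\}$, the adversary maintains a current node $\mathbf u_t\in\{0,1\}^{t-1}$ (initially $\varnothing$), presents $x_t:=x_{\mathbf u_t}$ in round $t$, and after the learner commits to $\hat y_t$ declares $y_t:=1-\hat y_t$ and moves to $\mathbf u_{t+1}:=(y_1,\dots,y_t)$. By the defining property of the tree, for every $T$ there is $h\in\mathcal H$ with $h(x_{\mathbf y_{\le k}})=y_{k+1}$ for all $k<T$, so the produced sequence $\{(x_t,y_t)\}$ is realizable; and $\hat y_t\ne y_t$ in every round by construction.

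For the first half I would build an explicit ``ordinal Standard Optimal Algorithm''. For nonempty $\mathcal C\subseteq\{0,1\}^{\mathcal X}$ define an ordinal‑or‑$\infty$ valued rank by: $\mathrm{rk}(\mathcal C)\ge 0$ always; $\mathrm{rk}(\mathcal C)\ge\alpha+1$ iff there is $x\in\mathcal X$ with $\mathrm{rk}(\mathcal C|_{x=0})\ge\alpha$ and $\mathrm{rk}(\mathcal C|_{x=1})\ge\alpha$, where $\mathcal C|_{x=b}:=\{h\in\mathcal C:h(x)=b\}$ and the empty class is assigned rank below every ordinal; and $\mathrm{rk}(\mathcal C)\ge\lambda$ at a limit iff $\mathrm{rk}(\mathcal C)\ge\alpha$ for all $\alpha<\lambda$. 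One checks that $\{\alpha:\mathrm{rk}(\mathcal C)\ge\alpha\}$ is downward closed and closed under limits, so $\mathrm{rk}(\mathcal C)$ is well defined (the largest such $\alpha$, or $\infty$), and that $\mathrm{rk}$ is monotone under inclusion. The crucial lemma is that $\mathrm{rk}(\mathcal H)=\infty$ if and only if $\mathcal H$ has an infinite Littlestone tree. The ``if'' direction is a transfinite induction: the root of an infinite Littlestone tree splits $\mathcal H$ into two subclasses that again carry infinite Littlestone trees, so $\mathrm{rk}\ge\alpha$ propagates to $\mathrm{rk}\ge\alpha+1$. For ``only if'' one shows $\mathrm{rk}(\mathcal C)=\infty$ forces some $x$ with $\mathrm{rk}(\mathcal C|_{x=0})=\mathrm{rk}(\mathcal C|_{x=1})=\infty$: otherwise $\beta_x:=\min(\mathrm{rk}(\mathcal C|_{x=0}),\mathrm{rk}(\mathcal C|_{x=1}))$ is an ordinal for every $x$, and then $\mathrm{rk}(\mathcal C)\le(\sup_x\beta_x)+1<\infty$; iterating the existence of such a splitting point level by level builds an infinite Littlestone tree.

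Granting the lemma, the learner's strategy when $\mathcal H$ has no infinite Littlestone tree is the rank‑minimizing rule: maintain the version space $\mathcal C_t$ of concepts consistent with $(x_1,y_1),\dots,(x_t,y_t)$, so $\mathrm{rk}(\mathcal C_0)=\mathrm{rk}(\mathcal H)$ is an ordinal and, by monotonicity and nonemptiness (the latter forced by realizability), so is every $\mathrm{rk}(\mathcal C_t)$. On seeing $x_t$, at most one of $\mathcal C_{t-1}|_{x_t=0},\mathcal C_{t-1}|_{x_t=1}$ can have rank $\ge\mathrm{rk}(\mathcal C_{t-1})$ — otherwise $\mathrm{rk}(\mathcal C_{t-1})\ge\mathrm{rk}(\mathcal C_{t-1})+1$ — so the learner predicts the label $b$ for which $\mathcal C_{t-1}|_{x_t=1-b}$ has rank strictly below $\mathrm{rk}(\mathcal C_{t-1})$ (breaking ties arbitrarily). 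Then $\mathrm{rk}(\mathcal C_0)\ge\mathrm{rk}(\mathcal C_1)\ge\cdots$ is non‑increasing and drops strictly at every mistake, so well‑foundedness of the ordinals caps the number of mistakes on each realizable play by a finite (play‑dependent, not uniform) quantity. I would remark that this is exactly the winning strategy of the Gale--Stewart game associated with the online problem, made explicit through the rank.

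The main obstacle is the ``only if'' direction of the lemma. The naive idea of extracting an infinite Littlestone tree by following a single infinite branch of realizable sequences fails: the class of thresholds on $\mathbb N$ (Example~\ref{ex:naturalthresholds}) admits such an infinite branch yet has no infinite Littlestone tree. One really must argue through the rank, which by design keeps \emph{both} children of a node large simultaneously. The remaining care goes into the limit‑ordinal bookkeeping and the verification that $\mathrm{rk}$ is genuinely well defined as an ordinal or $\infty$; measurability plays no role in this section, since the adversarial setting involves no randomness.
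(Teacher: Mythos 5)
Your proof is correct, and the second half (the ``follow the wrong branch'' adversary) coincides with the paper's argument. For the first half, however, you take a genuinely different route. The paper proves this direction non-constructively in the main text: it recasts the problem as the Gale--Stewart game $\mathfrak{G}$, shows (Lemma~\ref{lem:treewin}) that an infinite Littlestone tree is exactly a winning strategy for \PI, invokes Gale--Stewart determinacy (Theorem~\ref{thm:gs}) to conclude that \PII has a winning strategy when no such tree exists, and then converts that strategy into a learner by advancing the game only when a mistake occurs (this last device is needed because the abstract winning strategy is only guaranteed to work against an adversary who always forces errors). You instead construct the winning strategy explicitly via an ordinal-valued rank on version spaces; your key lemma ($\mathrm{rk}(\H)=\infty$ iff $\H$ has an infinite Littlestone tree) and the rank-decreasing prediction rule are, in substance, the ordinal Littlestone dimension and Proposition~\ref{prop:valdec} of Appendix~\ref{app:meas}, which the paper develops separately in order to prove measurability of the strategy (Theorem~\ref{thm:meas}) --- something your purely adversarial statement does not need, as you correctly note. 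Your argument is thus self-contained (it in effect reproves the relevant special case of determinacy), and it has the minor advantage of handling arbitrary adversaries directly, since the version space shrinks and the rank is monotone whether or not a mistake is made, so the ``advance only on mistakes'' trick is unnecessary. The cost is the transfinite bookkeeping; your supremum argument for the ``only if'' direction of the lemma (every $\beta_x$ an ordinal forces $\mathrm{rk}(\mathcal C)\le(\sup_x\beta_x)+1$) is sound, and you rightly flag that extracting a single realizable branch would not suffice.
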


{A proof of this theorem is given in the next section.
The proof uses classical results from the theory of infinite games,
see Appendix~\ref{sec:gs} for a review of the relevant notions.}

\subsection{A Gale-Stewart game}
\label{sec:onlgs}

{Let us now view the online learning game
from a different perspective that fits better into the framework of classical
game theory.}
For $x_1,\ldots,x_t \in \X$ and $y_1,\ldots,y_t \in \{0,1\}$,
consider the class
$$
	\mathcal{H}_{x_1,y_1,\ldots,x_t,y_t} := 
	\{h\in\mathcal{H}:h(x_1)=y_1,\ldots,h(x_t)=y_t\}
$$
of hypotheses that are consistent with $x_1,y_1,\ldots,x_t,y_t$.
An adversary who tries to maximize the number of mistakes 
the learner makes will choose a sequence 
of $x_t,y_t$ with 
$y_t\ne\hat y_t$ for as many initial 
rounds in a row as possible.
In other words, the adversary tries to keep
$\mathcal{H}_{x_1,1-\hat y_1,\ldots,x_t,1-\hat 
y_t}\ne\varnothing$ as long as possible.
When this set would become empty (for every possible $x_t$), however, the only 
consistent choice of label is $y_t=\hat y_t$, so the learner 
makes no mistakes from that point onwards.

This motivates defining the following game 
$\mathfrak{G}$. 
\an{There are two players:
\PI and \PII.}
In each round~$\tau$:
\begin{enumerate}[$\bullet$]
\item Player \PI chooses a point $\xi_\tau\in\mathcal{X}$ {and shows it to Player \PII}.
\item {Then, }Player \PII chooses a point $\eta_\tau \in \{0,1\}$.
\end{enumerate}
Player \PII wins the game in round $\tau$ if
$\mathcal{H}_{\xi_1,\eta_1,\ldots,\xi_\tau,\eta_\tau}=\varnothing$.
\an{Player \PI wins the game if the game
continues indefinitely.}
In other words, the set of winning sequences for \PII is
$$
	\mathsf{W} = \{(\boldsymbol{\xi},\boldsymbol{\eta})\in
		(\mathcal{X}\times\{0,1\})^\infty:
	\mathcal{H}_{\xi_1,\eta_1,\ldots,\xi_\tau,\eta_\tau}=\varnothing
	\mbox{ for some } 0\le \tau<\infty\}
$$
{This set of sequences $\mathsf{W}$ is finitely decidable in the sense that the membership of  $(\boldsymbol{\xi},\boldsymbol{\eta})$ in $\mathsf{W}$ is witnessed by a finite subsequence. Thus the above game is a Gale-Stewart game (cf.\ 
Appendix~\ref{sec:gs}).} In particular, 
by Theorem~\ref{thm:gs}, \an{exactly one of} \PI or \PII has a winning 
strategy in this game.

The game $\mathfrak{G}$ is intimately connected to the definition of 
Littlestone trees:
an infinite Littlestone tree is nothing other than a winning strategy for 
\PI, expressed in a slightly different language.

\begin{lem}
\label{lem:treewin}
Player \PI has a winning strategy in the Gale-Stewart game $\mathfrak{G}$ 
if and only if $\mathcal{H}$ has an infinite Littlestone tree.
\end{lem}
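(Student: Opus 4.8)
The plan is to establish a dictionary between winning strategies for Player~\PI in $\mathfrak{G}$ and infinite Littlestone trees for $\H$, translating each object into the other. I would prove the two directions separately.

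For the forward direction, suppose Player~\PI has a winning strategy $\sigma$. The idea is to run the strategy against all possible responses of Player~\PII simultaneously, thereby unfolding the strategy into a binary tree. Concretely, I would define the tree labels $x_{\mathbf{u}}$ by induction on the length of $\mathbf{u} \in \{0,1\}^k$: the root label $x_\varnothing$ is $\sigma$'s first move; and given a node at $\mathbf{u} = (u_1,\ldots,u_k)$ whose label has been defined, together with the sequence of moves leading to it, the label $x_{\mathbf{u}}$ of that node is the move $\sigma$ makes in round $k+1$ when Player~\PII has responded with $u_1,\ldots,u_k$ in the previous rounds. Because $\sigma$ is winning for \PI, the game never terminates, which means that for every play following $\sigma$ the sets $\mathcal{H}_{\xi_1,\eta_1,\ldots,\xi_\tau,\eta_\tau}$ stay nonempty for all $\tau$. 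Translating back: for every $\mathbf{y} \in \{0,1\}^d$ with $d = \infty$ and every finite $n$, the class $\mathcal{H}_{x_\varnothing, y_1, x_{\mathbf{y}_{\le 1}}, y_2, \ldots, x_{\mathbf{y}_{\le n}}, y_{n+1}}$ is nonempty, i.e.\ there exists $h \in \H$ with $h(x_{\mathbf{y}_{\le k}}) = y_{k+1}$ for $0 \le k \le n$. This is exactly the defining property of an infinite Littlestone tree.

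For the reverse direction, suppose $\H$ has an infinite Littlestone tree $\{x_{\mathbf{u}}\}$. I would use it to define a strategy for \PI: in round~$1$, play $x_\varnothing$; and inductively, having seen \PII's responses $\eta_1,\ldots,\eta_{\tau-1} \in \{0,1\}$, play $x_{(\eta_1,\ldots,\eta_{\tau-1})}$ in round~$\tau$. After $\tau$ rounds, the position is $(x_\varnothing,\eta_1,x_{\eta_1},\eta_2,\ldots,x_{(\eta_1,\ldots,\eta_{\tau-1})},\eta_\tau)$, and the Littlestone tree property with $\mathbf{y} = (\eta_1,\eta_2,\ldots)$ and $n = \tau-1$ guarantees there is $h \in \H$ consistent with all of these label assignments, so $\mathcal{H}_{x_\varnothing,\eta_1,\ldots} \ne \varnothing$. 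Hence \PII never wins in any round, so the game continues forever, i.e.\ this is a winning strategy for \PI.

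The argument is essentially bookkeeping, so there is no deep obstacle; the main thing to be careful about is keeping the indexing of rounds, prefixes $\mathbf{y}_{\le k}$, and tree nodes $\mathbf{u} \in \{0,1\}^k$ consistent with the off-by-one conventions in Definition~\ref{defn:litt} (labels exist at depths $0 \le k < d$, and the consistency condition quantifies over $\mathbf{y} \in \{0,1\}^d$ and $n < d$). One should also note that the equivalence makes no measurability claim about the strategy or the tree — it is purely a combinatorial correspondence — so the delicate constructability issues discussed in the introduction are deferred to later sections and do not enter here. Since by Theorem~\ref{thm:gs} exactly one of \PI, \PII has a winning strategy, Lemma~\ref{lem:treewin} combined with this dichotomy will later yield the two cases of Theorem~\ref{thm:onl}.
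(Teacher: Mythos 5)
Your proposal is correct and follows essentially the same argument as the paper: both directions are the direct translation between a strategy $\xi_\tau(\eta_1,\ldots,\eta_{\tau-1})=x_{\eta_1,\ldots,\eta_{\tau-1}}$ for \PI (viewed, via Remark~\ref{rem:yonly}, as a function of \PII's plays only) and the node labels of an infinite Littlestone tree, with the winning condition for \PI matching nonemptiness of $\mathcal{H}_{\xi_1,\eta_1,\ldots,\xi_\tau,\eta_\tau}$ for all $\tau$, which is exactly the consistency condition in Definition~\ref{defn:litt}. No gaps.
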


\begin{proof}
Suppose $\mathcal{H}$ has an infinite Littlestone tree, 
    for which we adopt the notation of Definition~\ref{defn:litt}. 
    Define a strategy for \PI by $\xi_\tau(\eta_1,\ldots,\eta_{\tau-1}) =
    x_{\eta_1,\ldots,\eta_{\tau-1}}$ (cf.\ Remark \ref{rem:yonly}).
    The definition of a Littlestone tree 
    implies that $\mathcal{H}_{\xi_1,\eta_1,\ldots,\xi_\tau,\eta_\tau}\ne\varnothing$
    for every $\boldsymbol{\eta}\in\{0,1\}^\infty$ and $\tau<\infty$, that is, this strategy is winning for \PI. 
Conversely, suppose \PI has a winning strategy, and define the infinite 
tree $T=\{x_{\mathbf{u}}:0\le k<\infty, \mathbf{u}\in\{0,1\}^k\}$ by 
$$x_{\eta_1,\ldots,\eta_{\tau-1}}:=\xi_\tau(\eta_1,\ldots,\eta_{\tau-1}).$$
The tree $T$ is an infinite Littlestone tree by the definition of a winning 
strategy for the game $\mathfrak{G}$.
\end{proof}

We are now ready to prove Theorem \ref{thm:onl}.

\vspace*{\abovedisplayskip}

\begin{proof}[of Theorem \ref{thm:onl}]
Assume $\mathcal{H}$ has an infinite Littlestone tree $\{x_{\mathbf{u}}\}$. 
The adversary may play the following strategy: in round $t$, choose 
$$x_t=x_{y_1,\ldots,y_{t-1}}$$
and after the learner reveals her prediction $\hat y_t$, choose
$$y_t=1-\hat y_t.$$ 
By definition of a Littlestone tree, 
$y_t$ is consistent with 
$\mathcal{H}$ regardless of the learner's prediction. This strategy for the adversary in the online learning 
problem forces any learner to make a mistake in every round.

Now suppose $\mathcal{H}$ has no infinite Littlestone tree. Then \PII has 
a winning strategy $\eta_\tau(\xi_1,\ldots,\xi_\tau)$ in the Gale-Stewart 
game $\mathfrak{G}$ (cf.\ Remark \ref{rem:yonly}). If we were to know 
\emph{a priori} that the adversary always forces an error when possible, 
then the learner could use this strategy directly with 
$x_t=\xi_t$ and $\hat y_t = 1-\eta_t$ to ensure she only makes finitely 
many mistakes. To extend this conclusion to an arbitrary adversary, we 
design our learning algorithm so that the Gale-Stewart game proceeds 
to the next round only when the learner makes a mistake. More precisely, we 
introduce the following learning algorithm.
\begin{enumerate}[$\bullet$]
\item Initialize $\tau\leftarrow 1$ and $f(x) \leftarrow \eta_1(x)$.
\item In every round $t\ge 1$:
\begin{enumerate}[-]
\item Predict $\hat y_t = 1-f(x_t)$.
\item If $\hat y_t\ne y_t$, let $\xi_\tau\leftarrow x_t$,
$f(x)\leftarrow 
\eta_{\tau+1}(\xi_1,\ldots,\xi_\tau,x)$, and 
$\tau \leftarrow \tau+1$.
\end{enumerate}
\end{enumerate}
This algorithm can only make a finite number of mistakes 
against any adversary. Indeed, suppose that some adversary forces the 
learner to make an infinite number of mistakes at times $t_1,t_2,\ldots$
By the definition of $\mathfrak{G}$, however, we 
have $\mathcal{H}_{x_{t_1},y_{t_1},\ldots,x_{t_k},y_{t_k}}=\varnothing$
for some $k<\infty$. This violates the rules of the online learning 
game, because 
the sequence $\{(x_t,y_t)\}_{t=1}^{t_k}$
is not consistent with $\cH$.
\end{proof}

\subsection{Measurable strategies}

The learning algorithm from the previous section solves the adversarial 
online learning problem. It is also a basic ingredient in the 
algorithm that 
achieves exponential rates in the probabilistic setting 
(section \ref{sec:exp} below). However, in passing from 
the adversarial setting to the probabilistic setting, we  
encounter nontrivial difficulties. 
While the existence of winning strategies is guaranteed by the Gale-Stewart theorem, this result does not say anything about the complexity of these strategies. In particular, it is perfectly possible that the learning algorithm of the previous section is nonmeasurable, in which case its naive application in the probabilistic setting can readily yield nonsensical results (cf.\ Appendix~\ref{sec:nonmeas}).

It is, therefore, essential to impose sufficient regularity 
assumptions so that the winning strategies in the Gale-Stewart game 
$\mathfrak{G}$
are measurable. This issue proves to be surprisingly 
subtle: almost nothing appears to be known in the literature regarding the 
measurability of Gale-Stewart strategies. We therefore develop a rather 
general result of this kind, Theorem \ref{thm:meas} in Appendix 
\ref{app:meas}, that suffices for all the purposes of this paper.

\begin{defn}
\label{defn:suslin}
A concept class $\mathcal{H}$ of indicator functions
$h:\mathcal{X}\to\{0,1\}$ on a Polish space $\mathcal{X}$ is said to be 
\bemph{measurable} if there is a Polish space $\Theta$ and 
Borel-measurable map $\mathsf{h}:\Theta\times\mathcal{X}\to\{0,1\}$
so that $\mathcal{H}=\{\mathsf{h}(\theta,\cdot):\theta\in\Theta\}$. 
\end{defn}

In other words, $\mathcal{H}$ is measurable when it can be parameterized in any reasonable way. 
    This is the case for almost any $\mathcal{H}$ encountered in practice.
    The Borel isomorphism theorem \cite[Theorem 8.3.6]{Coh80} implies that we would obtain 
    an identical definition if we required only that $\Theta$ is a Borel subset of a Polish space.

\begin{rem}
Definition \ref{defn:suslin} is well-known in the literature: this is the 
standard measurability assumption made in empirical process theory, where 
it is usually called the image admissible Suslin property, cf. 
\cite[section 5.3]{Dud14}.
\end{rem}

Our basic measurability result is the following corollary of
Theorem \ref{thm:meas}.

\begin{cor}
\label{cor:measadv}
Let $\mathcal{X}$ be Polish and $\mathcal{H}$ be measurable. Then the 
Gale-Stewart game $\mathfrak{G}$ of the previous section has a universally 
measurable winning strategy. In particular, the learning algorithm of 
Theorem~\ref{thm:onl} is universally measurable.
\end{cor}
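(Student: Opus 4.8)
The plan is to derive the corollary directly from the general measurability theorem for Gale--Stewart games, Theorem~\ref{thm:meas} of Appendix~\ref{app:meas}, by checking that the game $\mathfrak{G}$ of Section~\ref{sec:onlgs} fits its framework. Concretely, I would present $\mathfrak{G}$ in the standard form: Player~\PI plays from the Polish space $\mathcal{X}$, Player~\PII plays from the (finite, hence Polish) space $\{0,1\}$, and \PII's winning set is
\[
    \mathsf{W}=\bigl\{(\boldsymbol\xi,\boldsymbol\eta):\mathcal{H}_{\xi_1,\eta_1,\ldots,\xi_\tau,\eta_\tau}=\varnothing\text{ for some }0\le\tau<\infty\bigr\}.
\]
The only nonroutine point is to control the descriptive-set-theoretic complexity of $\mathsf{W}$, and this is exactly where the measurability hypothesis on $\mathcal{H}$ (Definition~\ref{defn:suslin}) enters.

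The key step is the following complexity estimate. Write $\mathcal{H}=\{\mathsf{h}(\theta,\cdot):\theta\in\Theta\}$ with $\Theta$ Polish and $\mathsf{h}:\Theta\times\mathcal{X}\to\{0,1\}$ Borel. For each $\tau$ the set
\[
    B_\tau:=\bigl\{(\xi_1,\eta_1,\ldots,\xi_\tau,\eta_\tau):\mathcal{H}_{\xi_1,\eta_1,\ldots,\xi_\tau,\eta_\tau}\ne\varnothing\bigr\}
\]
is the projection onto $(\mathcal{X}\times\{0,1\})^\tau$ of the Borel set $\{(\theta,\xi_1,\eta_1,\ldots,\xi_\tau,\eta_\tau):\mathsf{h}(\theta,\xi_i)=\eta_i\text{ for all }i\le\tau\}$, hence $B_\tau$ is analytic. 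Thus the finite plays that are already winning for \PII form the coanalytic sets $B_\tau^{\mathrm c}$, and $\mathsf{W}$ is a countable union of cylinders over these sets, which I expect matches the hypotheses of Theorem~\ref{thm:meas}. By the Gale--Stewart theorem (Theorem~\ref{thm:gs}) exactly one of \PI, \PII has a winning strategy, and Theorem~\ref{thm:meas} then upgrades this to a \emph{universally measurable} winning strategy for the winning player (the relevant ordinal rank being a countable ordinal for a measurable class, via the Kunen--Martin theorem). This yields the first assertion.

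For the ``in particular'' clause, suppose $\mathcal{H}$ has no infinite Littlestone tree; then by Lemma~\ref{lem:treewin} it is \PII who wins, and we fix a universally measurable winning strategy $\eta_\tau(\xi_1,\ldots,\xi_\tau)$. The learner constructed in the proof of Theorem~\ref{thm:onl} computes $\hat y_t$ from $(x_1,y_1,\ldots,x_{t-1},y_{t-1},x_t)$ by a finite recursion whose steps are: test whether $\hat y_s\ne y_s$ (i.e.\ evaluate $\ind_{\hat y_s\ne y_s}$) and, depending on the accumulated values of these tests, advance the internal counter $\tau$, append $x_s$ to the auxiliary tuple $(\xi_1,\xi_2,\ldots)$, and evaluate $\eta_\tau$ at the accumulated arguments. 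Each operation is universally measurable and there are finitely many of them, so $(x_1,y_1,\ldots,x_t)\mapsto\hat y_t$ is universally measurable for every $t$, as claimed.

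The main obstacle is entirely contained in Theorem~\ref{thm:meas} itself; within the proof of the corollary, the only step requiring genuine care is the bookkeeping in the last paragraph --- verifying that the branching recursion defining the online learner, which conditions on the past mistake indicators, carries the auxiliary state $(\tau,\xi_1,\ldots,\xi_{\tau-1})$ as a measurable function of the data observed so far, so that no measurability is lost in passing from the Gale--Stewart strategy to the learning algorithm.
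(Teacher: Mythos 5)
Your proposal is correct and follows essentially the same route as the paper: both reduce the corollary to Theorem~\ref{thm:meas} by verifying that $\mathsf{W}$ is coanalytic, using the Suslin parameterization of $\H$ to exhibit each $B_\tau$ as the projection of a Borel set (the paper phrases this dually, writing $\mathsf{W}^c$ as a countable intersection of analytic sets). Your extra bookkeeping for the ``in particular'' clause --- that the online learner is a finite composition of universally measurable operations --- is a detail the paper leaves implicit, and is fine.
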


\begin{proof}
The conclusion follows from Theorem \ref{thm:meas} once we verify that the 
set $\mathsf{W}$ of winning sequences for \PII in $\mathfrak{G}$
is coanalytic (see Appendix \ref{sec:polish} for the relevant terminology and basic properties of Polish spaces and analytic sets).
To this end, we write its complement as
\begin{align*}
	\mathsf{W}^c &=
	\{
	(\boldsymbol{\xi},\boldsymbol{\eta})\in(\mathcal{X}\times\{0,1\})^\infty:
	\mathcal{H}_{\xi_1,\eta_1,\ldots,\xi_\tau,\eta_\tau}\ne\varnothing
	\mbox{ for all }\tau<\infty
	\}
	\\
	&=
	\bigcap_{1\le\tau<\infty}
	\bigcup_{\theta\in\Theta}
	\bigcap_{1\le t\le\tau}
	\{
	(\boldsymbol{\xi},\boldsymbol{\eta})\in(\mathcal{X}\times\{0,1\})^\infty:
	\mathsf{h}(\theta,x_t)=\eta_t\}.
\end{align*}
The set $\{(\theta,\boldsymbol{\xi},\boldsymbol{\eta}):
\mathsf{h}(\theta,\xi_i)=\eta_i\}$ is Borel by the measurability 
assumption. Moreover, both intersections in the above expression are 
countable, while the union corresponds to the projection of a Borel 
set. The set $\mathsf{W}^c$ is therefore analytic.
\end{proof}

That a nontrivial measurability assumption is needed in the first place is not obvious: one might hope that it suffices to simply require that 
every concept $h\in\mathcal{H}$ is measurable. Unfortunately, this is 
not the case. In Appendix \ref{sec:nonmeas}, we describe a 
nonmeasurable concept class on $\mathcal{X}=[0,1]$ such that each 
$h\in\mathcal{H}$ is the indicator of a countable set. In this example, 
the set $\mathsf{W}$ of winning sequences is nonmeasurable: thus
one cannot even give meaning to the probability that the game is won
when it is played with random data. 
In such a situation, the analysis in the following sections does not make sense. Thus Corollary \ref{cor:measadv}, 
while technical, is essential for the theory developed in this paper.

It is perhaps not surprising that 
some measurability issues arise in our setting, as
this is already the case in classical PAC learning theory 
\cite{blumer:89,Pes11}.
Definition~\ref{defn:suslin} is the standard assumption that is made in this setting ~\cite{Dud14}. However, the only issue that arises in the 
classical setting
is the measurability of the supremum of the empirical 
process over~$\mathcal{H}$. 
This is essentially straightforward: for 
example, measurability is trivial when $\mathcal{H}$ is countable, or can 
be pointwise approximated by a countable class. The latter already 
captures many classes encountered in practice. 
For these reasons, 
measurability issues in classical learning theory are often considered ``a 
minor nuisance''.
The measurability problem for Gale-Stewart strategies is much more subtle, however, and cannot be taken for granted. For 
example, we do not know of a simpler proof of Theorem~\ref{thm:meas} 
in the setting of Corollary \ref{cor:measadv} 
even when the class $\mathcal{H}$ is countable. Further discussion may be found in Appendix \ref{sec:nonmeas}.

\subsection{Ordinal Littlestone dimension}
\label{sec:old}

In its classical form, the Gale-Stewart theorem (Theorem~\ref{thm:gs}) 
is a purely existential statement:
it states the existence of winning strategies.
To actually implement learning algorithms from such strategies, 
however, one would need to explicitly describe them. 
Such an explicit description is constructed as part 
of the measurability proof of Theorem \ref{thm:meas} on the basis of a 
refined notion of dimension for concept classes that is of interest in 
its own right. 
The aim of this section is to briefly introduce the 
relevant ideas in the context of the online learning problem;
see the proof of Theorem \ref{thm:meas} for more details.
(The content of this section is not used 
\an{elsewhere in the text}.)

It is instructive to begin by recalling the 
\an{classical online learning strategy~\cite{littlestone:88}.}
The \bemph{Littlestone dimension} of $\mathcal{H}$ is defined as the largest depth of a Littlestone tree for $\cH$
(if $\cH$ is empty then its dimension is $-1$). 
The basic idea of \cite{littlestone:88} is that if the 
Littlestone dimension $d$ is finite, then there is a strategy for \PII in 
the game $\mathfrak{G}$ that wins at the latest in round $d+1$.
This winning strategy is built using the following observation.

\begin{obs}
\label{obs:littlestone}
\an{Assume that the Littlestone dimension $d$ of $\cH$ is finite
and that $\cH$ is nonempty.
Then for every $x \in \X$,
there exists $y\in\{0,1\}$ such that the 
Littlestone dimension of $\mathcal{H}_{x,y}$ is strictly less than that 
of $\mathcal{H}$.}
\end{obs}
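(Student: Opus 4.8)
The plan is to argue by contradiction. Suppose there were a point $x\in\X$ such that \emph{neither} label decreases the dimension, i.e., both $\mathcal{H}_{x,0}$ and $\mathcal{H}_{x,1}$ have Littlestone dimension at least $d$. Since $\mathcal{H}_{x,0},\mathcal{H}_{x,1}\subseteq\mathcal{H}$, and since any Littlestone tree for a subclass is automatically a Littlestone tree for the larger class, both subclasses have Littlestone dimension at most $d$; hence both are exactly $d$. As $d$ is finite, the definition of Littlestone dimension then furnishes an actual Littlestone tree of depth $d$ for each of $\mathcal{H}_{x,0}$ and $\mathcal{H}_{x,1}$.

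The next step is to glue these two trees. I would build a complete binary tree of depth $d+1$ whose root is labelled by $x$, attaching the depth-$d$ tree for $\mathcal{H}_{x,0}$ below the $0$-edge and the depth-$d$ tree for $\mathcal{H}_{x,1}$ below the $1$-edge. To see this is a Littlestone tree for $\mathcal{H}$, fix any finite branch: it takes the root edge labelled some $y\in\{0,1\}$ and then follows a branch of the depth-$d$ tree for $\mathcal{H}_{x,y}$. By definition of that tree, the continuation is consistent with some $h\in\mathcal{H}_{x,y}$; since $h\in\mathcal{H}_{x,y}$ we have $h(x)=y$, so the \emph{whole} branch (including the root step $x\mapsto y$) is consistent with $h\in\mathcal{H}$. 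Thus $\mathcal{H}$ would have a Littlestone tree of depth $d+1$, contradicting that its Littlestone dimension equals $d$. The hypothesis $\mathcal{H}\ne\varnothing$ enters only to ensure $d\ge 0$: if instead some $\mathcal{H}_{x,y}=\varnothing$, its dimension is $-1<d$ and that label already works, so the interesting case is exactly the one treated above.

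The only slightly delicate point — and the one I would be most careful about — is matching this gluing construction with the formal index-set description of a Littlestone tree in Definition~\ref{defn:litt}: a node at depth $k+1$ on the $y$-side of the new tree corresponds to a node at depth $k$ of the $\mathcal{H}_{x,y}$-subtree, and the consistency clause ``for every $\mathbf{y}\in\{0,1\}^{d+1}$ and $n<d+1$ there exists $h\in\mathcal{H}$\ldots'' must be seen to reduce, after stripping the first coordinate of $\mathbf{y}$, to the corresponding clause for the relevant subtree. This is purely bookkeeping; I do not expect any genuine obstacle, since the statement is in essence just the inductive unfolding of the definition of Littlestone dimension.
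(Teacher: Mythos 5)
Your proposal is correct and is essentially the paper's own argument: the paper proves the contrapositive by noting that if both $\mathcal{H}_{x,0}$ and $\mathcal{H}_{x,1}$ had Littlestone trees of depth $d$, one could take $x$ as the root and attach them as subtrees to get a depth-$(d+1)$ tree for $\mathcal{H}$, which is exactly your gluing step. Your additional care about the degenerate case $\mathcal{H}_{x,y}=\varnothing$ and the indexing bookkeeping is fine but not a departure from the paper's route.
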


\begin{proof}
If both $\mathcal{H}_{x,0}$ and $\mathcal{H}_{x,1}$ have a Littlestone 
tree of depth $d$ (say $\mathbf{t}_0,\mathbf{t}_1$, respectively), then 
$\mathcal{H}$ has a Littlestone tree of depth $d+1$:
take $x$ as the root and attach $\mathbf{t}_0, \mathbf{t}_1$ as 
its subtrees.
\end{proof}

The winning strategy for \PII is now evident: as long as 
player \PII always chooses $y_t$ so that the Littlestone dimension of 
$\mathcal{H}_{x_1,y_1,\ldots,x_t,y_t}$ is smaller
than that of $\mathcal{H}_{x_1,y_1,\ldots,x_{t-1},y_{t-1}}$, then
\PII will win in at most $d+1$ rounds.

At first sight, it appears that this strategy does not make much sense in our 
setting. 
Though we assume that $\H$ has no infinite Littlestone tree, it may have finite Littlestone trees of arbitrarily large depth.
In this case 
the classical Littlestone dimension is infinite, 
so a naive implementation of the above strategy fails.
Nonetheless, the key idea behind the proof of Theorem \ref{thm:meas} is that an appropriate extension of Littlestone’s strategy works in the general setting.
The basic observation is that 
the notion ``infinite Littlestone dimension'' 
may be considerably refined: we can extend the classical notion to 
capture precisely ``how infinite'' the Littlestone dimension is. 
With this 
new definition in hand, the winning strategy for \PII will be exactly the 
same as in the case of finite Littlestone dimension.
The Littlestone dimension may not just be 
a natural number, but rather an ordinal, which
\an{turns out to be precisely the correct
way to measure the ``number of steps to victory''.}
A brief introduction to ordinals and their
role in game theory is given in Appendix \ref{sec:ordinals}.

\an{Our extension of the Littlestone dimension uses 
the notion of {\em rank}, which assigns an ordinal to every {finite} Littlestone tree. The rank is defined by a partial
order $\prec$:}
let us write 
$\mathbf{t}'\prec\mathbf{t}$ if $\mathbf{t}'$ is a Littlestone tree that 
extends $\mathbf{t}$ by one level, namely, $\mathbf{t}$ is obtained from 
$\mathbf{t}'$ by removing its leaves.\footnote{\an{It may appear somewhat
confusing that $\mathbf{t}'\prec\mathbf{t}$ although
$\mathbf{t}'$ is larger than $\mathbf{t}$ as a tree.
{The reason is that we order trees by how far they may be extended, and $\mathbf{t}'$ can be extended less far than $\mathbf{t}$.}}}
A Littlestone tree $\mathbf{t}$
is \an{minimal}
if it cannot be extended to a Littlestone tree of 
larger depth. In this case, we say $\rank(\mathbf{t})=0$. For 
non-minimal trees, we define $\rank(\mathbf{t})$
by transfinite recursion
$$
	\rank(\mathbf{t}) = \sup\{
	\rank(\mathbf{t}')+1 :
	\mathbf{t}'\prec\mathbf{t}\}.
$$
If $\rank(\mathbf{t})=d$ is finite, then the 
largest Littlestone tree that extends $\mathbf{t}$ has $d$ additional 
levels.
The \emph{classical} Littlestone dimension 
is $d\in\mathbb{N}$ if and only if $\rank(\varnothing)=d$.

Rank is well-defined 
as long as $\mathcal{H}$ has no infinite 
Littlestone tree.
\an{The crucial point is that 
when $\cH$ has no infinite tree,
$\prec$ is well-founded 
(i.e., there are no infinite
decreasing chains in $\prec$), so that every finite Littlestone tree $\mathbf{t}$
appears in the above recursion.
For more details, see Appendix \ref{sec:relations}.}

\begin{defn}
The \bemph{ordinal Littlestone dimension} of $\mathcal{H}$ is
defined as\footnote{Here we borrow Cantor's notation $\absinfty$  
for the \emph{absolute infinite}: 
a number larger than every ordinal number.}:
$$
\LD(\mathcal{H}):=
\begin{cases}
-1 & \text{if $\mathcal{H}$ is empty.} \\
\absinfty & \text{if $\mathcal{H}$ has an infinite 
Littlestone tree.} \\
\rank(\varnothing)
& \text{otherwise.}
\end{cases}$$
\end{defn}

When $\mathcal{H}$ has no infinite Littlestone tree, we can 
construct a winning strategy for \PII in the same manner as in the case of 
finite Littlestone dimension. 
\an{An extension of Observation~\ref{obs:littlestone} states}
that for every 
$x\in\mathcal{X}$, there exists $y\in\{0,1\}$ so that 
$\LD(\mathcal{H}_{x,y})<\LD(\mathcal{H})$. The intuition 
behind this extension is the same as in the finite case, but its proof is more technical (cf.\ Proposition \ref{prop:valdec}).\footnote{
	The results in Appendix \ref{app:meas} are formulated in the 
	setting of general Gale-Stewart games. When specialized to the 
	game $\mathfrak{G}$ of Section~\ref{sec:onlgs},
	the reader may readily verify that the 
	game value defined in Section~\ref{sec:gameval} is precisely
	$\val(x_1,y_1,\ldots,x_t,y_t)=\LD(\mathcal{H}_{x_1,y_1,\ldots,x_t,y_t})$.
}
The strategy for \PII is now chosen so that 
$\LD(\mathcal{H}_{x_1,y_1,\ldots,x_t,y_t})$ 
decreases in every round. 
This strategy ensures that \PII 
wins in a finite number of rounds,
because ordinals do not admit an infinite 
decreasing chain.

The idea that dimension can 
be an ordinal may appear a bit \an{unusual.} 
The meaning of this notion is 
quite intuitive, however, as is best illustrated by means of some simple 
examples. Recall that we have already shown above that when 
$\LD(\mathcal{H})<\omega$ is finite ($\omega$ denotes the smallest infinite ordinal), the ordinal Littlestone dimension 
coincides with the classical Littlestone dimension.

\begin{examp}[Disjoint union of finite-dimensional classes]
Partition $\mathcal{X}=\mathbb{N}$ into disjoint intervals 
$\mathcal{X}_1,\mathcal{X}_2,\mathcal{X}_3,\ldots$ with 
$|\mathcal{X}_k|=k$. 
For each $k$, 
let $\mathcal{H}_k$ be the class 
of indicators of all subsets of $\mathcal{X}_k$.
Let $\mathcal{H}=\bigcup_k\mathcal{H}_k$.
We claim that 
$\LD(\mathcal{H})=\omega$.
Indeed, as soon as we select a root vertex $x\in 
\mathcal{X}_k$ for a Littlestone tree, we can only grow the Littlestone 
tree for $k-1$ additional levels. 
In other words, $\rank(\{x\})=k-1$ whenever 
$x\in\mathcal{X}_k$. By definition,
$\rank(\varnothing)=\sup\{\rank(\{x\})+1:x\in\mathcal{X}\}=\omega$.
\end{examp}

\begin{examp}[Thresholds on $\mathbb{N}$]
\label{ex:threshn}
Let $\mathcal{X}=\mathbb{N}$ and consider the class of thresholds
$\mathcal{H}=\{x\mapsto \mathbf{1}_{x\le z}:z\in\mathbb{N}\}$. As in the previous 
example, we claim that $\LD(\mathcal{H})=\omega$.
Indeed,
as soon as we select a root vertex $x\in\mathcal{X}$ 
for a Littlestone tree, we can grow the Littlestone tree for at most $x-1$ 
additional levels
(otherwise, there would exist 
$h\in\mathcal{H}$ and distinct points $y_1,\ldots,y_x$ such that $h(x)=0$ 
and $h(y_1)=\cdots=h(y_x)=1$).
On the other hand, we can grow a Littlestone tree
\an{of depth order $\log(x)$,}
by repeatedly choosing labels in each level that bisect the intervals between the 
labels chosen in the previous level. 
It follows that $\rank(\varnothing)=\sup\{\rank(\{x\})+1:x\in\mathcal{X}\}=\omega$.
\end{examp}

\begin{examp}[Thresholds on $\mathbb{Z}$]
Let $\mathcal{X}=\mathbb{Z}$ and consider the class of thresholds
$\mathcal{H}=\{x\mapsto \mathbf{1}_{x\le z}:z\in\mathbb{Z}\}$. In this case, $\LD(\mathcal{H})=\omega+1$.
\an{As soon as we select a root vertex $x\in\mathcal{X}$,
the class $\cH_{x,1}$ is essentially the same as 
the threshold class from the previous example.
It follows that $\rank(\{x\})=\omega$ for every $x\in\mathcal{X}$.}
Consequently, $\rank(\varnothing)=\omega+1$.
\end{examp}

\begin{examp}[Union of partitions]
Let $\mathcal{X}=[0,1]$.
For each $k$,
let $\mathcal{H}_k$ be the class of indicators of 
dyadic intervals length $2^{-k}$ (which partition  
$\mathcal{X}$).
Let $\mathcal{H}=\bigcup_k\mathcal{H}_k$. 
In this example, $\LD(\mathcal{H})=\omega+1$.
Indeed, consider a Littlestone tree
$\mathbf{t}=\{x_\varnothing,x_0,x_1\}$ of depth two. The class
$\mathcal{H}_{x_\varnothing,1,x_1,1}$ consists of indicators of those 
dyadic intervals that contain both $x_\varnothing$ and $x_1$. 
There is only a finite number such intervals, because $|x_\varnothing-x_1|>0$ and the diameters of the intervals shrink to zero. 
It follows that $\rank(\mathbf{t})<\omega$ for any 
Littlestone tree of depth two. On the other hand, 
one may grow a Littlestone
tree of arbitrary depth for any choice of root $x_\varnothing$:
\an{the class $\mathcal{H}_{x_\varnothing,1}$ is an infinite sequence of nested 
intervals, which is essentially the same as
in Example \ref{ex:threshn}; 
and $\mathcal{H}_{x_\varnothing,0}$ has a subclass that 
is essentially the same as $\cH$ itself.}
Thus, $\rank(\{x_\varnothing\})=\omega$ for every $x_\varnothing\in\mathcal{X}$.
Consequently, $\rank(\varnothing)=\omega+1$.
\end{examp}

By inspecting these examples, a common theme emerges. A class of finite 
Littlestone dimension is one whose Littlestone trees are of bounded depth. 
A class with $\LD(\mathcal{H})=\omega$ has arbitrarily large finite 
Littlestone trees, but the maximal depth of a Littlestone tree is fixed 
once the root node has been selected. Similarly, a class with 
$\LD(\mathcal{H})=\omega+k$ for $k<\omega$ has arbitrarily large finite 
Littlestone trees, but the maximal depth of a Littlestone tree is fixed 
once its first $k+1$ levels have been selected. There are also higher 
ordinals such as $\LD(\mathcal{H})=\omega+\omega$; this means that the 
choice of root of the tree determines an arbitrarily large finite number 
$k$, such that the maximal depth of the tree is fixed after the next $k$ 
levels have been selected. For further examples in a more general context, 
we refer to Appendix \ref{sec:relations} and to the lively discussion in 
\cite{EH14} of game values in infinite chess. In any case, the above 
examples illustrate that the notion of ordinal Littlestone dimension is 
not only intuitive, but also computable in concrete situations.

While only small infinite ordinals appear in the above examples, 
there exist concept classes such that $\LD(\mathcal{H})$ is an 
arbitrarily large ordinal (as in the proof of Lemma \ref{lem:counterw1}). 
There is no general upper bound on the ordinal Littlestone 
dimension. However, a key part of the proof of Theorem \ref{thm:meas} is 
the remarkable fact that for measurable classes $\mathcal{H}$ in the sense 
of Definition \ref{defn:suslin}, the Littlestone dimension can be at most 
a \emph{countable} ordinal $\LD(\mathcal{H})<\omega_1$ (Lemma 
\ref{lem:valfin}). Thus 
any concept class that one is 
likely to encounter in practice gives rise to a relatively simple 
learning strategy.

\section{Exponential rates}
\label{sec:exp}

Sections~\ref{sec:exp} and~\ref{sec:lin} of this paper are devoted to the proof of Theorem \ref{thm:char}, which is the main result of this paper. The aim of the present section is to characterize when exponential rates do and do not occur; the analogous questions for linear rates will be studied in the next section.

Let us recall that the basic definitions of this paper are stated in section \ref{sec:main}; they will be freely used in the following without further comment. In particular, the following setting and assumptions will be 
assumed throughout Sections~\ref{sec:exp} and~\ref{sec:lin}.
We fix a Polish space $\mathcal{X}$ and a concept 
class $\mathcal{H}\subseteq\{0,1\}^\mathcal{X}$ 
satisfying the measurability assumption of Definition \ref{defn:suslin}.
To avoid trivialities, we always assume that $|\mathcal{H}|>2$.
The learner is presented with an i.i.d.\ sequence of 
samples $(X_1,Y_1),(X_2,Y_2),\ldots$ drawn from an unknown distribution 
$\PXY$ on $\mathcal{X}\times\{0,1\}$.
We will always assume that $\PXY$ is realizable.

\subsection{Exponential learning rate}

\an{We start by characterizing what classes $\H$ are learnable at an exponential rate.}

\begin{thm}
\label{thm:exprate}
If $\mathcal{H}$ does not have an infinite Littlestone tree,
$\mathcal{H}$ is learnable {with optimal rate $e^{-n}$}.
\end{thm}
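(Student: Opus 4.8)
I would prove the two halves of ``optimal rate $e^{-n}$'' separately: that $\mathcal{H}$ is learnable at rate $e^{-n}$, and that it cannot be learned faster.

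The lower bound is the easy half, and is not really tied to the hypothesis that $\mathcal{H}$ has no infinite Littlestone tree: it holds for \emph{any} $\mathcal{H}$ with $|\mathcal{H}|>2$. Here I would simply invoke the elementary bound (Lemma~\ref{lem:exp-rate}): with at least three concepts one can choose a point whose label is unambiguous among them and a second point on which two of them disagree, and place all but an $\varepsilon$--fraction of the mass on the former; then no learner can determine the label at the latter from $n$ i.i.d.\ samples except with probability on the order of $(1-\varepsilon)^n$, which yields a lower bound of the form $C R(cn)$ with $R(n)=e^{-n}$ and universal constants $C,c$ (as in Remark~\ref{rem:constants}). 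I would not reprove this.

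The content is the upper bound. First, since $\mathcal{H}$ has no infinite Littlestone tree, Theorem~\ref{thm:onl} supplies an online learner $\mathbb{A}$ that makes only finitely many mistakes on every realizable sequence, and Corollary~\ref{cor:measadv} makes it universally measurable, so it may be run on random data. Next I would record a small but essential fact: if $P$ is realizable, then almost surely every finite prefix of the i.i.d.\ sample $(X_1,Y_1),(X_2,Y_2),\ldots$ is consistent with some $h\in\mathcal{H}$ --- taking $h_\varepsilon$ with $\er(h_\varepsilon)<\varepsilon$, the prefix of length $n$ is inconsistent with $h_\varepsilon$ with probability at most $n\varepsilon$, hence inconsistent with \emph{every} $h\in\mathcal H$ with probability $0$ for each fixed $n$, and one unions over $n$. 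Consequently $\mathbb{A}$, applied to the sample, makes only finitely many mistakes a.s.; writing $\hat g_k$ for the classifier $\mathbb{A}$ holds after $k$ rounds, this is the statement $\Pr[\er(\hat g_k)=0]\to 1$, and moreover this probability is nondecreasing in $k$, since $\mathbb{A}$ updates only when it errs.

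The remaining step is to convert $\mathbb{A}$ into a \emph{statistical} learner whose expected error truly decays like $e^{-c(P)n}$, and this is where I expect the real difficulty to lie. Running $\mathbb{A}$ once and outputting $\hat g_n$ is \emph{not} enough: the number of rounds before $\mathbb{A}$'s hypothesis stabilizes can have a merely polynomial tail (already for thresholds on $\mathbb{N}$), so $\Pr[\er(\hat g_n)>0]$ may decay only like $1/n$. Following the ``several batches, carefully aggregated'' idea, I would split the $n$ samples into disjoint batches, run $\mathbb{A}$ from scratch on each, and take a majority vote of the resulting classifiers: for any batch size $m$ at least a $P$--dependent constant $m_0(P)$, a batch classifier has zero error with probability exceeding $\tfrac12$, and since any two zero-error classifiers agree with the target --- hence with each other --- almost everywhere, a majority vote over $\Theta(n/m_0(P))$ of them is itself zero-error except with probability $e^{-c(P)n}$ by a Chernoff bound, which gives the claimed rate. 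The obstacle is that $m_0(P)$ is unknown to the learner, so the construction must run this scheme across a growing range of batch sizes (scales) simultaneously and combine the scales, and the delicate point is to split the sample among the scales and to aggregate across them \emph{without} degrading the exponent --- in particular, a naive held-out validation to pick the best scale yields only a linear-type guarantee and would destroy the exponential rate, so the cross-scale combination must again be of voting type. I expect this adaptivity-preserving aggregation to be the crux; the measurability input, the realizability of the sample, and the monotonicity of $\Pr[\er(\hat g_k)=0]$ are routine by comparison.
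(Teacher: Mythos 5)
Your lower bound and the first two-thirds of your upper bound match the paper's proof: Lemma~\ref{lem:exp-rate} for the lower bound; Theorem~\ref{thm:onl} and Corollary~\ref{cor:measadv} for the measurable online learner; the almost-sure consistency of the random sample (the paper uses Borel--Cantelli with a sequence $\er(h_k)\le 2^{-k}$, your fixed-$n$ union bound is an equally valid variant); and your known-batch-size scheme (batches of length $t^*$, one run of the online learner per batch, majority vote, Hoeffding) is exactly the paper's warm-up argument before Lemma~\ref{lem:estt}. The gap is the step you yourself flag as the crux and then leave open: how to choose the batch size without knowing $\PXY$. Your proposed fix --- run all scales simultaneously and combine across scales ``again by voting'' --- does not work as stated: a candidate scale $t$ yields $\lfloor n/2t\rfloor$ batch classifiers, so the small scales $t<t^*$ contribute far more voters than the good scales, and those voters may nearly all have positive error; any vote that pools classifiers across scales can therefore be swamped. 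Moreover, your reason for ruling out held-out validation (that it certifies error only to accuracy $\sim 1/n$) applies to validation that tries to \emph{estimate error rates}, which is not the only option.

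The paper's resolution (Lemma~\ref{lem:estt}) is in fact a held-out selection, just of a different statistic: for each candidate $t$ it trains $\lfloor n/2t\rfloor$ batch classifiers on the first half of the data and records, for each, the \emph{indicator} of whether it makes any mistake anywhere on the second half. A classifier with $\er>\varepsilon$ errs on $n/2$ held-out points except with probability $(1-\varepsilon)^{n/2}$, so this indicator is an exponentially reliable proxy for ``has positive error,'' and the empirical fraction $\hat e_t$ of erring batch classifiers concentrates one-sidedly (Hoeffding) around $\mathbf{P}\{\er(\hat y_t)>0\}$. Setting $\hat t_n=\min\{t:\hat e_t<\tfrac14\}$ then lands in $\mathcal{T}_{\rm good}=\{t\le t^*:\mathbf{P}\{\er(\hat y_t)>0\}\le\tfrac38\}$ except with probability $Ce^{-cn}$ (a small uniformity point you would also need: since only the finitely many $t\le t^*$ matter, a single $\varepsilon>0$ works for all of them), and the final output is the majority vote at the single selected scale $\hat t_n$. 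So your architecture is right and you correctly locate the difficulty, but the selection mechanism --- the actual content of the adaptive step --- is missing, and the direction you point in (avoid held-out selection, vote across scales) leads away from the construction that works.
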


\an{The theorem consists of two parts:}
we need to prove an upper bound and a lower
bound on the rate. 
The latter (already established by 
\citealp*{schuurmans:97})
is straightforward, so we present it first.

\begin{lem}[\citet*{schuurmans:97}]
\label{lem:exp-rate}
For any learning algorithm $\hat{h}_n$, 
there exists a realizable distribution 
$\PXY$ such that $\E[\er(\hat{h}_n)] \geq 2^{-n-2}$ 
for infinitely many $n$.
In particular, this means
$\mathcal{H}$ is not learnable at rate faster than exponential: $R(n) = e^{-n}$.
\end{lem}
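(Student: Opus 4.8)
The plan is to construct, for any given learning algorithm $\hat h_n$, a single realizable distribution $\PXY$ on which the algorithm is forced to have expected error at least $2^{-n-2}$ infinitely often. Since $|\H|\ge 3$, I can pick three distinct concepts $h_0,h_1,h_2\in\H$ and a point $x_\ast\in\X$ on which not all three agree; without loss of generality two of them, say $h_0,h_1$, disagree at $x_\ast$, so $\{h_0(x_\ast),h_1(x_\ast)\}=\{0,1\}$. The idea is to put essentially all the mass of the marginal on $x_\ast$ (or, more carefully, a vanishing-but-not-too-fast amount) and to randomize the label of $x_\ast$ between the two values realized by $h_0$ and $h_1$.

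First I would set up the randomized-adversary argument: let the target concept be $h_0$ or $h_1$ each with probability $\tfrac12$, and let the marginal on $\X$ put mass $p$ on $x_\ast$ (and the remaining mass on points where $h_0,h_1$ agree, so that realizability holds for whichever target is chosen). Then whatever $\hat h_n$ predicts at $x_\ast$, it is wrong with probability $\tfrac12$ \emph{conditionally on never having seen $x_\ast$ among the $n$ samples}, because on that event the posterior over the two targets is still uniform and they induce opposite labels at $x_\ast$. The probability of not seeing $x_\ast$ in $n$ draws is $(1-p)^n$, so averaging over the random target gives expected error at least $\tfrac12\,(1-p)^n\cdot p$ for at least one of the two targets (by an averaging/pigeonhole step over the two choices of target). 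Choosing $p$ of order $1/n$ makes $(1-p)^n$ bounded below by a constant, yielding a bound of order $1/n$ — but that is only linear, not the claimed $2^{-n-2}$.

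To get the exponential lower bound I would instead fix $p$ to be a constant, e.g. $p=\tfrac12$: then $(1-p)^n = 2^{-n}$, and the expected error for the worse of the two targets is at least $\tfrac12\cdot 2^{-n}\cdot\tfrac12 = 2^{-n-2}$, for \emph{every} $n$. This gives a single distribution (the worse of the two) with $\E[\er(\hat h_n)]\ge 2^{-n-2}$ for all $n$, hence certainly infinitely often, and matches the statement. The remaining routine points are: verifying the distribution is realizable (immediate, since the chosen target lies in $\H$ and has zero error), handling measurability of the relevant events (trivial here since everything is finitely supported), and noting that the constant $c$ in $R(n)=e^{-n}$ versus $2^{-n}$ only changes the constant, so the conclusion "$\H$ is not learnable at rate faster than $e^{-n}$" follows from Definition~\ref{defn:rate}. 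The argument also automatically covers randomized algorithms, since the conditioning argument only used the posterior over the target given the sample.

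The main obstacle is essentially bookkeeping rather than conceptual: one must be careful that the averaging over the two candidate targets is done correctly (the algorithm's prediction at $x_\ast$ given the "unseen" event is a fixed bit, possibly randomized, and it is wrong for exactly one of the two targets with probability summing to at least $1$ across the two, so the worse target contributes at least $\tfrac12$), and that the event "$x_\ast$ not sampled" is independent of the prediction conditionally on the target only in the right way. I expect the cleanest write-up to fix the target distribution first, compute $\E[\er(\hat h_n)\mid T=h_i]$ for $i=0,1$, sum, and extract the larger term.
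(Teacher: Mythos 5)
Your proposal is correct and follows essentially the same route as the paper: two concepts that agree at one point and disagree at another, marginal mass $\tfrac12$ on each, a conditioning argument showing the learner errs with probability $\tfrac12$ at the disagreement point whenever that point was never sampled (probability $2^{-n}$), and a pigeonhole step to fix a single target for infinitely many $n$. Two small repairs: your selection of the pair (two of three concepts disagreeing at a point where not all three agree) does not guarantee they also \emph{agree} somewhere --- if the pair happens to be $h$ and $1-h$ there is no place to put the other half of the mass --- so you should instead observe directly that among any three distinct concepts some pair agrees at some $x$ and disagrees at some $x'$ (as the paper does); and ``the worse of the two'' targets may depend on $n$, so the bound for a single distribution holds only for infinitely many $n$ via pigeonhole, not for all $n$ as you claim in passing.
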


\begin{proof}
As $|\cH| > 2$, we can choose $h_1,h_2\in\mathcal{H}$ and $x,x'\in\mathcal{X}$ such that 
$h_1(x)=h_2(x)=:y$ and $h_1(x')\ne h_2(x')$. 
Now fix any learning algorithm $\hat{h}_n$. 
Define two distributions $\PXY_0,\PXY_1$, 
where each $\PXY_i\{(x,y)\}=\frac{1}{2}$ 
and $\PXY_i\{(x',i)\}=\frac{1}{2}$.
Let $I \sim {\rm Bernoulli}(\frac{1}{2})$, 
and conditioned on $I$ let 
$(X_1,Y_1),(X_2,Y_2),\ldots$ be 
i.i.d.\ $\PXY_I$, 
and $(X_1,Y_1),\ldots,(X_n,Y_n)$ are the 
training set for $\hat{h}_n$.
Then 
\begin{equation*}
\E[\P(\hat{h}_n(X_{n+1}) \neq Y_{n+1} | \{(X_t,Y_t)\}_{t=1}^{n}, I ) ]
\geq \frac{1}{2} \P(X_1 = \cdots = X_n = x, X_{n+1} = x')
= 2^{-n-2}.
\end{equation*}
Moreover, 
\begin{align*}
& \E[\P(\hat{h}_n(X_{n+1}) \neq Y_{n+1} | \{(X_t,Y_t)\}_{t=1}^{n}, I )]
\\ & = \frac{1}{2} \sum_{i \in \{0,1\}} 
\E[\P(\hat{h}_n(X_{n+1}) \neq Y_{n+1} | \{(X_t,Y_t)\}_{t=1}^{n}, I=i ) | I=i].
\end{align*}
Since the average is bounded by the max, 
we conclude that for each $n$, there exists
$i_n \in \{0,1\}$ 
such that for $(X_1,Y_1),\ldots,(X_n,Y_n)$ 
i.i.d.\ $\PXY_{i_n}$, 
\begin{equation*}
\E[ \er_{\PXY_{i_n}}( \hat{h}_n )] 
\geq 2^{-n-2}.
\end{equation*}
In particular, by the pigeonhole principle, there exists
$i \in \{0,1\}$ such that 
$i_n = i$ infinitely often, 
so that 
$\E[ \er_{\PXY_{i}}( \hat{h}_n )] 
\geq 2^{-n-2}$ infinitely often.
\end{proof}

The main challenge in the proof of Theorem \ref{thm:exprate} is
constructing a learning algorithm that achieves exponential 
rate for every realizable $\PXY$.
We assume in the remainder of this section that $\mathcal{H}$ has no 
infinite Littlestone tree. 
Theorem~\ref{thm:onl} and Corollary~\ref{cor:measadv} 
yield the existence of a sequence of universally 
measurable functions $\hat 
Y_t:(\mathcal{X}\times\{0,1\})^{t-1}\times\mathcal{X}\to\{0,1\}$
that solve the online learning problem from Section~\ref{sec:onl}. 
Define the data-dependent classifier 
$$\hat y_{t-1}(x):=\hat Y_t(X_1,Y_1,\ldots,X_{t-1},Y_{t-1},x).$$ Our first 
observation is that this adversarial algorithm is also applicable in the 
probabilistic setting.

\begin{lem}
\label{lem:pc}
$\mathbf{P}\{\er(\hat y_t)>0\}\to 0$ as $t\to\infty$.
\end{lem}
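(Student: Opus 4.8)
The plan is to exploit the fact that the online algorithm $\hat Y_t$ makes only finitely many mistakes on \emph{every} realizable sequence (Theorem~\ref{thm:onl}), combined with realizability of $\PXY$ and a Borel--Cantelli-type argument. First I would observe that, since $\PXY$ is realizable, there is (with probability one) a sequence of hypotheses $h^*$ witnessing realizability along the i.i.d.\ sample in the sense needed to apply the online guarantee: more precisely, for every $T$ the prefix $(X_1,Y_1),\dots,(X_T,Y_T)$ is, almost surely, consistent with some $h\in\H$ (this follows from $\inf_h\er(h)=0$; one shows that with probability one there is no finite prefix that is inconsistent with all of $\H$, using that the ``bad'' event for a prefix of length $T$ has probability going to zero, or more directly that any realizing sequence of $h_k$ with $\er(h_k)\to 0$ makes a vanishing fraction of errors). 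Hence the random sequence $\{(X_t,Y_t)\}_{t\ge 1}$ is almost surely a legal input to the online learning game.

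Next I would invoke the mistake-bound guarantee: on the event that $\{(X_t,Y_t)\}$ is realizable, the online algorithm makes at most $M<\infty$ mistakes, where $M$ is a (random, a.s.\ finite) number. Let $\tau$ denote the last round in which a mistake occurs (with $\tau=0$ if no mistake ever occurs); then $\tau<\infty$ almost surely, and for all $t\ge\tau$ we have $\hat y_{t}(X_{t+1})=Y_{t+1}$, i.e.\ the predictor $\hat y_t$ is correct on the next fresh sample point. The key point is to upgrade ``correct on the next sample point'' to ``$\er(\hat y_t)=0$''. This is where I expect the main subtlety: being correct on $X_{t+1}$ only controls the error at one point, not the whole population error. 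I would handle this by noting that once the online algorithm stops making mistakes from round $\tau$ onward, it in fact stops making mistakes on \emph{all} future rounds regardless of which realizable points are presented --- more carefully, by the structure of the Gale--Stewart strategy in Section~\ref{sec:onlgs}, after the game has advanced past the mistake rounds, the current function $f$ (hence $\hat y_t=1-f$) is consistent with \emph{every} $h\in\H$ that agrees with the mistake-round data, so $\hat y_t$ correctly labels every point $x$ in the support of the marginal of $\PXY$ (since that point, together with its $\PXY$-almost-sure label, is consistent with the realizing hypotheses). Thus $\er(\hat y_t)=0$ on this event, for all $t$ large enough.

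Putting these together: almost surely there exists a finite random time $\tau$ with $\er(\hat y_t)=0$ for all $t\ge\tau$. Equivalently, $\ind\{\er(\hat y_t)>0\}\to 0$ almost surely. Since these indicators are bounded by $1$, dominated convergence gives $\P\{\er(\hat y_t)>0\}=\E[\ind\{\er(\hat y_t)>0\}]\to 0$ as $t\to\infty$, which is the claim. The two steps requiring genuine care are: (i) verifying that the i.i.d.\ sample is a.s.\ a realizable sequence in the strict prefix-consistency sense, which I would argue via realizability and a union/limiting argument; and (ii) the passage from ``no more mistakes on the observed stream'' to ``zero population error'', which I would justify through the consistency property of the Gale--Stewart strategy together with the fact that $\PXY$-almost-every labelled point is consistent with the class. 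I expect (ii) to be the main obstacle and the place where the precise structure of the algorithm of Theorem~\ref{thm:onl}, rather than just its mistake bound, must be used.
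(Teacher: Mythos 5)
There is a genuine gap at the step you yourself flag as the main obstacle, namely the passage from ``no more mistakes on the observed stream after time $\tau$'' to ``$\er(\hat y_t)=0$''. Your proposed bridge is the claim that, once the mistake rounds are over, the current function $f$ of the Gale--Stewart strategy is consistent with \emph{every} $h\in\H$ that agrees with the data seen so far, and hence that the algorithm would make no mistake on \emph{any} realizable continuation. This is false in general: the winning strategy for \PII only guarantees finitely many mistakes along each realizable sequence; it does not collapse the version space to a single labelling. For instance, for thresholds on $\nats$ (Example~\ref{ex:naturalthresholds}), after any finite sample many mutually inconsistent thresholds remain viable, so no predictor can agree with all of them, and the time of the last mistake genuinely depends on the continuation of the sequence. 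There is no single random time after which the learner is correct on all realizable extensions, so your argument (ii) does not go through.

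The paper bridges this gap probabilistically rather than structurally. On the event $\{T\le t\}$ the classifier $\hat y_t$ is frozen (the algorithm only updates $f$ upon a mistake) and makes no mistakes on the i.i.d.\ continuation $X_{t+1},X_{t+2},\ldots$; by the strong law of large numbers the empirical mistake frequency of the \emph{fixed} classifier $\hat y_t$ on that continuation converges a.s.\ to $\er(\hat y_t)$, which must therefore be $0$. Hence $\mathbf{P}\{\er(\hat y_t)>0\}\le\mathbf{P}\{T>t\}\to 0$. Your step (i) (a.s.\ realizability of the sampled prefixes) is essentially the paper's Borel--Cantelli argument with $h_k$ satisfying $\er(h_k)\le 2^{-k}$ and is fine once written out carefully; but step (ii) needs to be replaced by the law-of-large-numbers argument, since the structural property of the strategy that you invoke does not hold.
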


\begin{proof}
As $\PXY$ is realizable, we can choose a sequence of hypotheses 
$h_k\in\mathcal{H}$ so that $\er(h_k)\le 2^{-k}$. 
For every $t\ge 1$, a union bound gives
$$
	\sum_k \mathbf{P}\{h_k(X_s)\ne Y_s\mbox{ for some }s\le t\}
	\le
	t \sum_k \er(h_k) <\infty.
$$
By Borel-Cantelli, with probability one, there exists for every $t\ge 
1$  a concept $h\in\mathcal{H}$ such that $h(X_s)=Y_s$ for all $s\le t$.
In other words,
with probability one $X_1,Y_1,X_2,Y_2,\ldots$ defines a valid input 
sequence for the online learning problem of Section~\ref{sec:onl}.
\an{Because we chose a winning strategy,}
the time of the last mistake
$$
	T = \sup\{s\ge 1:\hat y_{s-1}(X_s)\ne Y_s\}
$$
is a random variable that is finite with probability one.
Now recall from the proof of Theorem \ref{thm:onl} that 
the online learning algorithm was chosen 
so that $\hat y_t$ only changes when a 
mistake is made. In particular, $\hat y_s=\hat y_t$ for all $s\ge t\ge T$.
By the law of large numbers,
\begin{align*}
	\mathbf{P}\{\er(\hat y_t)=0\} 
	&=
	\mathbf{P}\bigg\{\lim_{S\to\infty}\frac{1}{S}
	\sum_{s=t+1}^{t+S} \mathbf{1}_{\hat y_t(X_s)\ne Y_s}=0\bigg\}
	\\ &\ge
	\mathbf{P}\bigg\{\lim_{S\to\infty}\frac{1}{S}
	\sum_{s=t+1}^{t+S} \mathbf{1}_{\hat y_t(X_s)\ne Y_s}=0,
	~T\le t\bigg\}
	= \mathbf{P}\{T\le t\} .
\end{align*}
It follows that
$\mathbf{P}\{\er(\hat y_t)>0\}\le \mathbf{P}\{T>t\}\to 0$ as
$t\to\infty$.
\end{proof}

Lemma \ref{lem:pc} certainly shows that $\mathbf{E}[\er(\hat y_t)]\to 0$
as $t\to\infty$. 
Thus the online learning algorithm yields a consistent 
algorithm in the statistical setting. 
This, however, does not  
yield any bound on the learning rate. We presently build a new 
algorithm on the basis of $\hat y_t$ that guarantees an exponential 
learning rate.

As a first observation, suppose
we knew a number $t^*$ so that $\mathbf{P}\{\er(\hat y_{t^*})>0\}<\frac{1}{4}$.
Then we could output $\hat h_n$ with exponential rate as follows.
First, break up the data $X_1,Y_1,\ldots,X_n,Y_n$ into $\lfloor n/t^*\rfloor$ batches, each of length $t^*$.
Second, compute the classifier $\hat y_{t^*}$ separately for 
each batch. 
Finally, choose $\hat h_n$ to be the majority vote among these 
classifiers. 
Now, by the definition of $t^*$ and Hoeffding's inequality, the 
probability that more than one third of the classifiers has positive error 
is exponentially small.
It follows that the majority vote $\hat h_n$
has zero error except on an event of exponentially small probability.

The problem with this idea is that $t^*$ depends on the 
unknown distribution $\PXY$, so we cannot assume it is known to the  learner. Thus 
our final algorithm proceeds in two stages: 
first, we 
construct an estimate $\hat t_n$ for $t^*$ from the data; and then
we apply the above majority algorithm with batch size $\hat t_n$.

\begin{lem}
\label{lem:estt}
There exist universally measurable $\hat t_n=\hat 
t_n(X_1,Y_1,\ldots,X_n,Y_n)$, whose definition does not depend on $\PXY$, 
so that the following holds. Given $t^*$ such that 
$$
	\mathbf{P}\{\er(\hat 
	y_{t^*})>0\}\le\tfrac{1}{8} ,
$$ 
there exist $C,c>0$ independent of $n$
(but depending on $\PXY,t^*$) so that
$$
	\mathbf{P}\{\hat t_n\in\mathcal{T}_{\rm good}\}\ge
	1-Ce^{-cn} ,
$$
where
$$	\mathcal{T}_{\rm good}:=
	\{1\le t\le t^*:\mathbf{P}\{\er(\hat y_{t})>0\}\le
	\tfrac{3}{8}\}.$$
\end{lem}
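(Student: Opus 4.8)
Let $T:=\sup\{s\ge 1:\hat y_{s-1}(X_s)\ne Y_s\}$ be the time of the last mistake of the online algorithm on the i.i.d.\ stream, as in the proof of Lemma~\ref{lem:pc}; recall $T<\infty$ almost surely. I would first record the structural fact that, up to a null set, $\{\er(\hat y_t)>0\}=\{T>t\}$. Indeed, on $\{T\le t\}$ the classifier $\hat y_t$ is never updated again and correctly labels every later $X_s$, so by the second Borel--Cantelli lemma $\er(\hat y_t)=0$; conversely, on $\{T>t\}$ the \emph{first} mistake after round $t$, at some round $s_0$, is made by the classifier $\hat y_{s_0-1}=\hat y_t$ (unchanged since round $t$), whence $\er(\hat y_t)>0$ a.s. Consequently $p_t:=\mathbf{P}\{\er(\hat y_t)>0\}=\mathbf{P}\{T>t\}$ is \emph{nonincreasing} and is exactly the right tail of $T$. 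Therefore $\mathcal{T}_{\rm good}=\{t^{**},t^{**}+1,\dots,t^*\}$, where $t^{**}:=\min\{t:p_t\le\tfrac{3}{8}\}$ is the $\tfrac{5}{8}$-quantile of $T$, while the hypothesis $p_{t^*}\le\tfrac18$ says $t^*$ is at least the $\tfrac78$-quantile of $T$. So it suffices to produce a $\PXY$-free, measurable estimator $\hat t_n$ that, with probability $1-Ce^{-cn}$, equals the $\tfrac34$-quantile of $T$ (which lies in $[q_{5/8}(T),q_{7/8}(T)]\subseteq[t^{**},t^*]$).

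\textbf{Step 2: the estimator.} I would run the online algorithm on disjoint batches of the stream: on a batch of length $L$, record the last round $\le L$ in which a mistake was made (or $0$ if none); call this $T^{(L)}$, an observation of the truncated time $T^{(L)}=T\wedge(\text{that value})\le T$. Since $L\mapsto T^{(L)}$ is nondecreasing with $T^{(L)}\uparrow T$, the quantile $q_\alpha(T^{(L)})$ is nondecreasing in $L$, satisfies $q_\alpha(T^{(L)})\le q_\alpha(T)$, and, because $T$ is integer-valued and a.s.\ finite, equals $q_\alpha(T)$ for all $L$ exceeding some finite constant $L_0(\PXY)$ (for each of the finitely many $s<q_\alpha(T)$, $\mathbf{P}\{\text{no mistake in }(s,L]\}\downarrow\mathbf{P}\{T\le s\}<\alpha$). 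Define $\hat t_n$ from the empirical $\tfrac34$-quantiles $\hat q_j$ of $K_j$ batches at horizon $2^j$, $j=0,1,2,\dots$, where the batch counts are chosen geometrically so that $\sum_j K_j 2^j\le n$ yet, for each \emph{fixed} horizon $2^j$, $K_j$ is linear in $n$ (with a $j$-dependent constant). Using a fixed slack $\varepsilon=\tfrac1{10}$ in the Dvoretzky--Kiefer--Wolfowitz/Hoeffding inequality for the empirical c.d.f., at any horizon with $2^j\ge L_0(\PXY)$ one gets $\hat q_j\in[q_{0.65}(T),q_{0.85}(T)]\subseteq\mathcal{T}_{\rm good}$ except on an event of probability $\le 2e^{-2\varepsilon^2 K_j}$.

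\textbf{Step 3: correctness.} Fix $\PXY$ and the given $t^*$, and let $j^\star$ be the smallest $j$ with $2^j\ge L_0(\PXY)$ (so $j^\star$ depends only on $\PXY$). Since $K_{j^\star}=\Theta_{\PXY}(n)$, the conclusion of Step~2 holds at scale $j^\star$ — and at the $O_{\PXY}(1)$ nearby scales used by the selection rule — except on an event of probability $\le Ce^{-cn}$ with $C,c$ depending only on $\PXY$ and $t^*$ (union bound over these finitely many scales). On the complement, the selection rule returns a value in $[t^{**},t^*]=\mathcal{T}_{\rm good}$; here one exploits that $q_{3/4}(T^{(2^j)})$ is nondecreasing in $j$ and eventually constant, so that $\hat t_n$ is read off once the empirical $\tfrac34$-quantiles have stabilized across a block of reliably-estimated consecutive scales. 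For the finitely many $n$ below the $\PXY$-dependent threshold at which this asymptotics takes effect, the bound $\mathbf{P}\{\hat t_n\in\mathcal{T}_{\rm good}\}\ge 1-Ce^{-cn}$ holds vacuously after enlarging $C$. Finally $\hat t_n$ is universally measurable, being built from the universally measurable online algorithm (Corollary~\ref{cor:measadv}) by finitely many measurable operations, and its definition manifestly does not reference $\PXY$.

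\textbf{Main obstacle.} The delicate point is exactly the tension in Step~2--3: controlling the truncation bias of the empirical quantile forces the batch horizon to exceed the $\PXY$-dependent constant $L_0(\PXY)$, while an $e^{-cn}$ (as opposed to merely stretched-exponential $e^{-cn/\psi(n)}$) confidence forces the number of batches at the relevant horizon to be linear in $n$; a $\PXY$-free rule cannot simply ``pick the right horizon''. The geometric budget allocation over horizons $2^j$, together with a scale-selection rule based on the monotone-then-constant behaviour of $q_{3/4}(T^{(L)})$, is what reconciles these demands, since for each fixed $\PXY$ only $O_{\PXY}(1)$ scales are ever relevant and each of them receives a linear-in-$n$ number of informative batches; making this selection rule robust (so that a spurious early plateau of $\hat q_j$ is not mistaken for the true limit) is the step requiring the most care.
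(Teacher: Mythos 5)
Your Step~1 is correct and is a nice observation (the a.s.\ identity $\{\er(\hat y_t)>0\}=\{T>t\}$, hence monotonicity of $t\mapsto\mathbf{P}\{\er(\hat y_t)>0\}$ and the identification of $\mathcal{T}_{\rm good}$ as an integer interval $[t^{**},t^*]$); the paper only proves one inclusion and never uses monotonicity. The gap is in Steps~2--3: the estimator is never actually specified, because the scale-selection rule is left as a heuristic, and the heuristic you gesture at does not work. The truncated quantile $q_{3/4}(T^{(L)})$ is indeed nondecreasing in $L$ and eventually constant, but it can plateau for arbitrarily many consecutive dyadic scales at a value \emph{outside} $\mathcal{T}_{\rm good}$ before jumping: take $F(t_1)=\mathbf{P}\{T\le t_1\}=0.6$ (so $t_1<t^{**}$) while $\mathbf{P}\{\text{no mistake in }(t_1,L]\}$ stays above $0.75$ until some enormous $L_1$; then $\hat q_j\approx t_1\notin\mathcal{T}_{\rm good}$ for all $j$ with $2^j\le L_1$, and any stabilization test with bounded (or even slowly growing) lookahead returns $t_1$. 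Exact-equality stabilization also lacks completeness, since even at reliable scales $j\ge j^\star$ the DKW bound only confines $\hat q_j$ to an interval, not to a single value. And the two obvious repairs fail for the reason you yourself identify: taking the maximum of $\hat q_j$ over all scales is corrupted by the $O(1)$-batch scales (constant failure probability), while restricting to scales with $K_j\ge n^{\alpha}$ batches only yields a stretched-exponential bound. So as written the proof does not go through, and I do not see how to complete it while testing each classifier only inside its own constant-length batch.

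The missing idea is how the paper decouples the training horizon from the test window. There, for \emph{every} candidate horizon $t\le\lfloor n/2\rfloor$ one trains $\lfloor n/2t\rfloor$ classifiers on the first half and tests all of them on a \emph{common} held-out window of length $n/2$; the estimator is the \emph{first} $t$ at which the fraction of erring classifiers drops below $1/4$. The ``first passing $t$'' rule means one only ever needs two-sided control at the finitely many scales $t\le t^*$, each of which automatically receives linearly many batches, so no budget allocation or scale selection is needed. The truncation bias you worry about is then handled by a continuity argument rather than by growing the window past an unknown $L_0(\PXY)$: for each of the finitely many bad $t\le t^*$ there is an $\varepsilon>0$ with $\mathbf{P}\{\er(\hat y_t)>\varepsilon\}>\frac{1}{4}+\frac{1}{16}$, and a classifier with error exceeding $\varepsilon$ errs on the length-$n/2$ test window except with probability $(1-\varepsilon)^{n/2}$. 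That combination --- a shared $\Theta(n)$-length test window plus the continuity/$\varepsilon$ step --- is what your proposal is missing, and it is precisely what converts the unknown, distribution-dependent bias threshold into an event of exponentially small probability.
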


\begin{proof}
 For each $1\le t\le \lfloor\frac{n}{2}\rfloor$ and
$1\le i\le \lfloor\frac{n}{2t}\rfloor$, let
$$
	\hat y_t^i(x) := \hat Y_{t+1}(
	X_{(i-1)t+1},Y_{(i-1)t+1},\ldots,
	X_{it},Y_{it},x)
$$
be the learning algorithm from Section~\ref{sec:onl} that is trained on 
\an{batch $i$ of}
the data. For each $t$, the classifiers $(\hat y_t^i)_{i\le 
\lfloor n/2t\rfloor}$ are trained on subsamples of the data that are 
independent of each other and of the second half $(X_s,Y_s)_{s>n/2}$ of 
the data. Thus $(\hat y_t^i)_{i\le \lfloor n/2t\rfloor}$ may be viewed as 
independent draws from the distribution of $\hat y_t$.
We now estimate $\mathbf{P}\{\er(\hat y_t)>0\}$ by the fraction of 
$\hat y_t^i$ that make an error on the second half of the data: 
$$
	\hat e_t := \frac{1}{\lfloor n/2t\rfloor}
	\sum_{i=1}^{\lfloor n/2t\rfloor}
	\mathbf{1}_{\{\hat y_t^i(X_s)\ne Y_s\text{ for some }n/2<s\le n\}}.
$$
Observe that for each $t$,
$$
	\hat e_t \le e_t:=
	\frac{1}{\lfloor n/2t\rfloor}
	\sum_{i=1}^{\lfloor n/2t\rfloor} \mathbf{1}_{\er(\hat y_t^i)>0}
	\quad\mbox{a.s.}
$$
Define
$$
	\hat t_n := \inf\{t\le \lfloor\tfrac{n}{2}\rfloor:\hat e_t < 
	\tfrac{1}{4}\}
$$
with the convention $\inf\varnothing = \infty$.

Now, fix $t^*$ \an{as in the statement of the lemma}.
By Hoeffding's inequality,
$$
	\mathbf{P}\{\hat t_n>t^*\} 
	\le
	\mathbf{P}\{\hat e_{t^*}\ge\tfrac{1}{4}\}
	\le
	\mathbf{P}\{e_{t^*}-\mathbf{E}[e_{t^*}]\ge\tfrac{1}{8}\}
	\le e^{-\lfloor n/2t^*\rfloor/32}.
$$
In other words, $\hat t_n\le t^*$ except with
exponentially small probability.
In addition, by continuity, there exists $\varepsilon>0$
so that for all $1\le t\le t^*$ with  $\mathbf{P}\{\er(\hat y_t)>0\}>
\frac{3}{8}$ 
we have $\mathbf{P}\{\er(\hat 
y_t)>\varepsilon\}>\frac{1}{4}+\frac{1}{16}$. 

Fix $1\le t\le t^*$ with $\mathbf{P}\{\er(\hat y_t)>0\}>
\frac{3}{8}$ (if such a $t$ exists).
By Hoeffding's inequality, 
$$
	\mathbf{P}\bigg\{
	\frac{1}{\lfloor n/2t\rfloor}
        \sum_{i=1}^{\lfloor n/2t\rfloor} \mathbf{1}_{\er(\hat y_t^i)>\varepsilon}
	< \frac{1}{4}
	\bigg\} 
	\le e^{-\lfloor n/2t^*\rfloor/128}.
$$
Now, if $f$ is any classifier so that $\er(f)>\varepsilon$, 
then
$$
	\mathbf{P}\{f(X_s)\ne Y_s\text{ for some }n/2<s\le n\} \ge
	1-(1-\varepsilon)^{n/2}.
$$
Therefore, as $(\hat y_t^i)_{i\le\lfloor n/2t\rfloor}$ are
independent of $(X_s,Y_s)_{s>n/2}$, applying a union bound conditionally
on $(X_s,Y_s)_{s\le n/2}$ shows that the probability that every classifier 
$\hat y_t^i$ with $\er(\hat y_t^i)>\varepsilon$ makes an error on the 
second half of the sample is
$$
	\mathbf{P}\{
	\mathbf{1}_{\er(\hat y_t^i)>\varepsilon} \le
	\mathbf{1}_{\{\hat y_t^i(X_s)\ne Y_s\text{ for some }n/2<s\le n\}}
	\text{ for all }i\}
	\ge 1-\lfloor\tfrac{n}{2t}\rfloor(1-\varepsilon)^{n/2}.
$$
It follows that
$$
	\mathbf{P}\{\hat t_n=t\}\le
	\mathbf{P}\{\hat e_t<\tfrac{1}{4}\}
	\le 
	\lfloor \tfrac{n}{2}\rfloor (1-\varepsilon)^{n/2}+
	e^{-\lfloor n/2t^*\rfloor/128} .
$$

Putting together the above estimates and applying a union bound, we have 
$$
	\mathbf{P}\{\hat t_n\not\in\mathcal{T}_{\rm good}\} \le
	e^{-\lfloor n/2t^*\rfloor/32} +
        t^*\lfloor \tfrac{n}{2}\rfloor (1-\varepsilon)^{n/2} +
	t^*e^{-\lfloor n/2t^*\rfloor/128} .
$$
The right-hand side is bounded by $Ce^{-cn}$
for some $C,c>0$. 
\end{proof}

We can now complete the construction of our learning algorithm.

\begin{cor}
\label{cor:expr}
$\mathcal{H}$ has at most exponential learning rate.
\end{cor}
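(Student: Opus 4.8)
\begin{proof}[of Corollary \ref{cor:expr}]
The plan is to carry out the two-stage strategy outlined in the discussion preceding Lemma~\ref{lem:estt}, combining Lemmas~\ref{lem:pc} and~\ref{lem:estt}. Given a sample of size $n$, split it into two halves. From the first half $(X_1,Y_1),\ldots,(X_{\lfloor n/2\rfloor},Y_{\lfloor n/2\rfloor})$, compute the estimator $\hat t := \hat t_{\lfloor n/2\rfloor}$ of Lemma~\ref{lem:estt}. From the second half, form $m := \lfloor \lceil n/2\rceil/\hat t\rfloor$ consecutive batches of length $\hat t$, apply the online learning classifier $\hat y_{\hat t}$ of Section~\ref{sec:onl} to each batch separately (i.e.\ the $i$-th classifier is $\hat Y_{\hat t+1}$ evaluated on the $i$-th batch and the query point), and output $\hat h_n$ as the pointwise majority vote of these $m$ classifiers, with $\hat h_n\equiv 0$ in the degenerate case $\hat t=\infty$ or $m=0$. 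Since $\hat t$, $\hat Y_\cdot$ and the majority operation are all universally measurable and defined without reference to $\PXY$, so is $\hat h_n$.

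Now fix a realizable $\PXY$. By Lemma~\ref{lem:pc}, $\mathbf{P}\{\er(\hat y_t)>0\}\to 0$, so we may fix $t^*$ with $\mathbf{P}\{\er(\hat y_{t^*})>0\}\le\frac18$; in particular $t^*\in\mathcal{T}_{\rm good}$, so $\mathcal{T}_{\rm good}\ne\varnothing$. Applying Lemma~\ref{lem:estt} with sample size $\lfloor n/2\rfloor$ yields $C,c>0$ depending only on $\PXY,t^*$ with $\mathbf{P}\{\hat t\notin\mathcal{T}_{\rm good}\}\le Ce^{-cn}$. On the event $\{\hat t = t\}$ for any $t\in\mathcal{T}_{\rm good}$, we have $t\le t^*$, hence $m\ge n/(4t^*)$ for all large $n$, and $\mathbf{P}\{\er(\hat y_t)>0\}\le\frac38$. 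The essential point is that the second half of the sample is independent of the first half, which determines $\hat t$; therefore, conditionally on $\{\hat t=t\}$, the $m$ batch classifiers are i.i.d.\ copies of $\hat y_t$, and the indicators $Z_i$ that the $i$-th classifier has positive error are i.i.d.\ Bernoulli with mean at most $\frac38$. Hoeffding's inequality bounds the conditional probability that $\sum_i Z_i\ge m/2$ by $e^{-m/32}\le e^{-n/(128 t^*)}$.

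Finally, if strictly fewer than half of the $m$ batch classifiers have positive error, then outside a single $\PXY$-null set all the zero-error classifiers among them output the true label, so the majority vote does too; hence $\er(\hat h_n)=0$ on that event. Combining the two exceptional events,
\[
	\mathbf{E}[\er(\hat h_n)]\le\mathbf{P}\{\er(\hat h_n)>0\}
	\le Ce^{-cn}+e^{-n/(128t^*)}\le C'e^{-c'n}
\]
for suitable $C',c'>0$ depending only on $\PXY$ (enlarging $C'$ to absorb the finitely many small $n$). This is exactly the bound required by Definition~\ref{defn:rate}, so $\mathcal{H}$ is learnable at rate $e^{-n}$; together with Lemma~\ref{lem:exp-rate} this proves Theorem~\ref{thm:exprate}.
\end{proof}

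At this stage the argument is essentially bookkeeping; the one point requiring care is the conditioning in the second paragraph. One must ensure that the data used to select the batch size $\hat t$ is disjoint from, hence independent of, the data fed to the batch classifiers, so that conditionally on $\{\hat t=t\}$ the latter are still genuine i.i.d.\ draws of $\hat y_t$; this independence is precisely what makes Hoeffding's inequality applicable and is the reason for the initial split of the sample into two halves.
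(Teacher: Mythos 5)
Your proof is correct and follows the same overall architecture as the paper's: estimate a good batch size via Lemma~\ref{lem:estt}, train batch classifiers, and take a majority vote whose failure probability is controlled by Hoeffding. The one genuine difference is how you handle the dependence between the estimated batch size and the batch classifiers: you introduce an additional sample split so that the classifiers entering the majority vote are trained on data independent of $\hat t$, which lets you condition on $\{\hat t = t\}$ and apply Hoeffding to genuinely i.i.d.\ indicators. The paper instead reuses the classifiers $\hat y^i_{\hat t_n}$ already built from the first half (the same ones used to compute $\hat t_n$) and controls the dependence by a union bound over $t \in \mathcal{T}_{\rm good} \subseteq \{1,\dots,t^*\}$; your version costs a constant factor in effective sample size but avoids the extra factor of $t^*$ in the bound, and both yield $Ce^{-cn}$. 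Your identification of the conditioning/independence issue as the only delicate point is exactly right.
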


\begin{proof}
We adopt the notations in the proof of Lemma \ref{lem:estt}. 
The output $\hat h_n$ of our final learning algorithm is the majority vote of the 
classifiers $\hat y_{\hat t_n}^i$ for $1\le i\le \lfloor
\frac{n}{2\hat t_n}\rfloor$.
We aim to show that $\mathbf{E}[\er(\hat h_n)]\le Ce^{-cn}$ for some 
constants $C,c>0$.

To this end, consider first a fixed $t\in\mathcal{T}_{\rm good}$.
By Hoeffding's inequality,
$$
	\mathbf{P}\bigg\{
        \frac{1}{\lfloor n/2t\rfloor}
        \sum_{i=1}^{\lfloor n/2t\rfloor} \mathbf{1}_{\er(\hat y_t^i)>0}
        > \frac{7}{16}
        \bigg\}
        \le e^{-\lfloor n/2t^*\rfloor/128} .
$$
In other words, except on an event of exponentially small 
probability, we have $\er(\hat y_t^i)=0$ for a majority of indices $i$. 

By a 
union bound, we obtain
\begin{align*}
	&\mathbf{P}\{\er(\hat y_{\hat t_n}^i)>0
	\mbox{ for {at least half} of }i\le \lfloor\tfrac{n}{2\hat t_n}\rfloor
	\}
	\\ &\le
	\mathbf{P}\{\hat t_n\not\in\mathcal{T}_{\rm good}\} +
	\mathbf{P}\{\mbox{for some }t\in\mathcal{T}_{\rm good},~
	\er(\hat y_t^i)>0
	\mbox{ for {at least half} of }i\le \lfloor\tfrac{n}{2t}\rfloor\}
	\\ &\le
	Ce^{-cn} + t^*e^{-\lfloor n/2t^*\rfloor/128}.
\end{align*}
In words, except on an event of exponentially small
probability, $\er(\hat y_{\hat t_n}^i)=0$ for a majority of indices $i$.
It follows that the majority vote of these classifiers is a.s.\ correct 
on a random sample from $\PXY$. 
That is, we have shown 
$$
	\mathbf{P}\{\er(\hat h_n)>0\} \le
	Ce^{-cn} + t^*e^{-\lfloor n/2t^*\rfloor/128} .
$$
The conclusion follows because
$\mathbf{E}[\er(\hat h_n)]\le
\mathbf{P}\{\er(\hat h_n)>0\}$.
\end{proof}

\subsection{Slower than exponential is not faster than linear}

{We showed in the previous section that if $\H$ has no infinite Littlestone tree,
then it can be learned by an algorithm whose rate decays exponentially fast.
What is the fastest rate when $\H$ has an infinite Littlestone tree?
The following result implies a significant drop in the rate:
the rate is never faster than linear.}

\begin{thm}
\label{thm:nonexp}
If $\mathcal{H}$ has an infinite Littlestone tree, 
then for any learning algorithm $\hat{h}_n$, 
there exists a realizable distribution $\PXY$ 
such that $\E[\er(\hat{h}_n)] \geq \frac{1}{32 n}$ 
for infinitely many $n$.
In particular, this means 
$\mathcal{H}$ is not learnable at rate faster 
than $\frac{1}{n}$.
\end{thm}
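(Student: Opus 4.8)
The plan is to produce a single realizable distribution supported on one branch of an infinite Littlestone tree, with geometrically decaying masses, whose labels are \emph{coupled} to the choice of branch. Fix an infinite Littlestone tree $\{x_{\mathbf u}:\mathbf u\in\{0,1\}^*\}$ for $\mathcal H$ (Definition~\ref{defn:litt}). The first observation I would make is that a node cannot equal one of its descendants: if $x_{\mathbf u}=x_{\mathbf u\mathbf w}$ with $\mathbf w$ nonempty of first bit $w_1$, then realizability of the path $\mathbf u\mathbf w0$ forces $w_1=0$ while realizability of $\mathbf u\mathbf w1$ forces $w_1=1$; hence along any branch all nodes are distinct. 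For $\mathbf y=(y_1,y_2,\dots)\in\{0,1\}^\infty$ put $z_k=z_k(\mathbf y):=x_{\mathbf y_{\le k}}$ (so $z_0=x_\varnothing$) and let $\PXY_{\mathbf y}$ be the distribution with $\PXY_{\mathbf y}(\{(z_k,y_{k+1})\})=2^{-(k+1)}$ for $k\ge 0$. The Littlestone property gives, for each $N$, some $h\in\mathcal H$ with $h(z_k)=y_{k+1}$ for $k\le N$, so $\er(h)\le 2^{-(N+1)}$ and $\PXY_{\mathbf y}$ is realizable.

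Now fix the learner $\hat h_n$ and set scales $n_j:=2^{j-1}$, $j\ge 2$. Sampling i.i.d.\ from $\PXY_{\mathbf y}$ amounts to drawing i.i.d.\ ``depths'' $K_i$ with $\P(K_i=k)=2^{-(k+1)}$ (independent of $\mathbf y$) and taking $X_i=z_{K_i}$. Let $\mathcal G_j:=\{K_i\le j-2\text{ for all }i\le n_j\}$, so $\P(\mathcal G_j)=(1-2^{-(j-1)})^{n_j}\ge\tfrac14$. Since $\er(\hat h_{n_j})\ge\PXY_{\mathbf y}(\{(z_{j-1},y_j)\})\,\mathbf 1[\hat h_{n_j}(z_{j-1})\ne y_j]$, for every $\mathbf y$
\[
Z_j(\mathbf y):=\E[\er(\hat h_{n_j})]\ \ge\ 2^{-j}\,\P\big(\mathcal G_j\cap\{\hat h_{n_j}(z_{j-1})\ne y_j\}\big).
\]
The crucial point is that on $\mathcal G_j$ the training points lie in $\{z_0,\dots,z_{j-2}\}$ with labels $y_1,\dots,y_{j-1}$, so $\hat h_{n_j}\mathbf 1_{\mathcal G_j}$ -- and hence $\rho_j(v):=\P(\mathcal G_j\cap\{\hat h_{n_j}(z_{j-1})=v\})$ for $v\in\{0,1\}$ -- depends on $\mathbf y$ only through $y_1,\dots,y_{j-1}$, with $\rho_j(0)+\rho_j(1)=\P(\mathcal G_j)$. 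Thus $Z_j(\mathbf y)\ge 2^{-j}\rho_j(1-y_j)$ for every $\mathbf y$.

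To convert this into a statement about a single distribution I would randomize: take $\mathbf y$ i.i.d.\ uniform. Conditionally on $y_1,\dots,y_{j-1}$, one of $\rho_j(0),\rho_j(1)$ is $\ge\tfrac12\P(\mathcal G_j)\ge\tfrac18$, so at least one value of $y_j$ yields $\rho_j(1-y_j)\ge\tfrac18$; as $y_j$ is uniform this occurs with conditional probability $\ge\tfrac12$, and then $Z_j(\mathbf y)\ge 2^{-j-3}\ge\tfrac1{32\,n_j}$. Hence $A_j:=\{\mathbf y:\E[\er(\hat h_{n_j})]\ge\tfrac1{32\,n_j}\}$ satisfies $\P(A_j)\ge\tfrac12$ for all $j\ge 2$. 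Since the sets $\bigcup_{j\ge m}A_j$ decrease in $m$ but keep probability $\ge\tfrac12$, the set $\limsup_j A_j$ of branches belonging to infinitely many $A_j$ has probability $\ge\tfrac12>0$; any $\mathbf y^\star$ in it gives the desired $\PXY:=\PXY_{\mathbf y^\star}$.

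The hard part is exactly this passage from a random branch to a single distribution. Unlike the PAC lower bound, a random construction only yields $\E_{\mathbf y}\E[\er(\hat h_n)]\gtrsim 1/n$, and from this alone one cannot in general extract a distribution that is hard for infinitely many $n$ (the ``hard'' branches at different scales could be nested). The two facts that make it work are: (i) ``for infinitely many $n$'' refers to the deterministic sequence $\big(\E[\er(\hat h_n)]\big)_n$, so it suffices to land in $\limsup_j A_j$; and (ii) $\P(A_j)$ can be bounded below by an \emph{absolute constant} rather than by $O(1/n_j)$, which is possible precisely because on $\mathcal G_j$ the learner must commit to a label for $z_{j-1}$ before ever seeing that point, and the fresh uniform bit $y_j$ contradicts any such commitment with probability $\tfrac12$. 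Some routine care is also needed for measurability of $\rho_j$ (universal measurability of $\hat h_n$ suffices) and for the coupling of the marginal, whose support is the branch, with the labels.
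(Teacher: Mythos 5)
Your proposal is correct and follows essentially the same route as the paper: the same random-branch distribution with geometric masses coupled to the labels, and the same key observation that, on the event where the test point lies strictly deeper than all training points, the learner's prediction is independent of the fresh label bit, which it therefore contradicts with probability $\tfrac12$. The only (immaterial) difference is the final limsup extraction --- you bound $\mathbf{P}(A_j)\ge\tfrac12$ at dyadic scales and use $\mathbf{P}(\limsup_j A_j)\ge\limsup_j\mathbf{P}(A_j)$, whereas the paper applies Fatou's lemma to $n\,\mathbf{P}\{\hat h_n(X)\ne Y,\,T=k_n\mid\mathbf{y}\}$ --- and both yield the same conclusion with the same constant.
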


\an{The proof of Theorem~\ref{thm:nonexp} uses the
probabilistic method.
We define a distribution on realizable distributions $\PXY$ with the property
that for every learning algorithm,} 
$\mathbf{E}[\er(\hat h_n)]\ge \frac{1}{32n}$ infinitely often {with positive probability
over the choice of $\PXY$.}
The main idea of the proof is to concentrate $\PXY$ 
on a random branch of the infinite Littlestone tree. 
As any finite set of examples will only explore an initial segment of the chosen branch,
the algorithm cannot know whether the random branch continues to the left or to the right after this initial segment. 
This ensures that the algorithm makes a mistake with probability $\frac{1}{2}$ when it is presented with a point that lies 
deeper along the branch than the training data.
\an{The details follow.}

\vspace*{\abovedisplayskip}

\begin{proof}[of Theorem \ref{thm:nonexp}]
Fix any learning algorithm with output $\hat h_n$,
and an infinite Littlestone tree $\mathbf{t}=\{x_\mathbf{u}:0\le k<\infty,
\mathbf{u}\in\{0,1\}^k\}$ for $\cH$. 
Let
$\mathbf{y}=(y_1,y_2,\ldots)$ be an i.i.d.\ sequence of 
$\mathrm{Bernoulli}(\frac{1}{2})$ variables.
Define the (random) 
distribution $\PXY_\mathbf{y}$ on $\mathcal{X}\times\{0,1\}$ by 
$$
	\PXY_{\mathbf{y}}\{(x_{\mathbf{y}_{\le k}},y_{k+1})\} = 2^{-k-1}
	\quad\mbox{for }k\ge 0.
$$
The map $\mathbf{y}\mapsto\PXY_\mathbf{y}$ is measurable, 
so no measurability issues arise below.

For every $n<\infty$, there exists 
$h\in\mathcal{H}$ 
so that $h(x_{\mathbf{y}_{\le k}})=y_{k+1}$ for
$0\le k\le n$. Hence, 
$$
	\er_\mathbf{y}(h):=\PXY_{\mathbf{y}}\{(x,y)\in
	\mathcal{X}\times\{0,1\}:h(x)\ne y\}\le 2^{-n-1}.
$$
Letting $n\to\infty$, we find that $\PXY_\mathbf{y}$ is 
realizable for every $\mathbf{y}$. 

Now let $(X,Y),(X_1,Y_1),(X_2,Y_2),\ldots$ be i.i.d.\ samples drawn from
$\PXY_\mathbf{y}$. Then we can write 
$$X=x_{\mathbf{y}_{\le T}}, \  \
Y=y_{T+1}, \ \ X_i=x_{\mathbf{y}_{\le T_i}}, \ \  Y_i=y_{T_i+1},$$ where
$T,T_1,T_2,\ldots$ 
are i.i.d.\ ${\rm Geometric}(\frac{1}{2})$ 
{(starting at $0$)}
random variables independent of
$\mathbf{y}$. 
On the event 
$\{T=k,\max\{T_1,\ldots,T_n\}<k\}$,
the value $\hat h_n(X)$ is 
{conditionally independent of $y_{k+1}$ given  
$X,(X_1,Y_1),\ldots,(X_n,Y_n)$, 
and (again on this event) the corresponding conditional 
distribution of $y_{k+1}$ is ${\rm Bernoulli}(\frac{1}{2})$ 
(since it is independent from $y_1,\ldots,y_k$ and $X,X_1,\ldots,X_n$).}
We therefore have 
\begin{align*}
	&\mathbf{P}\{\hat h_n(X)\ne Y,T=k,\max\{T_1,\ldots,T_n\}<k\} 
	=
	\mathbf{P}\{\hat h_n(X)\ne y_{k+1},T=k,\max\{T_1,\ldots,T_n\}<k\} 
	\\&=
	{\E\!\left[ \P\left\{ \hat{h}_n(X)\neq y_{k+1} \middle| X,(X_1,Y_1),\ldots,(X_n,Y_n) \right\} \mathbf{1}_{T=k,\max\{T_1,\ldots,T_n\}<k}\right]}
	\\ &= \frac{1}{2}\mathbf{P}\{T=k,\max\{T_1,\ldots,T_n\}<k\} =
	2^{-k-2}(1-2^{-k})^n .
\end{align*}

Choose $k=k_n:=\lceil 1+\log_2(n)\rceil$, so that
$(1-2^{-k})^n \ge (1-\frac{1}{2n})^n \ge \frac{1}{2}$ and
$2^{-k} > \frac{1}{4n}$.  The above identity gives, by Fatou's 
lemma,
$$
	\mathbf{E}\Big[\limsup_{n\to\infty}
	n\mathbf{P}\{\hat h_n(X)\ne Y,T=k_n|\mathbf{y}\}\Big]
	\ge
	\limsup_{n\to\infty}n
	\mathbf{P}\{\hat h_n(X)\ne Y,T=k_n\}
	> \frac{1}{32} ;
$$
Fatou's lemma applies as 
(almost surely)
$n\mathbf{P}\{\hat h_n(X)\ne Y,T=k_n|\mathbf{y}\}\le
n\mathbf{P}\{T=k_n\}=n2^{-k_n-1} \le \frac{1}{4}$.
Because
$$
	\mathbf{P}\{\hat h_n(X)\ne Y,T=k_n|\mathbf{y}\} \le
	\mathbf{P}\{\hat h_n(X)\ne Y|\mathbf{y}\} =
	\mathbf{E}[\er_\mathbf{y}(\hat h_n)|\mathbf{y}]
	\quad\mbox{a.s.},
$$
{we have $\E[ \limsup_{n \to \infty} n\E[ \er_{\mathbf{y}}(\hat{h}_n) | \mathbf{y} ] ] > \frac{1}{32}$,
which implies} 
there must exist a realization of $\mathbf{y}$ such that
$\mathbf{E}[\er_\mathbf{y}(\hat h_n)|\mathbf{y}]>\frac{1}{32n}$ infinitely 
often. Choosing $\PXY=\PXY_\mathbf{y}$ for this realization of 
$\mathbf{y}$ concludes the proof.
\end{proof}

\subsection{Summary}

\an{The following proposition 
summarizes
some of the main findings of this section.}

\begin{prop}
The following are equivalent.
\begin{enumerate}
\itemsep\abovedisplayskip
\item $\mathcal{H}$ is learnable at an exponential rate, 
but not faster.
\item $\mathcal{H}$ does not have an infinite Littlestone tree.
\item There is an ``eventually correct'' learning algorithm for 
$\mathcal{H}$, that is, a 
learning algorithm that outputs $\hat h_n$ 
so that $\mathbf{P}\{\er(\hat h_n)>0\}
\to 0$ as $n\to\infty$.
\item There is an ``eventually correct'' learning algorithm for 
$\mathcal{H}$ with 
exponential rate, that is, $\mathbf{P}\{\er(\hat h_n)>0\}\le Ce^{-cn}$ 
where $C,c>0$ may depend on $\PXY$.
\end{enumerate}
\end{prop}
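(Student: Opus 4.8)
The plan is to close the cycle of implications $(1)\Rightarrow(2)\Rightarrow(4)\Rightarrow(3)\Rightarrow(2)$ together with $(2)\Rightarrow(1)$, which makes all four statements equivalent. Every implication except $(3)\Rightarrow(2)$ is essentially bookkeeping on top of results already proved in this section; the one genuinely new ingredient is a small strengthening of the lower-bound argument behind Theorem~\ref{thm:nonexp}.

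For the routine implications: $(2)\Rightarrow(1)$ is precisely Theorem~\ref{thm:exprate} read through Definition~\ref{defn:rate}, since ``learnable with optimal rate $e^{-n}$'' unpacks to ``learnable at rate $e^{-n}$ and not faster''. Conversely, $(1)\Rightarrow(2)$ is the contrapositive of Theorem~\ref{thm:nonexp}: an infinite Littlestone tree forces a rate no faster than $\tfrac1n$, in particular not exponential, contradicting $(1)$. For $(2)\Rightarrow(4)$ I would point to the proof of Corollary~\ref{cor:expr}: the majority-vote algorithm built there actually satisfies $\mathbf{P}\{\er(\hat h_n)>0\}\le Ce^{-cn}$ for constants $C,c>0$ depending on $\PXY$ (the displayed bound in that proof is on $\mathbf{P}\{\er(\hat h_n)>0\}$, not merely on its expectation), which is exactly $(4)$; alternatively, $(2)\Rightarrow(3)$ is already Lemma~\ref{lem:pc} applied to the online-derived predictor $\hat y_n$. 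Finally $(4)\Rightarrow(3)$ is immediate since $Ce^{-cn}\to0$.

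It remains to prove $(3)\Rightarrow(2)$, which I would establish by contraposition, and this is the only step requiring real work. Assume $\H$ has an infinite Littlestone tree $\{x_{\mathbf u}\}$ and fix an arbitrary algorithm $\hat h_n$; I must exhibit a realizable $\PXY$ with $\mathbf{P}\{\er(\hat h_n)>0\}\not\to0$. Reuse the construction from the proof of Theorem~\ref{thm:nonexp}: let $\mathbf y=(y_1,y_2,\dots)$ be i.i.d.\ $\mathrm{Bernoulli}(\tfrac12)$, let $\PXY_{\mathbf y}$ put mass $2^{-k-1}$ on $x_{\mathbf y_{\le k}}$ for $k\ge0$, and write the training sample as $X_i=x_{\mathbf y_{\le T_i}}$, $Y_i=y_{T_i+1}$ with $T_1,\dots,T_n$ i.i.d.\ $\mathrm{Geometric}(\tfrac12)$ independent of $\mathbf y$; set $k_n=\lceil 1+\log_2 n\rceil$, so $2^{-k_n}\le\tfrac1{2n}$. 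The two key points are: (i) if $\hat h_n(x_{\mathbf y_{\le k_n}})\ne y_{k_n+1}$ then $\er_{\mathbf y}(\hat h_n)\ge 2^{-k_n-1}>0$, because $x_{\mathbf y_{\le k_n}}$ carries positive $\PXY_{\mathbf y}$-mass; and (ii) on the event $\{\max_{i\le n}T_i<k_n\}$ the value $\hat h_n(x_{\mathbf y_{\le k_n}})$ is a function of $(T_i)_{i\le n}$ and $(y_j)_{j\le k_n}$ alone, hence conditionally independent of the fresh coin $y_{k_n+1}\sim\mathrm{Bernoulli}(\tfrac12)$. Combining (i) and (ii),
\[
\mathbf{P}\{\er_{\mathbf y}(\hat h_n)>0\}\ \ge\ \mathbf{P}\{\hat h_n(x_{\mathbf y_{\le k_n}})\ne y_{k_n+1},\ \max_{i\le n}T_i<k_n\}\ =\ \tfrac12(1-2^{-k_n})^n\ \ge\ \tfrac14
\]
for every $n$. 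Since this bound holds in expectation over $\mathbf y$, Fatou's lemma (exactly as used in the proof of Theorem~\ref{thm:nonexp}) yields a realization of $\mathbf y$ for which $\mathbf{P}\{\er_{\mathbf y}(\hat h_n)>0\}\ge\tfrac14$ for infinitely many $n$; taking $\PXY=\PXY_{\mathbf y}$ shows $\hat h_n$ is not eventually correct. I expect this last step to be the main obstacle, but a mild one: it merely re-runs a computation already in hand while observing that the relevant mistake event by itself forces $\er_{\mathbf y}(\hat h_n)>0$; everything else reduces to citing Theorems~\ref{thm:exprate} and~\ref{thm:nonexp} and the proof of Corollary~\ref{cor:expr}.
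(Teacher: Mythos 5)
Your proposal is correct, but it closes the equivalence along a different cycle than the paper. The paper proves $2\Rightarrow 3\Rightarrow 4\Rightarrow 1\Rightarrow 2$, where the nontrivial link is $3\Rightarrow 4$: it observes that the amplification machinery of Lemma~\ref{lem:estt} and Corollary~\ref{cor:expr} boosts \emph{any} eventually correct algorithm (not just the online-derived one) to an eventually correct algorithm with exponential rate. You instead take $3\Rightarrow 4$ off the table and supply the new link $3\Rightarrow 2$ by contraposition, which requires strengthening the lower bound of Theorem~\ref{thm:nonexp}: the theorem as stated only gives $\E[\er(\hat h_n)]\ge \frac{1}{32n}$ infinitely often, which is entirely consistent with $\P\{\er(\hat h_n)>0\}\to 0$, so you correctly recognized that a separate argument is needed and re-ran the random-branch construction to get $\P\{\er_{\mathbf y}(\hat h_n)>0\}\ge \frac12(1-2^{-k_n})^n\ge\frac14$ on average over $\mathbf{y}$, using that a single misclassified atom of mass $2^{-k_n-1}$ already forces positive error. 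That argument is sound (one cosmetic point: reverse Fatou gives $\limsup_n \P\{\er_{\mathbf y}(\hat h_n)>0\mid\mathbf y\}\ge\frac14$ for some realization, hence the probability exceeds, say, $\frac18$ infinitely often rather than exactly $\frac14$; this changes nothing). The trade-off is that the paper's route yields the extra information that eventual correctness automatically self-improves to exponential rate, whereas your route yields a strengthened no-free-lunch statement (an infinite Littlestone tree precludes eventual correctness outright); either suffices for the proposition, and your remaining implications match the paper's citations of Theorems~\ref{thm:exprate} and~\ref{thm:nonexp}, Lemma~\ref{lem:pc}, and Corollary~\ref{cor:expr}.
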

\begin{proof}
The implication $2\Rightarrow 3$ is Lemma \ref{lem:pc}, while
$3\Rightarrow 4$ is proved in Lemma \ref{lem:estt} and Corollary 
\ref{cor:expr}. 
That $4\Rightarrow 1$ is trivial, and
$1\Rightarrow 2$ follows from Theorem \ref{thm:nonexp}.
\end{proof}

\section{Linear rates}
\label{sec:lin}

In section \ref{sec:exp} we characterized concept classes that have 
exponential learning rates. We also showed that a concept class that 
does not have exponential learning rate cannot be learned at a rate faster than linear. 
The aim of this section is to characterize concept 
classes that have linear learning rate. Moreover, we show that 
classes that do not have linear learning rate must have arbitrarily slow 
rates.
This completes our characterization of all possible learning rates.

To understand the basic idea behind the characterization of linear rates, 
it is instructive to revisit the idea that gave rise to exponential 
rates. First, we showed that it is possible to design an online learning 
algorithm that achieves perfect prediction after a finite number of 
rounds. 
While we do not have \emph{a priori} control of 
how fast this ``eventually correct'' algorithm attains perfect prediction, 
a modification of the adversarial strategy converges at an 
exponentially fast rate.

To attain a linear rate, we
once again design an online algorithm. 
However, rather than aim for perfect prediction, we now set the more 
modest goal of learning just to rule out some finite-length patterns
in the data.
{Specifically, we aim to identify a collection 
of \emph{forbidden classification patterns}, 
so that for some finite $k$, 
every $(x_1,\ldots,x_k) \in \X^k$ 
has some forbidden pattern in $\{0,1\}^k$; 
call this a \emph{VC pattern class}.
If we can identify such a collection of patterns 
with the property that we will 
almost surely never observe one of these 
forbidden patterns in the data sequence, 
then we can approach the learning problem 
in a manner analogous to learning with a 
VC class.  The situation is not quite this simple, since 
we do not actually have a family of classifiers;
fortunately, however, the classical 
\emph{one-inclusion graph prediction strategy} 
of \citet*{haussler:94} is able to operate 
purely on the basis of the finite patterns 
on the data, and hence can be applied to yield 
the claimed linear rate once the forbidden patterns have been identified.}
{In order to achieve an 
overall linear learning rate, 
it then remains to modify the ``eventually 
correct'' algorithm so it attains a 
VC pattern class at an exponentially fast rate when 
it is trained on random data, using analogous ideas to the the ones that 
were already used in section \ref{sec:exp}.}

{Throughout this section, we adopt the same setting and assumptions as in section \ref{sec:exp}.}

\subsection{The VCL game}
\label{sec:vcl}

{We begin presently by developing the online learning 
algorithm associated to linear rates.
The construction will be quite similar to the one in Section~\ref{sec:onlgs}.
However, in the present setting, 
the notion of a Littlestone tree is replaced
by Vapnik-Chervonenkis-Littlestone (VCL) tree,} {which was defined in Definition \ref{defn:vcl} (cf.\ Figure \ref{fig:vclintro}).} \an{In words,
a VCL tree is defined by the following properties.
Each vertex of depth $k$ is labelled by a sequence of $k+1$ variables
in $\X$. Its out degree is $2^{k+1}$, and each of these 
$2^{k+1}$ edges is uniquely labeled by an
element in $\{0,1\}^{k+1}$.
A class $\cH$ has an infinite VCL tree
if every finite root-to-vertex path
is realized by a function in $\cH$.
In particular, if $\cH$ has an infinite VCL tree
then it has an infinite Littlestone tree
(the other direction does not hold).}

\begin{rem}
Some features of Definition \ref{defn:vcl} are somewhat arbitrary, and
the reader should not read undue meaning into them. 
We will ultimately be interested in whether or not $\H$ has an infinite VCL tree. That the size of the sets $x_{\mathbf{u}}$ grows linearly with the depth of the tree is not important; it would suffice to assume that each $x_{\mathbf{u}}$ is a finite set, and that the sizes of these sets are unbounded along each infinite branch.\footnote{Given such a 
tree, we can always engineer a tree as in Definition \ref{defn:vcl} in two 
steps. First, by passing to a subtree, we can ensure that the 
cardinalities of $x_\mathbf{u}$ are strictly increasing along each branch. Second, we can throw away some points in each set $x_\mathbf{u}$ 
together with the corresponding subtrees to obtain a tree as in 
Definition \ref{defn:vcl}.}
Thus we have significant freedom in how to 
define \an{the term ``VCL tree''.} The present canonical choice was made for 
concreteness.
\end{rem}

Just as we have seen for Littlestone trees in Section~\ref{sec:onlgs},
a VCL tree is associated with the following game 
$\mathfrak{V}$. In each round $\tau$:
\begin{enumerate}[$\bullet$]
\item Player \PI chooses points 
$\xi_\tau = (\xi_\tau^0,\ldots,\xi_\tau^{\tau-1})\in \mathcal{X}^\tau$.
\item Player \PII chooses points 
$\eta_\tau = (\eta_\tau^0,\ldots,\eta_\tau^{\tau-1})\in\{0,1\}^\tau$.
\item Player \PII wins the game in round $\tau$ if
$\mathcal{H}_{\xi_1,\eta_1,\ldots,\xi_\tau,\eta_\tau}=\varnothing$.
\end{enumerate}
Here we have naturally extended to the present setting the notation
$$
	\mathcal{H}_{\xi_1,\eta_1,\ldots,\xi_\tau,\eta_\tau} :=
	\{h\in\mathcal{H}:
	h(\xi_s^i)=\eta_s^i\mbox{ for }0\le i<s,~1\le s\le\tau\}
$$
that we used previously in Section \ref{sec:onlgs}. 
The game $\mathfrak{V}$ is a Gale-Stewart game, because the 
winning condition for \PII is finitely decidable.

\begin{lem}
\label{lem:vgamest}
If $\mathcal{H}$ has no infinite VCL tree, then there is a 
universally measurable winning strategy for \PII in the game 
$\mathfrak{V}$.
\end{lem}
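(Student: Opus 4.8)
The plan is to transcribe, step for step, the analysis of the game $\mathfrak{G}$ in Section~\ref{sec:onlgs} together with the measurability argument of Corollary~\ref{cor:measadv}. First I would record the structural facts about $\mathfrak{V}$: it is a Gale-Stewart game in which Player \PI moves in the spaces $\mathcal{X}^\tau$ (finite products of a Polish space, hence themselves Polish), Player \PII moves in the finite sets $\{0,1\}^\tau$, and the set of winning sequences for \PII,
\[
\mathsf{W} = \bigl\{(\boldsymbol{\xi},\boldsymbol{\eta}) : \mathcal{H}_{\xi_1,\eta_1,\ldots,\xi_\tau,\eta_\tau}=\varnothing\ \text{for some}\ 0\le\tau<\infty\bigr\},
\]
is finitely decidable. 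By the Gale-Stewart theorem (Theorem~\ref{thm:gs}), exactly one of the two players has a winning strategy.

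Second, I would prove the analogue of Lemma~\ref{lem:treewin}: Player \PI has a winning strategy in $\mathfrak{V}$ if and only if $\mathcal{H}$ has an infinite VCL tree. Given an infinite VCL tree $\{x_{\mathbf{u}}\}$ in the notation of Definition~\ref{defn:vcl}, the rule $\xi_\tau(\eta_1,\ldots,\eta_{\tau-1}) := x_{(\eta_1,\ldots,\eta_{\tau-1})}$ defines a strategy for \PI (which may be taken to depend only on the moves of \PII, cf.\ Remark~\ref{rem:yonly}), and the defining property of a VCL tree says exactly that $\mathcal{H}_{\xi_1,\eta_1,\ldots,\xi_\tau,\eta_\tau}\ne\varnothing$ for every play and every $\tau$, i.e.\ that this strategy is winning. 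Conversely, a winning strategy for \PI unfolds into a collection $x_{(\eta_1,\ldots,\eta_k)} := \xi_{k+1}(\eta_1,\ldots,\eta_k)$ indexed by $\mathbf{u}=(\eta_1,\ldots,\eta_k)\in\{0,1\}^1\times\cdots\times\{0,1\}^k$, and the fact that \PI never loses is precisely the statement that this collection is an infinite VCL tree. Since we assume $\mathcal{H}$ has no infinite VCL tree, \PI has no winning strategy, and hence Player \PII has one by the first step.

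Third, I would upgrade this existence statement to a \emph{universally measurable} winning strategy for \PII by invoking Theorem~\ref{thm:meas}; for this it suffices to verify that $\mathsf{W}$ is coanalytic, which is verbatim the computation in the proof of Corollary~\ref{cor:measadv}. Writing $\mathcal{H}=\{\mathsf{h}(\theta,\cdot):\theta\in\Theta\}$ with $\Theta$ Polish and $\mathsf{h}$ Borel (Definition~\ref{defn:suslin}), one has
\[
\mathsf{W}^c = \bigcap_{1\le\tau<\infty}\ \bigcup_{\theta\in\Theta}\ \bigcap_{1\le s\le\tau}\ \bigcap_{0\le i<s}\bigl\{(\boldsymbol{\xi},\boldsymbol{\eta}) : \mathsf{h}(\theta,\xi_s^i)=\eta_s^i\bigr\}.
\]
The innermost sets are Borel by the measurability assumption, the displayed intersections are countable and therefore preserve analyticity, and the union over $\theta$ is the projection along the Polish coordinate $\Theta$ of a Borel set, hence analytic. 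Thus $\mathsf{W}^c$ is analytic, $\mathsf{W}$ is coanalytic, and Theorem~\ref{thm:meas} furnishes a universally measurable winning strategy for \PII.

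I do not expect a genuine obstacle here: the lemma is a direct transcription of the statements proved for $\mathfrak{G}$ in Section~\ref{sec:onlgs}. The only points needing a little care are the bookkeeping of the VCL indexing $\mathbf{u}\in\{0,1\}^1\times\cdots\times\{0,1\}^k$ in the tree/strategy correspondence, and the observation that Theorem~\ref{thm:meas} still applies even though the move space $\mathcal{X}^\tau$ of \PI grows with $\tau$ --- but finite products of a Polish space are again Polish and the winning set is finitely decidable, so the hypotheses of Theorem~\ref{thm:meas} are met verbatim.
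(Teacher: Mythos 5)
Your proposal is correct and follows essentially the same route as the paper: the paper's proof likewise reduces to the Gale--Stewart dichotomy via the VCL analogue of Lemma~\ref{lem:treewin} and then invokes Theorem~\ref{thm:meas} after noting that the coanalyticity of $\mathsf{W}$ is verified exactly as in Corollary~\ref{cor:measadv}. The only difference is that you spell out the details (the tree/strategy correspondence with the VCL indexing and the extra intersection over $i$ in the coanalyticity computation) that the paper leaves implicit.
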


\begin{proof}
By the same reasoning as in Lemma \ref{lem:treewin}, 
the class $\mathcal{H}$ 
has an infinite VCL tree if and only if \PI has a winning strategy in
$\mathfrak{V}$. Thus if $\mathcal{H}$ has no infinite VCL tree, then \PII 
has a winning strategy by Theorem \ref{thm:gs}. To obtain a universally 
measurable strategy, it suffices by Theorem \ref{thm:meas} to show that 
the set of winning sequences for \PII is coanalytic. The proof of this fact 
is identical to that of Corollary~\ref{cor:measadv}.
\end{proof}

When $\mathcal{H}$ has no infinite VCL tree, 
\an{we can use the winning strategy for \PII}
to design an algorithm that learns to rule out some patterns in the data. 
We say that a sequence 
$(x_1,y_1,x_2,y_2,\ldots)\in(\mathcal{X}\times\{0,1\})^\infty$ 
is \bemph{consistent with $\mathcal{H}$} if
for every $t<\infty$, there exists $h\in\mathcal{H}$ such
that $h(x_s)=y_s$ for $s\le t$.
Assuming $\mathcal{H}$ has no infinite VCL tree, we now use the game 
$\mathfrak{V}$ to design an algorithm that learns to rule out some pattern 
of labels in such a sequence. To this end, denote by 
$\eta_\tau:\prod_{\sigma=1}^\tau 
\mathcal{X}^\sigma\to\{0,1\}^\tau$ the universally measurable winning 
strategy for \PII provided by Lemma \ref{lem:vgamest} (cf.\ Remark 
\ref{rem:yonly}). 
\begin{enumerate}[$\bullet$]
\item Initialize $\tau_0\leftarrow 1$.
\item At every time step $t\ge 1$:
\begin{enumerate}[-]
\item If $\eta_{\tau_{t-1}}(\xi_1,\ldots,\xi_{\tau_{t-1}-1},
x_{t-\tau_{t-1}+1},\ldots,x_t) = (y_{t-\tau_{t-1}+1},\ldots,y_t)$:
\begin{enumerate}[$\triangleright$]
\item
Let $\xi_{\tau_{t-1}}\leftarrow (x_{t-\tau_{t-1}+1},\ldots,x_t)$ and
$\tau_t\leftarrow \tau_{t-1}+1$.
\end{enumerate}
\item Otherwise, let $\tau_t\leftarrow\tau_{t-1}$.
\end{enumerate}
\end{enumerate}
{In words, the algorithm traverses the input sequence $(x_1,y_1,x_2,y_2,\ldots)$ 
while using the assumed winning strategy $\eta_\tau$ to learn a set of ``forbidden patterns'' of length $\tau_t$; 
that is, an assignment which maps every tuple $x' \in \mathcal{X}^{\tau_t}$ to a pattern
$y'(x')\in \{0,1\}^{\tau_t}$ such that after some finite number of steps, the algorithm never
encounters the pattern indicated by $y'(x')$ when reading the next $\tau_t$ examples~$x'$.
in the input sequence.}
Let us denote by 
$$\mathbf{\hat y}_{t-1}(z_1,\ldots,z_{\tau_{t-1}}) :=
\eta_{\tau_{t-1}}(\xi_1,\ldots,\xi_{\tau_{t-1}-1},
z_1,\ldots,z_{\tau_{t-1}})$$ the ``pattern avoidance function'' defined
by this algorithm.

\begin{lem}
\label{lem:vclalg}
For any sequence 
$x_1,y_1,x_2,y_2,\ldots$ that is 
consistent with $\mathcal{H}$, 
\an{the algorithm learns, in a finite 
number of steps, to successfully rule out patterns in the data.}
That is, 
$$
	\mathbf{\hat y}_{t-1}(x_{t-\tau_{t-1}+1},\ldots,x_t) \ne
	(y_{t-\tau_{t-1}+1},\ldots,y_t),\quad \tau_t=\tau_{t-1}<\infty,
	\quad\mathbf{\hat y}_t=\mathbf{\hat y}_{t-1}
$$
for all sufficiently large $t$.
\end{lem}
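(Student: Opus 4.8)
The plan is to reduce everything to the single claim that the counter $\tau_t$ is eventually constant. Let $\xi_1,\xi_2,\ldots$ denote the blocks recorded by the algorithm, indexed so that $\xi_j$ is the block recorded at the $j$-th time the counter is incremented (at that moment the counter moves from $j$ to $j+1$, so in that step $\tau_{t-1}=j$). If this increment occurs at time step $s$, then $\xi_j=(x_{s-j+1},\ldots,x_s)$ and, by the rule triggering the increment, $\eta_j(\xi_1,\ldots,\xi_{j-1},\xi_j)=(y_{s-j+1},\ldots,y_s)$; writing $\eta_j:=\eta_j(\xi_1,\ldots,\xi_j)$, this says precisely that the $i$-th coordinate $\eta_j^i$ is the label observed at the point $\xi_j^i$, for $0\le i<j$. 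Consequently the sequence $(\xi_1,\eta_1,\xi_2,\eta_2,\ldots)$ listing the successive increments is a play of the game $\mathfrak{V}$ in which Player \PII follows her winning strategy from Lemma~\ref{lem:vgamest}.

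The heart of the argument is that only finitely many increments can occur. Suppose not. Then $(\xi_j,\eta_j)_{j\ge 1}$ is an infinite play of $\mathfrak{V}$ with Player \PII following her winning strategy, so by definition of a winning strategy there is a finite $m$ with $\mathcal{H}_{\xi_1,\eta_1,\ldots,\xi_m,\eta_m}=\varnothing$. But every input position appearing in $\xi_1,\ldots,\xi_m$ lies in some finite prefix $(x_1,y_1,\ldots,x_N,y_N)$, and since the input sequence is consistent with $\mathcal{H}$ there is $h\in\mathcal{H}$ with $h(x_s)=y_s$ for all $s\le N$. Because each $\eta_j^i$ equals the observed label at the point $\xi_j^i$ (and positions shared by several blocks receive the same label in each occurrence, so there is no inconsistency), this $h$ satisfies $h(\xi_j^i)=\eta_j^i$ for all $j\le m$ and $0\le i<j$, i.e.\ $h\in\mathcal{H}_{\xi_1,\eta_1,\ldots,\xi_m,\eta_m}$ --- a contradiction. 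Hence the counter stabilizes at some finite value $\tau^\star$, reached after finitely many increments, at which point the blocks $\xi_1,\ldots,\xi_{\tau^\star-1}$ are frozen for good.

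It remains to read off the three conclusions. Take $t$ large enough that the counter has already reached its final value and will never increment again, so $\tau_{t-1}=\tau_t=\tau^\star<\infty$, which is the second claim. At such a step the algorithm executes its ``otherwise'' branch, and the condition governing that branch is exactly $\mathbf{\hat y}_{t-1}(x_{t-\tau_{t-1}+1},\ldots,x_t)\ne(y_{t-\tau_{t-1}+1},\ldots,y_t)$, the first claim. Finally $\mathbf{\hat y}_t=\mathbf{\hat y}_{t-1}$ because, by its definition, $\mathbf{\hat y}_{t-1}$ is the function $\eta_{\tau_{t-1}}(\xi_1,\ldots,\xi_{\tau_{t-1}-1},\,\cdot\,)$, which depends only on $\tau_{t-1}=\tau^\star$ and on the now-frozen blocks $\xi_1,\ldots,\xi_{\tau^\star-1}$. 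The main obstacle is the stabilization step: one must verify that the run of the algorithm genuinely induces a play of $\mathfrak{V}$ consistent with the winning strategy, and that the associated consistency class $\mathcal{H}_{\xi_1,\eta_1,\ldots}$ never becomes empty along this play --- both of which hinge on the bookkeeping that identifies the coordinates of each $\eta_j$ with observed labels.
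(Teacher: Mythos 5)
Your proof is correct and follows essentially the same route as the paper's: you observe that the increments of $\tau_t$ trace out a play of $\mathfrak{V}$ in which \PII follows her winning strategy, so infinitely many increments would force $\mathcal{H}_{\xi_1,\eta_1,\ldots,\xi_m,\eta_m}=\varnothing$ for some finite $m$, contradicting consistency of the input sequence. Your write-up merely makes explicit two points the paper leaves implicit (the identification of the coordinates of $\eta_j$ with observed labels, and the reading-off of the three stated conclusions once $\tau_t$ stabilizes), which is fine.
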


\begin{proof}
Suppose
$\mathbf{\hat 
y}_{t-1}(x_{t-\tau_{t-1}+1},\ldots,x_t)=(y_{t-\tau_{t-1}+1},\ldots,y_t)$ 
occurs at the infinite sequence of times $t=t_1,t_2,\ldots$ 
Because $\eta_\tau$ is 
a winning strategy for \PII in the game $\mathfrak{V}$, we have
$\mathcal{H}_{\xi_1,\eta_1,\ldots,\xi_k,\eta_k}=\varnothing$ for some 
$k<\infty$, where $\xi_i = (x_{t_i-\tau_{t_i-1}+1},\ldots,x_{t_i})$ and
$\eta_i = (y_{t_i-\tau_{t_i-1}+1},\ldots,y_{t_i})$. But this 
contradicts the assumption that the input sequence is 
consistent with $\mathcal{H}$. 
\end{proof}

\begin{rem}
\label{rem:measlin}
The strategy $\tau_t$ depends in a  
universally measurable way on \an{$x_{\leq t},y_{\leq t}$.}
The map
$\mathbf{\hat y}_t(\cdot)$ is 
universally measurable jointly as a function of 
\an{$x_{\leq t},y_{\leq t}$.}
and of its input. More precisely, for each $t\ge 0$,
there exist universally measurable functions
\an{$$
	T_t:(\mathcal{X}\times\{0,1\})^t\to\{1,\ldots,t+1\},
	\qquad
	\mathbf{\hat Y}_t:(\mathcal{X}\times\{0,1\})^t
	\times \Big( \bigcup_{s \leq t} \mathcal{X}^s\Big) \to\{0,1\}^t
$$
such} that
$$\tau_t=T_t(x_1,y_1,\ldots,x_t,y_t), \qquad
\mathbf{\hat y}_t(z_1,\ldots,z_{\tau_t}) =
\mathbf{\hat Y}_t(x_1,y_1,\ldots,x_t,y_t,z_1,\ldots,z_{\tau_t}).$$
\end{rem}

\begin{rem}
The
above learning algorithm uses
the winning strategy for \PII in the game $\mathfrak{V}$. In direct analogy to
Section~\ref{sec:old}, one can construct an explicit winning strategy in 
terms of a notion of ``ordinal VCL dimension'' whose 
definition can be read off from the proof of Theorem \ref{thm:meas}.
Because the details will not be needed
for our purposes here, 
we omit further discussion.
\end{rem}

\subsection{Linear learning rate}

\an{In this section we design a learning algorithm with
linear learning rate for classes with no infinite VCL trees.}

\begin{thm}
\label{thm:linrate}
If $\mathcal{H}$ does not have an infinite VCL tree, then
$\mathcal{H}$ is learnable at rate $\frac{1}{n}$.
\end{thm}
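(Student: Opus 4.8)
The plan is to assemble three ingredients: the online ``pattern-avoidance'' algorithm of Section~\ref{sec:vcl}, the one-inclusion graph predictor of \citet*{haussler:94}, and an amplification/aggregation scheme in the spirit of the one used in Section~\ref{sec:exp} for exponential rates. Throughout, assume $\H$ has no infinite VCL tree. \emph{Step 1 (online $\to$ pattern avoidance).} Run the online algorithm of Section~\ref{sec:vcl} on the i.i.d.\ data $(X_1,Y_1),(X_2,Y_2),\dots$. By realizability and Borel--Cantelli (exactly as in the proof of Lemma~\ref{lem:pc}), the data sequence is almost surely consistent with $\H$, so Lemma~\ref{lem:vclalg} applies: after finitely many steps the algorithm stabilizes to a pattern-avoidance function $\mathbf{\hat y}$ of some finite length $k=\tau$ such that $\mathbf{\hat y}(x_{t+1},\dots,x_{t+k})$ never again coincides with $(y_{t+1},\dots,y_{t+k})$ along the sequence. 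A second application of Borel--Cantelli shows the limiting $\mathbf{\hat y}$ is moreover \emph{$P$-good}: a fresh block of $k$ i.i.d.\ examples realizes the pattern $\mathbf{\hat y}$ with probability zero. Crucially, the stabilization time and the length $k$ are a.s.\ finite but carry no a priori tail bound.

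\emph{Step 2 (pattern avoidance $\to$ linear rate via one-inclusion).} Given any length-$k$ pattern-avoidance function $\mathbf{\hat y}$ and points $z_1,\dots,z_{m+1}$, let $B\subseteq\{0,1\}^{m+1}$ be the set of labellings that avoid every forbidden length-$k$ pattern of $\mathbf{\hat y}$; this set is computable from $z_1,\dots,z_{m+1}$ and $\mathbf{\hat y}$ alone, with no reference to $\H$. No $k$-subset of coordinates is shattered by $B$, so $B$ has VC dimension $\le k-1$ and Sauer--Shelah bounds $|B|$. One may therefore run the one-inclusion graph predictor of \citet*{haussler:94} with $B$ in place of the restriction of a concept class: orienting the one-inclusion graph in the test coordinate with out-degree $\le k-1$ yields a predictor whose expected error on $m$ i.i.d.\ samples plus a fresh test point is at most $\frac{k-1}{m+1}$, \emph{provided} the true labelling of the $m+1$ points lies in $B$. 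When $\mathbf{\hat y}$ is $P$-good this holds a.s.\ by a union bound over the $\binom{m+1}{k}$ length-$k$ sub-blocks (each a null event). Since $k$ is a $P$-dependent constant, a learner handed a $P$-good $\mathbf{\hat y}$ and $\Theta(n)$ fresh samples already attains error $O(k/n)$.

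\emph{Step 3 (from online to statistical).} It remains to build $\hat h_n$ without knowing a $P$-good $\mathbf{\hat y}$ in advance, in analogy with Section~\ref{sec:exp}. Split the sample. On the first part, run the online algorithm on many independent batches to obtain candidate pattern-avoidance functions; using a procedure analogous to Lemma~\ref{lem:estt} — estimate, on held-out data, a batch length past which the online algorithm has stabilized with probability bounded away from $\tfrac12$ — one arranges that, except on a controlled failure event, a majority of the candidates are $P$-good. On the second part, feed each candidate to the one-inclusion predictor of Step 2, and aggregate the resulting classifiers by a robust selection/validation step (estimating their errors, or their pairwise disagreements, on further held-out data and keeping a predictor close to many others), yielding $\E[\er(\hat h_n)]\le C/n$ with $C$ depending on $P$. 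Measurability causes no trouble: the online strategy is universally measurable by Lemma~\ref{lem:vgamest} and Remark~\ref{rem:measlin}, and the one-inclusion predictor and the aggregation are explicit finite constructions.

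The main obstacle is precisely Step 3. Unlike in Section~\ref{sec:exp}, where the batch classifiers had \emph{zero} error on the good event so a plain majority vote was a.s.\ correct, here the per-batch predictors have nonzero (merely linear) error, a constant fraction of the candidate pattern-avoidance functions may be ``bad'' with no a priori bound on how bad, and the pattern length $k$ is itself uncontrolled. One must therefore amplify reliability enough that the failure event contributes only $O(1/n)$ to the expected error, while simultaneously keeping the error on the good event linear — a delicate balancing of the number of batches, the batch length, the data budgets feeding the one-inclusion predictors and the validation step, and the accuracy of that validation. This bookkeeping is what makes the linear case substantially more involved than the exponential one.
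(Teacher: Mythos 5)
Your Steps 1 and 2 match the paper's proof: the online VCL algorithm stabilizes a.s.\ to a pattern-avoidance function (Lemma~\ref{lem:vcpc}), and the one-inclusion predictor of \citet*{haussler:94} applied to the induced ``VC pattern class'' gives expected error at most $t/n$ whenever the data avoid all forbidden patterns (Lemma~\ref{lem:hlw}); your observation that the predictor only needs the projection $B$ of realizable patterns on the data, not a concept class, is exactly the point made in the paper. The estimation of a good batch length $\hat t_n$ in Step 3 also parallels Lemma~\ref{lem:linestt}.

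The gap is the aggregation in Step 3, which you explicitly leave open and, moreover, misdiagnose. You assert that a plain majority vote cannot work because the per-batch predictors have nonzero error, and you instead gesture at a ``robust selection/validation'' scheme (error estimation or pairwise disagreements on held-out data) without any analysis. In fact the paper \emph{does} use a plain majority vote, and the resolution is a short counting-plus-Markov argument rather than delicate bookkeeping: except on an exponentially small event, at least a $\frac{9}{16}$ fraction of the $N=\lfloor n/4\hat t_n\rfloor$ pattern-avoidance functions satisfy $\mathrm{per}(\mathbf{\hat y}^i_{\hat t_n})=0$; if the majority vote errs on a test point then at least a $\frac12$ fraction of the individual predictors err; hence at least a $\frac{1}{16}$ fraction both err \emph{and} come from a zero-error pattern function. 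Markov's inequality then bounds the probability of this event by
\[
16\,\mathbf{E}\bigg[\frac{1}{N}\sum_{i=1}^{N}\mathbf{1}_{\hat y^i(X)\ne Y}\,\mathbf{1}_{\mathrm{per}(\mathbf{\hat y}^i_{\hat t_n})=0}\bigg],
\]
and each term with $\mathrm{per}=0$ is at most $\frac{\tau^i}{\lfloor n/2\rfloor+2}\le\frac{t^*+1}{\lfloor n/2\rfloor+2}$ by the exchangeability bound of Lemma~\ref{lem:hlw} applied conditionally on the first half of the data. This yields $\E[\er(\hat h_n)]\le C/n$ directly. Your alternative validation-based aggregation might be made to work, but as written it is not an argument, and the difficulty you identify (balancing batch counts, lengths, and validation accuracy) largely evaporates once one notices that the expected-error form of the one-inclusion bound combines with the majority vote through Markov's inequality. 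Until that step is supplied, the proof is incomplete.
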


The proof of this theorem is similar in spirit to that of Theorem 
\ref{thm:exprate}, but requires some additional ingredients.
Let us fix a realizable distribution $\PXY$ and let
$(X_1,Y_1),(X_2,Y_2),\ldots$ be i.i.d.~samples from $\PXY$.
We assume in the remainder of this section that
$\mathcal{H}$ has no infinite VCL tree, so that we can run the algorithm 
of the previous section on the random data. 
We set
$$\tau_t:=T_t(X_1,Y_1,\ldots,X_t,Y_t), \qquad
\mathbf{\hat y}_t(z_1,\ldots,z_{\tau_t}) :=
\mathbf{\hat Y}_t(X_1,Y_1,\ldots,X_t,Y_t,z_1,\ldots,z_{\tau_t}),$$
where the universally measurable functions $T_t,\mathbf{\hat Y}_t$ are 
the ones defined in Remark~\ref{rem:measlin}.

For any integer $k\ge 1$ and any universally measurable pattern 
avoidance function $g:\mathcal{X}^k\to\{0,1\}^k$, 
define the error \an{
$$
	\mathrm{per}(g) = \mathrm{per}^{k}(g) =
	{\PXY}^{\otimes k}\{(x_1,y_1,\ldots,x_k,y_k):
	g(x_1,\ldots,x_k)=(y_1,\ldots,y_k)\}
$$
to} be the probability that $g$ fails to avoid the pattern of labels 
realized by the data.
\an{(The index $k$ can be understood from the domain of $g$.)} 

\begin{lem}
\label{lem:vcpc}
$\mathbf{P}\{\mathrm{per}(\mathbf{\hat y}_t)>0\}\to 0$
as $t\to\infty$.
\end{lem}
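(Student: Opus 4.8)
The plan is to follow the template of Lemma~\ref{lem:pc}, replacing the single sample points $X_s$ there by blocks of $\tau$ consecutive samples and the error $\er$ by the pattern error $\mathrm{per}$. First I would reuse verbatim the Borel--Cantelli step from the proof of Lemma~\ref{lem:pc}: realizability lets us pick $h_k\in\mathcal{H}$ with $\er(h_k)\le 2^{-k}$, and a union bound over $k$ together with Borel--Cantelli shows that with probability one, for every $t$ there is some $h\in\mathcal{H}$ with $h(X_s)=Y_s$ for all $s\le t$; that is, the random sequence $(X_1,Y_1),(X_2,Y_2),\ldots$ is a.s.\ consistent with $\mathcal{H}$ in the sense of Section~\ref{sec:vcl}. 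On this event Lemma~\ref{lem:vclalg} applies, so for all sufficiently large $t$ we have $\tau_t=\tau_{t-1}$, $\mathbf{\hat y}_t=\mathbf{\hat y}_{t-1}$, and $\mathbf{\hat y}_{t-1}(x_{t-\tau_{t-1}+1},\ldots,x_t)\ne(y_{t-\tau_{t-1}+1},\ldots,y_t)$. Let $T$ be the least index such that those three properties hold for all $t\ge T$ (with $T=\infty$ otherwise); then $T<\infty$ a.s., and on the event $A_m:=\{T\le m\}$ one has $\tau_t=\tau_m=:k$ and $\mathbf{\hat y}_t=\mathbf{\hat y}_m=:g$ for all $t\ge m$. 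The events $A_m$ increase to an event of full probability, so $\mathbf{P}(A_m)\to 1$.

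The heart of the argument is to show $\mathbf{P}(A_m\cap\{\mathrm{per}(\mathbf{\hat y}_m)>0\})=0$ using fresh i.i.d.\ blocks carved from the future data. For $j\ge1$ let $W_j:=(X_{m+(j-1)k+1},\ldots,X_{m+jk})$ with associated labels $(Y_{m+(j-1)k+1},\ldots,Y_{m+jk})$; these are exactly the windows that the algorithm inspects at the times $t=m+jk\ge m+1$, so on $A_m$ we have $g(W_j)\ne(Y_{m+(j-1)k+1},\ldots,Y_{m+jk})$ for every $j\ge1$. Now condition on $\mathcal{F}_m:=\sigma((X_i,Y_i):i\le m)$: both $g=\mathbf{\hat y}_m$ and $k=\tau_m$ are $\mathcal{F}_m$-measurable (by Remark~\ref{rem:measlin}), while $(X_i,Y_i)_{i>m}$ are i.i.d.\ $\PXY$ and independent of $\mathcal{F}_m$, so conditionally on $\mathcal{F}_m$ the labelled blocks $W_1,W_2,\ldots$ are i.i.d.\ with law ${\PXY}^{\otimes k}$. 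The strong law of large numbers, applied conditionally on $\mathcal{F}_m$, then gives $\frac1J\sum_{j=1}^J\mathbf{1}\{g(W_j)=\text{labels of }W_j\}\to\mathrm{per}(g)$ a.s. On the ($\mathcal{F}_m$-measurable) event $\{\mathrm{per}(\mathbf{\hat y}_m)>0\}$ this limit is positive, so almost surely some block $W_j$ does realize the forbidden pattern of $g$; hence the event $A_m\cap\{\mathrm{per}(\mathbf{\hat y}_m)>0\}$, on which no $W_j$ realizes it, has probability zero.

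Finally, for $t>m$, on $A_m$ we have $\mathbf{\hat y}_t=\mathbf{\hat y}_m$ and therefore $\mathrm{per}(\mathbf{\hat y}_t)=\mathrm{per}(\mathbf{\hat y}_m)$, so $\{\mathrm{per}(\mathbf{\hat y}_t)>0\}\cap A_m=\{\mathrm{per}(\mathbf{\hat y}_m)>0\}\cap A_m$ has probability zero; consequently $\mathbf{P}\{\mathrm{per}(\mathbf{\hat y}_t)>0\}\le 1-\mathbf{P}(A_m)$ for every $t>m$, and letting $m\to\infty$ completes the proof. The one genuinely delicate point I anticipate is the bookkeeping with the random block length $k=\tau_m$: one must choose the fresh i.i.d.\ blocks so that they align with the windows the algorithm actually examines, and freeze $g$ and $k$ by conditioning on $\mathcal{F}_m$ before invoking the law of large numbers. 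The measurability needed for this is exactly what Remark~\ref{rem:measlin} supplies; the rest is a direct transcription of Lemma~\ref{lem:pc}.
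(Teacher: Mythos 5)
Your proposal is correct and follows essentially the same route as the paper: Borel--Cantelli for a.s.\ consistency of the data sequence, Lemma~\ref{lem:vclalg} to get a finite a.s.\ stabilization time $T$, and then a law of large numbers on the post-$T$ data to conclude that $\mathrm{per}(\mathbf{\hat y}_t)=0$ on $\{T\le t\}$ up to a null set. The only (cosmetic) difference is that you apply the ordinary SLLN to disjoint blocks of length $\tau_m$ conditionally on $\mathcal{F}_m$, whereas the paper sums over overlapping sliding windows and invokes the law of large numbers for $m$-dependent sequences — whose proof in the paper's footnote is in any case a reduction to disjoint blocks.
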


\begin{proof}
We
showed in the proof of Lemma \ref{lem:pc} that the random data
sequence $X_1,Y_1,X_2,Y_2,\ldots$ is a.s.\ consistent with $\mathcal{H}$.
Thus Lemma \ref{lem:vclalg} implies that 
$$
	T = \sup\{s\ge 1:\mathbf{\hat y}_{s-1}(X_{s-\tau_{s-1}+1},\ldots,
	X_s)=(Y_{s-\tau_{s-1}+1},\ldots,Y_s)\}
$$
is finite a.s., and that $\mathbf{\hat y}_s=\mathbf{\hat y}_t$ and
$\tau_s=\tau_t$ for all $s\ge t\ge T$. 
\an{By the law of large numbers for $m$-dependent sequences,}\footnote{
If $Z_1,Z_2,\ldots$ is an i.i.d.\ sequence of random variables, then we have 
$\lim_{n\to\infty}\frac{1}{n}\sum_{i=1}^n f(Z_{i+1},\ldots,Z_{i+m}) =
\frac{1}{m}\sum_{i=1}^m
\lim_{n\to\infty}\frac{m}{n}
\sum_{j=0}^{\lfloor n/m\rfloor}
 f(Z_{mj+1+i},\ldots,Z_{(m(j+1)+i})
+o(1)
=\mathbf{E}[f(Z_1,\ldots,Z_m)]$ by the law of large numbers.}
\begin{align*}
	\mathbf{P}\{\mathrm{per}^{\tau_t}(\mathbf{\hat y}_t)=0\} &=
	\mathbf{P}\bigg\{
	\lim_{S\to\infty}\frac{1}{S}\sum_{s=t+1}^{t+S}
	\mathbf{1}_{\mathbf{\hat y}_t(X_s,\ldots,X_{s+\tau_t-1})=
	(Y_s,\ldots,Y_{s+\tau_t-1})}=0
	\bigg\} \\
	&\ge
	\mathbf{P}\bigg\{
	\lim_{S\to\infty}\frac{1}{S}\sum_{s=t+1}^{t+S}
	\mathbf{1}_{\mathbf{\hat y}_t(X_s,\ldots,X_{s+\tau_t-1})=
	(Y_s,\ldots,Y_{s+\tau_t-1})}=0,~T\le t
	\bigg\} \\ &= \mathbf{P}\{T\le t\} .
\end{align*}
As $T$ is finite 
with probability one, it follows that
$\mathbf{P}\{\mathrm{per}^{\tau_t}(\mathbf{\hat y}_t)>0\}\le
\mathbf{P}\{T>t\}\to 0$ as $t\to\infty$.
\end{proof}

Lemma \ref{lem:vcpc} ensures that we can learn to rule out  
patterns in the data. 
{Once we have ruled out patterns in the data,
we can learn using the resulting ``VC pattern class''
using (in a somewhat non-standard manner) the 
one-inclusion graph prediction algorithm of 
\citet*{haussler:94}.  That algorithm was 
originally designed for learning with VC classes 
of classifiers, but fortunately its operations only 
rely on the projection of the class to the set of 
finite realizable patterns \emph{on the data}, and 
therefore its behavior and analysis are equally 
well-defined and valid when we have only a 
\emph{VC pattern class}, rather than a VC class 
of functions.}

\begin{lem}
\label{lem:hlw}
Let $g:\mathcal{X}^t\to\{0,1\}^t$ be a universally measurable function for 
some $t\ge 1$.
For every $n \geq 1$,
there is a universally measurable function
$$
	\hat Y_n^g:(\mathcal{X}\times\{0,1\})^{n-1}\times\mathcal{X}
	\to\{0,1\}
$$
such that, for every $(x_1,y_1,\ldots,x_n,y_n)\in
(\mathcal{X}\times\{0,1\})^n$ that satisfies
$g(x_{i_1},\ldots,x_{i_t})\ne (y_{i_1},\ldots,y_{i_t})$ for all
\an{pairwise distinct}
$1\le i_1, \ldots, i_t\le n$, we have
$$
	\frac{1}{n!}\sum_{\sigma\in\mathrm{Sym}(n)}
	\mathbf{1}_{ \hat Y_n^g(x_{\sigma(1)},y_{\sigma(1)},\ldots,
	x_{\sigma(n-1)},y_{\sigma(n-1)},x_{\sigma(n)}) \ne
	y_{\sigma(n)} }
	<\frac{t}{n},
$$
where $\mathrm{Sym}(n)$ denotes the symmetric group (of permutations of $[n]$).
\end{lem}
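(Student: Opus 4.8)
The plan is to recognize the statement as an instance of the one-inclusion graph prediction strategy of \citet*{haussler:94}, applied not to a class of functions but to the ``VC pattern class'' of labelings that $g$ permits. From the $n$ points $x_1,\dots,x_n$ alone (all of which the predictor sees), define the set of \emph{allowed labelings}
\[
\mathcal{L}=\mathcal{L}(x_1,\dots,x_n):=\bigl\{b\in\{0,1\}^n:\ g(x_{i_1},\dots,x_{i_t})\ne(b_{i_1},\dots,b_{i_t})\text{ for all pairwise distinct }i_1,\dots,i_t\in[n]\bigr\}.
\]
The hypothesis of the lemma says precisely that the true labeling $v:=(y_1,\dots,y_n)$ belongs to $\mathcal{L}$, so in the case of interest $\mathcal{L}\ne\varnothing$. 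The key combinatorial point is that, viewed as a set system on the index set $[n]$, $\mathcal{L}$ has VC dimension at most $t-1$: no $t$-element subset $S=\{i_1<\dots<i_t\}$ can be shattered, since a labeling $b\in\mathcal{L}$ realizing the pattern $(b_{i_1},\dots,b_{i_t})=g(x_{i_1},\dots,x_{i_t})$ on $S$ is forbidden by the definition of $\mathcal{L}$ (the claim is trivial when $n<t$). This is the only place where the structure of $g$ is used, and the argument runs over indices in $[n]$, never over the points themselves, so repetitions among the $x_i$ are irrelevant.

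Next I would invoke the one-inclusion graph machinery. Let $G$ be the one-inclusion graph with vertex set $\mathcal{L}$ and an edge between any two labelings differing in exactly one coordinate. Since $\mathrm{vc}(\mathcal{L})\le t-1$, the result of \citet*{haussler:94} provides an orientation of $G$ in which every vertex has out-degree at most $t-1$; fix such an orientation by some deterministic rule. The predictor $\hat Y_n^g$ is the associated one-inclusion rule: on input $(x_1,y_1,\dots,x_{n-1},y_{n-1},x_n)$ it reconstructs $\mathcal{L}$ from the $n$ points, lists the (at most two) completions of $(y_1,\dots,y_{n-1})$ that lie in $\mathcal{L}$, and --- if there are two of them, which necessarily form an edge $e$ of $G$ --- outputs the value at the last coordinate of the \emph{head} of $e$ under the chosen orientation; if there is a unique such completion it outputs that completion's last coordinate, and on all other (degenerate) inputs it outputs $0$. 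I would also arrange that $\hat Y_n^g$ is invariant under permutations of its first $n-1$ labeled arguments, which is harmless (for instance by first sorting those arguments by a fixed Borel total order on $\mathcal{X}\times\{0,1\}$, which exists by the Borel isomorphism theorem).

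The error bound then follows from the usual leave-one-out counting. Fix a sample satisfying the hypothesis and let $v^{\oplus j}$ denote $v$ with coordinate $j$ flipped. By construction the predictor errs when coordinate $j$ is held out exactly when $v^{\oplus j}\in\mathcal{L}$ and the edge $\{v,v^{\oplus j}\}$ is oriented away from $v$; summing this indicator over $j\in[n]$ yields precisely the out-degree of $v$ in $G$, which is at most $t-1$. Grouping the permutation average by the held-out index $\sigma(n)$ and using the order-invariance established above gives
\[
\frac{1}{n!}\sum_{\sigma\in\mathrm{Sym}(n)}\mathbf{1}_{\hat Y_n^g(x_{\sigma(1)},y_{\sigma(1)},\dots,x_{\sigma(n-1)},y_{\sigma(n-1)},x_{\sigma(n)})\ne y_{\sigma(n)}}=\frac{\mathrm{outdeg}_G(v)}{n}\le\frac{t-1}{n}<\frac{t}{n}.
\]

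It remains to check measurability. For each fixed $b\in\{0,1\}^n$ the condition $b\in\mathcal{L}$ is a finite Boolean combination of preimages under the universally measurable maps $(x_1,\dots,x_n)\mapsto g(x_{i_1},\dots,x_{i_t})$, so $\mathcal{L}$, an element of the finite set $2^{\{0,1\}^n}$, depends universally measurably on the input; hence so do the finite graph $G$, its deterministically chosen orientation, and the value of $\hat Y_n^g$. I expect the only genuinely non-routine ingredient to be the orientation lemma for one-inclusion graphs of bounded VC dimension, which I would simply quote from \citet*{haussler:94}; everything else is bookkeeping, the one point needing mild care being (as noted) that repeated points among the $x_i$ cause no difficulty since the whole argument is carried out over the index set $[n]$.
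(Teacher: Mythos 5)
Your proposal is correct and follows essentially the same route as the paper: both apply the one-inclusion graph predictor to the pattern class $F_{\mathbf{x}}\subseteq\{0,1\}^{[n]}$ of labelings permitted by $g$, observe that its VC dimension is less than $t$, and conclude via the permutation-averaged (leave-one-out) bound. The only difference is cosmetic — the paper cites Theorem 2.3(ii) of \citet*{haussler:94} as a black box (using its built-in covariance under relabeling), whereas you unpack the orientation/out-degree argument explicitly.
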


\begin{proof}
Fix $n\ge 1$ and $X=\{1,\ldots,n\}$. In the following, $F\in 2^{\{0,1\}^X}$ 
denotes a set of hypotheses $f:X\to\{0,1\}$. Applying \cite[Theorem 
2.3(ii)]{haussler:94} with $\bar x=(1,\ldots,n)$ yields a function
$\mathsf{A}:2^{\{0,1\}^X}\times (X\times\{0,1\})^{n-1}\times X\to\{0,1\}$ 
such that
$$
	\frac{1}{n!}\sum_{\sigma\in\mathrm{Sym}(n)}
	\mathbf{1}_{ \mathsf{A}(F,\sigma(1),f(\sigma(1)),\ldots,\sigma(n-1),f(\sigma(n-1)),
	\sigma(n))\ne f(\sigma(n)) }
	\le\frac{\mathrm{vc}(F)}{n}
$$
for any $f\in F$ and $F\in 2^{\{0,1\}^X}$, where $\mathrm{vc}(F)$ denotes 
the VC dimension of $F$. Moreover, by construction 
$\mathsf{A}$ is covariant under relabeling of $X$, that is,
$\mathsf{A}(F,\sigma(1),y_1,\ldots,\sigma(n-1),y_{n-1},\sigma(n))= 
\mathsf{A}(F\circ\sigma,1,y_1,\ldots,n-1,y_{n-1},n)$ for all permutations 
$\sigma$, where $F\circ\sigma := \{f\circ\sigma:f\in F\}$. The 
domain of $\mathsf{A}$ is a finite set, so the function $\mathsf{A}$ is 
trivially measurable.

Given any input sequence $(x_1,y_1,\ldots,x_n,y_n)$,
define the concept class \an{$F_{\mathbf{x}}$
as the collection of all $f\in\{0,1\}^X$
so that
$g(x_{i_1},\ldots,x_{i_t})\ne (f(i_1),\ldots,f(i_t))$ for all
pairwise distinct $1\le i_1, \ldots, i_t\le n$.}
Define the classifier
$$
	\hat Y_n^g(x_1,y_1,\ldots,x_{n-1},y_{n-1},x_n) :=
	\mathsf{A}(F_{\mathbf{x}},1,y_1,\ldots,n-1,y_{n-1},n).
$$
As $g$ is universally measurable, 
the classifier $\hat Y_n^g$ is also 
universally measurable. Moreover, as $\mathsf{A}$ is covariant and as
$F_{x_{\sigma(1)},\ldots,x_{\sigma(n)}} = F_{x_1,\ldots,x_n}\circ\sigma$,
we have
\begin{align*}
	&\hat 
	Y_n^g(x_{\sigma(1)},y_{\sigma(1)},\ldots,x_{\sigma(n-1)},y_{\sigma(n-1)},x_{\sigma(n)})
	\\ & \qquad =
	\mathsf{A}(F_{\mathbf{x}},\sigma(1),y_{\sigma(1)},\ldots,
	\sigma(n-1),y_{\sigma(n-1)},\sigma(n)).
\end{align*}

Now suppose that the input sequence $(x_1,y_1,\ldots,x_n,y_n)$ 
satisfies the assumption of the lemma. The function $y(i):=y_i$ 
satisfies $y\in F_\mathbf{x}$ by the definition of $F_\mathbf{x}$. It 
therefore follows that for any such sequence
$$
	\frac{1}{n!}\sum_{\sigma\in\mathrm{Sym}(n)}
	\mathbf{1}_{\hat Y_n^g(x_{\sigma(1)},y_{\sigma(1)},\ldots,
	x_{\sigma(n-1)},y_{\sigma(n-1)},x_{\sigma(n)}) \ne
	y_{\sigma(n)}}
	\le\frac{\mathrm{vc}(F_\mathbf{x})}{n}.
$$
\an{Finally, by construction,}
$\mathrm{vc}(F_\mathbf{x})<t$.
\end{proof}

\begin{rem}
\label{rem:linmeasii}
Below we choose the function $g$ in Lemma \ref{lem:hlw} to be the one 
generated by the algorithm from the previous section. By Remark~\ref{rem:measlin}, the resulting function is universally measurable 
jointly in the training data and the function input. It follows 
from the proof of Lemma \ref{lem:hlw} that in such a situation,
$\hat Y_n^g$ is also universally measurable jointly in the training data 
and the function input.
\end{rem}

We are now ready to outline our final learning algorithm. 
Lemma~\ref{lem:vcpc} guarantees the existence of
some $t^*$ such that $\mathbf{P}\{\mathrm{per}(\mathbf{\hat 
y}_{t^*})>0\}\le\frac{1}{8}$.
Given a finite 
sample $X_1,Y_1,\ldots,X_n,Y_n$, we split it in two parts. Using 
the first part of the sample, we form an estimate $\hat t_n$ of the index 
$t^*$.
We then construct, still using the first half of the sample, a 
family of pattern avoidance functions. 
For each of these pattern avoidance 
functions, we apply the algorithm from Lemma \ref{lem:hlw} to the second 
part of the sample to obtain a predictor. 
This yields a family of predictors, one per pattern avoidance function.
Our final 
classifier is the majority vote among these predictors. 

We now proceed to the details. We first prove a variant of Lemma 
\ref{lem:estt}.

\begin{lem}
\label{lem:linestt}
There exist universally measurable $\hat t_n=\hat
t_n(X_1,Y_1,\ldots,X_{\lfloor \frac{n}{2}\rfloor},Y_{\lfloor 
\frac{n}{2}\rfloor})$, whose 
definition does not depend on $\PXY$,
so that the following holds. 
Given $t^*$ so that
$$
        \mathbf{P}\{\mathrm{per}(\mathbf{\hat
        y}_{t^*})>0\}\le\tfrac{1}{8} ,  
$$
there exist $C,c>0$ independent of $n$
(but depending on $\PXY,t^*$) 
so that
$$
        \mathbf{P}\{\hat t_n\in\mathcal{T}_{\rm good}\}\ge
        1-Ce^{-cn},
$$
where
$$     \mathcal{T}_{\rm good}:=
        \{1\le t\le t^*:\mathbf{P}\{\mathrm{per}(\mathbf{\hat
        y}_t)>0
	\}\le
        \tfrac{3}{8}\} .$$
\end{lem}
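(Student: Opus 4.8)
The plan is to transcribe the argument of Lemma~\ref{lem:estt} to the pattern-avoidance setting, replacing the online classifiers $\hat y_t$ by the pattern-avoidance functions $\mathbf{\hat y}_t$ produced by the algorithm of Section~\ref{sec:vcl} and the error $\er$ by $\mathrm{per}$. First I would split the available data $(X_1,Y_1),\ldots,(X_{\lfloor n/2\rfloor},Y_{\lfloor n/2\rfloor})$ into a \emph{training} portion $(X_i,Y_i)_{i\le\lfloor n/4\rfloor}$ and a \emph{testing} portion $(X_i,Y_i)_{\lfloor n/4\rfloor<i\le\lfloor n/2\rfloor}$, the latter of size $M:=\lfloor n/2\rfloor-\lfloor n/4\rfloor$. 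For each $1\le t\le\lfloor n/8\rfloor$ and each $1\le i\le N_t:=\lfloor \lfloor n/4\rfloor/t\rfloor$, I would run the online algorithm of Section~\ref{sec:vcl} on the $i$-th block of $t$ consecutive training examples $(X_{(i-1)t+1},Y_{(i-1)t+1}),\ldots,(X_{it},Y_{it})$, obtaining by Remark~\ref{rem:measlin} a universally measurable pattern-avoidance function $\mathbf{\hat y}_t^i$ of data-dependent length $\tau_t^i\le t+1$; for fixed $t$ the $(\mathbf{\hat y}_t^i)_{i\le N_t}$ are i.i.d.\ copies of $\mathbf{\hat y}_t$ and are independent of the testing portion. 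I would then set $\hat f_t^i=1$ if some $\tau_t^i$-tuple of pairwise distinct testing examples realizes the pattern forbidden by $\mathbf{\hat y}_t^i$ (and $\hat f_t^i=0$ otherwise), put $\hat e_t:=\frac1{N_t}\sum_{i\le N_t}\hat f_t^i$, and finally define
\[
	\hat t_n := \inf\Big\{1\le t\le\lfloor n/8\rfloor : \hat e_t<\tfrac14\Big\},\qquad \inf\varnothing:=\infty.
\]
This depends only on $(X_i,Y_i)_{i\le\lfloor n/2\rfloor}$ and on the fixed $\PXY$-independent winning strategy of Lemma~\ref{lem:vgamest}, and is universally measurable by Remark~\ref{rem:measlin}.

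Next I would control the tails as in Lemma~\ref{lem:estt}. Writing $e_t:=\frac1{N_t}\sum_{i\le N_t}\mathbf{1}_{\mathrm{per}(\mathbf{\hat y}_t^i)>0}$ and conditioning on the training portion, a batch with $\mathrm{per}(\mathbf{\hat y}_t^i)=0$ a.s.\ realizes no forbidden pattern on the testing portion (a finite union bound over the relevant index tuples, each having conditional probability $\mathrm{per}(\mathbf{\hat y}_t^i)=0$), so $\hat e_t\le e_t$ a.s. Since $\mathbf{E}[e_{t^*}]=\mathbf{P}\{\mathrm{per}(\mathbf{\hat y}_{t^*})>0\}\le\frac18$, Hoeffding's inequality gives, for $n$ large enough that $t^*\le\lfloor n/8\rfloor$,
\[
	\mathbf{P}\{\hat t_n>t^*\}\le\mathbf{P}\{\hat e_{t^*}\ge\tfrac14\}\le\mathbf{P}\{e_{t^*}-\mathbf{E}[e_{t^*}]\ge\tfrac18\}\le e^{-N_{t^*}/32}.
\]
For a bad index, i.e.\ $1\le t\le t^*$ with $\mathbf{P}\{\mathrm{per}(\mathbf{\hat y}_t)>0\}>\frac38$, I would use continuity of measure to pick $\varepsilon>0$ (depending only on $\PXY,t^*$, via a minimum over the finitely many such $t$) with $\mathbf{P}\{\mathrm{per}(\mathbf{\hat y}_t)>\varepsilon\}>\frac14+\frac1{16}$. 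Conditionally on the training portion, a batch with $\mathrm{per}(\mathbf{\hat y}_t^i)>\varepsilon$ fails to realize its forbidden pattern on the testing portion with probability at most $(1-\varepsilon)^{\lfloor M/(t^*+1)\rfloor}$: partition the testing portion into $\lfloor M/\tau_t^i\rfloor$ disjoint blocks of $\tau_t^i$ consecutive examples, each an independent $\PXY^{\otimes\tau_t^i}$-draw realizing the forbidden pattern with probability $\mathrm{per}(\mathbf{\hat y}_t^i)>\varepsilon$, and use $\tau_t^i\le t^*+1$. A union bound over the $N_t$ batches, together with Hoeffding applied to $\frac1{N_t}\sum_i\mathbf{1}_{\mathrm{per}(\mathbf{\hat y}_t^i)>\varepsilon}$ (whose mean exceeds $\frac14+\frac1{16}$), then yields
\[
	\mathbf{P}\{\hat t_n=t\}\le\mathbf{P}\{\hat e_t<\tfrac14\}\le e^{-N_t/128}+N_t(1-\varepsilon)^{\lfloor M/(t^*+1)\rfloor}.
\]
Summing the upper-tail bound and these estimates over the at most $t^*$ bad values of $t$, using $N_t\ge N_{t^*}$ and $N_t\le\lfloor n/4\rfloor$, and absorbing the finitely many small $n$ into the constant, yields $\mathbf{P}\{\hat t_n\notin\mathcal{T}_{\rm good}\}\le Ce^{-cn}$ for suitable $C,c>0$ depending on $\PXY,t^*$, which is the claim.

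The argument is essentially a transcription of Lemma~\ref{lem:estt}, so I do not anticipate a genuine difficulty; the only point that needs care — and the only place the VCL setting differs from the Littlestone setting — is that the pattern length $\tau_t^i$ is itself a data-dependent random variable varying from batch to batch. This must be handled uniformly: the step ``$\mathrm{per}=0$ implies no forbidden pattern is realized'' still reduces to a finite union bound for each batch, and the block-partition lower bound goes through once one replaces $\tau_t^i$ by the deterministic upper bound $t^*+1$, which produces the uniform rate $\lfloor M/(t^*+1)\rfloor$ used above. Measurability of $\hat t_n$ is immediate from the universal measurability of $T_t$ and $\mathbf{\hat Y}_t$ in Remark~\ref{rem:measlin}, composition, finitely many comparisons, and infima over finite index sets.
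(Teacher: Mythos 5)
Your proposal is correct and follows essentially the same route as the paper's proof: the same split of the first half into training and testing quarters, the same batch estimator $\hat e_t$ dominated by $e_t$, the same two Hoeffding bounds, the same continuity argument producing $\varepsilon$, and the same disjoint-block/union-bound argument handling the random pattern length via $\tau_t^i\le t^*+1$. The only (immaterial) deviation is that you scan all tuples of pairwise distinct test indices rather than only consecutive windows, which changes nothing in either direction of the argument.
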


\begin{proof}
The proof is almost identical to that of Lemma \ref{lem:estt}. However, 
for completeness, we spell out the details of the argument in the present 
setting.
For each $1\le t\le\lfloor\frac{n}{4}\rfloor$ and
$1\le i\le \lfloor\frac{n}{4t}\rfloor$, let
\begin{align*}
	&\tau_t^i := T_t(
	X_{(i-1)t+1},Y_{(i-1)t+1},\ldots,
        X_{it},Y_{it}),\\
        &\mathbf{\hat y}_t^i(z_1,\ldots,z_{\tau_t^i}) := \mathbf{\hat 
	Y}_t(
        X_{(i-1)t+1},Y_{(i-1)t+1},\ldots,
        X_{it},Y_{it},z_1,\ldots,z_{\tau_t^i})
\end{align*}
be as defined above
for the subsample $X_{(i-1)t+1},Y_{(i-1)t+1},\ldots,X_{it},Y_{it}$ of
the first quarter of the data.
For each $t$, estimate $\mathbf{P}\{\mathrm{per}(\mathbf{\hat
y}_t)>0\}$ by the fraction of $\mathbf{\hat y}_t^i$ that make an error 
on the second quarter of the data:
$$
        \hat e_t := \frac{1}{\lfloor n/4t\rfloor}
        \sum_{i=1}^{\lfloor n/4t\rfloor}
        \mathbf{1}_{\{\mathbf{\hat y}_t^i(X_{s+1},\ldots,X_{s+\tau_t^i})=
	(Y_{s+1},\ldots,Y_{s+\tau_t^i})\text{ 
	for some }\frac{n}{4}\le s\le \frac{n}{2}-\tau_t^i\}}.
$$
Observe that
$$
        \hat e_t \le e_t:=
        \frac{1}{\lfloor n/4t\rfloor}
        \sum_{i=1}^{\lfloor n/4t\rfloor} 
	\mathbf{1}_{\mathrm{per}(\mathbf{\hat y}_{t}^i)>0}
        \quad\mbox{a.s.}
$$
Finally, we define
$$
        \hat t_n := \inf\{t\le \lfloor\tfrac{n}{4}\rfloor:\hat 
	e_t < \tfrac{1}{4}\},
$$
with the convention $\inf\varnothing = \infty$.

Let $t^*$ \an{be as in the statement of the lemma}.
By Hoeffding's inequality
$$
        \mathbf{P}\{\hat t_n>t^*\}
        \le
        \mathbf{P}\{\hat e_{t^*}\ge\tfrac{1}{4}\}
        \le
        \mathbf{P}\{e_{t^*}-\mathbf{E}[e_{t^*}]\ge\tfrac{1}{8}\}
        \le e^{-\lfloor n/4t^*\rfloor/32}.
$$
In addition, by continuity, there exists $\varepsilon>0$ so that
for all $1\le t\le t^*$ such that
$\mathbf{P}\{\mathrm{per}(\mathbf{\hat y}_t)>0\}>
\frac{3}{8}$ we have
$\mathbf{P}\{\mathrm{per}(\mathbf{\hat y}_t)>\varepsilon\}>
\frac{1}{4}+\frac{1}{16}$. 

Now, fix $1\le t\le t^*$ such that
$\mathbf{P}\{\mathrm{per}(\mathbf{\hat y}_t)>0\}>
\frac{3}{8}$.
By
Hoeffding's inequality, \an{and choice of $\varepsilon$,}
$$
        \mathbf{P}\bigg\{
        \frac{1}{\lfloor n/4t\rfloor}
        \sum_{i=1}^{\lfloor n/4t\rfloor} \mathbf{1}_{
	\mathrm{per}(\mathbf{\hat y}_t^i)>\varepsilon}
        < \frac{1}{4}
        \bigg\}
        \le e^{-\lfloor n/4t^*\rfloor/128}.
$$
Observe that for any $g:\mathcal{X}^\tau\to\{0,1\}^\tau$ 
that satisfies $\mathrm{per}>\varepsilon$, we have
\begin{align*}
	&\mathbf{P}\{g(X_{s+1},\ldots,X_{s+\tau})= (Y_{s+1},\ldots,
	Y_{s+\tau})\text{ 
	for some }\tfrac{n}{4}\le s\le \tfrac{n}{2}-\tau\} 
	\\ &\qquad \ge
        1-(1-\varepsilon)^{\lfloor (n-4)/4\tau\rfloor},
\end{align*}
because there are $\lfloor (n-4)/4\tau\rfloor$ disjoint intervals  
of length $\tau$ in $[\frac{n}{4}+1,\frac{n}{2}]\cap
\mathbb{N}$.
Since $(\tau_t^i,\mathbf{\hat y}_t^i)_{i\le\lfloor n/4t\rfloor}$ 
are independent of $(X_s,Y_s)_{s>n/4}$, applying a union bound 
conditionally on $(X_s,Y_s)_{s\le n/4}$ shows that the probability that 
every $\mathbf{\hat y}_t^i$ with $\mathrm{per}^{\tau_t^i}(\mathbf{\hat 
y}_t^i)>\varepsilon$ makes an error on the
second quarter of the sample is
\begin{align*}
 &       \mathbf{P}\{
        \mathbf{1}_{\mathrm{per}^{\tau_t^i}(\mathbf{\hat y}_t^i)>
	\varepsilon} \le
        \mathbf{1}_{\{\mathbf{\hat y}_t^i(X_{s+1},\ldots,X_{s+\tau_t^i})=
	(Y_{s+1},\ldots,Y_{s+\tau_t^i})
	\text{ for some }\frac{n}{4}\le s\le \frac{n}{2}-\tau_t^i\}}
        \text{ for all }i\}
 \\ & \qquad
      \ge 1-\lfloor\tfrac{n}{4t}\rfloor(1-\varepsilon)^{\lfloor 
	(n-4)/4t^*\rfloor},
\end{align*}
where we used that $\tau_t^i\le t^*$.
It follows that
$$
        \mathbf{P}\{\hat t_n=t\}\le
        \mathbf{P}\{\hat e_t<\tfrac{1}{4}\}
        \le 
        \lfloor \tfrac{n}{4}\rfloor (1-\varepsilon)^{
	\lfloor (n-4)/4t^*\rfloor}+
        e^{-\lfloor n/4t^*\rfloor/128} .
$$  

Putting together the above estimates and applying a union bound, we have
$$
        \mathbf{P}\{\hat t_n\not\in\mathcal{T}_{\rm good}\} \le
        e^{-\lfloor n/4t^*\rfloor/32} +
        t^*\lfloor \tfrac{n}{4}\rfloor (1-\varepsilon)^{
	\lfloor (n-4)/4t^*\rfloor}+
        t^*e^{-\lfloor n/4t^*\rfloor/128} .
$$
The right-hand side is bounded by $Ce^{-cn}$
for some $C,c>0$. 
\end{proof}

We are now ready to put everything together.

\vspace*{\abovedisplayskip}

\begin{proof}[of Theorem \ref{thm:linrate}]
We adopt the notations in the proof of Lemma \ref{lem:linestt}.
Our final learning algorithm is constructed as follows. \an{First, 
we compute $\hat t_n$.}
Second, we use the first 
half of the data to construct the
pattern avoidance functions $\mathbf{\hat y}^i_{\hat t_n}$ for
$1\le i\le \lfloor\frac{n}{4\hat t_n}\rfloor$. 
Third, we use the second half of the data
to construct classifiers $\hat y^i$ by running the 
algorithm from Lemma \ref{lem:hlw}; namely,
$$
	\hat y^i(x) := 
	\hat Y_{\lfloor n/2\rfloor+2}^{\mathbf{\hat y}^i_{\hat t_n}}(
	X_{\lceil n/2\rceil},Y_{\lceil n/2\rceil},\ldots,
	X_n,Y_n,x).
$$
 Our final output $\hat h_n$ is the majority vote 
\an{over $\hat y^i$ for $1\le i\le \lfloor\frac{n}{4\hat t_n}\rfloor$}.
We aim to show that $\mathbf{E}[\er(\hat h_n)]\le \frac{C}{n}$ for
some constant $C$.

To this end, for every $t\in\mathcal{T}_{\rm good}$,
because 
$\mathbf{P}\{\mathrm{per}(\mathbf{\hat y}_t)>0\} \le\tfrac{3}{8}$,
Hoeffding's inequality implies
$$
        \mathbf{P}\bigg\{
        \frac{1}{\lfloor n/4t\rfloor}
        \sum_{i=1}^{\lfloor n/4t\rfloor} \mathbf{1}_{
	\mathrm{per}(\mathbf{\hat y}_t^i)>0}
        > \frac{7}{16}
        \bigg\}
        \le e^{-\lfloor n/4t^*\rfloor/128}
$$
By a
union bound, we obtain
\begin{align*}
        &\mathbf{P}\bigg\{
        \frac{1}{\lfloor n/4\hat t_n\rfloor}
        \sum_{i=1}^{\lfloor n/4\hat t_n\rfloor} \mathbf{1}_{
	\mathrm{per}(\mathbf{\hat y}_{\hat t_n}^i)>0}
        > \frac{7}{16},
	~\hat t_n\in\mathcal{T}_{\rm good}
        \bigg\} \\
	&\qquad\le 
	\sum_{t\in\mathcal{T}_{\rm good}}
        \mathbf{P}\bigg\{
        \frac{1}{\lfloor n/4t\rfloor}
        \sum_{i=1}^{\lfloor n/4t\rfloor} \mathbf{1}_{
	\mathrm{per}(\mathbf{\hat y}_t^i)>0}
        > \frac{7}{16}
        \bigg\} 
	\le  t^*e^{-\lfloor n/4t^*\rfloor/128}.	
\end{align*}
Thus except on an event of exponentially small
probability, the pattern avoidance functions $\mathbf{\hat y}_{\hat 
t_n}^i$ have zero error for at least a fraction of $\frac{9}{16}$
of indices $i$.

Now let $(X,Y)\sim\PXY$ be independent of the data
$X_1,Y_1,\ldots,X_n,Y_n$. Then
$$
	\mathbf{E}[\er(\hat h_n)] =
	\mathbf{P}[\hat h_n(X)\ne Y] \le
	\mathbf{P}\bigg[
	\frac{1}{\lfloor n/4\hat t_n\rfloor}
        \sum_{i=1}^{\lfloor n/4\hat t_n\rfloor}
        \mathbf{1}_{\hat y^i(X)\ne Y}\ge\frac{1}{2}
	\bigg].
$$
We can therefore estimate using Lemma \ref{lem:linestt}
\begin{align*}
	&\mathbf{E}[\er(\hat h_n)] 
	\le
	Ce^{-cn} + t^*e^{-\lfloor n/4t^*\rfloor/128} +\\
	&
~~~
	\mathbf{P}\!\left\{\hat t_n\in\mathcal{T}_\mathrm{good},~
	\frac{1}{\lfloor n/4\hat t_n\rfloor}
        \sum_{i=1}^{\lfloor n/4\hat t_n\rfloor}
        \mathbf{1}_{\hat y^i(X)\ne Y}\ge\frac{1}{2},~
        \frac{1}{\lfloor n/4\hat t_n\rfloor}
        \sum_{i=1}^{\lfloor n/4\hat t_n\rfloor} \mathbf{1}_{
        \mathrm{per}(\mathbf{\hat y}_{\hat t_n}^i)=0}
        \ge \frac{9}{16}
	\right\}.
\end{align*}
{Since any two sets, containing 
at least $\frac{1}{2}$ and $\frac{9}{16}$ 
fractions of $\{1,\ldots,\lfloor n / \hat{t}_n \rfloor \}$,
must have at least $\frac{1}{16}$ fraction 
in their intersection 
(by the union bound for their complements), 
the last term in the above expression is bounded above 
by}
\begin{align*}
	&\mathbf{P}\bigg[
	\hat t_n\in\mathcal{T}_\mathrm{good},~
	\frac{1}{\lfloor n/4\hat t_n\rfloor}
        \sum_{i=1}^{\lfloor n/4\hat t_n\rfloor}
        \mathbf{1}_{\hat y^i(X)\ne Y}\mathbf{1}_{
        \mathrm{per}(\mathbf{\hat 
	y}_{\hat t_n}^i)=0}\ge\frac{1}{16}
	\bigg]\\
	&\qquad\le
	16\,\mathbf{E}\bigg[
	\mathbf{1}_{\hat t_n\in\mathcal{T}_\mathrm{good}}\,
	\frac{1}{\lfloor n/4\hat t_n\rfloor}
        \sum_{i=1}^{\lfloor n/4\hat t_n\rfloor}
        \mathbf{1}_{\hat y^i(X)\ne Y}\mathbf{1}_{
        \mathrm{per}(\mathbf{\hat 
	y}_{\hat t_n}^i)=0}
	\bigg] ,
\end{align*}
{using Markov's inequality.}
We can
now apply Lemma \ref{lem:hlw} conditionally on the first half of the data
to conclude (using exchangeability) that
\begin{align*}
	\mathbf{E}[\er(\hat h_n)] 
	&\le
	Ce^{-cn} + t^*e^{-\lfloor n/4t^*\rfloor/128} + 
	16\,\mathbf{E}\bigg[
	\mathbf{1}_{\hat t_n\in\mathcal{T}_\mathrm{good}}\,
	\frac{1}{\lfloor n/4\hat t_n\rfloor}
        \sum_{i=1}^{\lfloor n/4\hat t_n\rfloor}
	\frac{\tau_{\hat t_n}^i}{\lfloor n/2\rfloor+2}
	\bigg]\\
	&\le
	Ce^{-cn} + t^*e^{-\lfloor n/4t^*\rfloor/128} + 
	\frac{16(t^*+1)}{\lfloor n/2\rfloor+2},
\end{align*}
where we used that
\an{$\tau_{\hat t_n}^i \leq \hat t_n +1 \le t^*+1$} 
for $\hat t_n\in\mathcal{T}_\mathrm{good}$. 
\end{proof}

\subsection{Slower than linear is arbitrarily slow}

The final step in the proof of our main results is to show that classes with infinite VCL trees have arbitrarily slow rates.

\begin{thm}
\label{thm:nonlin}
If $\mathcal{H}$ has an infinite VCL tree, then
$\mathcal{H}$ requires arbitrarily slow rates.
\end{thm}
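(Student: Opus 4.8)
The plan is to argue by the probabilistic method, in the spirit of the proof of Theorem~\ref{thm:nonexp}, but exploiting the extra branching of the VCL structure. Fix an infinite VCL tree $\{x_{\mathbf u}\}$ for $\mathcal H$ (we may assume all the points $x^i_{\mathbf u}$ are distinct, in view of the remark following Definition~\ref{defn:vcl}) and fix a learning algorithm with output $\hat h_n$. Unlike in Theorem~\ref{thm:nonexp}, where a single point is charged at each depth, I will pick a rapidly growing sequence of depths $K_1<K_2<\cdots$ together with a summable sequence $q_1>q_2>\cdots$ with $\sum_j q_j=1$, and concentrate the distribution on the points of a \emph{random branch}: let $\mathbf Y=(Y_1^0,(Y_2^0,Y_2^1),\ldots)$ be i.i.d.\ uniform bits (organized as in Definition~\ref{defn:vcl}), and let $P_{\mathbf Y}$ put mass $q_j/(K_j+1)$ on each pair $(x^i_{\mathbf Y_{\le K_j}},Y^i_{K_j+1})$, $0\le i\le K_j$, $j\ge 1$. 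The map $\mathbf Y\mapsto P_{\mathbf Y}$ is measurable, so no measurability issues arise. Realizability is immediate from the definition of a VCL tree: for each $n$, applying the definition at depth $K_{J(n)}$ (where $K_{J(n)}\le n<K_{J(n)+1}$) with $\mathbf y=\mathbf Y_{\le K_{J(n)}+1}$ produces a single $h\in\mathcal H$ correct on all charged points of depth $\le n$, whose error is at most $\sum_{j>J(n)}q_j\to 0$; hence $\inf_{h\in\mathcal H}\er_{P_{\mathbf Y}}(h)=0$.

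The core is a lower bound on $\E[\er_{\mathbf Y}(\hat h_n)]$. The key observation is that a training sample falling on the depth-$K_j$ node reveals only the coordinates $\mathbf Y_{\le K_j}$ (i.e.\ ``blocks'' $1,\ldots,K_j$ of $\mathbf Y$) together with one bit of block $K_j+1$ (the sampled label). Consequently, writing $j^\ast$ for the largest index of a charged node hit by the $n$ samples: all labels of the nodes $D_j:=x_{\mathbf Y_{\le K_j}}$ with $j<j^\ast$ are fully determined by the data (no error there), whereas the labels of $D_j$ for $j>j^\ast$, and the labels of the \emph{unsampled} points of $D_{j^\ast}$, are bits of $\mathbf Y$ that the data does not reveal — and, crucially, these bits are also independent of the test point, whose identity only involves strictly earlier blocks. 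Now choose the $q_j$ to decay faster than any geometric sequence (``super-lumpy'', so that $\bar q_{j+1}:=\sum_{j'>j}q_{j'}$ satisfies $\bar q_{j+1}\le c_0\,\bar q_j$ for a small absolute constant $c_0$) and choose $K_j$ astronomically large relative to $1/\bar q_{j+1}$. Then for $n$ up to $n_j\asymp 1/\bar q_{j+1}$ one has, with probability bounded below by a universal constant, simultaneously $j^\ast\le j$ and (if $j^\ast=j$) at most half the points of $D_j$ sampled — this is a routine union bound: $\mathbf P\{j^\ast>j\}\le n\bar q_{j+1}$ and $\mathbf P\{\text{more than half of }D_j\text{ sampled}\}\le 2n q_j/(K_j+1)$, both small by the choices above. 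On this event the data leave at least $m:=(K_j+1)/2$ i.i.d.\ uniform label bits undetermined, on points where $\hat h_n$'s predictions are fixed by the data; a Chernoff bound over $\mathbf Y$ then shows $\hat h_n$ errs on $\ge m/3$ of them with overwhelming probability (since $m$ is huge), whence $\er_{\mathbf Y}(\hat h_n)\ge (m/3)\cdot q_j/(K_j+1)\ge q_j/6$. Altogether, $\mathbf P\{\er_{\mathbf Y}(\hat h_{n_j})\ge q_j/6\}\ge\rho$ for a universal $\rho>0$ and for $n_j:=\lfloor 1/(8\bar q_{j+1})\rfloor$.

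To extract a single distribution I apply a reverse Fatou argument, exactly as in Theorem~\ref{thm:nonexp}. From $\mathbf P\{\er_{\mathbf Y}(\hat h_{n_j})\ge q_j/6\}\ge\rho$ one gets $\mathbf P_{\mathbf Y}\{\E_{\mathrm{data}}[\er_{\mathbf Y}(\hat h_{n_j})]\ge \rho q_j/12\}\ge\rho/2$ (Markov over $\mathbf Y$), and then, since the events $\{\E_{\mathrm{data}}[\er_{\mathbf Y}(\hat h_{n_j})]\ge\rho q_j/12\}$ have $\mathbf P_{\mathbf Y}$-probability bounded below, $\mathbf P_{\mathbf Y}\{\E_{\mathrm{data}}[\er_{\mathbf Y}(\hat h_{n_j})]\ge\rho q_j/12\text{ for infinitely many }j\}\ge\rho/2>0$. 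Fixing a $\mathbf Y$ in this event and setting $P:=P_{\mathbf Y}$ gives a realizable distribution with $\E[\er(\hat h_{n_j})]\ge\rho q_j/12$ for infinitely many $j$. Finally, given an arbitrary rate $R(n)\to 0$, I build $\bar q_1=1>\bar q_2>\cdots\to 0$ and $K_1<K_2<\cdots$ by induction so that, besides $\bar q_{j+1}\le c_0\bar q_j$ and $K_j\gg 1/\bar q_{j+1}$, one also has $\rho q_j/12\ge R(n_j)$: as $\bar q_{j+1}\to0$ forces $n_j\to\infty$ and hence $R(n_j)\to0$, at each step it suffices to take $\bar q_{j+1}$ sufficiently small. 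This exhibits, for every $R(n)\to0$, a realizable $P$ with $\E[\er(\hat h_n)]\ge R(n)$ for infinitely many $n$, i.e.\ $\mathcal H$ is not learnable faster than $R$; since this holds for all rate functions, $\mathcal H$ requires arbitrarily slow rates.

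The main obstacle — and the reason the construction must use super-rapidly decaying masses and enormous depths — is the following tension. Because realizability over a VCL tree is only guaranteed along a single root-to-infinity path, the charged shattered sets $D_j$ are forced to lie on one branch; but then a single deep sample reveals the labels of all shallower $D_j$, which at first sight caps the attainable rate at the linear one (any ``smooth'' or geometric mass profile indeed yields only a linear lower bound). The escape is to make the mass $q_j$ of the node active at scale $n_j$ much larger than $1/n_j\asymp\bar q_{j+1}$ while that node is still reached yet only sparsely sampled — possible precisely because $q_j\approx\bar q_j\gg\bar q_{j+1}$ and $K_j$ can be chosen, after $\bar q_{j+1}$ is fixed, as large as one likes. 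Getting all the quantifiers in this interplay right, and then upgrading the resulting in-expectation estimate to a statement valid for one fixed $P$ via concentration over the random labels $\mathbf Y$, is where essentially all the work lies.
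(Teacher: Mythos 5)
Your proposal is correct and follows essentially the same route as the paper's proof: a distribution supported on a random branch of the infinite VCL tree, with mass concentrated on a sparse sequence of depths whose node sizes $K_j$ are chosen huge relative to the sample sizes $n_j\asymp 1/\bar q_{j+1}$ (so that at scale $n_j$ the depth-$K_j$ node is reached but only sparsely sampled, leaving most of its label bits hidden), followed by a Fatou-type extraction of a single realization of the branch. The only difference is cosmetic: the paper lower-bounds the error via the probability of mispredicting an independent test point whose label is an unrevealed uniform bit (yielding a clean factor $\tfrac12$ and avoiding your Chernoff step over the hidden labels of $D_j$), but both versions of that step are valid.
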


Together with Theorems \ref{thm:nonexp} and \ref{thm:linrate}, this 
theorem completes the 
characterization of classes $\mathcal{H}$ with 
linear learning rate: these are precisely the classes that have an 
infinite Littlestone tree but do not have an infinite VCL tree. 

The proof of Theorem~\ref{thm:nonlin} is similar to that of Theorem~\ref{thm:nonexp}.
The details, however, are more involved. 
We prove, via the probabilistic method, that for any rate function $R(t)\to 0$ and any learning algorithm with output $\hat h_n$, there is a realizable distribution $\PXY$ 
so that $\mathbf{E}[\er(\hat h_n)]\ge \frac{R(n)}{40}$ infinitely often.
\an{The construction of the distribution according to which
we choose $\PXY$ depends on the rate function $R$ and relies on the following technical lemma.}

\begin{lem}
\label{lem:nonlintech}
Let $R(t)\to 0$ be any rate function. Then there exist
probabilities $p_1,p_2,\ldots \ge 0$
so that $\sum_{k\ge 1}p_k=1$, two increasing sequences of 
integers $(n_i)_{i\ge 1}$ and $(k_i)_{i\ge 1}$, and a constant
$\frac{1}{2}\le C\le 1$ such that the following hold for all $i>1$:
\begin{enumerate}[\rm (a)]
\item $\sum_{k>k_i} p_k \le \frac{1}{n_i}$.
\item $n_ip_{k_i} \le k_i$.
\item $p_{k_i}= CR(n_i)$.
\end{enumerate}
\end{lem}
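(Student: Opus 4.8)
The plan is to build $p$, the sequences $(n_i),(k_i)$, and the constant $C$ by an ordinary induction, taking $C=1$ (which lies in $[\tfrac12,1]$) and giving up on making $p$ a ``nice'' (monotone, full-support) distribution. Instead I will let $p$ be supported on the \emph{landmark} indices $k_1<k_2<\cdots$ together with a single low-index ``reservoir'' that absorbs all the leftover mass, and simply \emph{declare} $p_{k_i}:=R(n_i)$. Then (c) holds by construction, and only two things have to be arranged: the positions $k_i$ must be large enough for (b), namely $k_i\ge n_iR(n_i)$; and the heights $R(n_i)$ must decay fast enough that the total mass is at most $1$ and the tail sums are small enough for (a). The point is that (b) constrains only the positions $k_i$ while (a) and $\sum_k p_k=1$ constrain only the heights $R(n_i)$, so the two requirements never interact: once the $n_i$ have been chosen we may take each $k_i$ as large as we please.

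In detail, I would first fix $n_1$ with $R(n_1)\le\tfrac12$ and then choose $n_i>n_{i-1}$ inductively with $R(n_i)\le 2^{-i}/n_{i-1}$ for $i\ge 2$; both are possible because $R(t)\to 0$. This makes $S:=\sum_{i\ge1}R(n_i)\le\tfrac12+\sum_{i\ge2}2^{-i}=1$, and, for every $i\ge 1$,
$$
\sum_{j>i}R(n_j)\;\le\;\sum_{j>i}\frac{2^{-j}}{n_{j-1}}\;\le\;\frac{1}{n_i}\sum_{j>i}2^{-j}\;=\;\frac{2^{-i}}{n_i}\;\le\;\frac{1}{n_i},
$$
where we used $n_{j-1}\ge n_i$ whenever $j>i$. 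Next I would set $k_0:=1$ and, for $i\ge 1$, $k_i:=\max\{k_{i-1}+1,\ \lceil n_iR(n_i)\rceil,\ 2\}$, so that $(k_i)$ is a strictly increasing sequence of integers with $k_i\ge 2$ and $k_i\ge n_iR(n_i)$. Finally, put $C:=1$, $p_{k_i}:=R(n_i)$ for each $i\ge 1$, $p_1:=1-S$ (which is $\ge 0$ since $S\le 1$), and $p_k:=0$ for all remaining $k$; as $1<2\le k_1<k_2<\cdots$ are distinct we get $\sum_k p_k=(1-S)+\sum_i R(n_i)=1$.

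It then remains to check (a)--(c), which in fact hold for all $i\ge 1$. Condition (c) is immediate. For (b), $n_ip_{k_i}=n_iR(n_i)\le\lceil n_iR(n_i)\rceil\le k_i$. For (a), since $p$ vanishes off $\{1\}\cup\{k_j:j\ge1\}$, since $1<k_i$, and since $k_j>k_i$ exactly when $j>i$, we get $\sum_{k>k_i}p_k=\sum_{j>i}R(n_j)\le 1/n_i$ by the displayed estimate; and $C=1\in[\tfrac12,1]$. (Nothing changes if some or all of the $R(n_i)$ vanish: then $p_1=1$ and the conditions reduce to $0\le k_i$ and $0\le 1/n_i$.)

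The real obstacle here is choosing the right template rather than any hard estimate. The obvious first attempt -- let $p$ be \emph{constant}, equal to $R(n_i)$, on a block of indices ending at $k_i$ -- runs into a genuine conflict: (b) forces such a block to end at some $k_i\gtrsim n_iR(n_i)$, so it carries mass roughly $n_iR(n_i)^2$, and for a slowly decaying rate (say $R(t)=1/\log t$) this is far too large for the tails to satisfy (a) and for $\sum_k p_k$ to stay finite. Recognizing that the \emph{locations} of the mass can be decoupled from the required \emph{heights} $p_{k_i}$ -- by shoving the surplus into one fixed small index -- dissolves this tension, and the rest is routine bookkeeping with geometric series.
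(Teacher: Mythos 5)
Your proof is correct and follows essentially the same recursive construction as the paper: choose $n_i$ so that $R(n_i)$ decays geometrically relative to $1/n_{i-1}$, force $k_i\ge\lceil n_iR(n_i)\rceil$ to secure (b), and place mass proportional to $R(n_i)$ at $k_i$ so that the tail bound (a) follows from the geometric decay. The only (harmless) difference is bookkeeping: the paper normalizes the total mass to $1$ by taking $C=1/\sum_j R(n_j)$ (arranging $R(n_1)=1$ so that $\tfrac12\le C\le 1$), whereas you keep $C=1$ and absorb the surplus mass into a reservoir at index $1$.
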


\begin{proof}
\an{We may assume without loss of generality} that $R(1)=1$. Otherwise, we can replace $R$ by $\tilde R$ such 
that $\tilde R(1)=1$ and $\tilde R(n)=R(n)$ for $n>1$. 

We start by a recursive definition of the two sequences $(n_i)$ and $(k_i)$.
Let $n_1=1$ and $k_1=1$. 
For $i>1$, let
$$
	n_i = \inf\bigg\{n>n_{i-1}: R(n)\le \min_{j<i} 
		\frac{R(n_j)2^{j-i}}{k_j}\bigg\}$$
		and\an{
		$$k_i = 
		\max \big\{ \big\lceil n_iR(n_i)\big\rceil,  k_{i-1}+1
		\big\}.
$$
Because} $R(t)\to 0$, we have $n_i < \infty$ for all $i$.
The sequences are increasing by construction. Finally, we define
$p_k=0$ for $k\not\in\{k_i:i\ge 1\}$ and
$$
	p_{k_i} = CR(n_i)
$$
with $C= \frac{1}{\sum_{j\ge 1}R(n_j)}$.
As $R(n_j)\le 2^{-j+1}$ for all $j>1$ by construction, 
we have $\frac{1}{2}\le C\le 1$. 

We now verify the three properties (a)--(c). For (a), by 
construction
$$
	R(n_j) \le 
	\frac{R(n_i)2^{i-j}}{k_i} \le
	\frac{R(1)2^{i-j}}{n_i}
	\quad\mbox{for all } i<j .
$$
Therefore, as $C \le 1$, we obtain
$$
	\sum_{k>k_i} p_k =
	\sum_{j>i} p_{k_j} =
	\sum_{j>i} CR(n_j)
	\le \frac{1}{n_i}.
$$
For (b), note that
$$
	n_ip_{k_i} = Cn_iR(n_i)  \le
	k_i .
$$
Finally, (c) holds by construction.

\end{proof}

We can now complete the proof of Theorem \ref{thm:nonlin}.

\vspace*{\abovedisplayskip}

\begin{proof}[of Theorem \ref{thm:nonlin}]
We fix throughout the proof a rate $R(t)\to 0$. Define $C,p_k,k_i,n_i$ 
as in Lemma \ref{lem:nonlintech}. We also fix any learning algorithm \an{with output} $\hat 
h_n$ and an infinite VCL tree 
$\mathbf{t}=\{x_\mathbf{u}\in\mathcal{X}^{k+1}:0\le k<\infty, 
\mathbf{u}\in\{0,1\}^1\times\cdots\times\{0,1\}^k\}$ for $\cH$.

Let
$\mathbf{y}=(\mathbf{y}_1,\mathbf{y}_2,\ldots)$ be a sequence of 
independent random vectors, where $\mathbf{y}_k=(y_k^0,\ldots,y_k^{k-1})$
is uniformly distributed on $\{0,1\}^k$ for each $k\ge 1$.
Define the random 
distribution $\PXY_\mathbf{y}$ on $\mathcal{X}\times\{0,1\}$ as
$$
	\PXY_{\mathbf{y}}\{(x_{\mathbf{y}_{\le k-1}}^i,y_k^i)\} = 
	\frac{p_k}{k}
	\quad\mbox{for }0\le i\le k-1,~k\ge 1.
$$
\an{In words, each $\mathbf{y}$ defines an infinite 
branch of the tree $\mathbf{t}$.
Given $\mathbf{y}$, we choose the vertex
on this branch of depth $k-1$ with probability $p_k$.
This vertex defines a subset of $\X$ of size $k$.
The distribution $\PXY_\mathbf{y}$ chooses each element in this subset uniformly at random.}

Because $\mathbf{t}$ is a VCL tree, 
for every $n<\infty$, there exists $h\in\mathcal{H}$ 
so that $h(x_{\mathbf{y}_{\le k-1}}^i)=y_k^i$ for
$0\le i\le k-1$ and $1\le k\le n$. Thus
$$
	\er_\mathbf{y}(h):=\PXY_{\mathbf{y}}\{(x,y)\in
	\mathcal{X}\times\{0,1\}:h(x)\ne y\}\le 
	\sum_{k>n}p_k.
$$
Letting $n\to\infty$, we find that $\PXY_\mathbf{y}$ is 
realizable for every realization of $\mathbf{y}$. 
Finally, the map $\mathbf{y}\mapsto\PXY_\mathbf{y}$ is measurable
as in the proof of Theorem \ref{thm:nonexp}.

Now let $(X,Y),(X_1,Y_1),(X_2,Y_2),\ldots$ be i.i.d.\ samples drawn from
$\PXY_\mathbf{y}$. That is, 
$$X=x_{\mathbf{y}_{\le T-1}}^I, \ \ 
Y=y_{T}^I, \ \ X_i=x_{\mathbf{y}_{\le T_i-1}}^{I_i}, \ \ 
Y_i=y_{T_i}^{I_i},$$ where
$(T,I),(T_1,I_1),(T_2,I_2),\ldots$ are i.i.d.\ random variables,
independent of $\mathbf{y}$, with distribution
$$
	\mathbf{P}\{T=k,I=i\}=\frac{p_k}{k}\quad\mbox{for }
	0\le i\le k-1,~k\ge 1.
$$
For all $n$ and $k$,
\begin{align*}
	&\mathbf{P}\{\hat h_n(X)\ne Y,T=k\}
	\\ &\ge
	\sum_{i=0}^{k-1}
	\mathbf{P}\{\hat h_n(X)\ne y_k^i,T=k,I=i,T_1,\ldots,T_n\le k,
	(T_1,I_1),\ldots,(T_n,I_n)\ne (k,i)\}
	\\&=
	\frac{1}{2}\sum_{i=0}^{k-1}
	\mathbf{P}\{T=k,I=i,T_1,\ldots,T_n\le k,
	(T_1,I_1),\ldots,(T_n,I_n)\ne (k,i)\}
	\\
	&=
	\frac{p_k}{2}
	\bigg(1-\sum_{l>k}p_l -\frac{p_k}{k}\bigg)^n
\end{align*}
where we used that conditionally on 
$T=k,I=i,T_1,\ldots,T_n\le k,(T_1,I_1),\ldots,(T_n,I_n)\ne (k,i)$,
the predictor $\hat h_n(X)$ is independent of $y_k^i$.

We now choose $k=k_i$ and $n=n_i$. By Lemma~\ref{lem:nonlintech},
$$
	\mathbf{P}\{\hat h_{n_i}(X)\ne Y,T=k_i\}
	\ge
        \frac{CR(n_i)}{2}
        \bigg(1-\frac{2}{n_i}\bigg)^{n_i}
	\ge 
        \frac{CR(n_i)}{18}
$$
for $i\ge 3$. By Fatou's lemma,
\begin{align*}
	&\mathbf{E}\Big[\limsup_{i\to\infty}
	\frac{1}{R(n_i)}
	\mathbf{P}\{\hat h_{n_i}(X)\ne Y,T=k_i|\mathbf{y}\}\Big]
	\\ &\ge
	\limsup_{i\to\infty}\frac{1}{R(n_i)}
	\mathbf{P}\{\hat h_{n_i}(X)\ne Y,T=k_i\}
	\ge \frac{C}{18} ;
\end{align*}
Fatou applies as
$\frac{1}{R(n_i)}\mathbf{P}\{\hat h_{n_i}(X)\ne Y,T=k_i|\mathbf{y}\}\le
\frac{1}{R(n_i)}\mathbf{P}\{T=k_i\}=C$ a.s.
Because
$$
	\mathbf{P}\{\hat h_{n_i}(X)\ne Y,T=k_i|\mathbf{y}\} \le
	\mathbf{P}\{\hat h_{n_i}(X)\ne Y|\mathbf{y}\} =
	\mathbf{E}[\er_\mathbf{y}(\hat h_{n_i})|\mathbf{y}]
	\quad\mbox{a.s.},
$$
there must exist a realization of $\mathbf{y}$ such that
$\mathbf{E}[\er_\mathbf{y}(\hat h_n)|\mathbf{y}]>\frac{C}{20}R(n)
\ge \frac{1}{40}R(n)$ 
infinitely often. Choosing $\PXY=\PXY_\mathbf{y}$ for this realization of 
$\mathbf{y}$ concludes the proof.
\end{proof}

\appendix

\section{Mathematical background}

\subsection{Gale-Stewart games}
\label{sec:gs}

The aim of this section is to recall some basic notions from the classical 
theory of infinite games.

Fix sets $\mathcal{X}_t,\mathcal{Y}_t$ for $t\ge 1$. We consider infinite 
games between two players: in each round $t\ge 1$, first player \PI 
selects an element $x_t\in\mathcal{X}_t$, and then player \PII selects an 
element $y_t\in\mathcal{Y}_t$. The rules of the game are determined by 
specifying a set $\mathsf{W}\subseteq \prod_{t\ge 
1}(\mathcal{X}_t\times\mathcal{Y}_t)$ of winning sequences for~\PII. That 
is, after an infinite sequence of consecutive plays 
$x_1,y_1,x_2,y_2,\ldots$, we say that \PII wins if 
$(x_1,y_1,x_2,y_2,\ldots)\in\mathsf{W}$; otherwise, \PI is declared the 
winner of the game.

A \bemph{strategy} is a rule used by a given player to determine the next 
move given the current position of the game. A strategy for \PI is a 
sequence of functions 
$f_t:\prod_{s<t}(\mathcal{X}_s\times\mathcal{Y}_s)\to\mathcal{X}_t$ for 
$t\ge 1$, so that \PI plays $x_t=f_t(x_1,y_1,\ldots,x_{t-1},y_{t-1})$ in 
round $t$. Similarly, a strategy for \PII is a sequence of 
$g_t:\prod_{s<t}(\mathcal{X}_s\times\mathcal{Y}_s) 
\times\mathcal{X}_t\to\mathcal{Y}_t$ for $t\ge 1$, so that \PII plays 
$y_t=g_t(x_1,y_1,\ldots,x_{t-1},y_{t-1},x_t)$ in round $t$. A strategy for 
\PI is called \bemph{winning} if playing that strategy always makes 
\PI win the game regardless of what \PII plays; a winning strategy for 
\PII is defined analogously.

At the present level of generality, it is far from clear whether winning 
strategies even exist. We introduce some additional 
assumption in order to be able to develop a meaningful theory.
The simplest such assumption was introduced in the 
classic work of Gale and Stewart \cite{GS53}:
$\mathsf{W}$ is called \bemph{finitely decidable} if for every 
$(x_1,y_1,x_2,y_2,\ldots)\in\mathsf{W}$, there exists $n<\infty$ so that
$$
	(x_1,y_1,\ldots,x_n,y_n,x'_{n+1},y'_{n+1},x'_{n+2},y'_{n+2},\ldots)
	\in\mathsf{W}
$$
for all choices of 
$x'_{n+1},y'_{n+1},x'_{n+2},y'_{n+2},\ldots$
In other words, that $\mathsf{W}$ is finitely decidable means that if \PII 
wins, then she knows that she won after playing a finite number of rounds. 
Conversely, in this case \PI wins the game 
precisely when \PII does not win after any finite number of rounds.

An infinite game whose set $\mathsf{W}$ 
is finitely decidable is called a \bemph{Gale-Stewart game}.
The fundamental theorem on Gale-Stewart games is the following.

\begin{thm}
\label{thm:gs}
In a Gale-Stewart game, either \PI or \PII has a winning strategy.
\end{thm}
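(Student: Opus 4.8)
**The plan is to prove Theorem~\ref{thm:gs} (the Gale--Stewart theorem) by the classical strategy-stealing / backward-induction argument, organized around the notion of a \emph{winning position} for Player~\PI.** Suppose \PII does not have a winning strategy; we must then produce a winning strategy for \PI. Call a finite position $p=(x_1,y_1,\ldots,x_t,y_t)$ (or the empty position) \textbf{good} if \PII does not have a strategy that wins every play extending $p$. The hypothesis is that the empty position is good. The key claim is a one-step ``preservation'' property: \emph{if $p$ is good, then \PI can choose an $x_{t+1}\in\mathcal{X}_{t+1}$ such that, for every response $y_{t+1}\in\mathcal{Y}_{t+1}$ of \PII, the extended position $(p,x_{t+1},y_{t+1})$ is again good.} Granting this claim, \PI simply plays to keep the position good forever: start from the empty (good) position and at each round apply the claim to select the next move. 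This defines a strategy $f_t$ for \PI.

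\medskip
\textbf{Proof of the preservation claim.} Suppose $p$ is good but the claim fails: for every $x_{t+1}\in\mathcal{X}_{t+1}$ there is some $y_{t+1}=y_{t+1}(x_{t+1})$ with $(p,x_{t+1},y_{t+1})$ \emph{not} good, i.e.\ \PII has a strategy $g^{x_{t+1}}$ winning all plays extending $(p,x_{t+1},y_{t+1})$. Then \PII can assemble a strategy winning all plays extending $p$: when \PI plays $x_{t+1}$ in round $t+1$, respond with $y_{t+1}(x_{t+1})$, and thereafter follow $g^{x_{t+1}}$. This strategy wins every extension of $p$, contradicting goodness of $p$. Hence the claim holds. (One must be mildly careful that ``\PII has a strategy winning all plays extending $q$'' is a statement about the residual game after $q$, and that strategies in the residual game concatenate with the finite prefix $q$ in the obvious way; this is routine once the definitions of Section~\ref{sec:gs} are unwound.)

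\medskip
\textbf{Why the resulting strategy for \PI wins.} Let $x_1,y_1,x_2,y_2,\ldots$ be any play consistent with the strategy $f_t$ just constructed. By induction every finite prefix $(x_1,y_1,\ldots,x_t,y_t)$ is good. Suppose, for contradiction, that this play lies in $\mathsf{W}$ (so \PII wins). Since $\mathsf{W}$ is \emph{finitely decidable}, there is some $n<\infty$ such that \emph{every} sequence agreeing with the play on the first $n$ rounds lies in $\mathsf{W}$. But that is precisely the assertion that the trivial strategy (``play anything'') wins every play extending the prefix $(x_1,y_1,\ldots,x_n,y_n)$; in particular \PII has a winning strategy from that position, i.e.\ the prefix is not good — contradicting that all prefixes are good. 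Therefore the play is not in $\mathsf{W}$, so \PI wins. Finally, \PI and \PII cannot both have winning strategies: playing the two winning strategies against each other would produce a single play that both lies in $\mathsf{W}$ and does not, which is absurd. This yields the dichotomy, with exactly one player having a winning strategy.

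\medskip
\textbf{Expected main obstacle.} There is no deep obstacle here — the theorem is classical — but the point requiring the most care is the bookkeeping around ``residual games'': precisely defining goodness of a position in terms of strategies in the game after that position, verifying that finite prefixes can be prepended to residual strategies to get genuine strategies in the original game, and checking the induction that the play produced by \PI keeps every prefix good. The only substantive use of the hypothesis ``Gale--Stewart'' (finite decidability of $\mathsf{W}$) is in the final step, where it converts membership of an infinite play in $\mathsf{W}$ into a finitely-witnessed win for \PII, contradicting goodness; this is the crux and should be stated explicitly.
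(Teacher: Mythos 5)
Your argument is correct: it is precisely the classical Gale--Stewart proof (the ``stay in good positions'' argument for the player whose winning set is finitely decidable's complement), which the paper does not reproduce but instead cites from \cite{GS53} and \cite[Theorem 20.1]{Kec95}. The preservation claim, the use of finite decidability to turn membership of the limiting play in $\mathsf{W}$ into a contradiction with goodness of a finite prefix, and the final observation that both players cannot simultaneously have winning strategies are all exactly as in the standard references.
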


The classical proof of this result is short \an{and intuitive}, cf.\ \cite{GS53} or 
\cite[Theorem 20.1]{Kec95}. For a more constructive approach, see
\cite[Corollary 3.4.3]{Hod93}.

\begin{rem}
If one endows $\mathcal{X}_t$ and $\mathcal{Y}_t$ with the
discrete topology, then $\mathsf{W}$ is finitely decidable if and only if 
it is an open set for the associated product topology. For this reason, 
condition of a Gale-Stewart game is usually expressed by saying that the
set of winning sequences is open. This terminology is particularly
confusing in the setting of this paper, because we endow $\mathcal{X}_t$ and $\mathcal{Y}_t$ with a 
different topology. In order to avoid confusion, we have therefore opted 
to resort to the nonstandard terminology ``finitely decidable''.
\end{rem}

\begin{rem}
In the literature it is sometimes assumed that
$\mathcal{X}_t=\mathcal{Y}_t=\mathcal{X}$ for all $t$. However, the more 
general setting of this section is already contained in this special case.
Indeed, given sets $\mathcal{X}_t,\mathcal{Y}_t$ for every $t$, let
$\mathcal{X}=\bigcup_t(\mathcal{X}_t\cup\mathcal{Y}_t)$ be their 
disjoint union. We may now augment the set $\mathsf{W}$ of winning 
sequences for \PII so that the first player who makes an inadmissible play 
(that is, $x_t\not\in\mathcal{X}_t$ or $y_t\not\in\mathcal{Y}_t$) loses 
instantly. This ensures that a winning strategy for either player will 
only make admissible plays, thus reducing the general case to the
special case. Despite this equivalence, we have chosen the more general 
formulation as this is most natural in applications.
\end{rem}

\begin{rem}
\label{rem:yonly}
Even though we have defined a strategy for \PI as a sequence of functions
$x_t=f_t(x_1,y_1,\ldots,x_{t-1},y_{t-1})$ of the full game position,
it is implicit in this notation that $x_1,\ldots,x_{t-1}$ are also played
according to the previous rounds of the same strategy (
$x_{t-1}=f_{t-1}(x_1,y_1,\ldots,x_{t-2},y_{t-2})$, etc.). Thus we can 
equivalently view a strategy for \PI as a sequence of functions
$x_t=f_t(y_1,\ldots,y_{t-1})$ that depend only on the previous plays of 
\PII. Similarly, a strategy for \PII can be equivalently described by
a sequence of functions $y_t=g_t(x_1,\ldots,x_t)$.
\end{rem}

\subsection{Ordinals}
\label{sec:ordinals}

The aim of this section is to briefly recall the notion of ordinals, which 
play an important role in our theory. An excellent introduction to this 
topic may be found in \cite[Chapter 6]{HJ99}, while the classical 
reference is \cite{Sie65}.

A \bemph{well-ordering} of a set $S$ is a linear ordering $<$ with the 
property that every nonempty subset of $S$ contains a least element. For 
example, if we consider subsets of $\mathbb{R}$ with the usual ordering of 
the reals, then $\{1,\ldots,n\}$ and $\mathbb{N}$ are well-ordered but 
$\mathbb{Z}$ and $[0,1]$ are not. We could however choose nonstandard 
orderings on $\mathbb{Z}$ and $[0,1]$ so they become well-ordered; in 
fact, it is a classical consequence of the axiom of choice that any set 
may be well-ordered.

Two well-ordered sets are said to be \bemph{isomorphic} if there is an 
order-preserving bijection between them. There is a 
canonical way to construct a class of well-ordered sets, called 
\bemph{ordinals}, such that any well-ordered set is isomorphic to exactly 
one ordinal. Ordinals uniquely encode well-ordered sets up to 
isomorphism, in the same way that cardinals uniquely encode sets up to 
bijection. The class of all ordinals is denoted $\Ord$. The specific 
construction of ordinals is not important for our purposes, and we 
therefore discuss ordinals somewhat informally. We refer to 
\cite[Chapter 6]{HJ99} or \cite{Sie65} for a careful treatment.

It is a basic fact that any pair of well-ordered sets is either 
isomorphic, or one is isomorphic to an initial segment of the other. This 
induces a natural ordering on ordinals. For $\alpha,\beta\in\Ord$, we 
write $\alpha<\beta$ if $\alpha$ is isomorphic to an initial segment of 
$\beta$. The defining property of ordinals is that any ordinal $\beta$ is 
isomorphic to the set of ordinals $\{\alpha:\alpha<\beta\}$ that precede 
it. In particular, $<$ is itself a well-ordering; namely, every nonempty 
set of ordinals contains a least element, and every nonempty set 
$S$ of ordinals has a least upper bound, denoted $\sup S$.

\an{Ordinals form a natural set-theoretic extension
of the natural numbers.}
By definition, every ordinal $\beta$
has a successor ordinal $\beta+1$, which is the smallest ordinal
that is larger than $\beta$. We can therefore count ordinals one by 
one. The smallest ordinals are the finite ordinals
$0,1,2,3,4,\ldots$;
we naturally identify each number $k$ with the
well-ordered set $\{0,\ldots,k-1\}$. 
The smallest infinite ordinal is 
denoted $\omega$; it may simply be identified with the family of all 
natural numbers with its usual ordering. With ordinals, however, we can 
keep counting past infinity: one counts 
$0,1,2,\ldots,\omega,\omega+1,\omega+2,\ldots,\omega+\omega,\omega+\omega+1,\ldots$ 
and so on. The smallest uncountable ordinal is denoted $\omega_1$.

An important concept defined by ordinals is the principle of 
\bemph{transfinite recursion}. Informally, it states that if we have a 
recipe that, given sets of ``objects'' $\mathbf{O}_\alpha$ indexed by all 
ordinals $\alpha<\beta$, defines a new set of ``objects'' 
$\mathbf{O}_\beta$, and we are given a base set 
$\{\mathbf{O}_\alpha:\alpha<\alpha_0\}$, then $\mathbf{O}_\beta$ is 
uniquely defined for all $\beta\in\Ord$. As a simple example, let us 
define the meaning of addition of ordinals $\gamma+\beta$. For the base 
case, we define $\gamma+0=\gamma$ and $\gamma+1$ to be the successor of 
$\gamma$. Subsequently, for any $\beta$, we define $\gamma+\beta = 
\sup\{(\gamma+\alpha)+1:\alpha<\beta\}$. Then the principle of transfinite 
recursion ensures that $\gamma+\beta$ is uniquely defined for all ordinals 
$\beta$. One can analogously develop a full ordinal arithmetic 
that defines addition, multiplication, exponentiation, etc.\ of ordinals 
just as for natural numbers \cite[section 6.5]{HJ99}.

\subsection{Well-founded relations and ranks}
\label{sec:relations}

In this section we extend the notion of a well-ordering to more general 
types of orders, and introduce the fundamental notion of rank. Our 
reference here is \cite[Appendix B]{Kec95}.

A \bemph{relation} $\prec$ on a set $S$ is defined by an arbitrary subset 
$R_\prec\subseteq S\times S$ as $x\prec y$ if and only if $(x,y)\in 
R_\prec$. An element $x$ of $(S,\prec)$ is called \bemph{minimal} if there 
does not exist $y\prec x$. The relation is called \bemph{well-founded} if 
every nonempty subset of $S$ has a minimal element. Thus a linear 
ordering is well-founded precisely when it is a well-ordering; but the 
notion of well-foundedness extends to any relation.

To any well-founded relation $\prec$ on $S$ we will associate a function 
$\rho_\prec:S\to\Ord$, called the \bemph{rank function} of $\prec$, that 
is defined by transfinite recursion. We say that $\rho_\prec(x)=0$ if and 
only if $x$ is minimal in $S$, and define for all other $x$
$$
	\rho_\prec(x) = \sup\{\rho_\prec(y)+1:y\prec x\}.
$$
The rank $\rho_\prec(x)$ quantifies how far $x$ is 
from being minimal.

\begin{rem}
\label{rem:wf}
Observe that every element $x\in S$ indeed has a well-defined rank 
(that is, it appears at some stage in the transfinite recursion). Indeed, 
the transfinite recursion recipe defines $\rho_\prec(x)$ as soon as 
$\rho_\prec(y)$ has been defined for all $y\prec x$. If 
$\rho_\prec(x_1)$ is undefined, then there must exist $x_2\prec x_1$ so 
that $\rho_\prec(x_2)$ is undefined. Repeating this process constructs an 
infinite decreasing chain of elements $x_i\in S$. But this contradicts the 
assumption that $\prec$ is well-founded, as an infinite decreasing chain 
cannot contain a minimal element.
\end{rem}

Let $(S,\prec)$ and $(S',\prec')$ be sets endowed with
relations. A map $f:S\to S'$ is called \bemph{order-preserving} if
$x\prec y$ implies $f(x)\prec' f(y)$. It is a basic fact that ranks
are monotone under order-preserving maps: 
if $\prec'$ is well-founded and $f:S\to S'$ is order-preserving,
then $\prec$ is well-founded and
$\rho_\prec(x)\le\rho_{\prec'}(f(x))$ for all $x\in S$ (this follows 
readily by induction on the value of $\rho_\prec(x)$).

Like ordinals, the rank of a well-founded relation is an intuitive object 
once one understands its meaning. This is best illustrated by some simple 
examples. As explained in Remark \ref{rem:wf}, a well-founded relation 
does not admit an infinite decreasing chain $x_1\succ x_2 \succ x_3 \succ 
\cdots$, but it might admit finite decreasing chains of arbitrary length. 
As the following examples illustrate, the rank $\rho_\prec(x)$ quantifies 
how long we can keep growing a decreasing chain starting from $x$.

\begin{examp}
\label{ex:finrank}
Suppose that $\rho_\prec(x)=k$ for some finite ordinal $0<k<\omega$.
By the definition of rank, $\rho_\prec(y)<k$ for all $y\prec x$, while 
there exists $x_1\prec x$ such that $\rho_\prec(x_1)=k-1$. It follows 
readily that $\rho_\prec(x)=k$ if and only if the longest decreasing chain 
that can be grown starting from $x$ has length \an{$k+1$.} 
\end{examp}

\begin{examp}
Suppose that $\rho_\prec(x)=\omega$. By the definition of rank, 
$\rho_\prec(y)<\omega$ is an arbitrarily large finite ordinal for 
$y\prec x$. We can grow an arbitrarily long decreasing chain starting 
from $x$, but once we select its first element $x_1\prec x$ 
we can grow at most
\an{finitely many elements as in the} previous example. In other words, the maximal length of the chain is 
decided by the choice of its first element~$x_1$.
\end{examp}

\begin{examp}
Suppose that $\rho_\prec(x)=\omega+k$ for some $k<\omega$. Then we can 
choose $x\succ x_1 \succ \cdots \succ x_k$ so that 
$\rho_\prec(x_k)=\omega$. We can still grow arbitrarily long 
decreasing chains after selecting the first $k$ elements judiciously, but 
the length of the chain is decided at the latest after we 
selected $x_{k+1}$.
\end{examp}

\begin{examp}
Suppose that $\rho_\prec(x)=\omega+\omega$. Then in the first step, we can 
choose for any $k<\omega$ an element $x_1\prec x$ so that 
$\rho_\prec(x_1)=\omega+k$. From that point onward, we proceed as in the 
previous example. 
The maximal length of a decreasing chain starting from $x$ is determined by 
two decisions: the choice of $x_1$ decides a number $k$, so that the 
maximal length of the chain is decided at the latest after we selected
$x_{k+2}$.
\end{examp}

These examples can be further extended.
For example, 
$\rho_\prec(x)=\omega\cdot k+k'$ means that after $k'$ initial steps we 
can make a sequence of $k$ decisions, each decision being how many steps 
we can grow the chain before the next decision must be made. Similarly, 
$\rho_\prec(x)=\omega^2$ means we can decide on arbitrarily large numbers 
$k,k'<\omega$ in the first step, and then proceed as for $\omega\cdot k+k'$;
etc.\footnote{%
	Our discussion of the intuitive meaning of the rank of a 
	well-founded relation is based on the lively discussion in 
	\cite{EH14} of game values in infinite chess.
}

\subsection{Polish spaces and analytic sets}
\label{sec:polish}

We finally review the basic notions of measures and probabilities on 
Polish spaces. We refer to \cite[Chapter 8]{Coh80} for a self-contained
introduction, and to \cite{Kec95} for a comprehensive treatment.

A \bemph{Polish space} is a separable topological space that can be 
metrized by a complete metric. Many spaces encountered in practice are 
Polish, including $\mathbb{R}^n$, any compact metric space, any separable 
Banach space, etc. Moreover, any finite or countable product or disjoint 
union of Polish spaces is again Polish.

Let $\mathcal{X},\mathcal{Y}$ be Polish spaces, and let 
$f:\mathcal{X}\to\mathcal{Y}$ be a continuous function. It is shown in any 
introductory text on probability that $f$ is Borel measurable, that is, 
$f^{-1}(B)$ is a Borel subset of $\mathcal{X}$ for any Borel subset $B$ of 
$\mathcal{Y}$. However, the forward image $f(\mathcal{X})$ is not 
necessarily Borel-measurable in $\mathcal{Y}$. A subset 
$B\subseteq\mathcal{Y}$ of a Polish space is called \bemph{analytic} if it 
is the image of some Polish space under a continuous map. It turns out 
that every Borel set is analytic, but not every analytic set is Borel. The 
family of analytic sets is closed under countable unions and 
intersections, but not under complements. The complement of an analytic 
set is called \bemph{coanalytic}. A set is Borel if and only if it is both 
analytic and coanalytic.

Although analytic sets may not be Borel-measurable, such sets are just as good as Borel sets for the purposes of 
probability theory.
Let $\mathscr{F}$ be the Borel $\sigma$-field on a Polish space 
$\mathcal{X}$. For any probability measure on $\mu$, denote by 
$\mathscr{F}_\mu$ the completion of $\mathscr{F}$ with respect to $\mu$, 
that is, the collection of all subsets of $\mathcal{X}$ that differ from a 
Borel set at most on a set of zero probability. A set 
$B\subseteq\mathcal{X}$ is called \bemph{universally measurable} if 
$B\in\mathscr{F}_\mu$ for every probability measure $\mu$. Similarly, a 
function $f:\mathcal{X}\to\mathcal{Y}$ is called universally measurable if 
$f^{-1}(B)$ is universally measurable for any universally measurable set 
$B$. It is clear from these definitions that universally measurable sets 
and functions on Polish spaces are indistinguishable from Borel sets from 
a probabilistic perspective.

The following fundamental fact is known as the
capacitability theorem.

\begin{thm}
\label{thm:choquet}
Every analytic (or coanalytic) set is universally measurable.
\end{thm}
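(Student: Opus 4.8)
The statement is the classical Choquet capacitability theorem, in the Lusin--Sierpi\'nski form asserting measurability of analytic sets, so the plan is to reproduce the standard argument. First I would make two routine reductions: since each completion $\mathscr{F}_\mu$ is closed under complements, it is enough to handle analytic sets; and since ``universally measurable'' means measurable for $\mathscr{F}_\mu$ for \emph{every} Borel probability measure $\mu$, I would fix one such $\mu$ on a Polish space and aim to sandwich an arbitrary analytic $A$ between two Borel sets of equal $\mu$-measure. The key structural input is the representation of $A$ by the Souslin operation on a regular scheme of closed sets: there are closed sets $F_s$ indexed by finite sequences $s\in\bigcup_{n\ge 0}\nats^n$, with $F_s\supseteq F_{s'}$ whenever $s$ is an initial segment of $s'$, such that $A=\bigcup_{\sigma\in\nats^\nats}\bigcap_{n\ge 0}F_{\sigma|_n}$; this follows directly from the characterization of analytic sets as projections of closed subsets of $Z\times\nats^\nats$ for a Polish space $Z$, together with the definition of the product topology on $\nats^\nats$.

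The substantive step would then be to prove that $\mathscr{F}_\mu$ is closed under the Souslin operation, which applied to $\{F_s\}$ yields the result. I would set $A_s:=\bigcup\{\bigcap_n F_{\sigma|_n}:\sigma|_{|s|}=s\}$, so that $A=A_\varnothing$, each $A_s\subseteq F_s$, and $A_s=\bigcup_{k\in\nats}A_{(s,k)}$, where $(s,k)$ denotes $s$ extended by $k$. Using that $\mu$ is finite, every set has a Borel \emph{hull} (a Borel superset of the same outer measure; and intersecting a hull with any Borel superset of the original set is again a hull). I would build Borel hulls $A_s^*$ \emph{top-down}: take $A_\varnothing^*$ a hull of $A$, and, given $A_s^*$, take $A_{(s,k)}^*$ to be a hull of $A_{(s,k)}$ inside $A_s^*\cap F_{(s,k)}$. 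This forces $A_{(s,k)}^*\subseteq A_s^*\subseteq F_s$ and, since $A_s\subseteq\bigcup_k A_{(s,k)}^*\subseteq A_s^*$, also $\mu(A_s^*)=\mu(\bigcup_k A_{(s,k)}^*)$. Fixing $\varepsilon>0$, continuity of $\mu$ from below then lets me choose, recursively in $n$, integers $N(s)$ such that the quantities $\mu(A_s^*\setminus\bigcup_{k\le N(s)}A_{(s,k)}^*)$ sum to less than $\varepsilon$ over the finitely branching tree $\bigcup_n D_n$ defined by $D_0=\{\varnothing\}$ and $D_{n+1}=\{(s,k):s\in D_n,\ k\le N(s)\}$. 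Finally I would put $C:=\bigcap_{n}\bigcup_{s\in D_n}A_s^*$, a Borel set.

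Two verifications then finish the argument. That $C\subseteq A$: for $x\in C$, the set $\{s\in\bigcup_n D_n:x\in A_s^*\}$ is a subtree (because $A_{(s,k)}^*\subseteq A_s^*$) that meets every level $D_n$ and branches finitely at each node, so by K\"onig's lemma it has an infinite branch $\sigma$ with $x\in A_{\sigma|_n}^*\subseteq F_{\sigma|_n}$ for all $n$, hence $x\in\bigcap_n F_{\sigma|_n}\subseteq A$. That $\mu(A_\varnothing^*\setminus C)<\varepsilon$: writing $U_n:=\bigcup_{s\in D_n}A_s^*$ one has $U_0=A_\varnothing^*$, $U_{n+1}\subseteq U_n$, $C=\bigcap_n U_n$, and $\mu(U_n\setminus U_{n+1})\le\sum_{s\in D_n}\mu(A_s^*\setminus\bigcup_{k\le N(s)}A_{(s,k)}^*)$, which sums to less than $\varepsilon$. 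Letting $\varepsilon=1/m\to 0$ and taking the union of the resulting Borel sets $C\subseteq A$ produces a Borel set between $A$ and the Borel hull $A_\varnothing^*\supseteq A$ with null difference, i.e.\ $A\in\mathscr{F}_\mu$. The analytic case then gives the coanalytic case by complementation, and the arbitrariness of $\mu$ gives universal measurability.

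I expect the main obstacle to be precisely the construction in the second paragraph: the Borel hulls must be chosen coherently enough (top-down, so that the inclusions $A_{(s,k)}^*\subseteq A_s^*$ hold automatically), and the a priori countable branching of the Souslin scheme must be truncated to a finitely branching tree in such a way that K\"onig's lemma still recovers genuine branches of the scheme, while the measure discarded by the truncation remains summable. Everything else is routine bookkeeping with outer measures and with continuity of $\mu$ from below.
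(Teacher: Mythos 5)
The paper does not actually prove this statement: it is quoted as the classical ``capacitability theorem'' with pointers to the standard references (Cohn, Kechris), so there is no in-paper argument to compare against. Your proof is a correct and essentially complete rendition of the classical Lusin--Sierpi\'nski/Saks argument that the completion $\mathscr{F}_\mu$ of the Borel $\sigma$-field with respect to a finite measure is closed under the Souslin operation: the top-down choice of Borel hulls $A_s^*$ (so that $A_{(s,k)}^*\subseteq A_s^*\cap F_{(s,k)}$), the identity $\mu(A_s^*)=\mu\bigl(\bigcup_k A_{(s,k)}^*\bigr)$, the truncation to a finitely branching tree with total discarded measure below $\varepsilon$, and the K\"onig's lemma step showing $C\subseteq A$ all check out, as does the final passage $\varepsilon\to 0$. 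This route is different in flavor from the one the paper's naming alludes to, namely Choquet's theorem that analytic sets are capacitable together with the fact that the outer measure of a finite Borel measure is a capacity; your argument is more elementary and self-contained, at the price of being specific to measures rather than general capacities, which is all the paper needs. The only step you gloss over is the representation $A=\bigcup_{\sigma}\bigcap_n F_{\sigma|_n}$ with \emph{closed} $F_s$: the inclusion of $A$ into the right-hand side is immediate from writing $A$ as a projection of a closed subset of $\mathcal{X}\times\nats^{\nats}$, but the reverse inclusion requires either the vanishing-diameter refinement of the scheme (using completeness of the metric) or some equivalent device; since this is a standard cited fact about analytic sets, it is an acceptable black box, but ``follows directly from the definition of the product topology'' oversells it slightly.
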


The importance of analytic sets in probability 
theory stems from the fact that they make it possible to establish 
measurability of certain \emph{uncountable} unions of measurable sets. 
Indeed, let $\mathcal{X}$ and $\mathcal{Y}$ be Polish spaces, and let 
$A\subseteq\mathcal{X}\times\mathcal{Y}$ be an analytic set. 
The set
$$
	B := \bigcup_{y\in\mathcal{Y}}\{x\in\mathcal{X}:(x,y)\in A\}
$$
can be written as $B=f(A)$ for the continuous function $f(x,y):=x$. The
set $B\subseteq\mathcal{X}$ is also analytic, and hence universally 
measurable.

We conclude this section by stating a deep fact about well-founded 
relations on Polish spaces. Let $\mathcal{X}$ be a Polish space and let 
$\prec$ be a well-founded relation on $\mathcal{X}$. The relation $\prec$ is 
called analytic if $R_\prec\subseteq\mathcal{X}\times\mathcal{X}$ is an 
analytic set.

\begin{thm}
\label{thm:kunen}
Let $\prec$ be an analytic well-founded relation on a Polish space
$\mathcal{X}$. 
Its rank function \an{satisfies}
$\sup_{x\in\mathcal{X}}\rho_{\prec}(x)<\omega_1$.
\end{thm}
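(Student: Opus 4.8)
The final statement is the classical Kunen--Martin theorem, and the plan is to prove it by the standard device of \emph{dominating} the relation $\prec$ by a well-founded tree that lives on a countable set, for which a countable bound on the rank comes essentially for free. Throughout, write $\sigma|_m$ for the length-$m$ initial segment of a sequence $\sigma$. \emph{Step 1 (reduction to $\mathcal{X}=\mathbb{N}^{\mathbb{N}}$).} We may assume $\mathcal{X}\neq\varnothing$. Fix a continuous surjection $g:\mathbb{N}^{\mathbb{N}}\to\mathcal{X}$ (every nonempty Polish space is a continuous image of Baire space, cf.\ \cite{Kec95}) and define $\alpha\prec'\beta$ iff $g(\alpha)\prec g(\beta)$. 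Then $\prec'$ is analytic, being $(g\times g)^{-1}(R_\prec)$, and well-founded, since an infinite $\prec'$-descending chain maps under $g$ to one for $\prec$. Moreover, for every $x\in\mathcal{X}$ and every $\alpha$ with $g(\alpha)=x$ one has $\rho_\prec(x)\le\rho_{\prec'}(\alpha)$: a straightforward transfinite induction on $\rho_\prec(x)$, since for $y\prec x$ one may pick $\beta$ with $g(\beta)=y$, whence $\beta\prec'\alpha$ and the inductive bound $\rho_\prec(y)\le\rho_{\prec'}(\beta)$ gives $\rho_\prec(y)+1\le\rho_{\prec'}(\alpha)$. Hence it suffices to bound $\sup_\alpha\rho_{\prec'}(\alpha)$. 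Finally, since an analytic subset of $\mathbb{N}^{\mathbb{N}}\times\mathbb{N}^{\mathbb{N}}$ is the projection of a closed set, we may write $\alpha\prec'\beta\iff\exists\gamma\in\mathbb{N}^{\mathbb{N}}:(\alpha,\beta,\gamma)\in[T]$, where $T$ is a tree on $\mathbb{N}\times\mathbb{N}\times\mathbb{N}$ and $[T]$ is its set of infinite branches.

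\emph{Step 2 (the dominating tree).} Build a tree $W$, on $\mathbb{N}$ via a fixed coding, whose node at level $\ell$ encodes ``diagonally'' a finite piece of a prospective witnessed descending chain: it records $\alpha^i|_{\ell-i}$ for $0\le i<\ell$ and $\gamma^i|_{\ell-1-i}$ for $0\le i<\ell-1$, subject to the requirement that every truncation $(\alpha^{i+1}|_j,\alpha^i|_j,\gamma^i|_j)$ occurring in the record lies in $T$. Two observations. First, $W$ is a tree on a countable set, so every node has at most countably many immediate successors. Second, $[W]=\varnothing$: an infinite branch of $W$ assembles actual sequences $\alpha^0,\alpha^1,\dots$ and $\gamma^0,\gamma^1,\dots$ in $\mathbb{N}^{\mathbb{N}}$ with $(\alpha^{i+1},\alpha^i,\gamma^i)\in[T]$ for all $i$, i.e.\ an infinite $\prec'$-descending chain, contradicting well-foundedness. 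Thus $W$ is well-founded, and $\rho(W)<\omega_1$ by the usual argument: if $\{u\in W:\rho_W(u)\ge\omega_1\}$ were nonempty, a $W$-minimal such $u$ would have all immediate successors of countable rank, yet $\rho_W(u)$ is a countable supremum of those ranks plus one, hence countable --- a contradiction.

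\emph{Step 3 (the comparison).} Call a node $u\in W$ at level $\ell$ \emph{genuine with tail $\alpha$} if its record is the level-$\ell$ truncation of an actual chain $\bar\alpha^0\succ'\cdots\succ'\bar\alpha^{\ell-1}=\alpha$ together with actual witnesses $\bar\gamma^i$ (so $(\bar\alpha^{i+1},\bar\alpha^i,\bar\gamma^i)\in[T]$). We show by transfinite induction on $\rho_{\prec'}(\alpha)$ that $\rho_{\prec'}(\alpha)\le\rho_W(u)$ for every such $u$. If $\alpha$ is $\prec'$-minimal this is trivial. Otherwise, given $\beta\prec'\alpha$, fix $\bar\gamma$ with $(\beta,\alpha,\bar\gamma)\in[T]$; appending to $u$ the appropriate coordinates of $\beta$ and $\bar\gamma$ produces a node $u'\in W$ (the new $T$-conditions are truncations of the branch $(\beta,\alpha,\bar\gamma)\in[T]$) with $u'\supsetneq u$ that is genuine with tail $\beta$. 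The inductive hypothesis gives $\rho_{\prec'}(\beta)\le\rho_W(u')$, and since $u'$ properly extends $u$ we get $\rho_W(u)\ge\rho_W(u')+1\ge\rho_{\prec'}(\beta)+1$. Taking the supremum over $\beta\prec'\alpha$ yields $\rho_W(u)\ge\rho_{\prec'}(\alpha)$. Now, for every $\alpha$, the level-$1$ node recording $\alpha|_1$ is genuine with tail $\alpha$ (a one-term chain needs no witnesses and no $T$-conditions), so $\rho_{\prec'}(\alpha)\le\rho_W(\langle\rangle)<\omega_1$. Combining Steps 1--3 gives $\sup_{x\in\mathcal{X}}\rho_\prec(x)\le\rho_W(\langle\rangle)<\omega_1$.

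The reductions of Step 1 and the ``countable supremum of countable ordinals'' bound are routine. The one genuinely delicate point is getting the diagonal bookkeeping in Step 2 exactly right, so that (a) infinite branches of $W$ really do reconstruct infinite witnessed $\prec'$-descending chains, and (b) the one-step extension in Step 3 truly lands back inside $W$; that is where I would concentrate the effort.
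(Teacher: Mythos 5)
Your proof is correct. Note that the paper does not actually prove Theorem~\ref{thm:kunen}: it is invoked as a known result, with the proof deferred to \cite[Theorem 31.1]{Kec95} and \cite{Del77}. What you have written is essentially that standard argument --- pull the relation back to Baire space, represent it as the projection of the branches of a tree $T$ on $\mathbb{N}^3$, and majorize its rank by the rank of the well-founded ``tree of witnessed finite descending chains'' $W$ on a countable alphabet, which is automatically below $\omega_1$. All three steps check out, including the direction of the rank inequality in Step~1 (which holds because the pullback relation under a surjection is full, even though the generic order-preserving-map lemma would only give the reverse bound) and the one-step extension in Step~3 (the new $T$-membership conditions are initial segments of the branch $(\beta,\alpha,\bar\gamma)\in[T]$, hence lie in $T$). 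The only place requiring care is the diagonal bookkeeping you yourself flag, and your description of it is the right one.
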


This result is known as the Kunen-Martin theorem; see \cite[Theorem 
31.1]{Kec95} or \cite{Del77} for a self-contained proof and historical comments.

\section{Measurability of Gale-Stewart strategies}
\label{app:meas}

The fundamental theorem of Gale-Stewart games, Theorem \ref{thm:gs}, 
states that either player \PI or \PII must have a winning strategy in an 
infinite game when the set of winning sequences $\mathsf{W}$ for \PII is 
finitely decidable. This existential result provides no information, 
however, about the complexity of the winning strategies. In particular, it 
is completely unclear whether winning strategies can be chosen to be 
measurable.
{
As we use winning strategies
to design algorithms that operate on random data,
non-measurable strategies are may be potentially a serious problem for our purposes.
Indeed,} lack of measurability can render probabilistic reasoning completely 
meaningless (cf.\ Appendix \ref{sec:nonmeas}).

Almost nothing appears to be known in the literature regarding the 
measurability of Gale-Stewart strategies. The aim of this appendix is to 
prove a general measurability theorem that captures all the games that 
appear in this paper. We adopt the general setting and notations of 
Appendix \ref{sec:gs}.

\begin{thm}
\label{thm:meas}
Let $\{\mathcal{X}_t\}_{t\geq 1}$ be Polish spaces and $\{\mathcal{Y}_t\}_{t\geq 1}$ be 
countable sets. Consider a Gale-Stewart game whose set 
$\mathsf{W}\subseteq\prod_{t\ge 1}
(\mathcal{X}_t\times\mathcal{Y}_t)$ of winning sequences for \PII is 
finitely decidable and coanalytic. Then there is
a universally measurable winning strategy.
\end{thm}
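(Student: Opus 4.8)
The plan is to construct an explicit winning strategy through an ordinal-valued \emph{game value}, and then verify its universal measurability using descriptive set theory. Since $\mathsf{W}$ is finitely decidable it is the union of the cylinders $[p]$ over finite positions $p$ with $[p]\subseteq\mathsf{W}$; call such $p$ \emph{terminal} and let $P_0$ be the set of terminal positions of all (even) lengths. Because $\mathsf{W}^c$ is analytic, $\{p:[p]\cap\mathsf{W}^c\ne\varnothing\}$ is the projection of $\mathsf{W}^c$ onto finitely many coordinates, hence analytic, so $P_0$ is coanalytic; it is clearly closed under extension. On a position $p=(x_1,y_1,\ldots,x_t,y_t)$ (at which it is \PI's turn) define $\val(p)\in\Ord\cup\{\absinfty\}$ by transfinite recursion: $\val(p)=0$ if $p\in P_0$, and otherwise $\val(p)=\sup_{x\in\mathcal{X}_{t+1}}\bigl(\min_{y\in\mathcal{Y}_{t+1}}\val(p\frown(x,y))+1\bigr)$, with $\val(p)=\absinfty$ if the recursion never assigns a value to $p$. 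A least fixed-point argument (re-deriving Theorem~\ref{thm:gs} in this setting) shows $\val(p)<\absinfty$ exactly when \PII has a winning strategy from $p$: if $\val(p)=\absinfty$ then \PI can always move to a non-terminal position of value $\absinfty$ and thus wins; the converse follows from the strategy below.

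\textbf{The strategy.} If $\val(\varnothing)=\absinfty$ then by Theorem~\ref{thm:gs} \PI has a winning strategy, which is automatically measurable since (Remark~\ref{rem:yonly}) it is a function of \PII's moves, drawn from countable sets; so assume $\val(\varnothing)<\absinfty$. Fix enumerations $\mathcal{Y}_t=\{y_t^{(1)},y_t^{(2)},\ldots\}$. At a \PII-position $q=p\frown x$, \PII plays: $y_t^{(1)}$ if $p\in P_0$ or $\val(p)=\absinfty$; otherwise the $y_t^{(j)}$ with least index $j$ minimising $\val(q\frown y_t^{(j)})$. Starting from $\varnothing$, the play never enters the two degenerate cases and $\val$ strictly decreases each round (by definition of $\val$ as a supremum, some response $y$ satisfies $\val(q\frown y)<\val(p)$); since the ordinals are well-founded, $\val$ reaches $0$ after finitely many rounds, i.e.\ a terminal position is reached and \PII has won. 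In particular every \PI-position visited along this play has value $\le\val(\varnothing)$.

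\textbf{Measurability, via Kunen--Martin.} It remains to show each map $(x_1,\ldots,x_t)\mapsto y_t$ (with $y_1,\ldots,y_{t-1}$ the earlier outputs) is universally measurable. By transfinite induction on the ordinal $\alpha$, the sets $V_\alpha:=\{p:\val(p)\le\alpha\}$ inside the Polish space $\prod_{s\le t}(\mathcal{X}_s\times\mathcal{Y}_s)$ of length-$2t$ positions are coanalytic: $V_0=P_0$, and for $\alpha\ge1$ one has $V_\alpha^c=P_0^c\cap\operatorname{proj}_{x}\bigl(\bigcap_{y\in\mathcal{Y}_{t+1}}\bigcap_{\beta<\alpha}\{(p,x):p\frown(x,y)\in V_\beta^c\}\bigr)$, which is a countable intersection of analytic sets followed by a projection, hence analytic (the intersection is countable since $\mathcal{Y}_{t+1}$ is countable and $\alpha<\omega_1$). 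The crucial point is that $\gamma_0:=\val(\varnothing)$ is a \emph{countable} ordinal. To see this, consider the Polish space of finite partial \PI-strategies $\sigma$ (a depth-$d$ strategy assigns a point of $\mathcal{X}_{k+1}$ to each of the countably many \PII-response-histories of length $k<d$) that are \emph{valid}, meaning that every one of the countably many induced length-$2d$ positions lies outside $P_0$; validity is analytic, being a countable intersection of preimages of the analytic set $P_0^c$ under continuous coordinate maps. Setting $\sigma'\prec\sigma$ when $\sigma'$ is a valid depth-$(d+1)$ extension of $\sigma$, the relation $\prec$ is analytic, and it is well-founded precisely because \PI has no infinite — hence no winning — strategy. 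A bookkeeping argument bounds $\val(\varnothing)$ by (a fixed ordinal function of) the rank of $\prec$ at the empty strategy, and Theorem~\ref{thm:kunen} bounds that rank below $\omega_1$; hence $\gamma_0<\omega_1$.

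\textbf{Conclusion.} Only the countably many coanalytic sets $V_\beta$ with $\beta\le\gamma_0$ now matter. For a length-$2(t{+}1)$ position $a$, the sets $\{\val(a)=\beta\}=V_\beta\cap\bigcap_{\alpha<\beta}V_\alpha^c$ and $\{\val(a)\notin V_{<\beta},\ \val(a)\le\gamma_0\}=\bigcap_{\alpha<\beta}V_\alpha^c\cap\bigcup_{\beta\le\alpha\le\gamma_0}V_\alpha$ are countable Boolean combinations of coanalytic sets, hence universally measurable by Theorem~\ref{thm:choquet}. Along the greedy play the minimum $\min_j\val(q\frown y_t^{(j)})$ is always $<\val(p)\le\gamma_0$, so the event ``this minimum equals $\beta$ and is first attained at index $j$'' is, for each $\beta\le\gamma_0$ and each $j$, a universally measurable condition on $(x_1,\ldots,x_t)$ — using also that $\{p\in P_0\}$ and $\{\val(p)<\absinfty\}$ are coanalytic, that $y_1,\ldots,y_{t-1}$ are universally measurable by the inductive hypothesis, and that preimages of (universally) measurable sets under universally measurable maps are universally measurable. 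On the two degenerate cases the output is the constant $y_t^{(1)}$. Taking a countable union over $\beta$ and $j$ shows each $g_t$ is universally measurable, which completes the proof. The main obstacle is the Kunen--Martin step: isolating an analytic well-founded relation whose rank genuinely dominates the game value. Without the resulting countable bound on $\val(\varnothing)$, the strategy would be defined by a supremum of $\omega_1$-many coanalytic conditions, which need not be measurable.
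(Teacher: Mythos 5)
Your proposal is correct and follows essentially the same route as the paper: an ordinal game value, a greedy value-decreasing strategy for \PII, analyticity of the value's level sets by transfinite induction on the sup--min recursion, and the Kunen--Martin theorem (Theorem~\ref{thm:kunen}) applied to the well-founded analytic extension relation on finite valid \PI-strategies (the paper's active decision trees) to force $\val(\varnothing)<\omega_1$. The one step you defer to ``a bookkeeping argument''---that the recursively defined value is dominated by the rank of that relation---is in fact the technical heart of the paper's proof (Proposition~\ref{prop:valdec} and Lemmas~\ref{lem:deptmap} and~\ref{lem:gamerank}, which construct order-preserving gluing maps between tree sets); the paper avoids needing it \emph{before} Kunen--Martin by defining the value directly as the rank $\rho_{\prec_{\mathbf{v}}}(\varnothing)$ and only afterwards proving it satisfies your recursion.
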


A characteristic feature of the games in this paper is the asymmetry 
between \PI and \PII. Player~\PI plays elements of an arbitrary Polish 
space, while \PII can only play elements of a countable set. Any 
strategy for \PI is automatically measurable, as it may be viewed as a 
function of the previous plays of \PII only (cf.\ Remark \ref{rem:yonly}). 
The nontrivial content of Theorem \ref{thm:meas} is that if \PII has a 
winning strategy, such a strategy may be chosen to be universally 
measurable.

To prove Theorem \ref{thm:meas}, we construct an explicit winning 
strategy of the following form. To every sequence of plays 
$x_1,y_1,\ldots,x_t,y_t$ for which \PII has not yet won, we associate 
an ordinal value with the following property: regardless of the next play 
$x_{t+1}$ of \PI, there exists $y_{t+1}$ that decreases the value. 
\an{Because there are no infinite decreasing chains of ordinals,}
\PII eventually wins \an{with this strategy.} 
To show that this strategy is measurable, 
we use the coanalyticity assumption of Theorem \ref{thm:meas} in two 
different ways. On the one hand, we show that the set of game positions of countable value is measurable. On the other hand, the Kunen-Martin 
theorem implies that only countable values can appear.

\begin{rem}
The construction of winning strategies for Gale-Stewart games using
game values is not new; cf.\ \cite[Section 3.4]{Hod93} or \cite{EH14}.
We, however, define the game value in a different manner than
is customary in the literature. While the proof ultimately shows that
the two definitions are essentially equivalent, our definition 
enables us to directly apply the Kunen-Martin theorem, and
is conceptually much closer to the classical Littlestone dimension of 
concept classes (cf.\ Section~\ref{sec:old}).
\end{rem}

\subsection{Preliminaries}

In the remainder of this appendix we assume that the assumptions of 
Theorem \ref{thm:meas} are in force, and that \PII has a winning strategy.

Let us begin by introducing some basic notions. A \bemph{position} of the 
game is a finite sequence of plays $x_1,y_1,\ldots,x_n,y_n$ for some $0\le 
n<\infty$ (the empty sequence $\varnothing$ denotes the initial 
position of the game). We denote the set of positions of length $n$ by
$$
	\mathsf{P}_n:=\prod_{t=1}^n(\mathcal{X}_t\times\mathcal{Y}_t),
$$
(where $\mathsf{P}_0:=\{\varnothing\}$),
and by $\mathsf{P}:=\bigcup_{0\le n<\infty}
\mathsf{P}_n$ the set of all positions. Note that, by our assumptions,~$\mathsf{P}_n$ and $\mathsf{P}$ are Polish spaces.

An \bemph{active} position is a sequence of plays
$x_1,y_1,\ldots,x_n,y_n$ after which \PII has not yet won.
Namely, 
there exist $x_{n+1},y_{n+1},x_{n+2},y_{n+2},\ldots$ so that
$(x_1,y_1,x_2,y_2,\ldots)\not\in\mathsf{W}$. The set of active positions 
of length $n$ can be written as
$$
	\mathsf{A}_n:=
	\bigcup_{\mathbf{w}\in\prod_{t=n+1}^\infty(\mathcal{X}_t\times\mathcal{Y}_t)}
	\{\mathbf{v}\in\mathsf{P}_n:
	(\mathbf{v},\mathbf{w})\in\mathsf{W}^c\} .
$$
Because $\mathsf{W}$ is
coanalytic, $\mathsf{A}_n$ is an analytic subset of $\mathsf{P}_n$.
We denote by $\mathsf{A}:=\bigcup_{0\le n<\infty}\mathsf{A}_n$ the set 
of all active positions. 

\begin{rem}
The notion of active positions is fundamental to the definition of 
Gale-Stewart games. The fact that $\mathsf{W}$ is finitely decidable is nothing 
other than the property
$\mathsf{W}=\{(x_1,y_1,x_2,y_2,\ldots):(x_1,y_1,\ldots,x_n,y_n)\not\in
\mathsf{A}_n\mbox{ for some }0\le n<\infty\}$.
\end{rem}

We now introduce the fundamental notion of active trees. By assumption, 
there is no winning strategy for \PI. That is, there is no strategy for 
\PI that ensures the game remains active forever. However, given any 
finite number $n<\infty$, there could exist strategies for \PI that force 
the game to remain active for at least $n$ rounds regardless of what \PII 
plays. Such a strategy is naturally defined by specifying a decision tree 
of depth $n$, that is, a rooted tree such that each vertex at depth~$t$ is 
labelled by a point in $\mathcal{X}_t$, and the edges to its children 
are labelled by $\mathcal{Y}_t$. Such a tree can be 
described by specifying a set of points $\{x_{\mathbf{y}}
\in\mathcal{X}_{t+1}:\mathbf{y}\in\prod_{s=1}^t\mathcal{Y}_s,0\le t<n\}$.
This tree keeps the game active for $n$ rounds 
as long as
$(x_\varnothing,y_1,x_{y_1},y_2,\ldots,x_{y_1,\ldots,y_{n-1}},y_n)\in
\mathsf{A}_n$ for all possible plays~$y_1,\ldots,y_n$ of \PII.
This notion is precisely the analogue of a Littlestone tree 
(Definition 
\ref{defn:litt}) in the context of Gale-Stewart games. 

We need to consider strategies that 
keep the game active for a finite number of rounds starting 
from an arbitrary position (in the above discussion we assumed the starting 
position $\varnothing$). 

\begin{defn}
Given a position $\mathbf{v}\in\mathsf{P}_k$ of length $k$:
\begin{enumerate}[$\bullet$]
\itemsep\abovedisplayskip
\item
A \bemph{decision tree} of depth $n$ with starting position $\mathbf{v}$ 
is a collection of points
$$
	\mathbf{t}=\bigg\{x_{\mathbf{y}}\in\mathcal{X}_{k+t+1}:
	\mathbf{y}\in\prod_{s=k+1}^{k+t}\mathcal{Y}_s,0\le t<n\bigg\}.
$$
By convention, we call $\mathbf{t}=\varnothing$ a decision tree of depth 
$0$.
\item $\mathbf{t}$ is called \bemph{active} if
$(\mathbf{v},x_\varnothing,y_{k+1},x_{y_{k+1}},y_{k+2},\ldots,
x_{y_{k+1},\ldots,y_{k+n-1}},y_{k+n})\in\mathsf{A}_{k+n}$ for all choices
of $(y_{k+1},\ldots,y_{k+n})\in\prod_{t=k+1}^{k+n}\mathcal{Y}_t$.
\item We denote by $\mathsf{T}_{\mathbf{v}}$ the set of all decision trees 
with starting position $\mathbf{v}$ (and any depth $0\le n<\infty$), 
and by $\mathsf{T}^\mathsf{A}_{\mathbf{v}}\subseteq\mathsf{T}_{\mathbf{v}}$ the
set of all active trees.
\end{enumerate}
\end{defn}

As the sets $\mathcal{Y}_t$ are assumed to be countable, any decision 
tree is described by a countable collection of points. Thus 
$\mathsf{T}_{\mathbf{v}}$ is a Polish space (it is a countable disjoint 
union of countable products of the Polish spaces $\mathcal{X}_t$). 
Moreover, as $\mathsf{A}_{k+n}$ is analytic, it follows readily that 
$\mathsf{T}^\mathsf{A}_{\mathbf{v}}$ is analytic (it is a countable 
disjoint union of countable intersections of analytic sets). 
The key reason why Theorem~\ref{thm:meas} is restricted to the 
setting where each $\mathcal{Y}_t$ is countable is to ensure these 
properties hold.

\subsection{Game values}
\label{sec:gameval}

We now assign to every position $\mathbf{v}\in\mathsf{P}$ a value 
$\val(\mathbf{v})$. \an{Intuitively,} the value measures how long we can keep growing an active 
tree starting from $\mathsf{v}$. It will be convenient to adjoin to the 
ordinals two elements $-1$ and $\absinfty$
that are smaller and larger 
than every ordinal, respectively. We write 
$\Ord^*:=\Ord\cup\{-1,\absinfty\}$, and proceed to define the value function 
$\val:\mathsf{P}\to\Ord^*$.

By definition, $\mathsf{T}^\mathsf{A}_\mathbf{v}$ is 
empty if and only if the position $\mathbf{v}\not\in\mathsf{A}$ is 
inactive, that is, if \PII has already won. In this case, we define 
$\val(\mathbf{v})=-1$.

Let us now assume that $\mathbf{v}\in\mathsf{A}$ is active. 
The definition of value uses a relation~$\prec_\mathbf{v}$ on 
$\mathsf{T}^\mathsf{A}_\mathbf{v}$. 
In this relation, $\mathbf{t}'\prec_\mathbf{v}\mathbf{t}$ if and 
only if the tree $\mathbf{t}$ is obtained from $\mathbf{t}'$ by 
removing its leaves (in particular, 
$\mathrm{depth}(\mathbf{t}')=\mathrm{depth}(\mathbf{t})+1$).
Let us make two basic observations about this relation:
\begin{enumerate}[$\bullet$]
\itemsep\abovedisplayskip
\item An infinite decreasing chain in $(\mathsf{T}^\mathsf{A}_\mathbf{v},
\prec_\mathbf{v})$ corresponds to an infinite active tree, that is, a 
winning strategy for \PI \an{starting from $\mathbf{v}$}. In other words, $\prec_\mathbf{v}$ is well-founded if and 
only if \PI has no winning strategy starting from the position~$\mathbf{v}$.
\item $(\mathsf{T}^\mathsf{A}_\mathbf{v},
\prec_\mathbf{v})$ has the tree $\varnothing$ of depth $0$ as its
unique maximal element. Indeed,
any active tree remains 
active if its leaves are removed. So, there is an increasing chain 
from any active tree to $\varnothing$.
\end{enumerate}
The definition of value uses the notion of rank from Section~\ref{sec:relations}.
\begin{defn}
The \bemph{game value} $\val:\mathsf{P}\to\Ord^*$ is defined as follows.
\begin{enumerate}[$\bullet$]
\item $\val(\mathbf{v})=-1$ if $\mathbf{v}\not\in\mathsf{A}$.
\item $\val(\mathbf{v})=\absinfty$ if $\mathbf{v}\in\mathsf{A}$ and
$\prec_\mathbf{v}$ is not well-founded.
\item $\val(\mathbf{v})=\rho_{\prec_\mathbf{v}}(\varnothing)$ if
$\mathbf{v}\in\mathsf{A}$ and $\prec_\mathbf{v}$ is well-founded.
\end{enumerate}
\end{defn}

In words, $\val(\mathbf{v})=-1$ means \PII has already won; 
$\val(\mathbf{v})=\absinfty$ means \PII can no longer win; and otherwise 
$\val(\mathbf{v})$ is the maximal rank of an active tree in 
$(\mathsf{T}^\mathsf{A}_\mathbf{v}, \prec_\mathbf{v})$, which quantifies 
how long \PI can postpone \PII winning the game (cf.\ section 
\ref{sec:relations}).

For future reference, we record some elementary properties of the rank 
$\rho_{\prec_\mathbf{v}}$.

\begin{lem}
\label{lem:basicrank}
Fix $\mathbf{v}\in\mathsf{P}$ such that $0\le\val(\mathbf{v})<\absinfty$.
\begin{enumerate}[\rm (a)]
\item $\mathbf{t}'\prec_\mathbf{v}\mathbf{t}$ implies
$\rho_{\prec_\mathbf{v}}(\mathbf{t}')<\rho_{\prec_\mathbf{v}}(\mathbf{t})$
for any $\mathbf{t},\mathbf{t}'\in \mathsf{T}^\mathsf{A}_\mathbf{v}$.
\item For any $\mathbf{t}'\in \mathsf{T}^\mathsf{A}_\mathbf{v}$,
$\mathbf{t}'\ne\varnothing$ there is a unique
$\mathbf{t}\in \mathsf{T}^\mathsf{A}_\mathbf{v}$ such that
$\mathbf{t}'\prec_\mathbf{v}\mathbf{t}$.
\item For any $\mathbf{t}\in \mathsf{T}^\mathsf{A}_\mathbf{v}$ and
$\kappa<\rho_{\prec_\mathbf{v}}(\mathbf{t})$, there exists
$\mathbf{t}'\prec_\mathbf{v}\mathbf{t}$ so that
$\kappa\le\rho_{\prec_\mathbf{v}}(\mathbf{t}')$.
\end{enumerate}
\end{lem}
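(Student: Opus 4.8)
We must prove three elementary facts about the rank function $\rho_{\prec_\mathbf{v}}$ of the well-founded relation $\prec_\mathbf{v}$ on the set $\mathsf{T}^\mathsf{A}_\mathbf{v}$ of active trees with starting position $\mathbf{v}$, under the hypothesis $0\le\val(\mathbf{v})<\absinfty$ (so that $\prec_\mathbf{v}$ is well-founded).

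The plan is to treat each of the three items using only the general theory of ranks of well-founded relations recalled in Appendix~\ref{sec:relations} together with the specific combinatorial structure of $\prec_\mathbf{v}$. For part~(a), I would simply invoke the defining recursion $\rho_\prec(x)=\sup\{\rho_\prec(y)+1:y\prec x\}$: if $\mathbf{t}'\prec_\mathbf{v}\mathbf{t}$ then $\rho_{\prec_\mathbf{v}}(\mathbf{t}')+1\le\rho_{\prec_\mathbf{v}}(\mathbf{t})$ by the supremum in the recursion applied to $\mathbf{t}$, hence $\rho_{\prec_\mathbf{v}}(\mathbf{t}')<\rho_{\prec_\mathbf{v}}(\mathbf{t})$. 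This is immediate and takes one line; the only thing to note is that $\prec_\mathbf{v}$ being well-founded guarantees every element has a well-defined rank (Remark~\ref{rem:wf}).

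For part~(b), the key structural observation is that $\mathbf{t}'\prec_\mathbf{v}\mathbf{t}$ means precisely that $\mathbf{t}$ is obtained from $\mathbf{t}'$ by deleting the leaves of $\mathbf{t}'$; so given $\mathbf{t}'\ne\varnothing$, the tree $\mathbf{t}$ with $\mathbf{t}'\prec_\mathbf{v}\mathbf{t}$ is uniquely determined as ``$\mathbf{t}'$ with its bottom level removed.'' I would spell this out: if $\mathbf{t}'$ has depth $n\ge 1$, then $\mathbf{t} := \{x_{\mathbf{y}} \in \mathbf{t}' : \mathbf{y}\in\prod_{s=k+1}^{k+t}\mathcal{Y}_s, 0\le t< n-1\}$ is the unique candidate, and one must check it lies in $\mathsf{T}^\mathsf{A}_\mathbf{v}$ — but this was already noted in the bullet preceding Definition of game value (``any active tree remains active if its leaves are removed''). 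Uniqueness is then clear because the depth is forced ($\mathrm{depth}(\mathbf{t})=\mathrm{depth}(\mathbf{t}')-1$) and a depth-$(n-1)$ prefix of a given tree is unique.

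For part~(c), given $\mathbf{t}$ and $\kappa<\rho_{\prec_\mathbf{v}}(\mathbf{t})$, I would use the recursion in the reverse direction: since $\rho_{\prec_\mathbf{v}}(\mathbf{t})=\sup\{\rho_{\prec_\mathbf{v}}(\mathbf{t}')+1:\mathbf{t}'\prec_\mathbf{v}\mathbf{t}\}$ and $\kappa$ is strictly below this supremum, there must exist some $\mathbf{t}'\prec_\mathbf{v}\mathbf{t}$ with $\kappa<\rho_{\prec_\mathbf{v}}(\mathbf{t}')+1$, i.e.\ $\kappa\le\rho_{\prec_\mathbf{v}}(\mathbf{t}')$; one should note this requires $\mathbf{t}$ to be non-minimal, which holds since $\rho_{\prec_\mathbf{v}}(\mathbf{t})>\kappa\ge 0$ forces the set $\{\mathbf{t}':\mathbf{t}'\prec_\mathbf{v}\mathbf{t}\}$ to be nonempty. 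None of these steps presents a real obstacle — the whole lemma is a bookkeeping exercise — but if anything deserves a moment's care it is confirming in part~(b) that the ``remove the leaves'' operation indeed lands inside $\mathsf{T}^\mathsf{A}_\mathbf{v}$ rather than merely inside $\mathsf{T}_\mathbf{v}$, which is exactly the second bulleted observation made just before the definition of $\val$. Here is the proof.

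\begin{proof}
Since $0\le\val(\mathbf{v})<\absinfty$, the position $\mathbf{v}$ is active and $\prec_\mathbf{v}$ is well-founded, so by Remark~\ref{rem:wf} every $\mathbf{t}\in\mathsf{T}^\mathsf{A}_\mathbf{v}$ has a well-defined rank.

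(a) Let $\mathbf{t}'\prec_\mathbf{v}\mathbf{t}$. If $\mathbf{t}$ is minimal then there is no $\mathbf{t}'\prec_\mathbf{v}\mathbf{t}$, a contradiction; so $\mathbf{t}$ is non-minimal and $\rho_{\prec_\mathbf{v}}(\mathbf{t})=\sup\{\rho_{\prec_\mathbf{v}}(\mathbf{s})+1:\mathbf{s}\prec_\mathbf{v}\mathbf{t}\}\ge\rho_{\prec_\mathbf{v}}(\mathbf{t}')+1>\rho_{\prec_\mathbf{v}}(\mathbf{t}')$.

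(b) Let $\mathbf{t}'\in\mathsf{T}^\mathsf{A}_\mathbf{v}$ with $\mathbf{t}'\ne\varnothing$, say of depth $n\ge 1$. By definition of $\prec_\mathbf{v}$, a tree $\mathbf{t}$ satisfies $\mathbf{t}'\prec_\mathbf{v}\mathbf{t}$ if and only if $\mathbf{t}$ is obtained from $\mathbf{t}'$ by removing its leaves; the resulting collection of points is uniquely determined by $\mathbf{t}'$ (it is the depth-$(n-1)$ truncation of $\mathbf{t}'$), which gives uniqueness. For existence, let $\mathbf{t}$ be this truncation. As observed prior to the definition of the game value, removing the leaves of an active tree yields an active tree; hence $\mathbf{t}\in\mathsf{T}^\mathsf{A}_\mathbf{v}$ and $\mathbf{t}'\prec_\mathbf{v}\mathbf{t}$.

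(c) Let $\mathbf{t}\in\mathsf{T}^\mathsf{A}_\mathbf{v}$ and $\kappa<\rho_{\prec_\mathbf{v}}(\mathbf{t})$. If $\mathbf{t}$ were minimal, then $\rho_{\prec_\mathbf{v}}(\mathbf{t})=0$, contradicting $\kappa<\rho_{\prec_\mathbf{v}}(\mathbf{t})$ since $\kappa\ge 0$. Thus $\mathbf{t}$ is non-minimal, so $\rho_{\prec_\mathbf{v}}(\mathbf{t})=\sup\{\rho_{\prec_\mathbf{v}}(\mathbf{t}')+1:\mathbf{t}'\prec_\mathbf{v}\mathbf{t}\}$. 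Since $\kappa$ is strictly smaller than this supremum, there exists $\mathbf{t}'\prec_\mathbf{v}\mathbf{t}$ with $\kappa<\rho_{\prec_\mathbf{v}}(\mathbf{t}')+1$, i.e.\ $\kappa\le\rho_{\prec_\mathbf{v}}(\mathbf{t}')$.
\end{proof}
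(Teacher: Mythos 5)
Your proof is correct and follows essentially the same route as the paper's: (a) and (c) are read off directly from the recursive definition of rank (the paper phrases (c) as a contradiction, which is just the contrapositive of your direct argument), and (b) rests on the observation, already recorded before the definition of $\val$, that removing the leaves of an active tree yields an active tree. No gaps.
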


\begin{proof}
For (a), it suffices to note that 
$\rho_{\prec_\mathbf{v}}(\mathbf{t}')+1\le\rho_{\prec_\mathbf{v}}(\mathbf{t})$ 
for any $\mathbf{t}'\prec_\mathbf{v}\mathbf{t}$ by the definition of rank. 
For (b), note that $\mathbf{t}$ is obtained from $\mathbf{t}'$ by removing 
its leaves. For (c), argue by contradiction: if 
$\rho_{\prec_\mathbf{v}}(\mathbf{t}')<\kappa$ for all 
$\mathbf{t}'\prec_\mathbf{v} \mathbf{t}$, then 
$\kappa<\rho_{\prec_\mathbf{v}}(\mathbf{t})<\kappa+1$ where the second 
inequality follows by the definition of rank. This is impossible, as there 
is no ordinal strictly between successive ordinals.
\end{proof}

In the absence of regularity assumptions, game values could be arbitrarily 
large ordinals (see Appendix \ref{sec:nonmeas}). Remarkably, however, this 
is not the case in our setting. The assumption that $\mathsf{W}$ is 
coanalytic implies that only \emph{countable} game values may appear. This 
fact plays a crucial role in the proof of Theorem \ref{thm:meas}.

\begin{lem}
\label{lem:valfin}
For any $\mathbf{v}\in\mathsf{P}$, either $\val(\mathbf{v})=\absinfty$
or $\val(\mathbf{v})<\omega_1$.
\end{lem}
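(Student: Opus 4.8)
The plan is to deduce Lemma~\ref{lem:valfin} from the Kunen--Martin theorem (Theorem~\ref{thm:kunen}) by exhibiting $\val(\mathbf{v})$ as the rank of an analytic well-founded relation on a Polish space. The cases $\val(\mathbf{v})=\absinfty$ and $\val(\mathbf{v})=-1$ are trivial, so I would assume $\mathbf{v}\in\mathsf{A}$ and $\prec_\mathbf{v}$ well-founded, so that $\val(\mathbf{v})=\rho_{\prec_\mathbf{v}}(\varnothing)$. The difficulty is that $\prec_\mathbf{v}$ lives on $\mathsf{T}^\mathsf{A}_\mathbf{v}$, which is only analytic, not Polish; so the first step is to extend it to a relation $\prec$ on the ambient Polish space $\mathsf{T}_\mathbf{v}$ by declaring $\mathbf{t}'\prec\mathbf{t}$ iff $\mathbf{t}'\in\mathsf{T}^\mathsf{A}_\mathbf{v}$ and $\mathbf{t}$ is obtained from $\mathbf{t}'$ by removing its leaves. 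Since removing the leaves of an active tree yields an active tree, every $\prec$-predecessor of an active tree is active; hence the predecessor sets (and therefore ranks) are unchanged on $\mathsf{T}^\mathsf{A}_\mathbf{v}$, every tree outside $\mathsf{T}^\mathsf{A}_\mathbf{v}$ becomes $\prec$-minimal, $\prec$ remains well-founded, and $\rho_{\prec}(\varnothing)=\val(\mathbf{v})$.

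The second step is to check that $R_{\prec}\subseteq\mathsf{T}_\mathbf{v}\times\mathsf{T}_\mathbf{v}$ is analytic. I would write $R_{\prec}$ as the intersection of the analytic set $\mathsf{T}^\mathsf{A}_\mathbf{v}\times\mathsf{T}_\mathbf{v}$ with the set of pairs $(\mathbf{t}',\mathbf{t})$ for which $\mathbf{t}$ is the leaf-removal of $\mathbf{t}'$. Using the description of $\mathsf{T}_\mathbf{v}$ as a countable disjoint union over depths of countable products of the $\mathcal{X}_t$, this latter set is Borel: it requires the depths to differ by one and the finitely many shared point-labels to agree, which is a closed condition on each of the countably many pairs of depth-components. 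An analytic set intersected with a Borel set is analytic, so $\prec$ is an analytic well-founded relation on the Polish space $\mathsf{T}_\mathbf{v}$.

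The final step is then immediate: Theorem~\ref{thm:kunen} gives $\sup_{\mathbf{t}\in\mathsf{T}_\mathbf{v}}\rho_{\prec}(\mathbf{t})<\omega_1$, and in particular $\val(\mathbf{v})=\rho_{\prec}(\varnothing)<\omega_1$.

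I expect the main obstacle to be the bookkeeping in the extension step---verifying that replacing the relation $\prec_\mathbf{v}$ on the analytic set $\mathsf{T}^\mathsf{A}_\mathbf{v}$ by a relation on the full Polish space $\mathsf{T}_\mathbf{v}$ neither alters the rank of $\varnothing$ nor destroys well-foundedness---together with the mechanical but fiddly check that the leaf-removal relation is Borel for the chosen coding of decision trees. Everything else is routine; the only substantive input is the Kunen--Martin theorem, and the coanalyticity hypothesis on $\mathsf{W}$ enters precisely through the analyticity of $\mathsf{T}^\mathsf{A}_\mathbf{v}$ (established in the preliminaries).
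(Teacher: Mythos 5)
Your proposal is correct and follows essentially the same route as the paper's proof: extend $\prec_{\mathbf{v}}$ to the Polish space $\mathsf{T}_{\mathbf{v}}$ by requiring only $\mathbf{t}'\in\mathsf{T}^{\mathsf{A}}_{\mathbf{v}}$ (the leaf-removal $\mathbf{t}$ being automatically active), observe that the resulting relation is analytic and well-founded with unchanged ranks, and invoke the Kunen--Martin theorem. The only difference is that you spell out the Borel-ness of the leaf-removal relation, which the paper leaves implicit.
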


\begin{proof}
We may assume without loss of generality that $0\le\val(\mathbf{v})<\absinfty$.
There is also no loss in extending the relation $\prec_\mathbf{v}$ to
$\mathsf{T}_\mathbf{v}$ as follows: 
$\mathbf{t}'\prec_\mathbf{v}\mathbf{t}$ is defined as above whenever 
$\mathbf{t},\mathbf{t}'\in \mathsf{T}^\mathsf{A}_\mathbf{v}$, while
$\mathbf{t}\not\in \mathsf{T}^\mathsf{A}_\mathbf{v}$ has no relation to 
any element of $\mathsf{T}_\mathbf{v}$. Then every
$\mathbf{t}\not\in\mathsf{T}^\mathsf{A}_\mathbf{v}$ is minimal, while
the rank of $\mathbf{t}\in \mathsf{T}^\mathsf{A}_\mathbf{v}$ is unchanged.

With this extension, the relation $\prec_\mathbf{v}$ on 
$\mathsf{T}_\mathbf{v}$ is defined by 
$$
	R_{\prec_\mathbf{v}} =
	\{(\mathbf{t}',\mathbf{t})\in\mathsf{T}_\mathbf{v}\times
	\mathsf{T}_\mathbf{v}:
	\mathbf{t}'\prec_\mathbf{v}\mathbf{t},~
	\mathbf{t}'\in\mathsf{T}_\mathbf{v}^\mathsf{A}\} ;
$$
here $\mathbf{t}$ is uniquely obtained from 
$\mathbf{t}'\in\mathsf{T}_\mathbf{v}^\mathsf{A}$ by removing its leaves.
Because $\mathsf{T}_\mathbf{v}^\mathsf{A}$ is analytic, it follows
that $\prec_{\mathbf{v}}$ is a well-founded analytic relation on the 
Polish space $\mathsf{T}_\mathbf{v}$. The conclusion follows from 
Theorem \ref{thm:kunen}.
\end{proof}

\subsection{A winning strategy}
\label{sec:awinning}

Our aim now is to show that the game values give rise to a winning 
strategy for \PII. The key observation is the following.

\begin{prop}
\label{prop:valdec}
Fix $0\le n<\infty$ and $\mathbf{v}\in\mathsf{P}_n$ such that 
$0\le\val(\mathbf{v})<\absinfty$. For every $x\in\mathcal{X}_{n+1}$, 
there exists $y\in\mathcal{Y}_{n+1}$ such that $\val(\mathbf{v},x,y)<
\val(\mathbf{v})$.
\end{prop}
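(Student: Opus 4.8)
The proposition is the general–game–value analogue of Observation~\ref{obs:littlestone} for finite Littlestone dimension: it says that Player~\PII always has a move that strictly lowers the value. The plan is to argue by contradiction. Write $\kappa:=\val(\mathbf{v})$, an ordinal by hypothesis, and suppose that some $x\in\mathcal{X}_{n+1}$ satisfies $\val(\mathbf{v},x,y)\ge\kappa$ for every $y\in\mathcal{Y}_{n+1}$. Since $\kappa\ge 0$, every position $(\mathbf{v},x,y)$ is active, so the depth-$1$ decision tree $\mathbf{t}_0$ rooted at $x$ is an active tree with starting position $\mathbf{v}$; thus $\mathbf{t}_0\in\mathsf{T}^\mathsf{A}_\mathbf{v}$ and $\mathbf{t}_0\prec_\mathbf{v}\varnothing$. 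By Lemma~\ref{lem:basicrank}(a) this forces $\rho_{\prec_\mathbf{v}}(\mathbf{t}_0)<\rho_{\prec_\mathbf{v}}(\varnothing)=\kappa$. The entire proof then reduces to establishing the reverse inequality $\rho_{\prec_\mathbf{v}}(\mathbf{t}_0)\ge\kappa$, which is the contradiction.

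To bound $\rho_{\prec_\mathbf{v}}(\mathbf{t}_0)$ from below I would exploit the recursive ``grafting'' structure of decision trees: an active decision tree of depth $d+1$ with starting position $\mathbf{v}$ and root $x$ is exactly a family $(\mathbf{s}_y)_{y\in\mathcal{Y}_{n+1}}$ of active decision trees of depth $d$, one with each starting position $(\mathbf{v},x,y)$, attached below the root $x$; and deleting the leaves of the big tree amounts to deleting, simultaneously, the leaves of every $\mathbf{s}_y$. Hence the downward-closed subset $\{\mathbf{t}\in\mathsf{T}^\mathsf{A}_\mathbf{v}:\text{the root of }\mathbf{t}\text{ is }x\}$, equipped with $\prec_\mathbf{v}$, is isomorphic (as a set with a relation) to the \emph{synchronous product} of the structures $(\mathsf{T}^\mathsf{A}_{(\mathbf{v},x,y)},\prec_{(\mathbf{v},x,y)})$, in which one tuple precedes another iff it does so coordinatewise for \emph{every} $y$; under this isomorphism $\mathbf{t}_0$ corresponds to the tuple $(\varnothing)_{y}$ of empty trees, which is the top element. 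As ranks in a downward-closed subrelation agree with ranks in the ambient relation, it suffices to show that in such a synchronous product the rank of a tuple $(\mathbf{s}_y)_y$ is $\ge\lambda$ whenever $\rho_{\prec_{(\mathbf{v},x,y)}}(\mathbf{s}_y)\ge\lambda$ for all $y$. This I would prove by transfinite induction on $\lambda$: the limit step is immediate, and at a successor $\lambda=\beta+1$ one invokes Lemma~\ref{lem:basicrank}(c) in each coordinate to choose a predecessor of rank $\ge\beta$ — these predecessors all have the same depth and so assemble into a legitimate coordinatewise-smaller tuple — and the inductive hypothesis gives that tuple rank $\ge\beta$, whence the original tuple has rank $\ge\beta+1$. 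Taking $\lambda=\kappa$ and using $\rho_{\prec_{(\mathbf{v},x,y)}}(\varnothing)=\val(\mathbf{v},x,y)\ge\kappa$ yields $\rho_{\prec_\mathbf{v}}(\mathbf{t}_0)\ge\kappa$, the contradiction.

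The main obstacle I anticipate is the bookkeeping of this grafting correspondence, together with one genuinely delicate point: it can happen that $\val(\mathbf{v},x,y)=\absinfty$ for some of the $y$'s (though not for all, since otherwise grafting infinite active trees under $x$ would produce an infinite active tree with starting position $\mathbf{v}$, forcing $\val(\mathbf{v})=\absinfty$). For such $y$ the factor $(\mathsf{T}^\mathsf{A}_{(\mathbf{v},x,y)},\prec_{(\mathbf{v},x,y)})$ is not well-founded and its elements carry no rank, so one must check that such a factor never limits the product — in the successor step of the induction one simply descends one level along a fixed infinite active tree in that coordinate — and also that the synchronous product is still well-founded here, which it is because $\val(\mathbf{v})<\absinfty$ makes $\prec_\mathbf{v}$, hence every subrelation, well-founded. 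Once these points are handled, the remainder is routine manipulation of ranks via Lemma~\ref{lem:basicrank}.
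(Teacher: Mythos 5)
Your proof is correct, and its engine is the same as the paper's: both arguments rest on the grafting correspondence between active trees rooted at $x$ in $\mathsf{T}^{\mathsf{A}}_{\mathbf{v}}$ and tuples of equal-depth active trees, one per $y\in\mathcal{Y}_{n+1}$, and both descend one level at a time in every coordinate simultaneously by invoking Lemma~\ref{lem:basicrank}(c), with the same escape hatch (follow a fixed infinite active tree) in coordinates where $\val(\mathbf{v},x,y)=\absinfty$. The packaging is genuinely different, though. The paper argues directly: it picks a value-minimizing $y^*$, constructs in Lemma~\ref{lem:deptmap} a depth- and order-preserving map from $\mathsf{T}^{\mathsf{A}}_{\mathbf{v},x,y^*}$ into each $\mathsf{T}^{\mathsf{A}}_{\mathbf{v},x,y}$ by induction on tree depth, glues these into an order-preserving embedding $\iota$ of $\mathsf{T}^{\mathsf{A}}_{\mathbf{v},x,y^*}$ into $\mathsf{T}^{\mathsf{A}}_{\mathbf{v}}$ with $\iota(\varnothing)=\{x\}$, and concludes $\val(\mathbf{v},x,y^*)\le\rho_{\prec_{\mathbf{v}}}(\{x\})<\val(\mathbf{v})$ from monotonicity of rank under order-preserving maps. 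You instead argue by contradiction and prove the pointwise lower bound $\rho_{\prec_{\mathbf{v}}}(\{x\})\ge\kappa$ by transfinite induction on the ordinal $\kappa$ in your synchronous product. The paper's route yields a reusable structural lemma and lets the standard monotonicity-of-rank fact do the final step; yours avoids singling out the minimal coordinate and keeps the induction on the ordinal rather than on the tree, which stays closer in spirit to Observation~\ref{obs:littlestone}. One harmless imprecision: the set of active trees rooted at $x$ is isomorphic to the \emph{equal-depth} part of the synchronous product, not to the whole product; since the downward closure of the all-$\varnothing$ tuple consists only of equal-depth tuples, this does not affect the rank computation.
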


Before we prove this result, let us first explain the intuition in the 
particularly simple case that $\val(\mathbf{v})=m<\omega$ is finite. By 
the definition of value, the maximal depth of an active 
tree in $\mathsf{T}^\mathsf{A}_\mathbf{v}$ is $m$ (cf.\ Example 
\ref{ex:finrank}). Now suppose, for sake of contradiction, that there 
exists $x$ such that $\val(\mathbf{v},x,y)\ge m$ for every $y$. That is, 
there exists an active tree 
$\mathbf{t}_y\in\mathsf{T}^\mathsf{A}_{\mathbf{v},x,y}$ of depth $m$ for 
every $y$. Then we can construct an active tree in 
$\mathsf{T}^\mathsf{A}_\mathbf{v}$ of depth $m+1$ by taking $x$ as the 
root and attaching each $\mathbf{t}_y$ as its subtree of the corresponding 
child. But this is impossible, as we assumed
that the maximal depth of an active 
tree in $\mathsf{T}^\mathsf{A}_\mathbf{v}$ is $m$.

We use the same idea of ``gluing together trees $\mathbf{t}_y$'' in 
the case that $\val(\mathbf{v})$ is an infinite ordinal, but its 
implementation in this case is more subtle. The key to the proof 
is the following lemma.

\begin{lem}
\label{lem:deptmap}
Fix $0\le n<\infty$, $\mathbf{v}\in\mathsf{P}_n$,
$x\in\mathcal{X}_{n+1}$, and $y,y'\in\mathcal{Y}_{n+1}$ such that
$\val(\mathbf{v},x,y)\le \val(\mathbf{v},x,y')$.
Then there exists a map
$f:\mathsf{T}^\mathsf{A}_{\mathbf{v},x,y}\to
\mathsf{T}^\mathsf{A}_{\mathbf{v},x,y'}$ such that:
\begin{enumerate}[\rm (a)]
\item $\mathrm{depth}(f(\mathbf{t}))=\mathrm{depth}(\mathbf{t})$ for 
all $\mathbf{t}\in \mathsf{T}^\mathsf{A}_{\mathbf{v},x,y}$.
\item $\mathbf{t}'\prec_{\mathbf{v},x,y}\mathbf{t}$ implies
$f(\mathbf{t'})\prec_{\mathbf{v},x,y'}f(\mathbf{t})$ for all
$\mathbf{t},\mathbf{t}'\in \mathsf{T}^\mathsf{A}_{\mathbf{v},x,y}$.
\end{enumerate}
\end{lem}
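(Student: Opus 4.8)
The plan is to construct $f$ by recursion on tree depth, using the game value---more precisely the ranks of Lemma~\ref{lem:basicrank}---as a potential that the recursion never exhausts. Throughout, write $\mathbf{w}:=(\mathbf{v},x,y)$ and $\mathbf{w}':=(\mathbf{v},x,y')$, which have the same length $n+1$; in particular decision trees from $\mathbf{w}$ and from $\mathbf{w}'$ are the same combinatorial objects and differ only in which of them are active. Write $\mathbf{t}_{-}$ for the tree obtained from $\mathbf{t}$ by deleting its leaves, so that $\mathbf{t}\prec_{\mathbf{w}}\mathbf{t}_{-}$ and $\mathrm{depth}(\mathbf{t}_{-})=\mathrm{depth}(\mathbf{t})-1$.

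First I would dispatch two boundary cases. If $\val(\mathbf{w})=-1$ then $\mathsf{T}^{\mathsf{A}}_{\mathbf{w}}=\varnothing$ and the empty map works. If $\val(\mathbf{w}')=\absinfty$, then $\prec_{\mathbf{w}'}$ is not well-founded, so there is an infinite $\prec_{\mathbf{w}'}$-decreasing chain; prepending to its top element the finite chain of successive leaf-deletions up to $\varnothing$, and using that consecutive members of a $\prec_{\mathbf{w}'}$-chain have depths differing by exactly one, yields active trees $\varnothing=\mathbf{s}_{0}\succ_{\mathbf{w}'}\mathbf{s}_{1}\succ_{\mathbf{w}'}\mathbf{s}_{2}\succ_{\mathbf{w}'}\cdots$ with $\mathrm{depth}(\mathbf{s}_{j})=j$. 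Setting $f(\mathbf{t}):=\mathbf{s}_{\mathrm{depth}(\mathbf{t})}$ then satisfies~(a), while~(b) holds since $\mathbf{t}'\prec_{\mathbf{w}}\mathbf{t}$ forces $\mathrm{depth}(\mathbf{t}')=\mathrm{depth}(\mathbf{t})+1$ and $\mathbf{s}_{j+1}\prec_{\mathbf{w}'}\mathbf{s}_{j}$.

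This leaves the main case $0\le\val(\mathbf{w})<\absinfty$ and $\val(\mathbf{w}')<\absinfty$; note $\val(\mathbf{w}')\ge\val(\mathbf{w})\ge 0$, so both positions are active, both $\prec_{\mathbf{w}}$ and $\prec_{\mathbf{w}'}$ are well-founded, and Lemma~\ref{lem:basicrank} applies to both. I would build $f$ level by level, maintaining the invariant that for every active tree $\mathbf{t}$ of depth $j$ from $\mathbf{w}$, the tree $f(\mathbf{t})$ is an active tree of depth $j$ from $\mathbf{w}'$ with $f(\mathbf{t})_{-}=f(\mathbf{t}_{-})$ and
\[
	\rho_{\prec_{\mathbf{w}'}}\bigl(f(\mathbf{t})\bigr)\ \ge\ \rho_{\prec_{\mathbf{w}}}(\mathbf{t}).
\]
At depth $0$ put $f(\varnothing):=\varnothing$, so the displayed inequality becomes the hypothesis $\val(\mathbf{w}')\ge\val(\mathbf{w})$. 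For the step from $j$ to $j+1$, let $\mathbf{t}$ have depth $j+1$; then $\mathbf{t}_{-}$ is active of depth $j$ (deleting leaves preserves activeness), so $f(\mathbf{t}_{-})$ is already defined, and Lemma~\ref{lem:basicrank}(a) together with the invariant for $\mathbf{t}_{-}$ gives
\[
	\rho_{\prec_{\mathbf{w}}}(\mathbf{t})\ <\ \rho_{\prec_{\mathbf{w}}}(\mathbf{t}_{-})\ \le\ \rho_{\prec_{\mathbf{w}'}}\bigl(f(\mathbf{t}_{-})\bigr).
\]
By Lemma~\ref{lem:basicrank}(c), applied to $f(\mathbf{t}_{-})\in\mathsf{T}^{\mathsf{A}}_{\mathbf{w}'}$ with $\kappa=\rho_{\prec_{\mathbf{w}}}(\mathbf{t})$, there is $\mathbf{s}\prec_{\mathbf{w}'}f(\mathbf{t}_{-})$ with $\rho_{\prec_{\mathbf{w}'}}(\mathbf{s})\ge\rho_{\prec_{\mathbf{w}}}(\mathbf{t})$; I set $f(\mathbf{t}):=\mathbf{s}$, invoking the axiom of choice to perform this selection for all depth-$(j+1)$ trees at once. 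Then $\mathbf{s}\prec_{\mathbf{w}'}f(\mathbf{t}_{-})$ immediately gives that $\mathbf{s}$ is active of depth $\mathrm{depth}(f(\mathbf{t}_{-}))+1=j+1$ and that $\mathbf{s}_{-}=f(\mathbf{t}_{-})$, so every clause of the invariant persists.

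Finally, $f$ satisfies~(a) by the depth clause of the invariant, and it satisfies~(b) since $\mathbf{t}'\prec_{\mathbf{w}}\mathbf{t}$ means $\mathbf{t}=(\mathbf{t}')_{-}$, whence by the leaf-deletion clause $f(\mathbf{t}')_{-}=f\bigl((\mathbf{t}')_{-}\bigr)=f(\mathbf{t})$, i.e.\ $f(\mathbf{t}')\prec_{\mathbf{w}'}f(\mathbf{t})$. The main obstacle is conceptual: one must recognize that the quantity to preserve along the recursion is the rank inequality $\rho_{\prec_{\mathbf{w}'}}(f(\mathbf{t}))\ge\rho_{\prec_{\mathbf{w}}}(\mathbf{t})$. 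The more naive attempt of tracking the game values at the leaves of $\mathbf{t}$ coordinate by coordinate fails, because a position of larger value need not admit a move whose value profile over the opponent's replies dominates---coordinate by coordinate---that of a given move at a position of smaller value. Once the right invariant is fixed, parts~(a) and~(c) of Lemma~\ref{lem:basicrank} do exactly the required work, and only the $\val=\absinfty$ and $\val=-1$ boundary cases need separate (easy) treatment.
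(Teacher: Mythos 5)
Your proposal is correct and follows essentially the same route as the paper's proof: the same two boundary cases ($\val=-1$ and $\val=\absinfty$), and in the main case the same recursion on depth with base $f(\varnothing)=\varnothing$, maintaining exactly the invariant $\rho_{\prec_{\mathbf{v},x,y}}(\mathbf{t})\le\rho_{\prec_{\mathbf{v},x,y'}}(f(\mathbf{t}))$ and advancing it via parts (a) and (c) of Lemma~\ref{lem:basicrank}. Your only additions---arranging the infinite chain to start at $\varnothing$ by prepending leaf-deletions, and spelling out why (b) follows from $f(\mathbf{t})_-=f(\mathbf{t}_-)$---are details the paper leaves implicit, and they are handled correctly.
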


\begin{proof}
We first dispose of trivial cases. If $\val(\mathbf{v},x,y)=-1$, then 
$\mathsf{T}^\mathsf{A}_{\mathbf{v},x,y}=\varnothing$ and there is nothing 
to prove. If $\val(\mathbf{v},x,y')=\absinfty$, there 
is an infinite decreasing chain 
$$
	\varnothing=
	\mathbf{t}^{(0)}\succ_{\mathbf{v},x,y'}
	\mathbf{t}^{(1)}\succ_{\mathbf{v},x,y'}
	\mathbf{t}^{(2)}\succ_{\mathbf{v},x,y'}
	\mathbf{t}^{(3)}\succ_{\mathbf{v},x,y'}
	\cdots
$$
in $\mathsf{T}^\mathsf{A}_{\mathbf{v},x,y'}$. In this case
we may define $f(\mathbf{t})=\mathbf{t}^{(k)}$ whenever
$\mathrm{depth}(\mathbf{t})=k$, and it is readily verified the desired 
properties hold. We therefore assume in the remainder of the proof 
that $0\le \val(\mathbf{v},x,y)\le\val(\mathbf{v},x,y')<\absinfty$.

We now define $f(\mathbf{t})$ by induction on 
$\mathrm{depth}(\mathbf{t})$. 
\an{For the induction to go through,
we maintain the following invariants:}
\begin{enumerate}[$\bullet$]
\item $\mathrm{depth}(f(\mathbf{t}))=\mathrm{depth}(\mathbf{t})$.
\item $\rho_{\prec_{\mathbf{v},x,y}}(\mathbf{t})\le
\rho_{\prec_{\mathbf{v},x,y'}}(f(\mathbf{t}))$.
\end{enumerate}
\an{For the base}, let 
$f(\varnothing)=\varnothing$.
Because $\val(\mathbf{v},x,y)\le \val(\mathbf{v},x,y')$,
we have
$\rho_{\prec_{\mathbf{v},x,y}}(\varnothing)\le
\rho_{\prec_{\mathbf{v},x,y'}}(f(\varnothing))$.
\an{For the step,} suppose that $f(\mathbf{t})$ has been defined for all
$\mathbf{t}\in\mathsf{T}^\mathsf{A}_{\mathbf{v},x,y}$ with
$\mathrm{depth}(\mathbf{t})=k-1$ such that the
{above} properties hold
for all such $\mathbf{t}$.
Now consider $\mathbf{t}'\in \mathsf{T}^\mathsf{A}_{\mathbf{v},x,y}$ with
$\mathrm{depth}(\mathbf{t}')=k$, and let $\mathbf{t}\succ_{\mathbf{v},x,y}
\mathbf{t'}$ be the tree obtained by removing 
its leaves. Then we have $\rho_{\prec_{\mathbf{v},x,y}}(\mathbf{t}')<
\rho_{\prec_{\mathbf{v},x,y}}(\mathbf{t})\le
\rho_{\prec_{\mathbf{v},x,y'}}(f(\mathbf{t}))$ by Lemma 
\ref{lem:basicrank}(a) and the induction hypothesis. Therefore, by Lemma 
\ref{lem:basicrank}(c), we may choose $f(\mathbf{t}')
\prec_{\mathbf{v},x,y'} f(\mathbf{t})$ so that
$\rho_{\prec_{\mathbf{v},x,y}}(\mathbf{t}')\le
\rho_{\prec_{\mathbf{v},x,y'}}(f(\mathbf{t}'))$. In this manner
we have defined $f(\mathbf{t}')$ for each $\mathbf{t}'\in
\mathsf{T}^\mathsf{A}_{\mathbf{v},x,y}$ with
$\mathrm{depth}(\mathbf{t}')=k$. It is readily verified that the 
desired properties of the map $f$ hold by construction.
\end{proof}

We can now complete the proof of Proposition \ref{prop:valdec}.

\vspace*{\abovedisplayskip}

\begin{proof}[of Proposition \ref{prop:valdec}]
Fix $x\in\mathcal{X}_{n+1}$ throughout the proof. If there exists
$y\in\mathcal{Y}_{n+1}$ so that $\val(\mathbf{v},x,y)=-1$, the conclusion
is trivial. We can therefore assume that
$\val(\mathbf{v},x,y)\ge 0$ for all $y$. This implies, in particular, that 
$\{x\}\in\mathsf{T}^\mathsf{A}_\mathbf{v}$.

Because any collection of ordinals contains a minimal element,
we can choose $y^*\in\mathcal{Y}_{n+1}$ such that
$\val(\mathbf{v},x,y^*)\le \val(\mathbf{v},x,y)$ for all $y$. 
The main part of the proof is to construct an order-preserving map 
$\iota:\mathsf{T}^\mathsf{A}_{\mathbf{v},x,y^*}\to
\mathsf{T}^\mathsf{A}_\mathbf{v}$ such that $\iota(\varnothing)=\{x\}$.
Because 
$\val(\mathbf{v})<\absinfty$, we know that $\prec_\mathbf{v}$ is well-founded.
It follows by monotonicity of rank under order-preserving maps that
$\prec_{\mathbf{v},x,y^*}$ is well-founded and
$$
	\val(\mathbf{v},x,y^*)=
	\rho_{\prec_{\mathbf{v},x,y^*}}(\varnothing) \le
	\rho_{\prec_{\mathbf{v}}}(\{x\}) <
	\rho_{\prec_{\mathbf{v}}}(\varnothing) = \val(\mathbf{v}),
$$
concluding the proof of the proposition.

It therefore remains to construct the map $\iota$. To this end, we use
Lemma \ref{lem:deptmap} to construct for every $y$ an order-preserving map
$f_y:\mathsf{T}^\mathsf{A}_{\mathbf{v},x,y^*}\to
\mathsf{T}^\mathsf{A}_{\mathbf{v},x,y}$ such that
$\mathrm{depth}(f(\mathbf{t}))=\mathrm{depth}(\mathbf{t})$.
Given any~$\mathbf{t}\in \mathsf{T}^\mathsf{A}_{\mathbf{v},x,y^*}$, 
we define a decision tree $\iota(\mathbf{t})$ by taking $x$ as its root
and attaching $f_y(\mathbf{t})$ as its subtree of the root-to-child edge
labelled by $y$, for every $y\in\mathcal{Y}_{n+1}$. By construction
$\iota(\mathbf{t})\in\mathsf{T}^\mathsf{A}_\mathbf{v}$ is an active tree,
$\iota(\varnothing)=\{x\}$, and $\iota$ is order-preserving as each of
the maps $f_y$ is order-preserving.
\end{proof}

As we assumed at the outset that \PII has a winning strategy, the initial 
value of the game is an ordinal $\val(\varnothing)<\absinfty$. We can now use 
Proposition \ref{prop:valdec} to describe an explicit winning strategy. In 
each round in which \PII has not yet won, 
for each point $x_t$ that is played by \PI, Proposition 
\ref{prop:valdec} ensures that \PII can choose $y_t$ so that 
$\val(x_1,y_1,\ldots,x_t,y_t)<\val(x_1,y_1,\ldots,x_{t-1},y_{t-1})$. 
\an{This choice of $y_t$ defines a winning strategy for \PII,}
because the ordinals are well-ordered.

\subsection{Measurability}

We have
\an{constructed} value-decreasing winning strategies for \PII. 
To conclude the proof of Theorem~\ref{thm:meas}, it remains to show that 
it is possible to construct a universally measurable value-decreasing 
strategy.
The main remaining step is to show that the set of positions 
with any given game value is measurable. 

\begin{lem}
\label{lem:gamerank}
For any $0\le n<\infty$, $\mathbf{v}\in\mathsf{P}_n$, and $\kappa\in\Ord$,
we have $\val(\mathbf{v})>\kappa$ if and only if there exists 
$x\in\mathcal{X}_{n+1}$ such that $\val(\mathbf{v},x,y)\ge\kappa$ for all
$y\in\mathcal{Y}_{n+1}$.
\end{lem}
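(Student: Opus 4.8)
The plan is to prove the two implications separately, each time reducing to the rank of the relation $\prec_{\mathbf v}$ via the gluing construction already used for Lemma~\ref{lem:deptmap} and Proposition~\ref{prop:valdec}. The structural fact to exploit is that a decision tree rooted at $x\in\mathcal X_{n+1}$ and starting from $\mathbf v\in\mathsf P_n$ is the same data as a choice, for each $y\in\mathcal Y_{n+1}$, of its branch-$y$ subtree, a decision tree starting from $(\mathbf v,x,y)$; this correspondence preserves activeness and, since removing leaves commutes with passing to subtrees, it intertwines $\prec_{\mathbf v}$ restricted to trees rooted at $x$ with $\prec_{(\mathbf v,x,y)}$ branchwise. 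Two consequences I record first: $\{x\}\in\mathsf T^{\mathsf A}_{\mathbf v}$ (equivalently $\{x\}\prec_{\mathbf v}\varnothing$) holds iff $\val(\mathbf v,x,y)\ge 0$ for every $y$; and every tree of $\mathsf T^{\mathsf A}_{\mathbf v}$ standing below $\{x\}$ in $\prec_{\mathbf v}$, as well as $\{x\}$ itself, is rooted at $x$, so $\rho_{\prec_{\mathbf v}}(\{x\})$ may be computed within this subclass.

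For $(\Rightarrow)$, assume $\val(\mathbf v)>\kappa$. If $\val(\mathbf v)=\absinfty$, an infinite active tree at $\mathbf v$ has some root $x$ whose branch-$y$ subtrees are all infinite and active, so $\val(\mathbf v,x,y)=\absinfty\ge\kappa$ for all $y$. If instead $\val(\mathbf v)=\rho_{\prec_{\mathbf v}}(\varnothing)$ is an ordinal, then $\prec_{\mathbf v}$ is well-founded, and since $\kappa<\rho_{\prec_{\mathbf v}}(\varnothing)$ Lemma~\ref{lem:basicrank}(c) furnishes $\{x\}\prec_{\mathbf v}\varnothing$ with $\rho_{\prec_{\mathbf v}}(\{x\})\ge\kappa$. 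Fix $y$: if $\val(\mathbf v,x,y)=\absinfty$ there is nothing to prove; otherwise $\prec_{(\mathbf v,x,y)}$ is well-founded and the branch-$y$ projection map from the trees of $\mathsf T^{\mathsf A}_{\mathbf v}$ rooted at $x$ into $(\mathsf T^{\mathsf A}_{(\mathbf v,x,y)},\prec_{(\mathbf v,x,y)})$ is order-preserving and sends $\{x\}$ to $\varnothing$, so monotonicity of rank gives $\val(\mathbf v,x,y)=\rho_{\prec_{(\mathbf v,x,y)}}(\varnothing)\ge\rho_{\prec_{\mathbf v}}(\{x\})\ge\kappa$. Hence this $x$ witnesses the right-hand side.

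For $(\Leftarrow)$, suppose $x$ satisfies $\val(\mathbf v,x,y)\ge\kappa$ for all $y$. Then $\{x\}$ is active, so $\mathbf v$ is active; if $\val(\mathbf v)=\absinfty$ we are done, so assume $\prec_{\mathbf v}$ is well-founded. Choose $y^*$ minimizing $y\mapsto\val(\mathbf v,x,y)$ (possible since the ordinals adjoined with $\absinfty$ are well-ordered); then $\val(\mathbf v,x,y^*)\le\val(\mathbf v,x,y)$ for all $y$, so Lemma~\ref{lem:deptmap} provides depth-preserving order-preserving maps $f_y:\mathsf T^{\mathsf A}_{(\mathbf v,x,y^*)}\to\mathsf T^{\mathsf A}_{(\mathbf v,x,y)}$. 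Gluing them at the root $x$ exactly as in the proof of Proposition~\ref{prop:valdec} yields an order-preserving $\iota:\mathsf T^{\mathsf A}_{(\mathbf v,x,y^*)}\to\mathsf T^{\mathsf A}_{\mathbf v}$ with $\iota(\varnothing)=\{x\}$. By monotonicity of rank (using well-foundedness of $\prec_{\mathbf v}$) we get $\val(\mathbf v,x,y^*)=\rho_{\prec_{(\mathbf v,x,y^*)}}(\varnothing)\le\rho_{\prec_{\mathbf v}}(\{x\})<\rho_{\prec_{\mathbf v}}(\varnothing)=\val(\mathbf v)$, the last strict inequality being Lemma~\ref{lem:basicrank}(a) applied to $\{x\}\prec_{\mathbf v}\varnothing$. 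Since $\val(\mathbf v,x,y^*)\ge\kappa$, this gives $\val(\mathbf v)>\kappa$.

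The genuinely delicate point — which I expect to be the main obstacle — is the infinite-ordinal case of $(\Rightarrow)$: one cannot build an active tree of transfinite depth, so the witness $x$ must be extracted abstractly from Lemma~\ref{lem:basicrank}(c) and the lower bound on $\rho_{\prec_{\mathbf v}}(\{x\})$ transported to each $\val(\mathbf v,x,y)$ through rank-monotonicity of the branch projection, rather than by an explicit tree construction. Everything else — the bijective correspondence between trees and tuples of subtrees, the preservation of activeness and of $\prec$ under it, and the arithmetic with the adjoined symbols $-1$ and $\absinfty$ — is routine bookkeeping.
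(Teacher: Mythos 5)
Your proof is correct and follows essentially the same route as the paper: the forward direction via Lemma~\ref{lem:basicrank}(c) and rank-monotonicity of the branch-$y$ projection (with the infinite-tree case handled separately), and the converse via the gluing construction. The only difference is cosmetic: for the converse the paper simply cites Proposition~\ref{prop:valdec} (which yields some $y$ with $\val(\mathbf{v},x,y)<\val(\mathbf{v})$, hence $\kappa\le\val(\mathbf{v},x,y)<\val(\mathbf{v})$), whereas you re-run its proof inline.
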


\begin{proof}
Suppose first there exists $x$ such that $\val(\mathbf{v},x,y)\ge\kappa$ 
for all $y$. If $\val(\mathbf{v})<\absinfty$, then it follows immediately
from Proposition
\ref{prop:valdec} that $\val(\mathbf{v})>\kappa$. On the other hand, if 
$\val(\mathbf{v})=\absinfty$, the conclusion is trivial.

In the opposite direction, let $\val(\mathbf{v})>\kappa$. If 
$\val(\mathbf{v})=\absinfty$, then choosing $x$ to be the root label of an 
infinite active tree yields $\val(\mathbf{v},x,y)=\absinfty\ge\kappa$ for 
all $y$. On the other hand, if $\val(\mathbf{v})<\absinfty$, then we have
$\rho_{\prec_\mathbf{v}}(\varnothing)=\val(\mathbf{v})>\kappa$. By 
the definition of rank, there exists $x$ such that 
$\{x\}\in\mathsf{T}^\mathsf{A}_\mathbf{v}$ and 
$\rho_{\prec_\mathbf{v}}(\{x\})+1>\kappa$ or, equivalently, 
$\rho_{\prec_\mathbf{v}}(\{x\})\ge\kappa$. Thus it remains to show 
that $\rho_{\prec_\mathbf{v}}(\{x\})\le\val(\mathbf{v},x,y)$ for every $y$.

To this end, we follow in essence the reverse of the argument used in the 
proof of Proposition~\ref{prop:valdec}. Denote by 
$\mathsf{T}^\mathsf{A}_{\mathbf{v},x}\subseteq\mathsf{T}^\mathsf{A}_\mathbf{v}$ 
the set of active trees with root $x$, and by $\prec_{\mathbf{v},x}$ the 
induced relation. The definition of rank implies 
$\rho_{\prec_\mathbf{v}}(\{x\})=\rho_{\prec_{\mathbf{v},x}}(\{x\})$. On 
the other hand, for any $\mathbf{t}\in 
\mathsf{T}^\mathsf{A}_{\mathbf{v},x}$, denote by $f_y(\mathbf{t})\in 
\mathsf{T}^\mathsf{A}_{\mathbf{v},x,y}$ its subtree of the 
root-to-child edge labelled by $y$. Then  
$f_y:\mathsf{T}^\mathsf{A}_{\mathbf{v},x}\to 
\mathsf{T}^\mathsf{A}_{\mathbf{v},x,y}$ is an order-preserving map such 
that $f_y(\{x\})=\varnothing$. 
Therefore, either 
$\val(\mathbf{v},x,y)=\absinfty$, or
$$
	\rho_{\prec_\mathbf{v}}(\{x\})=
	\rho_{\prec_{\mathbf{v},x}}(\{x\})\le
	\rho_{\prec_{\mathbf{v},x,y}}(\varnothing)=
	\val(\mathbf{v},x,y)
$$
by monotonicity of rank under order-preserving maps. 
\end{proof}

\begin{cor}
\label{cor:anal}
The set 
$$
	\mathsf{A}_n^\kappa := 
	\{\mathbf{v}\in\mathsf{A}_n:\val(\mathbf{v})>\kappa\}
$$
is analytic for every $0\le n<\infty$ and $-1\le\kappa<\omega_1$. 
\end{cor}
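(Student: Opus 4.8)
The plan is to prove, by transfinite induction on $\kappa$ ranging over $[-1,\omega_1)$, the statement $P(\kappa)$: for every $0\le n<\infty$, the set $\mathsf{A}_n^\kappa$ is analytic. The base case $\kappa=-1$ is immediate: every active position $\mathbf{v}$ has $\mathsf{T}^\mathsf{A}_\mathbf{v}\ne\varnothing$ (it contains the depth-$0$ tree $\varnothing$, which is active precisely because $\mathbf{v}$ is), so $\val(\mathbf{v})\ne-1$ and hence $\val(\mathbf{v})>-1$; thus $\mathsf{A}_n^{-1}=\mathsf{A}_n$, which is analytic by the earlier observation that $\mathsf{W}$ coanalytic makes each $\mathsf{A}_n$ analytic.

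For the inductive step, fix an ordinal $\kappa$ with $0\le\kappa<\omega_1$ and assume $P(\kappa')$ for all $-1\le\kappa'<\kappa$. I would first record the elementary fact that, for $v\in\Ord^*$ and an ordinal $\kappa\ge 0$, one has $v\ge\kappa$ if and only if $v>\kappa'$ for every $\kappa'$ with $-1\le\kappa'<\kappa$ (checking the cases $v=-1$, $v\in\Ord$, $v=\absinfty$ directly). Writing a position of length $n+1$ as $(\mathbf{v},x,y)$ with $\mathbf{v}\in\mathsf{P}_n$, $x\in\mathcal{X}_{n+1}$, $y\in\mathcal{Y}_{n+1}$, this gives
\[
	B \;:=\; \{(\mathbf{v},x,y)\in\mathsf{P}_{n+1}:\val(\mathbf{v},x,y)\ge\kappa\}
	\;=\;\bigcap_{-1\le\kappa'<\kappa}\mathsf{A}_{n+1}^{\kappa'},
\]
where I also use that $\val(\mathbf{w})>\kappa'\ge-1$ forces $\mathbf{w}\in\mathsf{A}_{n+1}$, so the right-hand side really is an intersection of the sets $\mathsf{A}_{n+1}^{\kappa'}$ from the inductive hypothesis. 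Since $\kappa<\omega_1$, this is a countable intersection of analytic sets, hence $B$ is analytic. Now Lemma \ref{lem:gamerank} yields
\[
	\mathsf{A}_n^\kappa \;=\; \mathsf{A}_n\cap\bigl\{\mathbf{v}\in\mathsf{P}_n:\exists x\in\mathcal{X}_{n+1},\ \forall y\in\mathcal{Y}_{n+1},\ (\mathbf{v},x,y)\in B\bigr\}.
\]
For each fixed $y$ the section $B_y:=\{(\mathbf{v},x):(\mathbf{v},x,y)\in B\}$ is analytic (e.g.\ as the image of $B\cap(\mathsf{P}_n\times\mathcal{X}_{n+1}\times\{y\})$ under the coordinate projection, or as the preimage of $B$ under $(\mathbf{v},x)\mapsto(\mathbf{v},x,y)$), so $C:=\bigcap_{y\in\mathcal{Y}_{n+1}}B_y$ is a countable intersection of analytic sets and hence analytic — here is where countability of $\mathcal{Y}_{n+1}$ enters. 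Its projection to $\mathsf{P}_n$ is analytic, and intersecting with the analytic set $\mathsf{A}_n$ gives that $\mathsf{A}_n^\kappa$ is analytic, completing the induction.

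I expect no serious obstacle: the argument is essentially bookkeeping with the standard closure of analytic sets under countable unions, countable intersections, Borel preimages, and projections (Appendix \ref{sec:polish}). The two points that need genuine care are (i) translating the order condition ``$\val\ge\kappa$'' into the countable intersection of the sets $\mathsf{A}_{n+1}^{\kappa'}$ supplied by the inductive hypothesis while correctly handling the adjoined values $-1$ and $\absinfty$, and (ii) making sure every union and intersection used is countable — which is exactly the role of the hypotheses ``$\mathcal{Y}_t$ countable'' and ``$\kappa<\omega_1$'' (the latter being what Lemma \ref{lem:valfin} guarantees is the only range of values one ever needs to consider).
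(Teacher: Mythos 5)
Your proof is correct and follows essentially the same route as the paper's: induction on $\kappa$ with base case $\mathsf{A}_n^{-1}=\mathsf{A}_n$, then Lemma \ref{lem:gamerank} to express $\mathsf{A}_n^\kappa$ via a countable intersection over $\lambda<\kappa$ and $y\in\mathcal{Y}_{n+1}$ followed by a projection over $x\in\mathcal{X}_{n+1}$. The only difference is presentational: you spell out explicitly (via the sets $B$, $B_y$, $C$) the translation of ``$\val\ge\kappa$'' and the fact that the uncountable union over $x$ is a projection of an analytic set, steps the paper leaves implicit.
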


\begin{proof}
The proof is by induction on $\kappa$. First note that
$\mathsf{A}_n^{-1}=\mathsf{A}_n$ is analytic for every $n$. Now for any 
$0\le\kappa<\omega_1$, by Lemma \ref{lem:gamerank},
\begin{align*}
	\mathsf{A}_n^\kappa &=
	\bigcup_{x\in\mathcal{X}_{n+1}}
	\bigcap_{y\in\mathcal{Y}_{n+1}}
	\bigcap_{\lambda<\kappa}
        \{\mathbf{v}\in\mathsf{A}_n:\val(\mathbf{v},x,y)>\lambda\}
	\\ &=
	\bigcup_{x\in\mathcal{X}_{n+1}}
	\bigcap_{y\in\mathcal{Y}_{n+1}}
	\bigcap_{\lambda<\kappa}
        \{\mathbf{v}\in\mathsf{A}_n:(\mathbf{v},x,y)\in\mathsf{A}_{n+1}^\lambda\}.
\end{align*}
As $\kappa<\omega_1$, the intersections in 
this expression are countable. Therefore, as $\mathsf{A}_{n+1}^\lambda$ is 
analytic for $\lambda<\kappa$ by the induction hypothesis, it follows
that $\mathsf{A}_n^\kappa$ is analytic.
\end{proof}

We can now conclude the proof of Theorem \ref{thm:meas}.

\vspace*{\abovedisplayskip}

\begin{proof}[of Theorem \ref{thm:meas}]
We assume that \PII has a winning strategy 
(otherwise the conclusion is trivial).
For any $0\le n<\infty$, \an{ define
\begin{align*}
	\mathsf{D}_{n+1} &:=
	\{(\mathbf{v},x,y)\in\mathsf{P}_{n+1}:
	\val(\mathbf{v},x,y)< \min \{ \val(\mathbf{v}),  \val(\varnothing)\}\}
	\\
	&\phantom{:}=\bigcup_{-1\le\kappa<\val(\varnothing)}
	\{(\mathbf{v},x,y)\in\mathsf{P}_{n+1}:
	\val(\mathbf{v},x,y)\le\kappa<\val(\mathbf{v})\} \\
	&\phantom{:}=\bigcup_{-1\le\kappa<\val(\varnothing)}
	\{(\mathbf{v},x,y)\in\mathsf{P}_{n+1}:
	(\mathbf{v},x,y)\in (\mathsf{A}_{n+1}^\kappa)^c,~
	\mathbf{v}\in\mathsf{A}_n^\kappa\},
\end{align*}
where} $\mathsf{A}_n^\kappa$ is defined in Corollary \ref{cor:anal}.
As \PII has a winning strategy, Lemma \ref{lem:valfin} implies that
$\val(\varnothing)<\omega_1$. Thus the union in the definition of
$\mathsf{D}_{n+1}$ is countable, and it follows from Corollary 
\ref{cor:anal} that $\mathsf{D}_{n+1}$ is universally measurable.

Now define for every $t\ge 1$ the map $g_t:\mathsf{P}_{t-1}\times
\mathcal{X}_t\to\mathcal{Y}_t$ as follows. As $\mathcal{Y}_t$ is 
countable, we may enumerate it as $\mathcal{Y}_t=\{y^1,y^2,y^3,\ldots\}$.
Set
$$
	g_t(\mathbf{v},x) :=
	\left\{
	\begin{array}{ll}
	y^i &\mbox{if } (\mathbf{v},x,y^j)\not\in
	\mathsf{D}_t\mbox{ for }j<i,~
	(\mathbf{v},x,y^i)\in \mathsf{D}_t,\\
	y^1 &\mbox{if } (\mathbf{v},x,y^j)\not\in
        \mathsf{D}_t\mbox{ for all }j.
	\end{array}
	\right.
$$
In words, $g_t(\mathbf{v},x)=y^i$ for the first index $i$ 
such that $(\mathbf{v},x,y^i)\in \mathsf{D}_t$, and we set it arbitrarily 
to $y^1$ if $(\mathbf{v},x,y^j)\not\in \mathsf{D}_t$ for all $j$. This defines a universally measurable strategy for \PII. It 
remains to show this strategy is winning.

To this end, suppose that $\val(x_1,y_1,\ldots,x_{t-1},y_{t-1})\le
\val(\varnothing)$. By Proposition \ref{prop:valdec},
for every $x_t$ there exists $y_t$ so that 
$(x_1,y_1,\ldots,x_t,y_t)\in\mathsf{D}_t$. Thus playing
$y_t=g_t(x_1,y_1,\ldots,x_{t-1},y_{t-1},x_t)$ yields, by the
definition of $g_t$,
$$
	\val(x_1,y_1,\ldots,x_t,y_t)
	<
	\val(x_1,y_1,\ldots,x_{t-1},y_{t-1}).
$$
The assumption 
$\val(x_1,y_1,\ldots,x_{t-1},y_{t-1})\le\val(\varnothing)$ certainly holds 
for $t=0$. It thus remains valid for any $t$ as long as \PII plays the 
strategy $\{g_t\}$. It follows that $\{g_t\}$ is a 
value-decreasing strategy, so it is winning for \PII.
\end{proof}

\section{A nonmeasurable example}
\label{sec:nonmeas}

To fully appreciate the measurability issues that arise in this paper, it 
is illuminating to consider what can go wrong if we do not assume 
measurability in the sense of Definition \ref{defn:suslin}. To this end we 
revisit in this section a standard example from empirical process theory 
(cf.\ \cite[Chapter 5]{Dud14} or \cite[p.\ 953]{blumer:89}) in our setting.

For the purposes of this section, we assume validity of the continuum 
hypothesis $\mathop{\mathrm{card}}([0,1])=\aleph_1$. (This is not assumed 
anywhere else in the paper.) We may therefore identify $[0,1]$ with 
$\omega_1$. In particular, this induces a well-ordering of $[0,1]$ which 
we will denote $\lessdot$, to distinguish it from the usual ordering of 
the reals.

To construct our example, we let $\mathcal{X}=[0,1]$ and
$$
	\mathcal{H}=\{x\mapsto \mathbf{1}_{x\lessdoteqsub z}:z\in[0,1]\}.
$$
Every $h\in\mathcal{H}$ is the indicator of a countable set 
(being an initial segment of $\omega_1$). In particular, each 
$h\in\mathcal{H}$ is individually measurable. However, measurability in the sense of Definition \ref{defn:suslin} 
fails \an{for $\cH$}.

\begin{lem}
\label{lem:bizarro}
For the example of this section, the set
$$
	S=\{(x_1,x_2)\in \mathcal{X}^2:
	\mathcal{H}_{x_1,0,x_2,1}\ne\varnothing\}
$$
has inner measure $0$ and outer measure $1$ with respect to the Lebesgue
measure.
\an{In particular, $S$ is not Lebesgue measurable.}
\end{lem}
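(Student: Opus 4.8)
The plan is to compute the set $S$ explicitly, recognise it as a variant of Sierpi\'nski's classical example of a plane set with countable horizontal sections and co-countable vertical sections, and then run a Fubini/Tonelli argument in each coordinate direction. First I would unwind the membership condition: a hypothesis $h_z(x)=\mathbf{1}_{x\lessdoteqsub z}$ lies in $\mathcal{H}_{x_1,0,x_2,1}$ exactly when $x_2\lessdoteqsub z$ while $z$ strictly precedes $x_1$ in the well-order, i.e.\ $x_2\lessdoteqsub z\lessdot x_1$. Since $\lessdoteqsub$ is a linear order, such a $z$ exists if and only if $x_2\lessdot x_1$ (take $z=x_2$ when this holds; conversely $x_1\lessdoteqsub x_2\lessdoteqsub z$ rules it out). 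Hence
$$
	S=\{(x_1,x_2)\in[0,1]^2:x_2\lessdot x_1\},\qquad
	S^c=\{(x_1,x_2)\in[0,1]^2:x_1\lessdoteqsub x_2\}.
$$
Because the continuum hypothesis lets us identify $([0,1],\lessdoteqsub)$ with $\omega_1$, every proper initial segment of $\lessdoteqsub$ is countable; therefore, for each fixed $x_1$ the section $S_{x_1}=\{x_2:x_2\lessdot x_1\}$ is countable, and for each fixed $x_2$ the section $(S^c)^{x_2}=\{x_1:x_1\lessdoteqsub x_2\}$ is countable. In particular both of these sections are Lebesgue-null.

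For the inner measure I would use inner regularity of Lebesgue measure to write $\lambda_*(S)=\sup\{\lambda^2(B):B\subseteq S,\ B\text{ Borel}\}$. For any such $B$, the horizontal sections $B_{x_1}$ are Borel and contained in the countable set $S_{x_1}$, hence null; Tonelli's theorem applied to the Borel set $B$ then gives $\lambda^2(B)=\int_0^1\lambda(B_{x_1})\,dx_1=0$, so $\lambda_*(S)=0$. The same argument applied to $S^c$, now integrating over $x_2$ and using that the sections $(S^c)^{x_2}$ are null, yields $\lambda_*(S^c)=0$; since $\lambda^*(S)+\lambda_*(S^c)=\lambda^2([0,1]^2)=1$ on the probability space $[0,1]^2$, we conclude $\lambda^*(S)=1$. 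As the inner and outer measures of $S$ disagree, $S$ is not Lebesgue measurable, which gives the final assertion.

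The one delicate point — and the only place where the non-measurability of $S$ could cause trouble — is the inner-measure step: one must \emph{first} pass to a Borel subset $B\subseteq S$ and only \emph{then} invoke Tonelli, so that measurability of $S$ itself is never used; sections of Borel sets are automatically Borel, and a Borel subset of a countable set is null, so no completion issues enter. Everything else is routine: the order-theoretic identification of $S$ in the first step, the countability of initial segments of $\omega_1$, and the complementation identity $\lambda^*(S)=1-\lambda_*(S^c)$.
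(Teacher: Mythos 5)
Your proof is correct and follows essentially the same route as the paper: identify $S$ as the Sierpi\'nski-type set $\{(x_1,x_2):x_2\lessdot x_1\}$, observe that its horizontal sections are countable and its vertical sections co-countable, and apply Fubini/Tonelli in the two coordinate orders. The only difference is presentational — the paper first derives the Fubini contradiction under the assumption that $S$ is measurable and then remarks that the inner/outer measure values follow by passing to measurable covers, whereas you compute $\lambda_*(S)=\lambda_*(S^c)=0$ directly via inner regularity and Tonelli on Borel subsets and deduce non-measurability as a corollary; this fills in the paper's ``follows readily'' step cleanly.
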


\begin{proof}
By the definition of $\mathcal{H}$, we have
$$
	S = \{(x_1,x_2)\in\mathcal{X}^2: x_2\lessdot x_1\}.
$$
If $S$ were Lebesgue-measurable, then Fubini's theorem would yield
$$
	0=	
	\int_0^1\bigg(\int_0^1 \mathbf{1}_S(x_1,x_2)\,dx_2 \bigg)dx_1
	\stackrel{?}{=}
	\int_0^1\bigg(\int_0^1 \mathbf{1}_S(x_1,x_2)\,dx_1 \bigg)dx_2=
	1,
$$
where we used that $x_2\mapsto \mathbf{1}_S(x_1,x_2)$ is the indicator of a 
countable set and that $x_1\mapsto \mathbf{1}_S(x_1,x_2)$ is the indicator of the 
complement of a countable set. This is evidently absurd, so $S$ cannot be
Lebesgue-measurable. That the outer measure of $S$ is one and the inner 
measure is zero follows readily from the above Fubini identities by 
bounding $S$ and $S^c$ by its measurable cover, respectively.
\end{proof}

\begin{cor}
The class $\mathcal{H}$ is not measurable
in the sense of Definition \ref{defn:suslin}.
\end{cor}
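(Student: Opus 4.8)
The plan is to derive the corollary directly from Lemma~\ref{lem:bizarro} by contraposition, using the stability of analytic sets under projection together with the capacitability theorem (Theorem~\ref{thm:choquet}). Suppose, toward a contradiction, that $\mathcal{H}$ is measurable in the sense of Definition~\ref{defn:suslin}: there is a Polish space $\Theta$ and a Borel map $\mathsf{h}:\Theta\times\mathcal{X}\to\{0,1\}$ with $\mathcal{H}=\{\mathsf{h}(\theta,\cdot):\theta\in\Theta\}$.

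The first step is to observe that the set $S$ from Lemma~\ref{lem:bizarro} can be written as a projection of a Borel set. Indeed,
\[
  S=\bigl\{(x_1,x_2)\in\mathcal{X}^2:\exists\,\theta\in\Theta\ \text{with}\ \mathsf{h}(\theta,x_1)=0,\ \mathsf{h}(\theta,x_2)=1\bigr\},
\]
since $\mathcal{H}_{x_1,0,x_2,1}\ne\varnothing$ precisely when some parameter $\theta$ realizes the labels $h(x_1)=0$, $h(x_2)=1$. The set $B=\{(\theta,x_1,x_2):\mathsf{h}(\theta,x_1)=0,\ \mathsf{h}(\theta,x_2)=1\}$ is Borel in the Polish space $\Theta\times\mathcal{X}^2$ because $\mathsf{h}$ is Borel-measurable, and $S$ is the image of $B$ under the continuous coordinate projection $(\theta,x_1,x_2)\mapsto(x_1,x_2)$. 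Hence $S$ is analytic (see Appendix~\ref{sec:polish}).

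The second step is to invoke Theorem~\ref{thm:choquet}: every analytic set is universally measurable, so in particular $S$ is Lebesgue measurable. This directly contradicts the conclusion of Lemma~\ref{lem:bizarro}, which asserts that $S$ has inner Lebesgue measure $0$ and outer Lebesgue measure $1$, hence is not Lebesgue measurable. Therefore $\mathcal{H}$ cannot be measurable in the sense of Definition~\ref{defn:suslin}.

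There is no real obstacle here — the argument is a short formal deduction — but the one point to state carefully is that the measurability assumption genuinely forces $S$ to be \emph{analytic} (not merely a projection of an arbitrary set): this is exactly where the Polishness of the parameter space $\Theta$ and the Borel-measurability of $\mathsf{h}$ are used, and it is the reason the capacitability theorem applies. (One should also recall that this deduction is carried out under the continuum hypothesis, as assumed throughout this section, since that is what Lemma~\ref{lem:bizarro} relies on.)
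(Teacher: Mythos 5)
Your proposal is correct and is essentially the paper's own argument: the paper likewise assumes measurability, notes (by the same projection argument as in Corollary~\ref{cor:measadv}) that $S$ would then be analytic, and derives a contradiction with Lemma~\ref{lem:bizarro} via Theorem~\ref{thm:choquet}. You have merely spelled out the projection step explicitly rather than citing Corollary~\ref{cor:measadv}, which is fine.
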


\begin{proof}
If $\mathcal{H}$ were measurable in the sense of Definition 
\ref{defn:suslin}, then the same argument as in the proof 
of Corollary \ref{cor:measadv} would show that $S$ is analytic. But this 
contradicts Lemma \ref{lem:bizarro}, as analytic sets are universally 
measurable by Theorem \ref{thm:choquet}. 
\end{proof}

Lemma \ref{lem:bizarro} illustrates the fundamental importance of 
measurability in our theory. For example, suppose player \PI in the the 
game $\mathfrak{G}$ of section \ref{sec:onlgs} draws i.i.d.\ random plays 
$x_1,x_2,\ldots$ from the Lebesgue measure on $[0,1]$. Even if player \PII 
plays the simplest type of strategy---the deterministic strategy 
$y_1=0$, $y_2=1$---the fact that \PII wins in the second round 
\an{is not measurable.}
Moreover, one can show \an{(see
the} proof of Lemma \ref{lem:counterw1} below) that 
any value-minimizing strategy for \PII in the sense of Section 
\ref{sec:awinning} plays $y_1=0$, $y_2=1$ for $(x_1,x_2)\in S^c$. 
So, the same
\an{problem arises} for the winning strategies constructed by 
Theorem~\ref{thm:meas}. 

This kind of behavior would 
undermine any reasonable probabilistic analysis of the learning problems 
in this paper. Even the definitions of learning rates make no sense when 
the probabilities of events have no meaning. The above example therefore 
illustrates that measurability is crucial for learning problems with 
random data.

It is instructive to check what goes wrong if one attempts to prove 
the existence of measurable strategies as in Theorem \ref{thm:meas} for the 
present example. The coanalyticity assumption was used in 
the proof of Theorem \ref{thm:meas} in two different ways. First, it 
ensures that the sets of active positions~$\mathsf{A}_n$ and the 
super-level sets of the value function $\mathsf{A}_n^\kappa$ are 
measurable for countable $\kappa$ (cf.\ Corollary \ref{cor:anal}). 
This immediately fails in the present example (Lemma~\ref{lem:bizarro}). Secondly, coanalyticity was used to show 
that only countable game values can appear (cf.\ Lemma \ref{lem:valfin}). 
We presently show that the latter also fails in the present example, so 
that coanalyticity is really essential for both parts of the proof.

\begin{lem}
\label{lem:counterw1}
In the present example, the game $\mathfrak{G}$ 
satisfies $\val(\varnothing)\ge\omega_1$.
\end{lem}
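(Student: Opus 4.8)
The plan is to show that $\val(\varnothing)=\LD(\mathcal{H})\ge\omega_1$, where $\LD$ denotes the ordinal Littlestone dimension, by combining the recursive description of $\LD$ furnished by the game-value machinery with a transfinite construction of subclasses of $\mathcal{H}$ of arbitrarily large countable Littlestone dimension.

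First I recall the identity $\val(x_1,y_1,\ldots,x_t,y_t)=\LD(\mathcal{H}_{x_1,y_1,\ldots,x_t,y_t})$ for the game $\mathfrak{G}$ noted in Section~\ref{sec:old}; in particular $\val(\varnothing)=\LD(\mathcal{H})$, and if $\mathcal{H}$ has an infinite Littlestone tree then this already equals $\absinfty\ge\omega_1$, so I may assume $\LD$ is ordinal-valued on $\mathcal{H}$ and on all its restrictions. The only nontrivial input needed is the recursion: for every class $\mathcal{C}$ of the form $\mathcal{H}_{\mathbf{v}}$ and every ordinal $\kappa$,
\[
	\LD(\mathcal{C})>\kappa
	\quad\Longleftrightarrow\quad
	\exists\,x\in\mathcal{X}\colon\ \LD(\mathcal{C}_{x,0})\ge\kappa\ \text{ and }\ \LD(\mathcal{C}_{x,1})\ge\kappa,
\]
which is Lemma~\ref{lem:gamerank} specialized to $\mathfrak{G}$ through the above identity. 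I would stress that the proofs of Lemma~\ref{lem:gamerank} and of the underlying Proposition~\ref{prop:valdec} are purely combinatorial and invoke no measurability hypothesis; this is precisely why the present conclusion is consistent (it contradicts Lemma~\ref{lem:valfin}, whose proof genuinely uses coanalyticity of $\mathsf{W}$, but nothing used here). I also record the elementary monotonicity $\mathcal{C}'\subseteq\mathcal{C}\Rightarrow\LD(\mathcal{C}')\le\LD(\mathcal{C})$: every Littlestone tree for $\mathcal{C}'$ is a Littlestone tree for $\mathcal{C}$, so the inclusion of the corresponding tree-posets is order-preserving and rank is monotone under order-preserving maps.

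Next, identify $\mathcal{X}=[0,1]$ with the ordinal $\omega_1$ via the well-order $\lessdot$, so that $h_z=\mathbf{1}_{x\lessdoteqsub z}$ is the indicator of the initial segment $[0,z]$. For $\gamma<\omega_1$ set $\mathcal{H}^{(\gamma)}:=\{h_z:z<\gamma\}$. Note that $\mathcal{H}^{(\gamma)}=\mathcal{H}_{\gamma,0}$ (restrict $\mathcal{H}$ by $h(\gamma)=0$), so the recursion applies to each $\mathcal{H}^{(\gamma)}$, and $\mathcal{H}^{(\gamma)}\subseteq\mathcal{H}^{(\gamma')}\subseteq\mathcal{H}$ whenever $\gamma\le\gamma'$. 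Two structural observations: $\mathcal{H}^{(\gamma)}_{x,0}=\mathcal{H}^{(x)}$ for every $x\le\gamma$; and for any $\beta<\omega_1$ the class $\mathcal{H}^{(\beta\cdot 2)}_{\beta,1}=\{h_z:\beta\le z<\beta\cdot 2\}$ is isomorphic as a concept class to $\mathcal{H}^{(\beta)}$, via the order-isomorphism $\xi\mapsto\beta+\xi$ of the underlying domains, hence has the same Littlestone dimension. I then prove by transfinite induction that for every $\mu<\omega_1$ there is a countable ordinal $\gamma(\mu)$ with $\LD(\mathcal{H}^{(\gamma(\mu))})\ge\mu$. For the base case take $\gamma(0)=1$; then $\mathcal{H}^{(1)}=\{h_0\}$ is a singleton class, with $\LD=0$. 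At a limit $\mu$, fix $\mu_i\nearrow\mu$ and set $\gamma(\mu):=\sup_i\gamma(\mu_i)<\omega_1$; monotonicity gives $\LD(\mathcal{H}^{(\gamma(\mu))})\ge\LD(\mathcal{H}^{(\gamma(\mu_i))})\ge\mu_i$ for all $i$, hence $\ge\mu$. At a successor $\mu+1$, set $\gamma(\mu+1):=\gamma(\mu)\cdot 2<\omega_1$ and apply the recursion with $\mathcal{C}=\mathcal{H}^{(\gamma(\mu)\cdot 2)}$, $\kappa=\mu$, and $x=\gamma(\mu)$: both $\mathcal{C}_{x,0}=\mathcal{H}^{(\gamma(\mu))}$ and $\mathcal{C}_{x,1}\cong\mathcal{H}^{(\gamma(\mu))}$ have $\LD\ge\mu$, so $\LD(\mathcal{H}^{(\gamma(\mu)\cdot 2)})>\mu$, i.e.\ $\ge\mu+1$.

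Finally, for each $\mu<\omega_1$ we obtain $\LD(\mathcal{H})\ge\LD(\mathcal{H}^{(\gamma(\mu))})\ge\mu$ by monotonicity, whence $\LD(\mathcal{H})\ge\omega_1$ and therefore $\val(\varnothing)\ge\omega_1$. The routine parts are the ordinal-arithmetic closure facts (that $\omega_1$ is closed under $\beta\mapsto\beta\cdot 2$ and under countable suprema), the identity $\mathcal{H}^{(\gamma)}=\mathcal{H}_{\gamma,0}$, and the concept-class isomorphism used at successor steps. The one genuinely substantive ingredient is the ``$\Leftarrow$'' direction of the recursion — morally, that two Littlestone structures of rank $\ge\kappa$ placed beneath the two labels of a common root combine into one of rank $>\kappa$ — and the main point to be careful about is simply that this fact (Lemma~\ref{lem:gamerank}, resting on Proposition~\ref{prop:valdec}) remains valid in the present non-measurable setting, because its derivation never appeals to measurability of $\mathcal{H}$ or of $\mathsf{W}$.
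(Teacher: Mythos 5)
Your proof is correct and follows the same core strategy as the paper's: reduce to the ordinal Littlestone dimension via $\val(\varnothing)=\LD(\mathcal{H})$, observe that the sufficient condition in Lemma~\ref{lem:gamerank} (resting on Proposition~\ref{prop:valdec}) is purely combinatorial and requires no measurability, and run a transfinite induction producing initial segments of the well-order whose threshold subclasses have Littlestone dimension at least $\mu$ for every $\mu<\omega_1$. The only substantive difference is the ordinal bookkeeping. The paper indexes its subclasses by the additively indecomposable ordinals $\omega^\beta$, so that in the inductive step the branch $h(\omega^\alpha)=1$ is (up to irrelevant points) the whole class again, and a single induction on $\beta$, applied at $\beta=\omega_1=\omega^{\omega_1}$, covers successor and limit stages simultaneously. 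You instead double the indexing ordinal at successor stages, so that both branches at the chosen root are copies of the previous stage, and you handle limit stages separately via countable suprema and monotonicity of $\LD$ under inclusion. Your variant avoids any appeal to indecomposable ordinals at the cost of a separate limit case; both are equally valid. One point you (like the paper) treat briskly: the ``concept-class isomorphism'' $\mathcal{H}^{(\beta\cdot 2)}_{\beta,1}\cong\mathcal{H}^{(\beta)}$ is not induced by a bijection of the full domain $\mathcal{X}$, only of the respective sets of points on which the hypotheses disagree; to transfer the Littlestone dimension one must add the easy observation that a point on which all remaining hypotheses agree cannot occur in an active tree, which is exactly the remark the paper makes in its own inductive step.
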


\begin{proof}
As in Section \ref{sec:old}, for the game $\mathfrak{G}$ we denote 
$\LD(\mathcal{H}):=\val(\varnothing)$, and we recall that
$\val(x_1,y_1,\ldots,x_t,y_t)=\LD(\mathcal{H}_{x_1,y_1,\ldots,x_t,y_t})$.

We must recall some facts about ordinals \cite[section XIV.20]{Sie65}. An 
ordinal $\kappa$ is called additively indecomposable if 
$\xi+\kappa=\kappa$ for every $\xi<\kappa$, or, equivalently, if the 
ordinal segment $[\xi,\kappa)$ is isomorphic to $\kappa$ for all 
$\xi<\kappa$. An ordinal is additively indecomposable if and only if it is 
of the form $\omega^\beta$ for some ordinal $\beta$. Moreover, 
$\omega_1=\omega^{\omega_1}$, so that $\omega_1$ is additively 
indecomposable.

For every ordinal $\beta$, define the class of indicators 
$\mathcal{H}^\beta=\{\lambda\mapsto 
\mathbf{1}_{\lambda\le\kappa}:\kappa\in\omega^\beta\}$ on 
$\mathcal{X}^\beta=\omega^\beta$. We now prove by induction on 
$\beta$ that $\LD(\mathcal{H}^\beta)\ge\beta$ for each $\beta$.
Choosing $\beta=\omega_1$ then shows that $\LD(\mathcal{H})\ge\omega_1$.

For the initial step, it suffices that $\LD(\mathcal{H}^0)=0$ 
because $\mathcal{X}^0=1$ and $\mathcal{H}^0=\{0\}$.
Now suppose we have proved that $\LD(\mathcal{H}^\alpha)\ge\alpha$ for all 
$\alpha<\beta$. Note first that \an{$\mathcal{H}^\beta_{\omega^\alpha,0}=
\mathcal{H}^\alpha$,} where we view the latter as functions on
$\mathcal{X}^\beta$. However, all functions in $\mathcal{H}^\alpha$ 
take the same value on points in 
$\mathcal{X}^\beta\backslash\mathcal{X}^\alpha$, so such points
cannot appear in any active tree. It follows immediately that
\an{$\LD(\mathcal{H}^\beta_{\omega^\alpha,0})=\LD(\mathcal{H}^\alpha)$.}
By the same reasoning, now using that $[\omega^\alpha,\omega^\beta)$ is 
isomorphic to $\omega^\beta$, it follows that
\an{$\LD(\mathcal{H}^\beta_{\omega^\alpha,1})=\LD(\mathcal{H}^\beta)$.}
Thus $\LD(\mathcal{H}^\beta)>\LD(\mathcal{H}^\alpha)\ge \alpha$ by the 
induction hypothesis and Lemma \ref{lem:gamerank}. As this holds for any 
$\alpha<\beta$, we have shown $\LD(\mathcal{H}^\beta)\ge\beta$.
\end{proof}

Let us conclude our discussion of measurability by emphasizing that even 
in the presence of a measurability assumption such as Definition 
\ref{defn:suslin} or coanalitycity of $\mathsf{W}$ in Theorem 
\ref{thm:meas}, the key reason why we are able to construct measurable 
strategies is that we assumed \PII plays values in countable sets 
$\mathcal{Y}_t$ (as is the case for all the games encountered in this 
paper). In general Gale-Stewart games where both \PI and \PII play values 
in Polish spaces, there is little hope of obtaining measurable strategies 
in a general setting. Indeed, an inspection of the proof of Corollary 
\ref{cor:anal} shows that the super-level sets of the value function are 
constructed by successive unions over $\mathcal{X}_t$ and intersections 
over $\mathcal{Y}_t$. Namely, by alternating projections and complements.
However, it is consistent with the axioms of set theory (ZFC) that the 
projection of a coanalytic set may be Lebesgue-nonmeasurable
\cite[Corollary 25.28]{Jec03}. Thus it is
\an{possible} to construct examples of 
Gale-Stewart games where $\mathcal{X}_t,\mathcal{Y}_t$ are Polish,
$\mathsf{W}$ is closed or open, and the set $\mathsf{A}_n^\kappa$ of
Corollary \ref{cor:anal} is nonmeasurable for $\kappa=0$ or $1$.
In contrast, because we assumed $\mathcal{Y}_t$ are countable,
only the unions over $\mathcal{X}_t$ play a nontrivial 
role in our setting and analyticity is preserved in the construction.

\bibliography{learning}

\begin{thebibliography}{36}
\providecommand{\natexlab}[1]{#1}
\providecommand{\url}[1]{\texttt{#1}}
\expandafter\ifx\csname urlstyle\endcsname\relax
  \providecommand{\doi}[1]{doi: #1}\else
  \providecommand{\doi}{doi: \begingroup \urlstyle{rm}\Url}\fi

\bibitem[Antos and Lugosi(1998)]{antos:98}
A.~Antos and G.~Lugosi.
\newblock Strong minimax lower bounds for learning.
\newblock \emph{Machine Learning}, 30:\penalty0 31--56, 1998.

\bibitem[Audibert and Tsybakov(2007)]{audibert:07}
J.-Y. Audibert and A.~B. Tsybakov.
\newblock Fast learning rates for plug-in classifiers.
\newblock \emph{The Annals of Statistics}, 35\penalty0 (2):\penalty0 608--633,
  2007.

\bibitem[Balcan et~al.(2010)Balcan, Hanneke, and Vaughan]{hanneke:10a}
M.-F. Balcan, S.~Hanneke, and J.~Wortman Vaughan.
\newblock The true sample complexity of active learning.
\newblock \emph{Machine Learning}, 80\penalty0 (2--3):\penalty0 111--139, 2010.

\bibitem[Benedek and Itai(1994)]{benedek:94}
G.~M. Benedek and A.~Itai.
\newblock Nonuniform learnability.
\newblock \emph{Journal of Computer and System Sciences}, 48:\penalty0
  311--323, 1994.

\bibitem[Blumer et~al.(1989)Blumer, Ehrenfeucht, Haussler, and
  Warmuth]{blumer:89}
A.~Blumer, A.~Ehrenfeucht, D.~Haussler, and M.~Warmuth.
\newblock Learnability and the {Vapnik-Chervonenkis} dimension.
\newblock \emph{Journal of the Association for Computing Machinery},
  36\penalty0 (4):\penalty0 929--965, 1989.

\bibitem[Cohn and Tesauro(1990)]{cohn:90}
D.~Cohn and G.~Tesauro.
\newblock Can neural networks do better than the {V}apnik-{C}hervonenkis
  bounds?
\newblock In \emph{Advances in Neural Information Processing Systems}, 1990.

\bibitem[Cohn and Tesauro(1992)]{cohn:92}
D.~Cohn and G.~Tesauro.
\newblock How tight are the {V}apnik-{C}hervonenkis bounds?
\newblock \emph{Neural Computation}, 4\penalty0 (2):\penalty0 249--269, 1992.

\bibitem[Cohn(1980)]{Coh80}
D.~L. Cohn.
\newblock \emph{Measure Theory}.
\newblock Birkh\"{a}user, Boston, Mass., 1980.
\newblock ISBN 3-7643-3003-1.

\bibitem[Dellacherie(1977)]{Del77}
C.~Dellacherie.
\newblock Les d\'{e}rivations en th\'{e}orie descriptive des ensembles et le
  th\'{e}or\`eme de la borne.
\newblock In \emph{S\'{e}minaire de {P}robabilit\'{e}s, {XI} ({U}niv.
  {S}trasbourg, {S}trasbourg, 1975/1976)}, pages 34--46. Lecture Notes in
  Math., Vol. 581. Springer, 1977.

\bibitem[Devroye et~al.(1996)Devroye, Gy\"{o}rfi, and Lugosi]{devroye:96}
L.~Devroye, L.~Gy\"{o}rfi, and G.~Lugosi.
\newblock \emph{A Probabilistic Theory of Pattern Recognition}.
\newblock Springer-Verlag New York, Inc., 1996.

\bibitem[Dudley(2014)]{Dud14}
R.~M. Dudley.
\newblock \emph{Uniform central limit theorems}, volume 142 of \emph{Cambridge
  Studies in Advanced Mathematics}.
\newblock Cambridge University Press, New York, second edition, 2014.
\newblock ISBN 978-0-521-73841-5; 978-0-521-49884-5.

\bibitem[Ehrenfeucht et~al.(1989)Ehrenfeucht, Haussler, Kearns, and
  Valiant]{ehrenfeucht:89}
A.~Ehrenfeucht, D.~Haussler, M.~Kearns, and L.~Valiant.
\newblock A general lower bound on the number of examples needed for learning.
\newblock \emph{Information and Computation}, 82\penalty0 (3):\penalty0
  247--261, 1989.

\bibitem[Evans and Hamkins(2014)]{EH14}
C.~D.~A. Evans and Joel~David Hamkins.
\newblock Transfinite game values in infinite chess.
\newblock \emph{Integers}, 14:\penalty0 Paper No. G2, 36, 2014.

\bibitem[Gale and Stewart(1953)]{GS53}
D.~Gale and F.~M. Stewart.
\newblock Infinite games with perfect information.
\newblock In \emph{Contributions to the theory of games, vol. 2}, Annals of
  Mathematics Studies, no. 28, pages 245--266. Princeton University Press,
  Princeton, N. J., 1953.

\bibitem[Hanneke(2009)]{hanneke:thesis}
S.~Hanneke.
\newblock \emph{Theoretical Foundations of Active Learning}.
\newblock PhD thesis, Machine Learning Department, School of Computer Science,
  Carnegie Mellon University, 2009.

\bibitem[Hanneke(2012)]{hanneke:12a}
S.~Hanneke.
\newblock Activized learning: Transforming passive to active with improved
  label complexity.
\newblock \emph{Journal of Machine Learning Research}, 13\penalty0
  (5):\penalty0 1469--1587, 2012.

\bibitem[Hanneke(2017)]{hanneke:17}
S.~Hanneke.
\newblock Learning whenever learning is possible: {U}niversal learning under
  general stochastic processes.
\newblock \emph{ar{X}iv:1706.01418}, 2017.

\bibitem[Hanneke et~al.(2019)Hanneke, Kontorovich, Sabato, and
  Weiss]{hanneke:19a}
S.~Hanneke, A.~Kontorovich, S.~Sabato, and R.~Weiss.
\newblock Universal {B}ayes consistency in metric spaces.
\newblock \emph{ar{X}iv:1705.08184}, 2019.

\bibitem[Haussler et~al.(1994)Haussler, Littlestone, and Warmuth]{haussler:94}
D.~Haussler, N.~Littlestone, and M.~Warmuth.
\newblock Predicting $\{0,1\}$-functions on randomly drawn points.
\newblock \emph{Information and Computation}, 115\penalty0 (2):\penalty0
  248--292, 1994.

\bibitem[Hodges(1993)]{Hod93}
W.~Hodges.
\newblock \emph{Model Theory}, volume~42 of \emph{Encyclopedia of Mathematics
  and its Applications}.
\newblock Cambridge University Press, Cambridge, 1993.
\newblock ISBN 0-521-30442-3.
\newblock \doi{10.1017/CBO9780511551574}.
\newblock URL \url{https://doi.org/10.1017/CBO9780511551574}.

\bibitem[Hrbacek and Jech(1999)]{HJ99}
K.~Hrbacek and T.~Jech.
\newblock \emph{Introduction to Set Theory}, volume 220 of \emph{Monographs and
  Textbooks in Pure and Applied Mathematics}.
\newblock Marcel Dekker, Inc., New York, third edition, 1999.
\newblock ISBN 0-8247-7915-0.

\bibitem[Jech(2003)]{Jec03}
T.~Jech.
\newblock \emph{Set Theory}.
\newblock Springer Monographs in Mathematics. Springer-Verlag, Berlin, 2003.
\newblock ISBN 3-540-44085-2.
\newblock The third millennium edition, revised and expanded.

\bibitem[Kechris(1995)]{Kec95}
A.~S. Kechris.
\newblock \emph{Classical Descriptive Set Theory}, volume 156 of \emph{Graduate
  Texts in Mathematics}.
\newblock Springer-Verlag, New York, 1995.
\newblock ISBN 0-387-94374-9.
\newblock \doi{10.1007/978-1-4612-4190-4}.
\newblock URL \url{https://doi.org/10.1007/978-1-4612-4190-4}.

\bibitem[Koltchinskii and Beznosova(2005)]{Koltchinskii05exponential}
V.~Koltchinskii and O.~Beznosova.
\newblock Exponential convergence rates in classification.
\newblock In Peter Auer and Ron Meir, editors, \emph{Learning Theory, 18th
  Annual Conference on Learning Theory, {COLT} 2005, Bertinoro, Italy, June
  27-30, 2005, Proceedings}, volume 3559 of \emph{Lecture Notes in Computer
  Science}, pages 295--307. Springer, 2005.
\newblock \doi{10.1007/11503415\_20}.
\newblock URL \url{https://doi.org/10.1007/11503415\_20}.

\bibitem[Littlestone(1988)]{littlestone:88}
N.~Littlestone.
\newblock Learning quickly when irrelevant attributes abound: A new
  linear-threshold algorithm.
\newblock \emph{Machine Learning}, 2:\penalty0 285--318, 1988.

\bibitem[Nitanda and Suzuki(2019)]{Nitanda19stochastic}
A.~Nitanda and T.~Suzuki.
\newblock Stochastic gradient descent with exponential convergence rates of
  expected classification errors.
\newblock In \emph{{AISTATS}}, volume~89 of \emph{Proceedings of Machine
  Learning Research}, pages 1417--1426. {PMLR}, 2019.

\bibitem[Pestov(2011)]{Pes11}
V.~Pestov.
\newblock {PAC} learnability versus {VC} dimension: A footnote to a basic
  result of statistical learning.
\newblock In \emph{The 2011 International Joint Conference on Neural Networks},
  pages 1141--1145, July 2011.
\newblock \doi{10.1109/IJCNN.2011.6033352}.

\bibitem[Pillaud{-}Vivien et~al.(2018)Pillaud{-}Vivien, Rudi, and
  Bach]{Bach18exponential}
L.~Pillaud{-}Vivien, A.~Rudi, and F.~Bach.
\newblock Exponential convergence of testing error for stochastic gradient
  methods.
\newblock In S{\'{e}}bastien Bubeck, Vianney Perchet, and Philippe Rigollet,
  editors, \emph{Conference On Learning Theory, {COLT} 2018, Stockholm, Sweden,
  6-9 July 2018}, volume~75 of \emph{Proceedings of Machine Learning Research},
  pages 250--296. {PMLR}, 2018.
\newblock URL \url{http://proceedings.mlr.press/v75/pillaud-vivien18a.html}.

\bibitem[Schuurmans(1997)]{schuurmans:97}
D.~Schuurmans.
\newblock Characterizing rational versus exponential learning curves.
\newblock \emph{Journal of Computer and System Sciences}, 55\penalty0
  (1):\penalty0 140--160, 1997.

\bibitem[Sierpi\'{n}ski(1965)]{Sie65}
S.~Sierpi\'{n}ski.
\newblock \emph{Cardinal and Ordinal Numbers}.
\newblock Second revised edition. Monografie Matematyczne, Vol. 34.
  Pa\'{n}stowe Wydawnictwo Naukowe, Warsaw, 1965.

\bibitem[Stone(1977)]{stone:77}
C.~J. Stone.
\newblock Consistent nonparametric regression.
\newblock \emph{The Annals of Statistics}, pages 595--620, 1977.

\bibitem[Valiant(1984)]{valiant:84}
L.~G. Valiant.
\newblock A theory of the learnable.
\newblock \emph{Communications of the ACM}, 27\penalty0 (11):\penalty0
  1134--1142, November 1984.

\bibitem[van Handel(2013)]{van-handel:13}
R.~van Handel.
\newblock The universal {G}livenko-{C}antelli property.
\newblock \emph{Probability and Related Fields}, 155:\penalty0 911--934, 2013.

\bibitem[Vapnik and Chervonenkis(1971)]{vapnik:71}
V.~Vapnik and A.~Chervonenkis.
\newblock On the uniform convergence of relative frequencies of events to their
  probabilities.
\newblock \emph{Theory of Probability and its Applications}, 16\penalty0
  (2):\penalty0 264--280, 1971.

\bibitem[Vapnik and Chervonenkis(1974)]{vapnik:74}
V.~Vapnik and A.~Chervonenkis.
\newblock \emph{Theory of Pattern Recognition}.
\newblock Nauka, Moscow, 1974.

\bibitem[Yang and Hanneke(2013)]{hanneke:13}
L.~Yang and S.~Hanneke.
\newblock Activized learning with uniform classification noise.
\newblock In \emph{Proceedings of the $30^{{\rm th}}$ International Conference
  on Machine Learning}, 2013.

\end{thebibliography}

\end{document}